\colorlet{ours}{blue!10}
\DeclareMathOperator*{\argmin}{arg\,min}
\newcommand{\com}[1]{\tiny$\pm$#1}
\newcommand{\appendixtoc}{%
    \startcontents[appendix]
    \printcontents[appendix]{l}{1}{\setcounter{tocdepth}{2}}
}
\theoremstyle{plain}
\newtheorem{theorem}{Theorem}[section]
\newtheorem{proposition}[theorem]{Proposition}
\newtheorem{lemma}[theorem]{Lemma}
\newtheorem{corollary}[theorem]{Corollary}
\theoremstyle{definition}
\newtheorem{definition}[theorem]{Definition}
\newtheorem{assumption}[theorem]{Assumption}
\theoremstyle{remark}
\newtheorem{remark}[theorem]{Remark}
\title{Robust Federated Finetuning of LLMs via Alternating Optimization of LoRA}
\author{%
  Shuangyi Chen$^{1}$\thanks{Equal contribution.}\\
  University of Toronto\\
  {\small\texttt{shuangyi.chen@mail.utoronto.ca}} \\
  \And
  Yuanxin Guo$^{1}$\footnotemark[1] \\
  University of Toronto \\
  {\small\texttt{yuanxin.guo@mail.utoronto.ca}}\\
  \AND
  Yue Ju \\
  Ericsson-GAIA Montréal \\
  {\small\texttt{yue.ju@ericsson.com}} \\
  \And
  Hardik Dalal  \\
  Ericsson-GAIA Montréal \\
  {\small\texttt{hardik.dalal@ericsson.com}} \\
  \And
  Zhongwen Zhu \\
  Ericsson-GAIA Montréal \\
  {\small\texttt{zhongwen.zhu@ericsson.com}} \\
  \And
  Ashish Khisti \\
  University of Toronto \\
  {\small\texttt{akhisti@ece.utoronto.ca}} \\
}
\begin{document}

\maketitle

\begin{abstract}
  Parameter-Efficient Fine-Tuning (PEFT) methods like Low-Rank Adaptation (LoRA) optimize federated training by reducing computational and communication costs.  We propose RoLoRA, a federated framework using alternating optimization to fine-tune LoRA adapters. Our approach emphasizes the importance of learning up and down projection matrices to enhance expressiveness and robustness. We use both theoretical analysis and extensive experiments to demonstrate the advantages of RoLoRA over prior approaches that either generate imperfect model updates or limit expressiveness of the model. We provide a theoretical analysis on a linear model to highlight the importance of learning both the down-projection and up-projection matrices in LoRA. We validate the insights on a non-linear model and separately provide a convergence proof under general conditions. To bridge theory and practice, we conducted extensive experimental evaluations on language models including RoBERTa-Large, Llama-2-7B on diverse tasks and FL settings to demonstrate the advantages of RoLoRA over other methods.
\end{abstract}

\section{Introduction}\label{intro}
The remarkable performance of large language models (LLMs) stems from their ability to learn at scale. With their broad adaptability and extensive scope, LLMs depend on vast and diverse datasets to effectively generalize across a wide range of tasks and domains. Federated learning \cite{mcmahan2017communication} offers a promising solution for leveraging data from multiple sources, which could be  particularly advantageous for LLMs.

Recently, Parameter-Efficient Fine-Tuning (PEFT) has emerged as an innovative training strategy that updates only a small subset of model parameters, substantially reducing computational and memory demands. A notable method in this category is LoRA \cite{hu2021lora}, which utilizes low-rank matrices to approximate weight changes during fine-tuning. These matrices are integrated with pre-trained weights for inference, facilitating reduced data transfer in scenarios such as federated learning, where update size directly impacts communication efficiency. Many works integrate LoRA into federated setting \cite{zhang-etal-2023-fedpetuning,babakniya2023slora, kuang2023federatedscopellm, chen2024robust, sun2024improving}.  FedPETuning \cite{zhang-etal-2023-fedpetuning} compares various PEFT methods in a federated setting. SLoRA   \cite{babakniya2023slora} presents a hybrid approach that combines sparse fine-tuning with LoRA to address data heterogeneity in federated settings. Furthermore, FS-LLM \cite{kuang2023federatedscopellm} presents  a framework for fine-tuning LLMs in federated environments. However, these studies typically apply the FedAVG algorithm directly to LoRA modules, resulting in in-exact model updates: the server average of LoRA-$A$ and LoRA-$B$ does not equal the effective adapter.

\begin{figure}[tp] 
    \centering
    \includegraphics[width=0.95\textwidth]{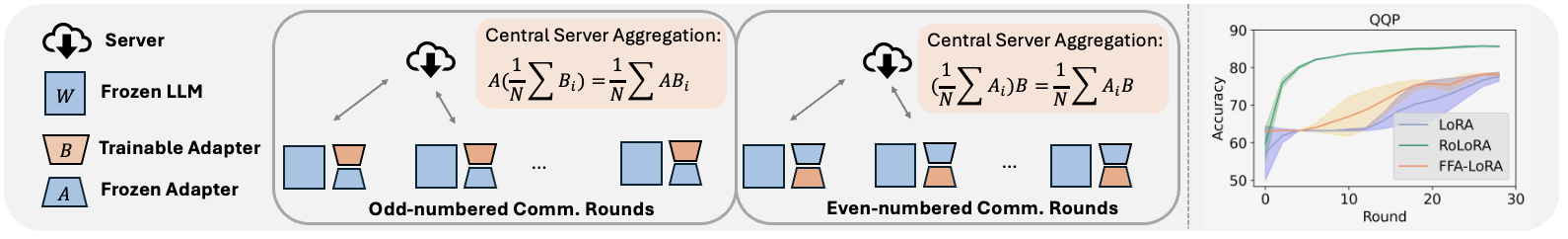} 
    \caption{(Left) Overview of the RoLoRA framework. (Right) Performance comparison with baselines on QQP in a 50-client setting, showing RoLoRA’s superior convergence speed and final accuracy.}
    \label{fig:overview} 
\end{figure}
To address the issue of in-exact model updates, a few recent works have proposed modifications to the down-projection and up-projection components in LoRA. In FlexLoRA \cite{bai2024federatedfinetuninglargelanguage}, the authors propose updating these projections with matrix multiplication followed by truncated SVD. A related method is also considered in FLoRA \cite{wang2024florafederatedfinetuninglarge}. Another approach, by Sun et al., introduces a federated finetuning framework named FFA-LoRA \cite{sun2024improving}, which builds on LoRA by freezing the down-projection matrices across all clients and updating only the up-projection matrices. They apply differential privacy \cite{dwork2006calibrating} to provide privacy guarantees for clients' data. With a sufficient number of finetuning parameters, FFA-LoRA, using a larger learning rate, can achieve performance comparable to FedAVG of LoRA while reducing communication costs by half. However, we observe that with fewer finetuning parameters, FFA-LoRA is less robust than FedAVG of LoRA, primarily due to its reduced expressiveness from freezing down-projections. In this work, we explore the necessity of learning down-projection matrices and propose a federated fine-tuning framework with computational and communication advantages.

We connect the objective of learning down-projection matrices in a federated setting to multitask linear representation learning (MLRL), an approach in which a shared low-rank representation is jointly learned across multiple tasks. While, to the best of our knowledge, the alternating optimization of down- and up-projection matrices has not been explored within the context of LoRA, prior works on MLRL \cite{pmlr-v139-collins21a, thekumparampil2021sampleefficientlinearmetalearning} have demonstrated the importance of alternately updating low-rank representations and task-specific heads, demonstrating the necessity of learning a shared representation. Inspired by MLRL, we tackle this challenge by employing alternating optimization for LoRA adapters. We theoretically establish that alternating updates to the two components of LoRA, while maintaining a common global model, enable effective optimization of down-projections and ensure convergence to the global minimizer in a tractable setting.
\vspace{-0.3cm}
\subsection{Main Contributions}
\begin{itemize}[noitemsep]
    \item \textbf{RoLoRA framework.} We propose RoLoRA, a robust federated fine-tuning framework based on the alternating optimization of LoRA as shown in Figure~\ref{fig:overview}. RoLoRA fully leverages the expressiveness of LoRA adapters while keeping the computational and communication advantages. 
    \item \textbf{Theoretical Insights.} We prove that RoLoRA achieves exponential convergence to the global optimum for federated linear regression, reaching arbitrarily small errors, whereas FFA-LoRA’s fixed down-projections result in suboptimality proportional to the initialization error. Simple non-linear model experiments further validate the theoretical importance of updating the down-projections in practice. For the case of smooth non-convex loss functions we also provide a proof of convergence of our proposed method.
    \item \textbf{Empirical results.} Through evaluations on language models (RoBERTa-Large, Llama-2-7B) across various tasks (GLUE, HumanEval, MMLU, Commonsense reasoning tasks), we demonstrate that RoLoRA maintains robustness against reductions in fine-tuning parameters and increases in client numbers compared to prior approaches.
\end{itemize}
\vspace{-0.3cm}
{
In summary, RoLoRA outperforms baselines with strong theoretical guarantees and robust performance across tasks and scales. It provides new insights into the role of down-projections, advancing the frontier of parameter-efficient federated learning, while its simple yet effective design ensures scalability and communication efficiency.}

\vspace{-0.3cm}
\subsection{Notations}
\label{subsec:notation}
\vspace{-0.3cm}
We adopt the notation that lower-case letters represent scalar variables, lower-case bold-face letters denote column vectors, and upper-case bold-face letters denote matrices. The $d \times d$ identity matrix is represented by $\mathbf{I}_d$. Depending on the context, $\lVert .\rVert$ denotes the $l_2$ norm of a vector or the Frobenius norm of a matrix, $\lVert .\rVert_{op}$ denotes the operator norm of a matrix, $|. |$ denotes the absolute value of a scalar, ${}^\top$ denotes matrix or vector transpose. For a number $N$, $[N] = \{1,\dots, N\}$.

\vspace{-0.3cm}
\section{Preliminaries and Related Works}
\vspace{-0.3cm}
\subsection{LoRA and Its Variants} \label{lora-federated}
\vspace{-0.3cm}
Low-Rank Adaptation (LoRA)~\cite{hu2021lora} fine-tunes large language models efficiently by keeping the original weights fixed and adding small trainable matrices that apply low-rank updates. Specifically, for a pre-trained weight matrix $\mathbf{W}_0 \in \mathbb{R}^{d\times d}$, the update is expressed as $\mathbf{W} = \mathbf{W}_0 + \alpha \mathbf{A}\mathbf{B}$, where $\mathbf{A} \in \mathbb{R}^{d \times r}$ and $\mathbf{B} \in \mathbb{R}^{r \times d}$ with $r \ll d$, and only $\mathbf{A}$ and $\mathbf{B}$ are trained. Many variants of LoRA have been developed to further improve efficiency and performance. In Zhang et al. \cite{zhang2023lorafa}, the authors propose a memory-efficient fine-tuning method, LoRA-FA, which keeps the projection-down weight fixed and updates the projection-up weight during fine-tuning. In Zhu et al. \cite{zhu2024asymmetrylowrankadaptersfoundation}, the authors highlight the asymmetry between the projection-up and projection-down matrices and focus solely on comparing the effects of freezing either the projection-up or projection-down matrices. Hao et al. \cite{hao2024floralowrankadapterssecretly} introduce the idea of resampling the projection-down matrices, aligning with our observation that freezing projection-down matrices negatively impacts a model's expressiveness. Furthermore, LoRA+ \cite{hayou2024lora} explore the distinct roles of projection-up and projection-down matrices, enhancing performance by assigning different learning rates to each. 
\vspace{-0.3cm}
\subsection{LoRA in Federated Setting}
\vspace{-0.3cm}
In federated settings, LoRA is practical: clients fine-tune models efficiently with minimal resources, and only the small adapter matrices need to be communicated, sharply reducing transmission costs compared to full model finetuning. Zhang et al. consider FedAVG of LoRA, named FedIT\cite{zhang2023towards}. Zhang et al. \cite{zhang-etal-2023-fedpetuning} compare multiple PEFT methods in the federated setting, including Adapter\cite{houlsby2019parameterefficient}, LoRA\cite{hu2021lora}, Prompt tuning\cite{liu2022ptuning} and Bit-Fit\cite{zaken2022bitfit}. SLoRA\cite{babakniya2023slora}, which combines sparse finetuning and LoRA, is proposed to address the data heterogeneity in federated setting. 
As discussed before, Sun et al. \cite{sun2024improving} design a federated finetuning framework FFA-LoRA by freezing projection-down matrices for all the clients and only updating projection-up matrices. 
FlexLoRA\cite{bai2024federatedfinetuninglargelanguage} and FLoRA \cite{wang2024florafederatedfinetuninglarge} consider clients with heterogeneous-rank LoRA adapters and proposes federated finetuning approaches. After we completed our work, we noticed a concurrent study, LoRA-$A^2$ \cite{koo2024robustefficientfederatedlowrank}, which combines alternating optimization with adaptive rank selection for federated finetuning. While their focus is mainly empirical, the use of alternating optimization in their algorithm is similar to ours.

\vspace{-0.3cm}

\subsection{FedAVG of LoRA Introduces Interference}\label{interference}
\vspace{-0.3cm}
Integrating LoRA within a federated setting presents challenges. In such a setup, each of the $N$ clients is provided with the pretrained model weights $\mathbf{W}_0$, which remain fixed during finetuning. Clients are required only to send the updated matrices $\mathbf{B}_i$ and $\mathbf{A}_i$ to a central server for aggregation. While most current studies, such as FedIT\cite{zhang2023towards}, SLoRA \cite{babakniya2023slora} and FedPETuning \cite{zhang-etal-2023-fedpetuning}, commonly apply FedAVG directly to these matrices as shown in \eqref{fedavg-lora}, this approach might not be optimal. The precise update for each client’s model, $\Delta \mathbf{W}_i$, should be calculated as the product of the low-rank matrices $\mathbf{A}_i$ and $\mathbf{B}_i$. Consequently, aggregation on the individual matrices leads to inaccurate model aggregation. 

\begin{align}
    \frac{1}{N} \sum_{i=1}^N  \Delta \mathbf{W}_i &= \frac{1}{N} (\mathbf{A}_1\mathbf{B}_1+\mathbf{A}_2\mathbf{B}_2+...+ \mathbf{A}_N\mathbf{B}_N) \label{fedavg-mul}\\
    &\neq  \frac{1}{N} (\mathbf{A_1}+\mathbf{A_2}+...+\mathbf{A_N}) \frac{1}{N} (\mathbf{B_1}+\mathbf{B_2}+...+\mathbf{B_N}) \label{fedavg-lora}
\end{align}
There are a few options to avoid it.

    \paragraph{Updating B and A by matrix multiplication and truncated-SVD.} One approach \cite{wang2024florafederatedfinetuninglarge, bai2024federatedfinetuninglargelanguage} involves first computing the product of local matrices $\mathbf{B}_i$ and $\mathbf{A}_i$ to accurately recover $\Delta \mathbf{W}_i$. Then, the global $\mathbf{B}$ and $\mathbf{A}$ of next iteration are obtained by performing truncated SVD on the averaged set of ${\Delta \mathbf{W}_i}$. However, this method introduces computational overhead due to the matrix multiplication and SVD operations.
    \vspace{-0.2cm}
    \paragraph{Freezing A (B) during finetuning.} Another method is to make clients freeze $\mathbf{B}$ or $\mathbf{A}$ as in Sun et al. \cite{sun2024improving}, leading to precise computation of ${\Delta \mathbf{W}}$. However, this method limits the expressiveness of the adapter.

With these considerations, we propose a federated finetuning framework, named RoLoRA, based on alternating optimization of LoRA.
\vspace{-0.3cm}
\section{RoLoRA Framework} 
\label{rolora-framework}
\vspace{-0.3cm}
In this section, we describe the framework design of RoLoRA and discuss its practical advantages. 

\paragraph{Alternating Optimization and Corresponding Aggregation}
Motivated by the observations discussed in Section~\ref{interference}, we propose applying alternating optimization to the local LoRA adapters of each client in a setting with $N$ clients. Unlike the approach in FFA-LoRA, where $\mathbf{A}$ is consistently frozen, we suggest an alternating update strategy. There are alternating odd and even communication rounds designated for updating, aggregating $\mathbf{A}$ and $\mathbf{B}$, respectively.
\begin{align}
 &\text{In the odd-numbered comm. round:}\nonumber \\ &\qquad  \frac{1}{N} \sum_{i=1}^N  \Delta \mathbf{W}_{i}^{2t+1}= \frac{1}{N} (\mathbf{A}_1^{t}\mathbf{B}_1^{t+1}+...+ \mathbf{A}_N^{t}\mathbf{B}_N^{t+1}) 
  = \frac{1}{N} \mathbf{A}^t(\mathbf{B}_1^{t+1}+...+ \mathbf{B}_N^{t+1}) \label{roundt} \\
  & \text{In the even-numbered comm. round:} \nonumber\\&\quad  \frac{1}{N} \sum_{i=1}^N  \Delta \mathbf{W}_{i}^{2t+2} = \frac{1}{N} (\mathbf{A}_1^{t+1}\mathbf{B}_1^{t+1}+...+ \mathbf{A}_N^{t+1}\mathbf{B}_N^{t+1}) 
= \frac{1}{N} (\mathbf{A}_1^{t+1}+...+ \mathbf{A}_N^{t+1})\mathbf{B}^{t+1} \label{roundt1}
\end{align}

\vspace{-0.2cm}
As in Algorithm~\ref{alg:rolora-llm} in Appendix, all clients freeze $\mathbf{A}^t$ and update $\mathbf{B}^{t}$ in the odd communication round. The central server then aggregates these updates to compute $\mathbf{B}^{t+1} = \frac{1}{N}\sum_{i=1}^{N}\mathbf{B}^{t+1}_i$ and distributes $\mathbf{B}^{t+1}$ back to the clients. In the subsequent communication round, clients freeze $\mathbf{B}^{t+1}$ and update $\mathbf{A}^{t}$. The server aggregates these to obtain $\mathbf{A}^{t+1} = \frac{1}{N}\sum_{i=1}^{N}\mathbf{A}^{t+1}_i$ and returns $\mathbf{A}^{t+1}$ to the clients. 
It is important to note that in round $2t+1$, the frozen $\mathbf{A}_i^t$ are identical across all clients, as they are synchronized with $\mathbf{A}^t$ from the central server at the beginning of the round. This strategy ensures that the update and aggregation method introduces no interference, as demonstrated in \eqref{roundt} and \eqref{roundt1}.

\vspace{-0.2cm}
\paragraph{Computation and Communication Cost.}
The parameter-freezing nature of RoLoRA enhances computational and communication efficiency. In each communication round, the number of trainable parameters in the model is effectively halved compared to FedAVG of LoRA. The only additional cost for RoLoRA compared to FFA-LoRA is the alternating freezing of the corresponding parameters. We remark this additional cost is negligible because it is applied to the clients' models and can be executed concurrently during the server's aggregation.

\vspace{-0.3cm}
\section{Analysis} \label{sec:analysis}
\vspace{-0.2cm}
In this section, we provide an intuitive analysis of why training the down-projection in the LoRA module is essential in a federated setting. We first present a theoretical comparison between RoLoRA and FFA-LoRA on a linear model. While simplified, this comparison offers a direct and rigorous examination of their solutions, clearly highlighting the limitations of FFA-LoRA in this fundamental case. We then empirically validate the theoretical findings using a two-layer non-linear neural network. Finally, we provide a convergence analysis for RoLoRA in non-convex loss landscapes, establishing standard federated learning guarantees.

\subsection{Theoretical Insights into Down-Projection Learning: Linear Model Analysis} 
\label{setup-analysis-homo}
We start by analyzing RoLoRA and FFA-LoRA within a simplified linear model, offering a clear and rigorous comparison that reveals the inherent limitations of FFA-LoRA’s approach. Consider a federated setting with $N$ clients, each with the following local linear model $f_i(\mathbf{X}_i)=\mathbf{X}_i\mathbf{a}\mathbf{b}^\top$
where $\mathbf{Y}_i \in \mathbb{R}^{m \times d}$, $ \mathbf{X}_i \in \mathbb{R}^{m \times d}$ with the sample size $m$, $\mathbf{a} \in \mathbb{R}^{d}$ (a unit vector) and $\mathbf{b} \in \mathbb{R}^{d}$ are the LoRA weights corresponding to rank $r=1$. In this setting, we model the local data of $i$-th client such that 
\begin{align}
\mathbf{Y}_i=\mathbf{X}_i\mathbf{a}^*\mathbf{b}^{*^\top} \label{data-gen}
\end{align}
for some ground truth LoRA weights $\mathbf{a}^* \in \mathbb{R}^{d}$ (a unit vector) and $\mathbf{b}^* \in \mathbb{R}^d$. We consider the following objective
\begin{equation}
    \min_{\mathbf{a}\in \mathbb{R}^{d},\mathbf{b}\in \mathbb{R}^d} \frac{1}{N}\sum_{i=1}^N l_i(\mathbf{a},\mathbf{b}) \label{obj-xi}
\end{equation}
where the local loss is $l_i(\mathbf{a},\mathbf{b}) = \frac{1}{m}\lVert  \mathbf{X}_i\mathbf{a}^*\mathbf{b}^{*^\top}- \mathbf{X}_i\mathbf{a}\mathbf{b}^\top\rVert^2$. Each $\mathbf{X}_i$ is assumed to be a Gaussian random matrix, where each entry is independently and identically distributed according to a standard Gaussian distribution.

 We remind the reader that Section~\ref{subsec:notation} provides a summary of mathematical notations and also point to
 Table~\ref{tab:notation} in Appendix~\ref{sec:notation} for a summary of the symbols used throughout the theoretical analysis.
 
\paragraph{Results.}
In this section, we assume homogeneous clients where there is a single target model as in \eqref{data-gen}. In the linear model case, we modify RoLoRA from Algorithm~\ref{alg:rolora-llm} to Algorithm~\ref{alg:rolora-linear}, employing alternating minimization for $\mathbf{b}$ (line 5) and gradient descent for $\mathbf{a}$ (line 9). Details are described in Algorithm~\ref{alg:rolora-linear} in Appendix.  We note that the analysis of the alternating minimization-gradient descent algorithm is inspired by \cite{pmlr-v139-collins21a,seyedehsara2022fastsampleefficientfederatedlow,Vaswani_2024} for a different setting of MLRL. See further discussion in Appendix~\ref{app:discussion}.

We aim to show that the training procedure described in Algorithm~\ref{alg:rolora-linear} learns the target model $(\mathbf{a}^*,\mathbf{b}^*)$ by showing the angle distance (Definition~\ref{def:angle-distance}) between $\mathbf{a}$ and $\mathbf{a}^*$ is decreasing in each iteration. Since $\mathbf{b}$ is solved exactly at each iteration via local minimization, it is always optimal with respect to the current $\mathbf{a}$. This allows us to isolate and analyze the convergence behavior of $\mathbf{a}$ using the angle distance, eliminating the potential impact of insufficient local updates of $\mathbf{b}$ on the convergence of $\mathbf{a}$.
 First, we make typical assumptions on the ground truth $\mathbf{b}^*$ and formally define the angle distance.

\begin{assumption}  \label{client-norm-1}
     There exists $L_{max} < \infty$ (known a priori),  s.t. $\lVert \mathbf{b}^*\rVert \leq  L_{max}$.
\end{assumption} 
\vspace{-0.2cm}
\begin{definition} (Angle Distance) \label{def:angle-distance}
    For two unit vectors $\mathbf{a}, \mathbf{a}^* \in \mathbb{R}^{d}$, the angle distance between $\mathbf{a}$ and $\mathbf{a}^*$ is defined as 
    \begin{equation}
        |\sin \theta ({\mathbf{a}, \mathbf{a}^*})| = \lVert (\mathbf{I}_d- \mathbf{a} \mathbf{a}^\top)\mathbf{a}^*\rVert
    \end{equation}
    where $\mathbf{I}_d- \mathbf{a} \mathbf{a}^\top$ is the projection operator to the direction orthogonal to $\mathbf{a}$.
\end{definition}
Let $\delta^t = \lVert (\mathbf{I}_d-\mathbf{a}^*\mathbf{a}^{*^\top})\mathbf{a}^t\rVert = \lVert (\mathbf{I}_d-\mathbf{a}^t\mathbf{a}^{t^\top})\mathbf{a}^*\rVert$ denote the angle distance between $\mathbf{a}^{*}$ and $\mathbf{a}^t$ of $t$-th iteration. We initialize $\mathbf{a}^0$ such that $|\sin{\theta}(\mathbf{a}^*,\mathbf{a}^0)|= \delta_0$, where $0<\delta_0<  1$, and $\mathbf{b}^0$ is zero. All clients obtain the same initialization for parameters. 
Now we are ready to state our main results. 


\begin{lemma}\label{lemma:delta-t}
    Let $\delta^t = \lVert (\mathbf{I}_d-\mathbf{a}^*\mathbf{a}^{*^\top})\mathbf{a}^t\rVert $ be the angle distance between $\mathbf{a}^{*}$ and $\mathbf{a}^{t}$ of $t$-th iteration. Assume that Assumption~\ref{client-norm-1} holds and $\delta^t \leq \delta^{t-1} \leq \dots \leq \delta^0$. Let $m$ be the number of samples for each updating step, let auxiliary error thresholds $\epsilon'=\frac{\epsilon_2}{(1-\epsilon_0)(1-\epsilon_1)},\tilde{\epsilon}=\frac{\epsilon_3}{1-\epsilon_0}$ for $\epsilon_0, \epsilon_1, \epsilon_2, \epsilon_3 \in (0,1)$, if $
    m= \Omega(q)$ for $q=\max\left(\frac{\log(N)}{[\min(\epsilon_1,\epsilon_2)]^2},\frac{d\log(\frac{2}{\epsilon_0})}{\epsilon_2^2}\right)$,
    and auxiliary error thresholds are small such that $\epsilon',\tilde{\epsilon}<\frac{1-(\delta^0)^2}{16}$, for any $t$ and $\eta \leq \frac{1}{L_{max}^2}$, then we have,
    \begin{align}
        \delta^{t+1} \leq \delta^{t} \sqrt{1-\eta (1-\delta^{0^2})\|\mathbf b^*\|^2} 
    \end{align}
    with probability at least $1-2q^{-10}$.
\end{lemma}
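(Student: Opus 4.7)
The strategy is to exploit the closed-form minimizer of the local $\mathbf{b}$-problem together with Gaussian concentration to reduce one round of RoLoRA to a near-ideal gradient step on $\mathbf{a}$, and then show that the component of $\mathbf{a}^t$ orthogonal to $\mathbf{a}^*$ contracts by the claimed geometric factor. Throughout I would track the signed inner product $\alpha^t := \mathbf{a}^{*\top}\mathbf{a}^t$, so that $\delta^{t^2} = 1 - \alpha^{t^2}$ when $\mathbf{a}^t$ is on the unit sphere.

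\textbf{Step 1: closed form and concentration for $\mathbf{b}^{t+1}$.} Solving $\nabla_{\mathbf{b}}\, l_i(\mathbf{a}^t,\mathbf{b}) = 0$ in closed form gives
\begin{equation*}
\mathbf{b}_i^{t+1} = \mathbf{b}^* \cdot \frac{\mathbf{a}^{*\top}\hat{\Sigma}_i \mathbf{a}^t}{\mathbf{a}^{t\top}\hat{\Sigma}_i\mathbf{a}^t}, \qquad \hat{\Sigma}_i := \tfrac{1}{m}\mathbf{X}_i^\top \mathbf{X}_i,
\end{equation*}
and then $\mathbf{b}^{t+1} = \frac{1}{N}\sum_i \mathbf{b}_i^{t+1}$. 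In the population limit $\hat{\Sigma}_i = \mathbf{I}_d$, both ratios collapse to the scalar $\alpha^t$, giving $\mathbf{b}^{t+1}_{\text{ideal}} = \alpha^t \mathbf{b}^*$. Standard sub-exponential tail bounds on Gaussian sample covariance matrices give $\lVert\hat{\Sigma}_i - \mathbf{I}_d\rVert_{op} \leq \epsilon_0$ with failure probability at most $q^{-10}$ when $m = \Omega(d\log(1/\epsilon_0)/\epsilon_0^2)$; a Hoeffding-type bound on the $N$ scalar ratios (requiring $m = \Omega(\log N / \min(\epsilon_1,\epsilon_2)^{2})$) then yields $\mathbf{b}^{t+1} = \alpha^t \mathbf{b}^* + \mathbf{e}_{\mathbf{b}}$ with $\lVert\mathbf{e}_{\mathbf{b}}\rVert \leq \epsilon' \lVert\mathbf{b}^*\rVert$.

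\textbf{Step 2: gradient step on $\mathbf{a}$ and ideal orthogonal contraction.} With $\mathbf{b}^{t+1}$ fixed, the server gradient is $\nabla_{\mathbf{a}} L = \frac{2}{N}\sum_i \hat{\Sigma}_i \bigl(\mathbf{a}^t \lVert\mathbf{b}^{t+1}\rVert^2 - \mathbf{a}^*\, \mathbf{b}^{*\top}\mathbf{b}^{t+1}\bigr)$. In the ideal limit, this collapses to $2\alpha^t \lVert\mathbf{b}^*\rVert^2 (\alpha^t \mathbf{a}^t - \mathbf{a}^*)$, so
\begin{equation*}
(\mathbf{I}_d - \mathbf{a}^*\mathbf{a}^{*\top})\bigl(\mathbf{a}^t - \eta \nabla_{\text{ideal}}\bigr) = \bigl(1 - 2\eta \alpha^{t^2} \lVert\mathbf{b}^*\rVert^2\bigr) (\mathbf{I}_d - \mathbf{a}^*\mathbf{a}^{*\top}) \mathbf{a}^t.
\end{equation*}
A short computation shows $\lVert\mathbf{a}^t - \eta\nabla_{\text{ideal}}\rVert^2 = 1 + 4\eta^2 \alpha^{t^2} \lVert\mathbf{b}^*\rVert^4 \delta^{t^2} \geq 1$, so unit-norm renormalization of $\mathbf{a}^{t+1}$ can only shrink the orthogonal part further. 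Using $\alpha^{t^2} = 1 - \delta^{t^2} \geq 1 - \delta^{0^2}$ (from the monotonicity hypothesis) together with $\eta \leq 1/L_{max}^2 \leq 1/\lVert\mathbf{b}^*\rVert^2$, a short algebraic bound on $(1 - 2\eta\alpha^{t^2}\lVert\mathbf{b}^*\rVert^2)^2 / (1 + 4\eta^2\alpha^{t^2}\lVert\mathbf{b}^*\rVert^4 \delta^{t^2})$ yields the contraction factor $1 - \eta(1-\delta^{0^2})\lVert\mathbf{b}^*\rVert^2$ on $\delta^{t+1^2}$, matching the square-root in the lemma.

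\textbf{Step 3 (main obstacle): propagating stochastic perturbations.} The bulk of the technical work is in rigorously bounding the perturbation of the empirical gradient from its ideal counterpart. I would decompose $\nabla_{\mathbf{a}} L = \nabla_{\text{ideal}} + \nabla_{\text{err}}$, where $\nabla_{\text{err}}$ collects contributions from (i) the spectral deviation $\hat{\Sigma}_i - \mathbf{I}_d$ on the $\mathbf{a}$-step samples (controlled by a second concentration event at level $\epsilon_3$, hence the auxiliary parameter $\tilde\epsilon$) and (ii) the error $\mathbf{e}_{\mathbf{b}}$ propagated from Step~1. Using Assumption~\ref{client-norm-1} to uniformly control $\lVert\mathbf{b}^{t+1}\rVert$, these two contributions combine to $\lVert\nabla_{\text{err}}\rVert = O\bigl((\epsilon' + \tilde\epsilon)\lVert\mathbf{b}^*\rVert^2\bigr)$. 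The hypothesis $\epsilon', \tilde\epsilon < (1-\delta^{0^2})/16$ is calibrated precisely so that, after projection onto the orthogonal subspace of $\mathbf{a}^*$, this perturbation is strictly dominated by the deterministic contraction margin, preserving the stated geometric rate. A union bound over the two concentration events gives the $2q^{-10}$ failure probability. The most delicate part of the argument will be showing that the random perturbation $\nabla_{\text{err}}$ does not disrupt the $\alpha^{0^2}$-scale of the contraction, which is why the error thresholds in the lemma are tied to $1 - \delta^{0^2}$ rather than to smaller quantities that could depend on later iterates.
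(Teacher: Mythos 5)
Your three-step architecture matches the paper's proof exactly (closed-form local minimization of $\mathbf{b}$ plus concentration, then a projected gradient step on $\mathbf{a}$ with renormalization, then a perturbation argument), but there is a genuine gap in how you state the concentration bounds, and it is fatal to the conclusion. In Step~1 you claim $\lVert \mathbf{e}_{\mathbf{b}}\rVert \le \epsilon'\lVert\mathbf{b}^*\rVert$ and in Step~3 you claim $\lVert\nabla_{\mathrm{err}}\rVert = O\bigl((\epsilon'+\tilde\epsilon)\lVert\mathbf{b}^*\rVert^2\bigr)$; both bounds are missing a factor of $\delta^t$. The lemma asserts a purely \emph{multiplicative} contraction $\delta^{t+1}\le \delta^t\sqrt{1-\eta(1-\delta^{0^2})\lVert\mathbf{b}^*\rVert^2}$, valid for every $t$ and hence driving $\delta^t$ to zero. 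An additive perturbation of constant size $O(\eta(\epsilon'+\tilde\epsilon)\lVert\mathbf{b}^*\rVert^2)$ in the orthogonal component cannot be "dominated by the deterministic contraction margin" once $\delta^t$ falls below that scale: you would only obtain convergence to a ball of radius $O(\epsilon'+\tilde\epsilon)$, not the stated geometric rate. Your own derivation shows where the factor is lost: bounding $\lVert\hat\Sigma_i-\mathbf{I}_d\rVert_{op}\le\epsilon_0$ and dividing gives $|\mathbf{a}^{*\top}\hat\Sigma_i\mathbf{a}^t/\mathbf{a}^{t\top}\hat\Sigma_i\mathbf{a}^t-\alpha^t|=O(\epsilon_0)$ with no dependence on the angle. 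The paper instead decomposes $\mathbf{a}^*=\alpha^t\mathbf{a}^t+(\mathbf{I}_d-\mathbf{a}^t\mathbf{a}^{t\top})\mathbf{a}^*$ so that every zero-mean fluctuation being concentrated carries the matrix $(\mathbf{I}_d-\mathbf{a}^t\mathbf{a}^{t\top})\mathbf{a}^*\mathbf{b}^{*\top}$, whose operator norm is exactly $\delta^t\lVert\mathbf{b}^*\rVert$; Bernstein plus an $\epsilon_0$-net then yields $\lVert\bar{\mathbf{b}}-\alpha^t\mathbf{b}^*\rVert\le\epsilon'\,\delta^t\lVert\mathbf{b}^*\rVert$ and a gradient deviation of order $\tilde\epsilon(\epsilon'+1)^2\,\delta^t\lVert\mathbf{b}^*\rVert^2$. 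With that scaling, the perturbation is a constant \emph{fraction} of the contraction margin at every iterate, which is precisely what the thresholds $\epsilon',\tilde\epsilon<(1-(\delta^0)^2)/16$ are calibrated against.

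Two smaller points. First, your Step~2 observation that $\lVert\mathbf{a}^t-\eta\nabla_{\mathrm{ideal}}\rVert\ge 1$ (so renormalization only helps) holds only in the ideal case; with the stochastic gradient the norm of $\hat{\mathbf{a}}^+$ can drop below one, and the paper must separately lower-bound $\lVert\hat{\mathbf{a}}^+\rVert^2\ge 1-4\eta\tilde\epsilon(\epsilon'+1)^2\lVert\mathbf{b}^*\rVert^2-4\eta((\epsilon')^2+\epsilon')\lVert\mathbf{b}^*\rVert^2$ before dividing. Second, your closing algebra in Step~2 needs the quantity $1-2\eta\alpha^{t^2}\lVert\mathbf{b}^*\rVert^2$ to stay nonnegative (equivalently $4u(1-u)\ge v$ in your notation), which does not follow automatically from $\eta\le 1/L_{\max}^2$ when $\eta\lVert\mathbf{b}^*\rVert^2$ and $\alpha^{t^2}$ are both near one; this sign condition should be stated and enforced rather than absorbed into a ``short algebraic bound.''
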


Theorem~\ref{convergence-1} follows by recursively applying Lemma~\ref{lemma:delta-t} and taking a union bound over all $t\in[T]$.
\begin{remark}
The decreasing angle assumption in Lemma~\ref{lemma:delta-t} is a technical tool to simplify the proof. In Theorem~\ref{convergence-1}, this condition is not required: the inductive hypothesis inherently enforces the necessary bounds on angles, bypassing the need for explicit monotonicity.
\end{remark}

\begin{theorem}(Convergence of RoLoRA for linear regressor) \label{convergence-1}
    Suppose we are in the setting described in Section~\ref{setup-analysis-homo} and apply Algorithm~\ref{alg:rolora-linear} for optimization. Given a random initial $\mathbf{a}^0$, an initial angle distance $\delta_0 \in (0,1)$, we set step size $\eta  \leq \frac{1}{L_{max}^2}$ and the number of iterations $T \geq \frac{2}{c(1-(\delta^{0})^2)}\log (\frac{\delta^0}{\epsilon}$), for $c \in (0,1)$. Under these conditions, if with sufficient number of samples $m = \Omega(q)$ and small auxiliary error thresholds $\epsilon'=\frac{\epsilon_2}{(1-\epsilon_0)(1-\epsilon_1)},\tilde{\epsilon}=\frac{\epsilon_3}{1-\epsilon_0},$ such that $ \epsilon', \tilde{\epsilon} < \frac{1-(\delta^0)^2}{16}$ , we achieve that with probability at least $1-2Tq^{-10}$ for $q=\max\left(\frac{\log(N)}{[\min(\epsilon_1,\epsilon_2)]^2},\frac{d\log(\frac{2}{\epsilon_0})}{\epsilon_2^2}\right)$,
\begin{align}
\sin \theta (\mathbf{a}^T, \mathbf{a}^*)\leq \epsilon \nonumber
\end{align}
which we refer to as $\epsilon$-accurate recovery. In addition, 
\begin{align}
    \lVert \mathbf{a}^T(\mathbf{b}^{T+1})^\top- \mathbf{a}^* ({\mathbf{b}}^*)^\top\rVert \leq (1+\epsilon')\epsilon \lVert  \mathbf a^* {\mathbf{b}}^{*^\top}\rVert. \nonumber 
\end{align}
\end{theorem}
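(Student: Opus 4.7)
The plan is to build the proof as the remark suggests: iterate Lemma~\ref{lemma:delta-t} to drive $\delta^t$ geometrically to zero, then convert the final angle bound into the product-norm bound using the fact that $\mathbf{b}^{T+1}$ is an exact local minimizer in $\mathbf{b}$. The argument is essentially an induction on $t$ that simultaneously (i) discharges the monotonicity precondition $\delta^t \leq \delta^{t-1} \leq \dots \leq \delta^0$ required by the lemma and (ii) accumulates the per-iteration failure probability by a union bound.

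First I would set up the induction on $t \in \{0, 1, \dots, T\}$ with inductive hypothesis ``$\delta^0 \geq \delta^1 \geq \cdots \geq \delta^t$ and the event of Lemma~\ref{lemma:delta-t} held at every previous round.'' The base case is immediate from the initialization. For the inductive step, the hypothesis is exactly what is needed to invoke Lemma~\ref{lemma:delta-t}, which yields
\begin{equation}
\delta^{t+1} \leq \delta^{t}\, \rho, \qquad \rho := \sqrt{1-\eta(1-(\delta^{0})^{2})\|\mathbf{b}^{*}\|^{2}} \in (0,1),\nonumber
\end{equation}
so in particular $\delta^{t+1} \leq \delta^{t}$, preserving monotonicity for the next step. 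Unrolling gives $\delta^{T} \leq \delta^{0}\rho^{T}$, and since $\rho^{2} = 1-\eta(1-(\delta^{0})^{2})\|\mathbf{b}^{*}\|^{2}$, the inequality $1-x \leq e^{-x}$ together with the choice $T \geq \tfrac{2}{c(1-(\delta^{0})^{2})}\log(\delta^{0}/\epsilon)$ (with $c$ absorbing the quantity $\eta\|\mathbf{b}^{*}\|^{2}$, which is in $(0,1]$ under $\eta \leq 1/L_{\max}^{2}$ and Assumption~\ref{client-norm-1}) guarantees $\delta^{T} \leq \epsilon$. A union bound over the $T$ invocations of Lemma~\ref{lemma:delta-t} then yields the stated failure probability $2Tq^{-10}$.

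For the second statement, I would exploit the fact that $\mathbf{b}^{T+1}$ is obtained by exact minimization in $\mathbf{b}$ given $\mathbf{a}^{T}$, so on Gaussian data $\mathbf{b}^{T+1}$ concentrates around the ``noiseless'' minimizer $(\mathbf{a}^{T\top}\mathbf{a}^{*})\mathbf{b}^{*}$, with a relative error controlled by the same concentration events used to define $\epsilon' = \epsilon_2/[(1-\epsilon_0)(1-\epsilon_1)]$. Writing
\begin{equation}
\mathbf{a}^{T}(\mathbf{b}^{T+1})^{\top} - \mathbf{a}^{*}\mathbf{b}^{*\top} = \bigl(\mathbf{a}^{T}\mathbf{a}^{T\top}\mathbf{a}^{*}\mathbf{b}^{*\top} - \mathbf{a}^{*}\mathbf{b}^{*\top}\bigr) + \mathbf{a}^{T}\bigl((\mathbf{b}^{T+1})^{\top} - (\mathbf{a}^{T\top}\mathbf{a}^{*})\mathbf{b}^{*\top}\bigr),\nonumber
\end{equation}
the first term equals $-(\mathbf{I}_d-\mathbf{a}^{T}\mathbf{a}^{T\top})\mathbf{a}^{*}\mathbf{b}^{*\top}$, whose Frobenius norm is $\delta^{T}\|\mathbf{b}^{*}\| \leq \epsilon\|\mathbf{a}^{*}\mathbf{b}^{*\top}\|$, while the second term is bounded using the $\epsilon'$-concentration by $\epsilon'\cdot \delta^{T}\|\mathbf{b}^{*}\|$ (since the orthogonal component of $\mathbf{a}^{*}$ relative to $\mathbf{a}^{T}$ has norm $\delta^{T}$). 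Adding gives the factor $(1+\epsilon')\epsilon\|\mathbf{a}^{*}\mathbf{b}^{*\top}\|$; the associated concentration event is already part of the union bound folded into the lemma.

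The main obstacle I expect is not the geometric decay itself, which is mechanical, but ensuring the induction is self-consistent: Lemma~\ref{lemma:delta-t} is conditioned on the monotonicity $\delta^{t}\leq \cdots \leq \delta^{0}$, so one must check that its conclusion actually implies this hypothesis at the next step with no circular dependence on the event at round $t+1$ itself. This is why the remark emphasizes that monotonicity is ``inherently enforced by the inductive hypothesis'' rather than assumed. A secondary subtlety is that the auxiliary thresholds $\epsilon',\tilde{\epsilon}$ must be held fixed across all $T$ iterations while remaining below $(1-(\delta^{0})^{2})/16$, so the choice of $m = \Omega(q)$ in the theorem statement is the same $m$ used in each round, and the bound $\rho < 1$ in Lemma~\ref{lemma:delta-t} is uniform in $t$ because it depends only on $\delta^{0}$, not on the running $\delta^{t}$; this uniformity is what makes the geometric recursion clean.
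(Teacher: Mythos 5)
Your proposal is correct and follows essentially the same route as the paper: induction on $t$ that uses the contraction from Lemma~\ref{lemma:delta-t} to both maintain the monotonicity precondition and drive $\delta^T \le \delta^0 \rho^T \le \epsilon$ via $\log(1-x) < -x$, a union bound over the $T$ rounds, and the same decomposition of $\mathbf{a}^T(\mathbf{b}^{T+1})^\top - \mathbf{a}^*\mathbf{b}^{*\top}$ through the intermediate point $\mathbf{a}^T(\mathbf{g}^T)^\top$ with $\mathbf{g}^T = (\mathbf{a}^{T\top}\mathbf{a}^*)\mathbf{b}^*$, bounding the two pieces by $\delta^T\|\mathbf{b}^*\|$ and $\epsilon'\delta^T\|\mathbf{b}^*\|$ respectively. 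No gaps.
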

\vspace{-0.2cm}
Theorem~\ref{convergence-1} and Lemma~\ref{lemma:delta-t} show that with a random initialization for the unit vector $\mathbf{a}$ $(\delta^0\in (0,1))$, RoLoRA makes the global model converge to the target model exponentially fast with large $q$. The requirement for sample complexity is well-supported, as demonstrated in \cite{collins2022fedavgfinetuninglocal, du2021fewshot}. While the proof of the above results are relegated to the Appendix, we provide a brief outline of the proof. In Appendices~\ref{sec:homo-analysis}, we first analyze the minimization step for updating ${\mathbf{b}_i^t}$ (Lemma~\ref{lemma:b-bar-g-bound}), then establish a bound on the deviation of the gradient from its expectation with respect to $\mathbf{a}$ (Lemma~\ref{lemma:grad-egrad-bound}), and finally derive a bound for $|\sin \theta (\mathbf{a}^{t+1},\mathbf{a}^*)|$ based on the gradient descent update rule for $\mathbf{a}$ (Lemma~\ref{lemma:delta-t}). The proof of Theorem~\ref{convergence-1} is in Section~\ref{subsec:mainProof}.

\paragraph{Intuition on Freezing-A Scheme (FFA-LoRA) can Saturate.}
We begin by applying the FFA-LoRA scheme to a centralized setting, aiming to solve the following optimization problem:
\[
    \min_{\mathbf{b} \in \mathbb{R}^d} \lVert \mathbf{X}\mathbf{a}^*\mathbf{b}^{*^\top}-\mathbf{X}\mathbf{a}^{0} \mathbf{b}^\top \rVert^2 
\]
where $\mathbf{a}^* \in \mathbb{R}^d$ and $\mathbf{b}^* \in \mathbb{R}^d$ represent the ground truth parameters, and $\mathbf{a}^0 \in \mathbb{R}^d$ is the random initialization. The objective can be transformed to $\sum_{p=1}^{d}(\mathbf{a}^*{b}^{*}_p-\mathbf{a}^{0} {b}_p)^\top \mathbf{X}^\top \mathbf{X}(\mathbf{a}^*{b}^{*}_p-\mathbf{a}^{0} {b}_p)$, with ${b}_p$ as the $p$-th entry of $\mathbf{b}$, ${b}_p^*$ as the $p$-th entry of $\mathbf{b}^*$. In FFA-LoRA scheme, $\mathbf{a}^0$ remains fixed during training. If $\mathbf{a}^0$ is not initialized to be parallel to $\mathbf{a}^*$, the objective can never be reduced to zero. This is because optimizing $\mathbf{b}$ only scales the vector $\mathbf{a}^{0} b_p$ along the direction of $\mathbf{a}^{0}$, without altering the angular distance between $\mathbf{a}^{0}$ and $\mathbf{a}^*$.

Suppose we are in the federated setting described in Section~\ref{setup-analysis-homo}, we apply FFA-LoRA, to optimize the objective in \eqref{obj-xi}. In FFA-LoRA scheme, we fix $\mathbf{a}$ of all clients to a random unit vector $\mathbf{a}^0$, where the initial angle distance $\delta^0 = |\sin \theta ( \mathbf{a}^*, \mathbf{a}^0)|, \delta^0\in (0,1)$. And we only update $\mathbf{b}_i$ by minimizing $l_i$ and aggregate them.  
\begin{proposition} \label{fa-saturate-homo}(FFA-LoRA lower bound)
Suppose we are in the setting described in Section~\ref{setup-analysis-homo}. For any set of ground truth parameters ($\mathbf{a}^*,\mathbf{b}^*$), the initialization $\mathbf{a}^0$, initial angle distance $\delta^0\in (0,1)$, we apply FFA-LoRA scheme to obtain a shared global model ($\mathbf{a}^0,{\mathbf{b}}^{FFA}$), yielding an expected global loss of 
\begin{align}
    \mathbb{E}[\frac{1}{Nm}\sum_{i=1}^N\lVert  \mathbf{X}_i\mathbf{a}^*\mathbf{b}^{*^\top}- \mathbf{X}_i\mathbf{a}^0({{\mathbf{b}}^{FFA})^\top}\rVert^2] = (1+\tilde{c})\| \mathbf{b}^*\|^2(\delta^0)^2 \label{loss-ffa}
\end{align}
where the expectation is over all the randomness in the $\mathbf{X}_i$, and $\tilde{c}=O(\frac{1}{Nm})$.
\end{proposition}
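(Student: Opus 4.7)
The plan is to first derive the closed-form local minimizers and aggregate them, then exploit the rotational symmetry of the Gaussian design to reduce the random matrix expressions to simple scalar random variables, and finally compute the expected loss term by term.

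\emph{Step 1 (closed-form local solution and aggregate).} With $\mathbf{a}$ frozen to $\mathbf{a}^0$, decompose the Frobenius loss column-by-column as $\lVert\mathbf{X}_i(\mathbf{a}^*\mathbf{b}^{*\top}-\mathbf{a}^0\mathbf{b}^\top)\rVert^2=\sum_{p=1}^d\lVert b_p^*\mathbf{X}_i\mathbf{a}^*-b_p\mathbf{X}_i\mathbf{a}^0\rVert^2$, which is a $d$-dimensional independent least-squares problem in the scalars $b_p$. Its minimizer is $b_p=\alpha_i b_p^*$ with $\alpha_i=\mathbf{a}^{0\top}\mathbf{X}_i^\top\mathbf{X}_i\mathbf{a}^*/(\mathbf{a}^{0\top}\mathbf{X}_i^\top\mathbf{X}_i\mathbf{a}^0)$, so $\mathbf{b}_i^{\mathrm{loc}}=\alpha_i\mathbf{b}^*$ and the server aggregate is $\mathbf{b}^{FFA}=\bar\alpha\,\mathbf{b}^*$ where $\bar\alpha=\frac1N\sum_i\alpha_i$. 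Substituting back and using $\lVert\mathbf{u}\mathbf{v}^\top\rVert^2=\lVert\mathbf{u}\rVert^2\lVert\mathbf{v}\rVert^2$, the global loss factorizes as $\lVert\mathbf{b}^*\rVert^2\cdot\frac1{Nm}\sum_i\lVert\mathbf{X}_i(\mathbf{a}^*-\bar\alpha\mathbf{a}^0)\rVert^2$.

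\emph{Step 2 (reduction to two independent Gaussian vectors).} Write $\mathbf{a}^*=c_0\mathbf{a}^0+\delta^0\mathbf{u}$ with $c_0=\sqrt{1-(\delta^0)^2}$ (WLOG positive) and $\mathbf{u}$ the unit vector along the orthogonal component. Since $\mathbf{a}^0\perp\mathbf{u}$ are unit vectors and $\mathbf{X}_i$ has i.i.d.\ standard Gaussian entries, $\mathbf{x}_i:=\mathbf{X}_i\mathbf{a}^0$ and $\mathbf{z}_i:=\mathbf{X}_i\mathbf{u}$ are independent $\mathcal{N}(0,\mathbf{I}_m)$ vectors, and across $i$ all $2N$ vectors are mutually independent. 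This gives $\alpha_i=c_0+\delta^0\xi_i$ with $\xi_i=\mathbf{x}_i^\top\mathbf{z}_i/\lVert\mathbf{x}_i\rVert^2$, and crucially $\mathbb{E}[\xi_i\mid\mathbf{x}_i]=0$. Consequently $\mathbf{X}_i(\mathbf{a}^*-\bar\alpha\mathbf{a}^0)=\delta^0(\mathbf{z}_i-\bar\xi\,\mathbf{x}_i)$, and its squared norm expands to $(\delta^0)^2(\lVert\mathbf{z}_i\rVert^2-2\bar\xi\,\mathbf{x}_i^\top\mathbf{z}_i+\bar\xi^{\,2}\lVert\mathbf{x}_i\rVert^2)$.

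\emph{Step 3 (term-by-term expectation).} The leading contribution is $\mathbb{E}[\lVert\mathbf{z}_i\rVert^2/m]=1$, producing $\lVert\mathbf{b}^*\rVert^2(\delta^0)^2$. For the two correction terms I condition on $(\mathbf{x}_1,\dots,\mathbf{x}_N)$, so that the $\mathbf{z}_j$'s remain i.i.d.\ standard Gaussian with $\mathbb{E}[\xi_j\mid\mathbf{x}_j]=0$; all off-diagonal ($j\ne i$) cross terms in $\bar\xi$ then vanish. The surviving diagonal contributions are $\mathbb{E}[\bar\xi\cdot\mathbf{x}_i^\top\mathbf{z}_i]=\frac1N\mathbb{E}[(\mathbf{x}_i^\top\mathbf{z}_i)^2/\lVert\mathbf{x}_i\rVert^2]=\frac1N$ (a one-dimensional $\chi^2$ projection) and $\mathbb{E}[\bar\xi^{\,2}\lVert\mathbf{x}_i\rVert^2]=\frac1{N^2}\bigl((N-1)\mathbb{E}[\xi_1^2]\mathbb{E}[\lVert\mathbf{x}_1\rVert^2]+\mathbb{E}[(\mathbf{x}_i^\top\mathbf{z}_i)^2/\lVert\mathbf{x}_i\rVert^2]\bigr)$. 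Using $\mathbb{E}[\xi_1^2]=\mathbb{E}[1/\lVert\mathbf{x}_1\rVert^2]=\Theta(1/m)$ and $\mathbb{E}[\lVert\mathbf{x}_1\rVert^2]=m$, both correction terms aggregate to $O(1/(Nm))$, yielding $(1+\tilde c)\lVert\mathbf{b}^*\rVert^2(\delta^0)^2$ with $\tilde c=O(1/(Nm))$.

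\emph{Main obstacle.} The delicate point is handling the dependence between $\bar\xi$ (which aggregates across all clients) and the per-client $(\mathbf{x}_i,\mathbf{z}_i)$ that appears inside the same sum. Conditioning on the whole vector $(\mathbf{x}_1,\dots,\mathbf{x}_N)$ and using the zero conditional mean of the Gaussian $\mathbf{z}_j$'s is the cleanest way to decouple; this reduces every polynomial expectation in $\{\xi_j\}$ to a diagonal sum and makes the $O(1/(Nm))$ accounting transparent.
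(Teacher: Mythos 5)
Your proposal is correct and follows essentially the same route as the paper's proof: exact local minimization giving $\mathbf{b}_i = s_i\mathbf{b}^*$, aggregation to $\bar s\,\mathbf{b}^*$, decomposition of $\mathbf{a}^*$ into components along and orthogonal to $\mathbf{a}^0$ so that everything reduces to independent standard Gaussian vectors, and chi-squared moment computations (including $\mathbb{E}[1/\lVert\mathbf{x}\rVert^2]=1/(m-2)$). Your early cancellation $\mathbf{X}_i(\mathbf{a}^*-\bar\alpha\mathbf{a}^0)=\delta^0(\mathbf{z}_i-\bar\xi\,\mathbf{x}_i)$ is a tidier organization than the paper's six-term expansion of $\lVert\mathbf{v}-\mathbf{u}_1-\mathbf{u}_2\rVert^2$, and it yields exactly the same constant $\tilde c=\frac{N(4-m)-2}{N^2m(m-2)}=O(\frac{1}{Nm})$.
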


See Appendix~\ref{fa-heter-saturate} for the proof. 
\begin{remark}
    Proposition~\ref{fa-saturate-homo} holds for any unit vector $\mathbf{a}$ and corresponding $\mathbf{b}$ obtained by fully minimizing the local loss. The same expected loss applies to RoLoRA by substituting RoLoRA's reduced angle into Eq.~\ref{loss-ffa}.
\end{remark}
\vspace{-0.2cm}
\paragraph{Comparison of RoLoRA and FFA-LoRA.} Proposition~\ref{fa-saturate-homo} shows that for any choice of $\delta^0 \in (0,1)$, the global objective reached by FFA-LoRA is shown as in \eqref{loss-ffa}. The global objective of FFA-LoRA is dominated by $\|\mathbf b^*\|^2 (\delta^0)^2$  which is due to the angular distance between $\mathbf{a}^0$ and $\mathbf{a}^*$. 

By Theorem~\ref{convergence-1}, we demonstrate that RoLoRA achieves $\epsilon$-accurate recovery of the global minimizer. Specifically, the expected global loss of RoLoRA can be upper bounded by $(1+\tilde{c})\| \mathbf{b}^*\|^2\epsilon^2$. Under the same initialization and ground truth parameters for both FFA-LoRA and RoLoRA, RoLoRA's ability to update $\mathbf{a}$ reduces the global loss caused by the angle distance between $\mathbf{a}$ and $\mathbf{a}^*$ from $\| \mathbf{b}^*\|^2(\delta^0)^2$ to $\| \mathbf{b}^*\|^2\epsilon^2$. By increasing the number of iterations, $\epsilon$ can be made arbitrarily small. 

\paragraph{Heterogeneous Case.} In Appendix~\ref{rank-1-heter-clients}, we analyze the convergence of RoLoRA with single LoRA structure in a federated setting with \textit{heterogeneous} clients. By showing the decreasing of the angle distance between the ground truth $\mathbf{a}^*$ and the shared down-projection $\mathbf{a}$, we demonstrate that RoLoRA allows the global model to converge to global minimum while the global loss of FFA-LoRA can be dominated by the term caused by the angle distance between the random initialization $\mathbf{a}^0$ and $\mathbf{a}^*$.
\vspace{-0.3cm}


\subsection{Verifying Insights On a Non-Linear Model}
\begin{wrapfigure}[13]{R}{0.5\textwidth}
\includegraphics[width=0.5\textwidth]{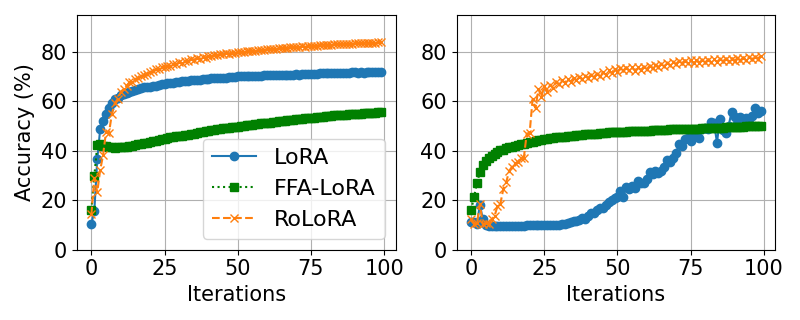} 
\caption{(Left) Comparison of three methods on a toy model with 5 clients. (Right) Comparison of three methods on a toy model with 10 clients.}
\label{fig:mnist-results}
\end{wrapfigure}

The previous analysis considers a linear model for each client. To assess the validity of the theorem in a non-linear model, we consider a two-layer neural network model on each client given by
\begin{equation}
    f_i(x_i) = \textsf{ReLU}( x_i\mathbf{A}\mathbf{B})\mathbf{W}_{out}
\end{equation}
where $ \mathbf{W}_{out} \in \mathbb{R}^{d\times c}$,  $\mathbf{A}\in \mathbb{R}^{d\times r}$ and $\mathbf{B}\in\mathbb{R}^{r\times d}$ are weights. We train the model on MNIST \cite{deng2012mnist}. We consider two different ways to distribute training images to clients. The first is to distribute the images to 5 clients and each client gets access to training images of two specific labels, while the second is to distribute the images to 10 clients and each client only has training images of one specific label. There is no overlap in the training samples each client can access. Only weights matrices $\mathbf{B}$ and $\mathbf{A}$ are tunable, while $\mathbf{W}_{out}$ are fixed. We use $c=10, d=784, r=16$ and make each client train 5 epochs locally with batch-size 64 and aggregate clients' update following three methods: FedAVG of LoRA, referred as LoRA; FFA-LoRA \cite{sun2024improving}, which freezes $\mathbf{A}$ during training, and RoLoRA, which alternately updates $\mathbf{B}$ and $\mathbf{A}$. 

 As shown in Figure~\ref{fig:mnist-results}, we evaluate the performance of the model in each iteration on the test set. 
We observe that the accuracy of FFA-LoRA plateaus around 55\% in both settings, which aligns with our theoretical analysis. The decline in LoRA’s performance with an increasing number of clients is most likely due to less accurate model aggregation, as demonstrated in \eqref{fedavg-mul} and \eqref{fedavg-lora}. Notably, RoLoRA demonstrates greater robustness in these settings. \textit{ To study the impact of non-linearity on RoLoRA}, we repeat the two-layer experiment on a linear network without ReLU. As shown in Figure~\ref{fig:non-linear-add-to-rolora} in the Appendix, RoLoRA benefits more from the added expressiveness of ReLU.

\vspace{-0.3cm}
\subsection{Convergence in Smooth Non-Convex Settings}
\vspace{-0.2cm}
We follow the approach of Li et al.\cite{Li2020On} to analyze the convergence behavior of RoLoRA in smooth, non-convex settings, and derive Theorem\ref{convergence-non-convex} in Appendix. As shown in Theorem\ref{convergence-non-convex}, RoLoRA achieves an $O(1/\sqrt{T})$ convergence rate toward a stationary point under smooth, non-convex conditions, matching the convergence rate established for FedAVG in the same regime.

\vspace{-0.2cm}
\section{Experiments on Language Models}\label{exp}
\vspace{-0.2cm}
In this section, we evaluate the performance of RoLoRA in various federated settings. 
Considering all clients will participate in each round, we will explore the following methods: FedAVG of LoRA (referred as LoRA) \cite{zhang2023towards}, FFA-LoRA \cite{sun2024improving}, FlexLoRA\cite{bai2024federatedfinetuninglargelanguage}, FloRA\cite{wang2024florafederatedfinetuninglarge}, and RoLoRA (ours).

\vspace{-0.3cm}
\paragraph{Implementation \& Configurations.}We implement all the methods based on FederatedScope-LLM \cite{kuang2023federatedscopellm}. We use NVIDIA GeForce RTX 4090 or NVIDIA A40 for all the experiments. To make a fair comparison, for each dataset, we obtain the best performance on test set and report the average over multiple seeds. Specifically, the learning rate is chosen from the set $\{5e-4, 1e-3, 2e-3, 5e-3, 1e-2, 2e-2, 5e-2, 1e-1\}$. Other hyper-parameters for experiments are specified in Table~\ref{tab:exp-set} in Appendix~\ref{exp-setup}. Please note that in all tasks, we compare the performance of the three methods \textit{under the same number of communication rounds}.
\vspace{-0.2cm}
\subsection{Language Understanding Tasks} \label{exp-LU}
\vspace{-0.2cm}
\paragraph{Model and Datasets.} We take the pre-trained RoBERTa-Large (355M) \cite{liu2019roberta} models from the HuggingFace Transformers library, and evaluate the performance of federated finetuning methods on 5 datasets (SST-2, QNLI, MNLI, QQP, RTE) from the GLUE \cite{wang2019glue}. 

\begin{table}[h!]
\centering
{\scriptsize
\begin{tabular}{ccccccccc}
    \toprule
    Rank & Clients Num & Method & SST-2 & QNLI & MNLI & QQP & RTE & Avg. \\
    \midrule

    \multirow{4}{*}{4} & \multirow{4}{*}{3}
    & LoRA       & $\text{\textbf{95.62}}_{\pm \text{0.17}}$ & $\text{91.59}_{\pm \text{0.21}}$ & $\text{\textbf{86.20}}_{\pm \text{0.05}}$ & $\text{86.13}_{\pm \text{0.10}}$ & $\text{81.46}_{\pm \text{1.22}}$ & 88.20 \\
    &             & FFA-LoRA   & $\text{95.18}_{\pm \text{0.09}}$ & $\text{91.35}_{\pm \text{0.32}}$ & $\text{84.58}_{\pm \text{0.21}}$ & $\text{85.50}_{\pm \text{0.25}}$ & $\text{81.10}_{\pm \text{0.33}}$ & 87.48 \\
    &             & FlexLoRA   & $\text{94.91}_{\pm \text{0.18}}$ & $\text{90.16}_{\pm \text{0.49}}$ & $\text{85.16}_{\pm \text{0.69}}$ & $\text{85.69}_{\pm \text{0.17}}$ & $\text{79.3}_{\pm \text{1.05}}$  & 87.04 \\
    &             & \cellcolor{ours} RoLoRA & \cellcolor{ours} $\text{95.49}_{\pm \text{0.16}}$ & \cellcolor{ours} $\text{\textbf{91.64}}_{\pm \text{0.30}}$ & \cellcolor{ours}  $\text{85.70}_{\pm \text{0.04}}$ & \cellcolor{ours}  $\text{\textbf{86.14}}_{\pm \text{0.06}}$ & \cellcolor{ours} $\text{\textbf{82.43}}_{\pm \text{0.84}}$ & \cellcolor{ours} \textbf{88.28} \\
    \midrule

    \multirow{4}{*}{4} & \multirow{4}{*}{20}
    & LoRA       & $\text{94.3}_{\pm \text{0.27}}$  & $\text{86.67}_{\pm \text{2.02}}$ & $\text{78.55}_{\pm \text{7.31}}$ & $\text{83.1}_{\pm \text{0.04}}$  & $\text{51.87}_{\pm \text{3.24}}$ & 78.90 \\
    &             & FFA-LoRA   & $\text{93.88}_{\pm \text{0.06}}$ & $\text{89.11}_{\pm \text{0.19}}$ & $\text{80.99}_{\pm \text{1.74}}$ & $\text{83.92}_{\pm \text{0.2}}$  & $\text{57.16}_{\pm \text{1.46}}$ & 80.01 \\
    &             & FlexLoRA   & $\text{90.97}_{\pm \text{1.78}}$ & $\text{54.36}_{\pm \text{0.36}}$ & $\text{53.30}_{\pm \text{14.59}}$ & $\text{69.18}_{\pm \text{10.39}}$ & $\text{53.19}_{\pm \text{1.45}}$ & 64.20 \\
    &             & \cellcolor{ours}  RoLoRA & \cellcolor{ours} $\text{\textbf{94.88}}_{\pm \text{0.18}}$ & \cellcolor{ours}  $\text{\textbf{90.35}}_{\pm \text{0.37}}$ & \cellcolor{ours} $\text{\textbf{85.28}}_{\pm \text{1.04}}$ & \cellcolor{ours}  $\text{\textbf{85.83}}_{\pm \text{0.1}}$ & \cellcolor{ours} $\text{\textbf{78.82}}_{\pm \text{1.7}}$ & \cellcolor{ours}  \textbf{87.03} \\
    \midrule

    \multirow{4}{*}{4} & \multirow{4}{*}{50}
    & LoRA       & $\text{93.00}_{\pm \text{0.35}}$ & $\text{78.13}_{\pm \text{5.13}}$ & $\text{52.64}_{\pm \text{15.07}}$ & $\text{77.60}_{\pm \text{1.47}}$ & $\text{52.23}_{\pm \text{1.1}}$  & 70.72 \\
    &             & FFA-LoRA   & $\text{93.23}_{\pm \text{0.12}}$ & $\text{85.05}_{\pm \text{0.34}}$ & $\text{69.97}_{\pm \text{5.57}}$  & $\text{78.44}_{\pm \text{0.41}}$ & $\text{55.72}_{\pm \text{1.99}}$ & 76.48 \\
    &             & FlexLoRA   & $\text{54.08}_{\pm \text{5.5}}$  & $\text{55.4}_{\pm \text{2.03}}$  & $\text{39.14}_{\pm \text{2.35}}$  & $\text{72.00}_{\pm \text{7.64}}$ & $\text{52.71}_{\pm \text{0.00}}$ & 54.67 \\
    &             & \cellcolor{ours}  RoLoRA & \cellcolor{ours} $\text{\textbf{94.80}}_{\pm \text{0.17}}$ & \cellcolor{ours}  $\text{\textbf{90.00}}_{\pm \text{0.63}}$ & \cellcolor{ours} $\text{\textbf{82.98}}_{\pm \text{3.36}}$ & \cellcolor{ours}  $\text{\textbf{85.71}}_{\pm \text{0.18}}$ & \cellcolor{ours} $\text{\textbf{75.57}}_{\pm \text{2.88}}$ & \cellcolor{ours}  \textbf{85.81} \\
    \midrule

    \multirow{4}{*}{8} & \multirow{4}{*}{50}
    & LoRA       & $\text{93.00}_{\pm \text{0.23}}$ & $\text{79.87}_{\pm \text{1.52}}$ & $\text{56.96}_{\pm \text{2.02}}$ & $\text{77.45}_{\pm \text{1.97}}$ & $\text{53.79}_{\pm \text{6.57}}$ & 64.03 \\
    &             & FFA-LoRA   & $\text{92.74}_{\pm \text{0.13}}$ & $\text{83.69}_{\pm \text{0.75}}$ & $\text{64.51}_{\pm \text{1.92}}$ & $\text{79.71}_{\pm \text{2.04}}$ & $\text{53.07}_{\pm \text{1.3}}$  & 72.46 \\
    &             & FlexLoRA   & $\text{50.92}_{\pm \text{0.00}}$ & $\text{56.92}_{\pm \text{1.04}}$ & $\text{37.43}_{\pm \text{2.80}}$ & $\text{66.40}_{\pm \text{4.74}}$ & $\text{52.59}_{\pm \text{0.21}}$ & 52.85 \\
    &             & \cellcolor{ours}  RoLoRA & \cellcolor{ours} $\text{\textbf{94.53}}_{\pm \text{0.17}}$ & \cellcolor{ours}  $\text{\textbf{90.1}}_{\pm \text{0.45}}$ & \cellcolor{ours} $\text{\textbf{85.17}}_{\pm \text{0.41}}$ & \cellcolor{ours}  $\text{\textbf{85.25}}_{\pm \text{0.13}}$ & \cellcolor{ours} $\text{\textbf{76.3}}_{\pm \text{4.9}}$  & \cellcolor{ours}  \textbf{86.27} \\
    \bottomrule
\end{tabular}
\vspace{3pt}
\caption{Results for four methods with RoBERTa-Large models across varying client numbers (3, 20, 50), maintaining a constant sample count during fine-tuning.}
\label{tab:Clients-num-flex-appendix}
}
\end{table}

\newpage
\begin{wrapfigure}[19]{R}{0.5\textwidth}
\begin{center}
\begin{subfigure}
  \centering
  \includegraphics[width=0.425\linewidth]{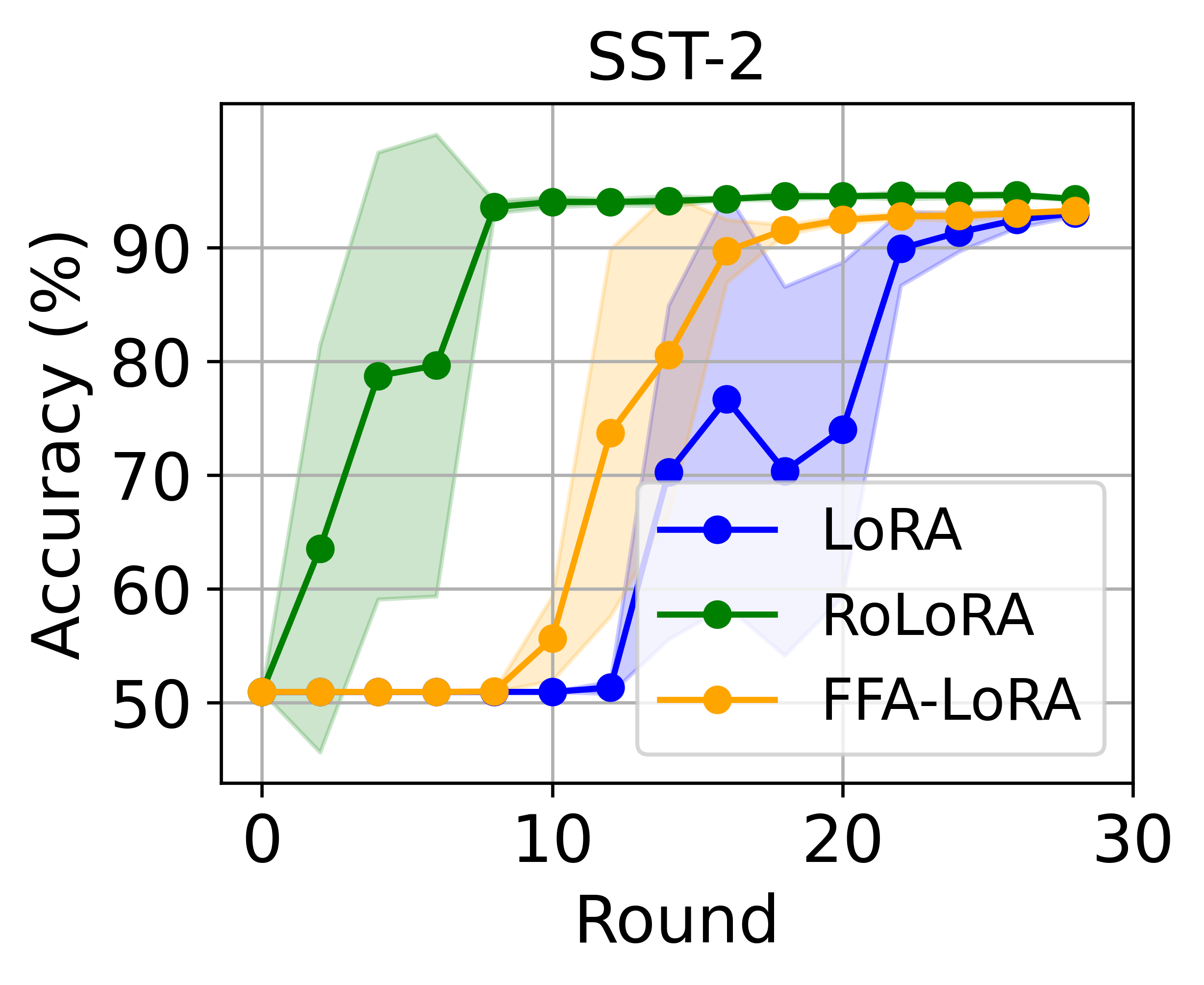}
  \label{fig:sub1-}
\end{subfigure}
\begin{subfigure}
  \centering
  \includegraphics[width=0.4\linewidth]{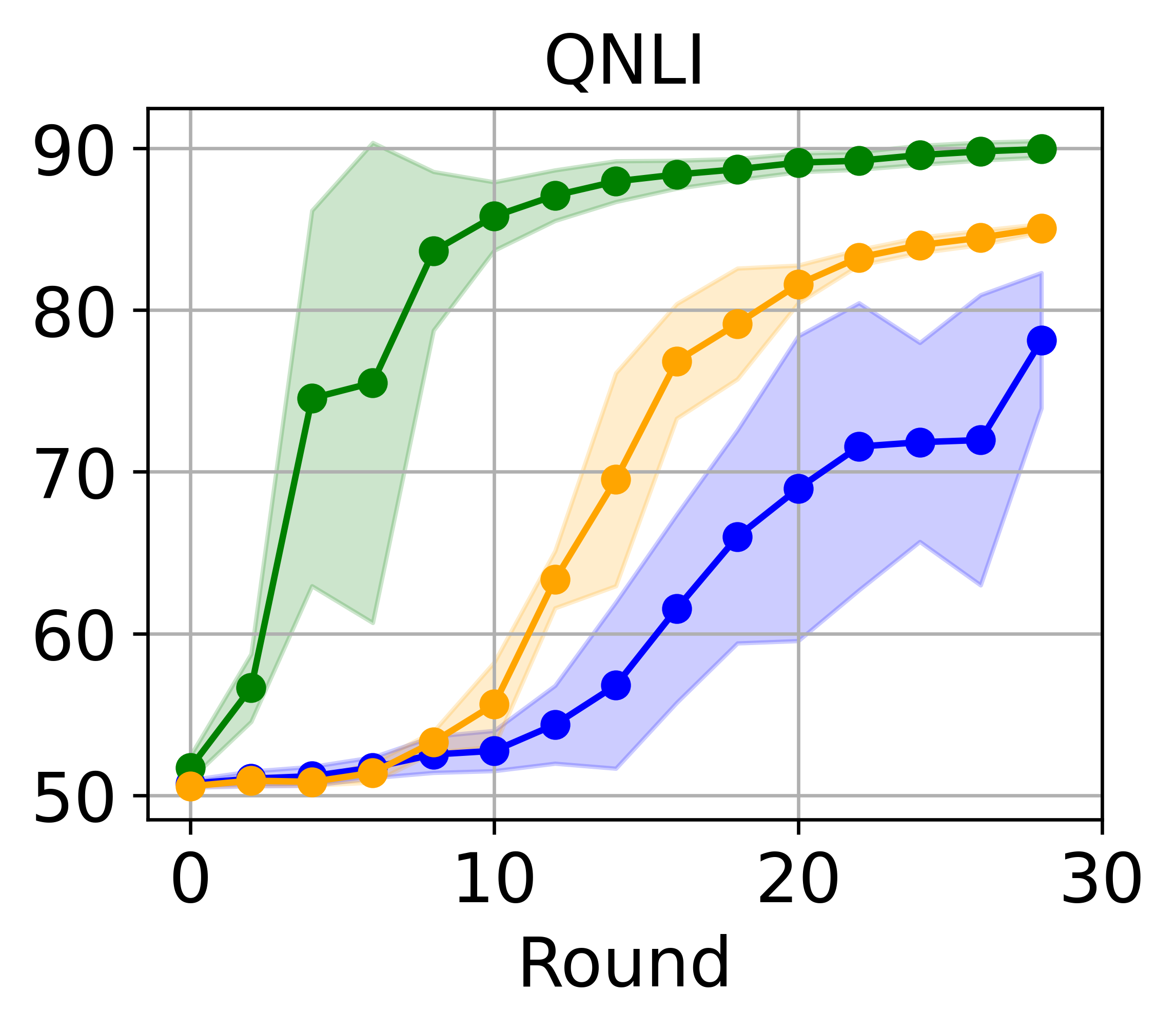}
  \label{fig:sub2-}
\end{subfigure}
\begin{subfigure}
  \centering
  \includegraphics[width=0.425\linewidth]{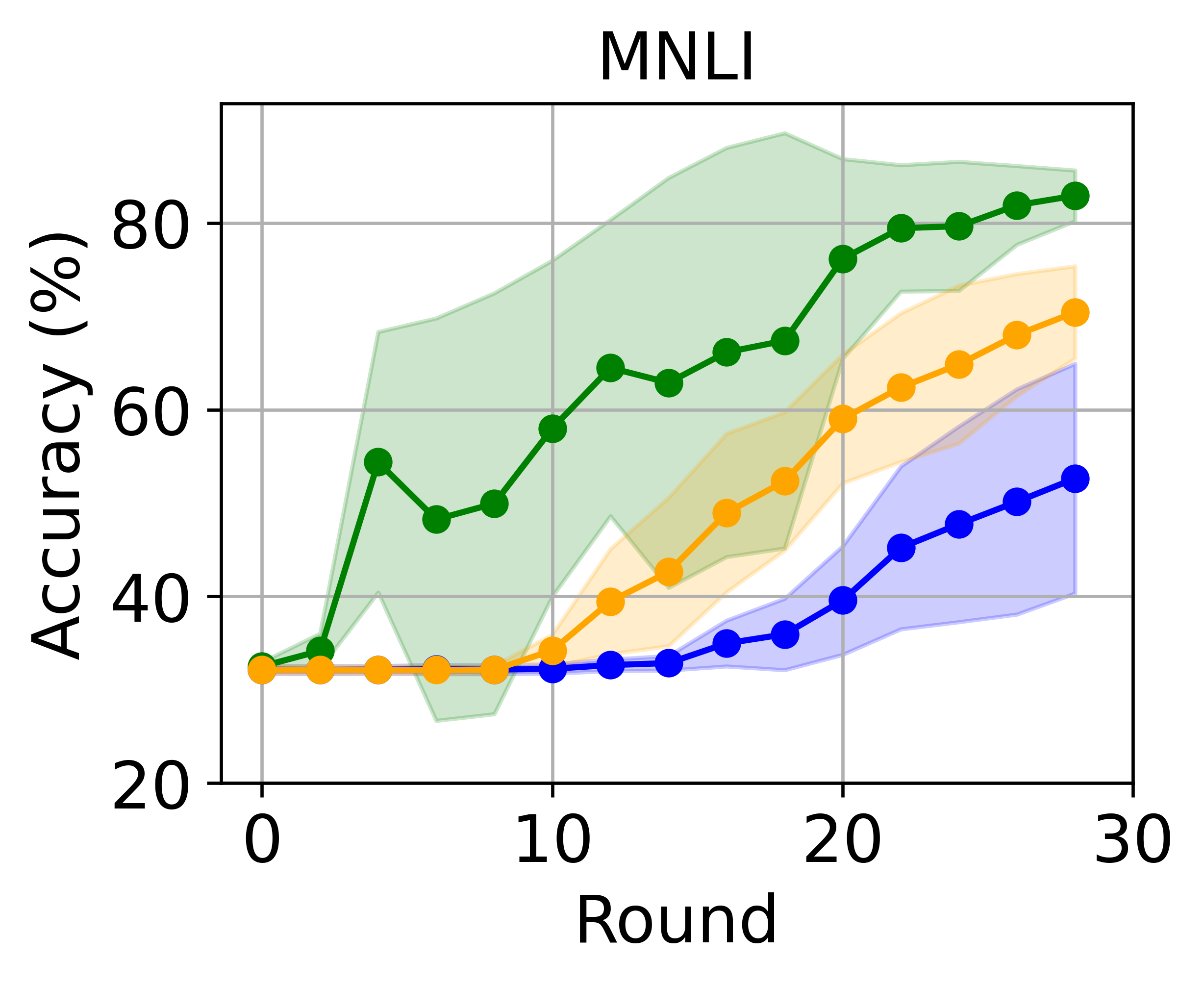}
  \label{fig:sub3-}
\end{subfigure}
\begin{subfigure}
  \centering
  \includegraphics[width=0.4\linewidth]{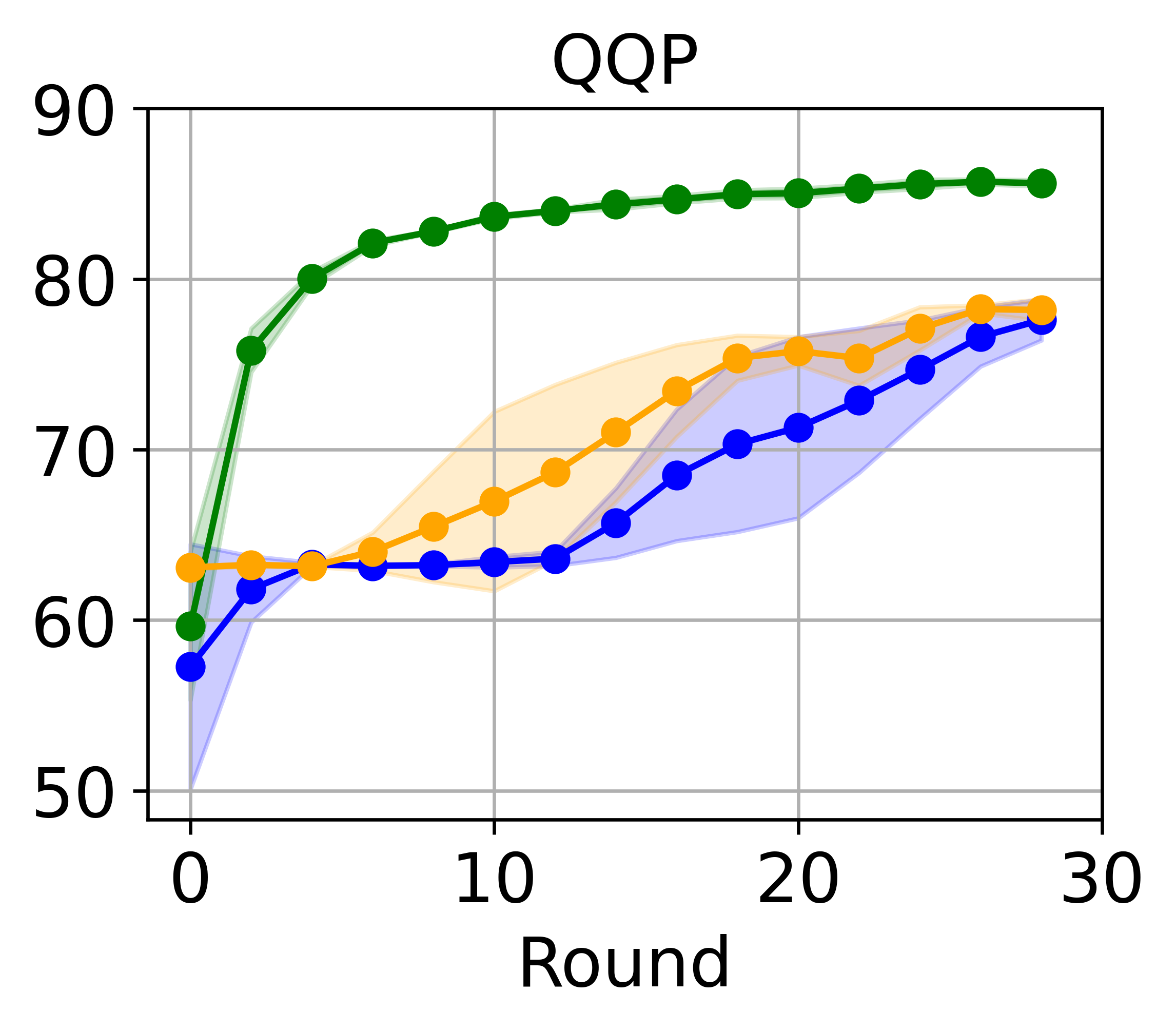}
  \label{fig:sub4-}
\end{subfigure}
 \caption{Accuracies over rounds with RoBERTa-Large models on SST-2, QNLI, MNLI, and QQP. It involves 50 clients using rank 4.}
 \label{fig:convergence-speed}
 \end{center}
 \end{wrapfigure}

\vspace{-0.2cm}
\paragraph{Effect of Number of Clients.}
In Table~\ref{tab:Clients-num-flex-appendix}, we increased the number of clients from 3 to 20, and then to 50, ensuring that there is no overlap in the training samples each client can access. Consequently, each client receives a smaller fraction of the total dataset. The configurations are presented in Table~\ref{tab:layer_type_index-client-num} in Appendix. We observe that as the number of clients increases, while maintaining the same number of fine-tuning samples, the performance of the LoRA method significantly deteriorates for most datasets. In contrast, RoLoRA maintains its accuracy levels. The performance of FFA-LoRA also declines, attributed to the limited expressiveness of the random initialization of $\mathbf{A}$ for clients' heterogeneous data. FlexLoRA shows significant degradation especially under high client counts. Notably, RoLoRA achieves this accuracy while incurring only half the communication costs associated with LoRA and FlexLoRA. A comparison when using rank-2 LoRA adapter is shown in Table~\ref{tab:Clients-num-appendix-rank-2} in Appendix.

Furthermore, we have provided performance comparison of FLoRA and RoLoRA in the same settings in Table~\ref{tab:flora_vs_rolora} in Appendix. RoLoRA consistently outperforms FLoRA across tasks and client counts. 
Additionally, Figure~\ref{fig:convergence-speed} illustrates the finetuning dynamics, highlighting that RoLoRA converges substantially faster than the other methods. An extended 100-round version is provided in Figure~\ref{fig:convergence-client50-100round} in the Appendix, further demonstrating RoLoRA’s superior accuracy when all baselines have converged.

\begin{wraptable}[12]{R}{0.6\textwidth}
{\scriptsize
\centering

\begin{tabular}{ll|cc|cc}
\toprule
\multirow{2}{*}{} & \multirow{2}{*}{} & \multicolumn{2}{c|}{Dir(0.5), \#Clients = 10} & \multicolumn{2}{c}{Dir(1.0), \#Clients = 15} \\
\cmidrule(lr){3-4} \cmidrule(lr){5-6}
 & & {MNLI} & {QQP} & {MNLI} & {QQP} \\
\midrule
{LoRA}     &   & 81.19 \com{0.23}  & 82.60 \com 0.41 & 74.54 \com 1.19 & 81.49 \com 0.60 \\
{FFA-LoRA} &   & 75.60 \com{0.21}  & 81.47 \com 0.87 & 74.83 \com 0.59 & 78.62 \com 1.67 \\
{FlexLoRA} &   & 35.45 \com 0.00   & 63.24 \com 0.09 & 35.45 \com 0.00 &  66.56 \com 4.48 \\
\rowcolor{ours} {RoLoRA}   &   & \textbf{82.60} \com 0.69   & \textbf{ 84.16 }\com 0.65 & \textbf{81.55} \com 0.44 & \textbf{84.26} \com 0.26\\
\bottomrule
\end{tabular}
\vspace{1pt} 
\caption{Performance comparison of methods on MNLI and QQP under different dirichlet distributions and client settings. We report the average and std. over three seeds.}
\label{tab:noniid-llm}}
\end{wraptable}
\vspace{-0.3cm}
\paragraph{Effect of Non-IID Data Distribution.}
Table~\ref{tab:noniid-llm} presents a performance comparison of LoRA, FFA-LoRA, FlexLoRA, and RoLoRA on the MNLI and QQP tasks under two federated settings: Dirichlet(0.5) with 10 clients and Dirichlet(1.0) with 15 clients. Here, the Dirichlet($\alpha$) parameter controls how non-iid the data is across clients: a smaller $\alpha$ produces highly skewed and heterogeneous client data distributions, while a larger $\alpha$ yields more balanced, moderately non-iid splits. Across both tasks and settings, RoLoRA consistently achieves the highest performance. The configurations are presented in Table~\ref{tab:layer_type_index-client-num} in Appendix.

\paragraph{Effect of Number of Finetuning Parameters.}
In Figure~\ref{fig:five_subfigures}, we compare three methods across five GLUE datasets. We apply LoRA module to every weight matrix of the selected layers, given different budgets of LoRA parameters. For each dataset, we experiment with three budgets (${\mathcal{P}_1, \mathcal{P}_2, \mathcal{P}_3}$) ranging from low to high. The corresponding layer sets that are attached with LoRA adapters, ${\mathcal{P}_1, \mathcal{P}_2, \mathcal{P}_3}$, are detailed in Table~\ref{tab:layer_type_index} in Appendix~\ref{exp-setup}. The figures indicates that with sufficient number of finetuning parameters, FFA-LoRA can achieve comparable best accuracy as LoRA and RoLoRA, aligning with the results in \cite{sun2024improving}; as the number of LoRA parameters is reduced, the performance of the three methods deteriorates to varying degrees. However, RoLoRA, which achieves performance comparable to LoRA, demonstrates greater robustness compared to FFA-LoRA, especially under conditions of limited fine-tuning parameters. It is important to note that with the same finetuning parameters, the communication cost of RoLoRA and FFA-LoRA is always half of that of LoRA due to their parameter freezing nature. This implies that RoLoRA not only sustains its performance but also enhances communication efficiency. Additional results of \textit{varying ranks} are provided in Figure~\ref{fig:five_subfigures-rank-8}, \ref{fig:five_subfigures-rank-2}, and \ref{fig:five_subfigures-rank-1} in Appendix~\ref{app:num-lora-appendix}.
\begin{figure}
\begin{center}
\begin{subfigure}
  \centering
  \includegraphics[width=0.161\linewidth]{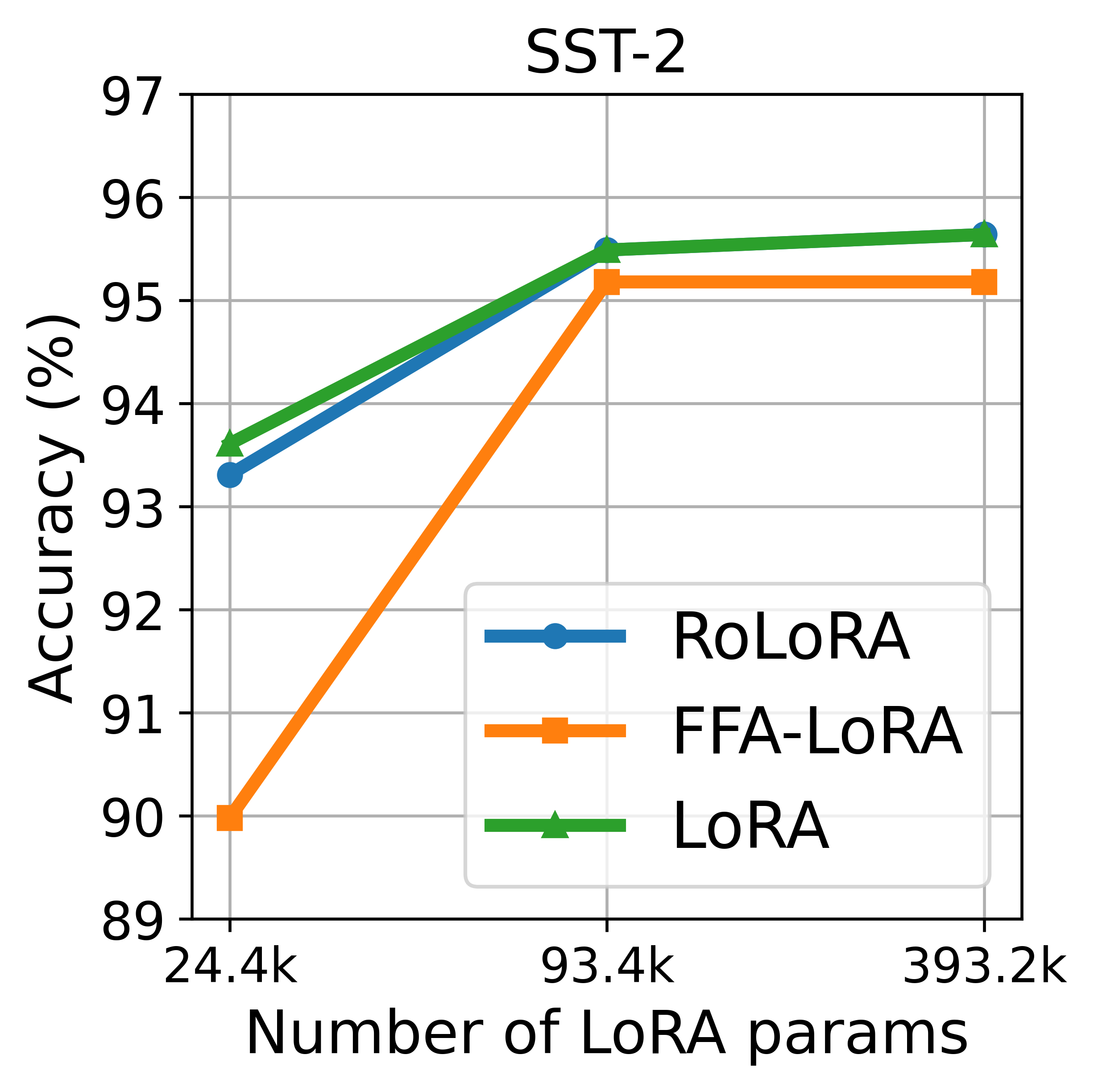}
  \label{fig:sub1}
\end{subfigure}
\hfill
\begin{subfigure}
  \centering
  \includegraphics[width=0.154\linewidth]{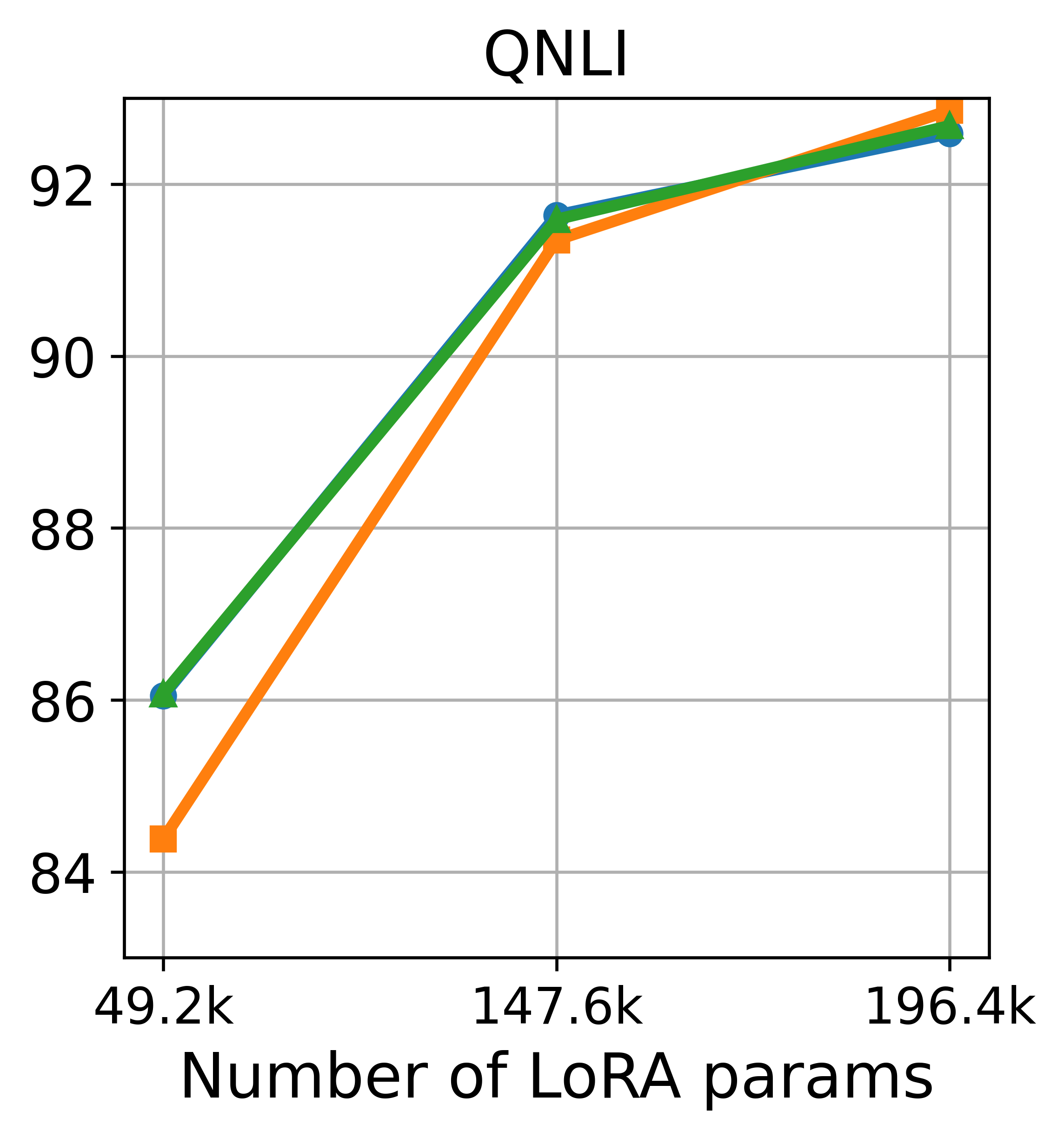}
  \label{fig:sub2}
\end{subfigure}
\hfill
\begin{subfigure}
  \centering
  \includegraphics[width=0.154\linewidth]{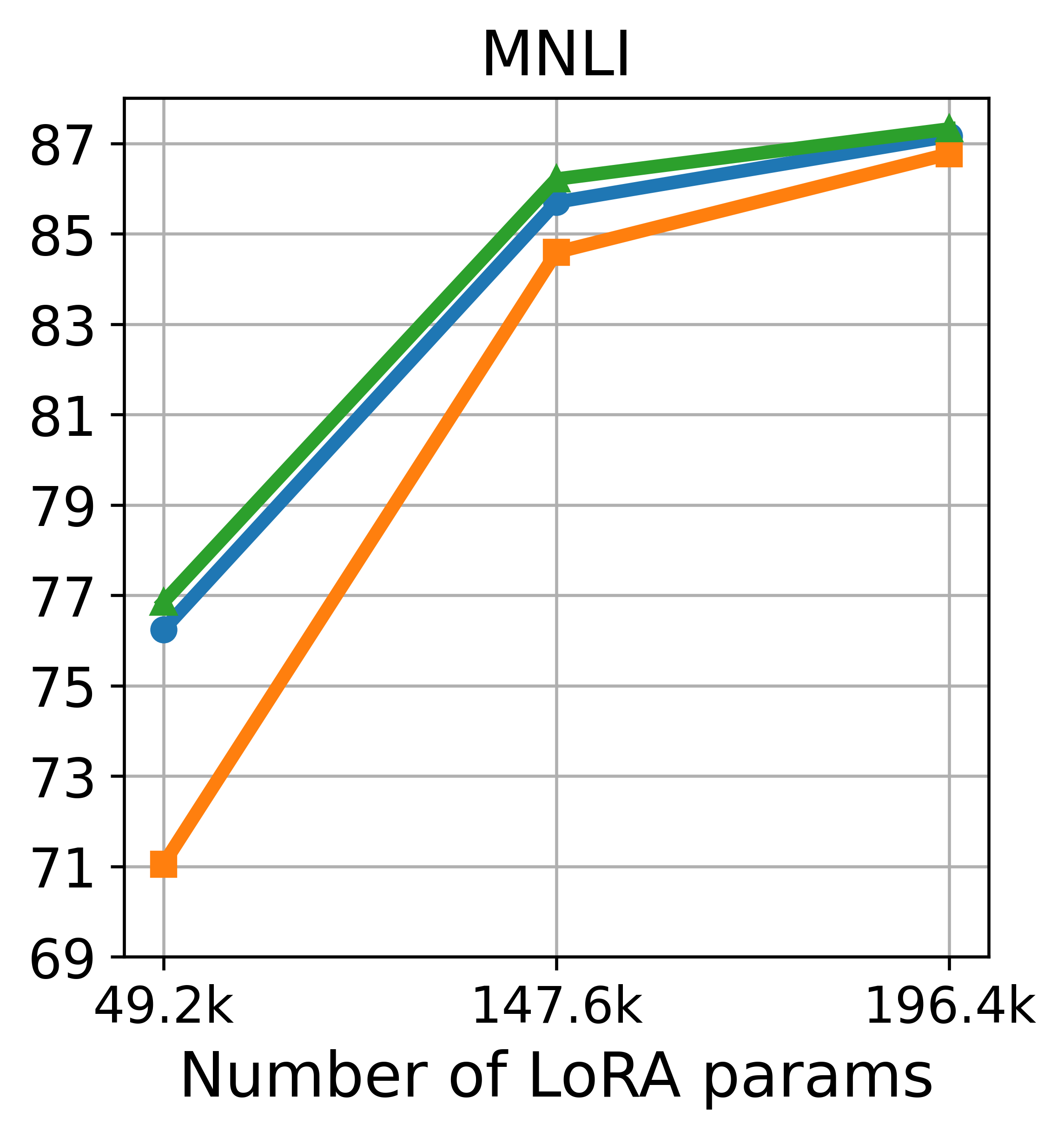}
  \label{fig:sub3}
\end{subfigure}
\hfill
\begin{subfigure}
  \centering
  \includegraphics[width=0.154\linewidth]{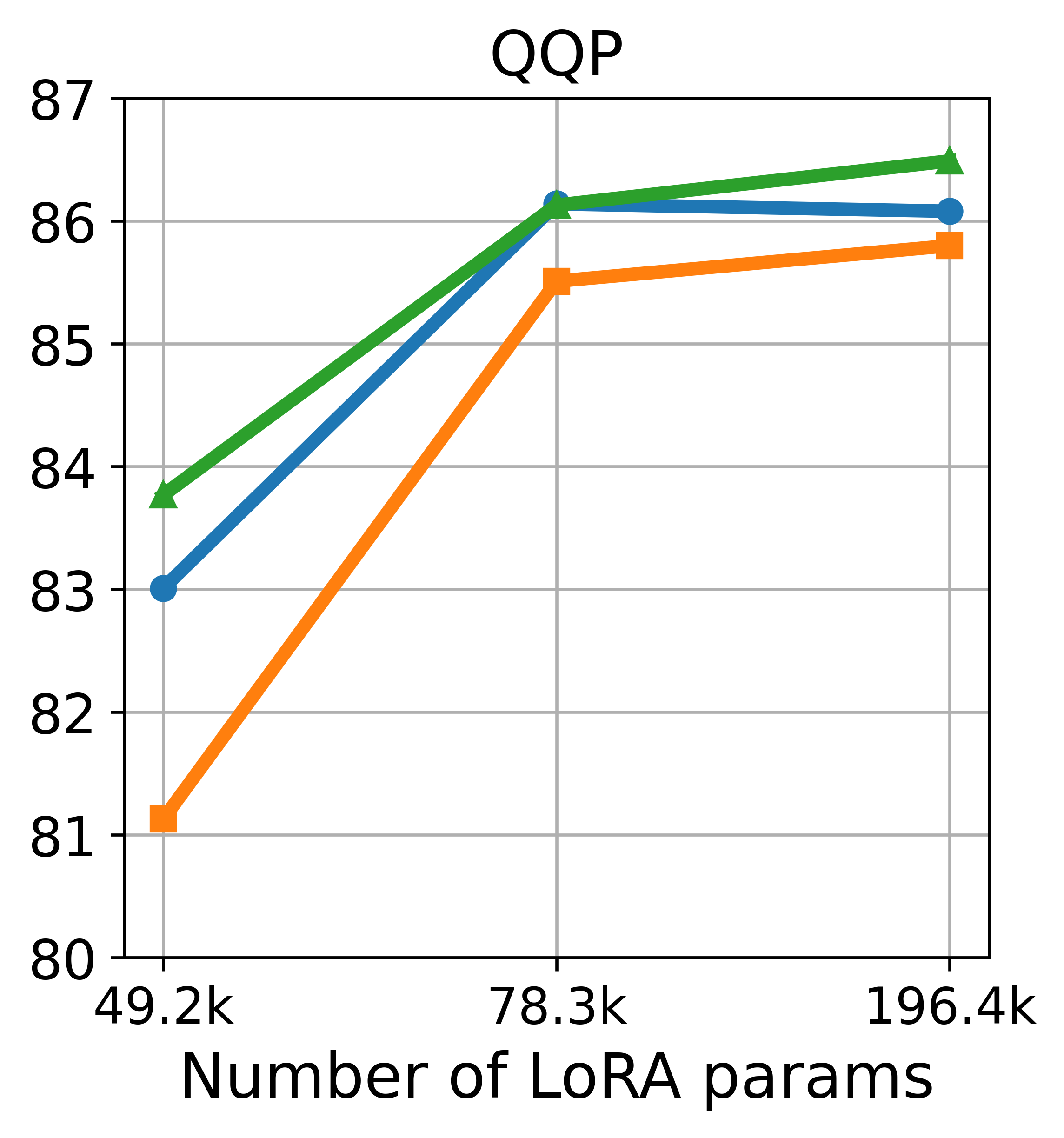}
  \label{fig:sub4}
\end{subfigure}
\hfill
\begin{subfigure}
  \centering
  \includegraphics[width=0.154\linewidth]{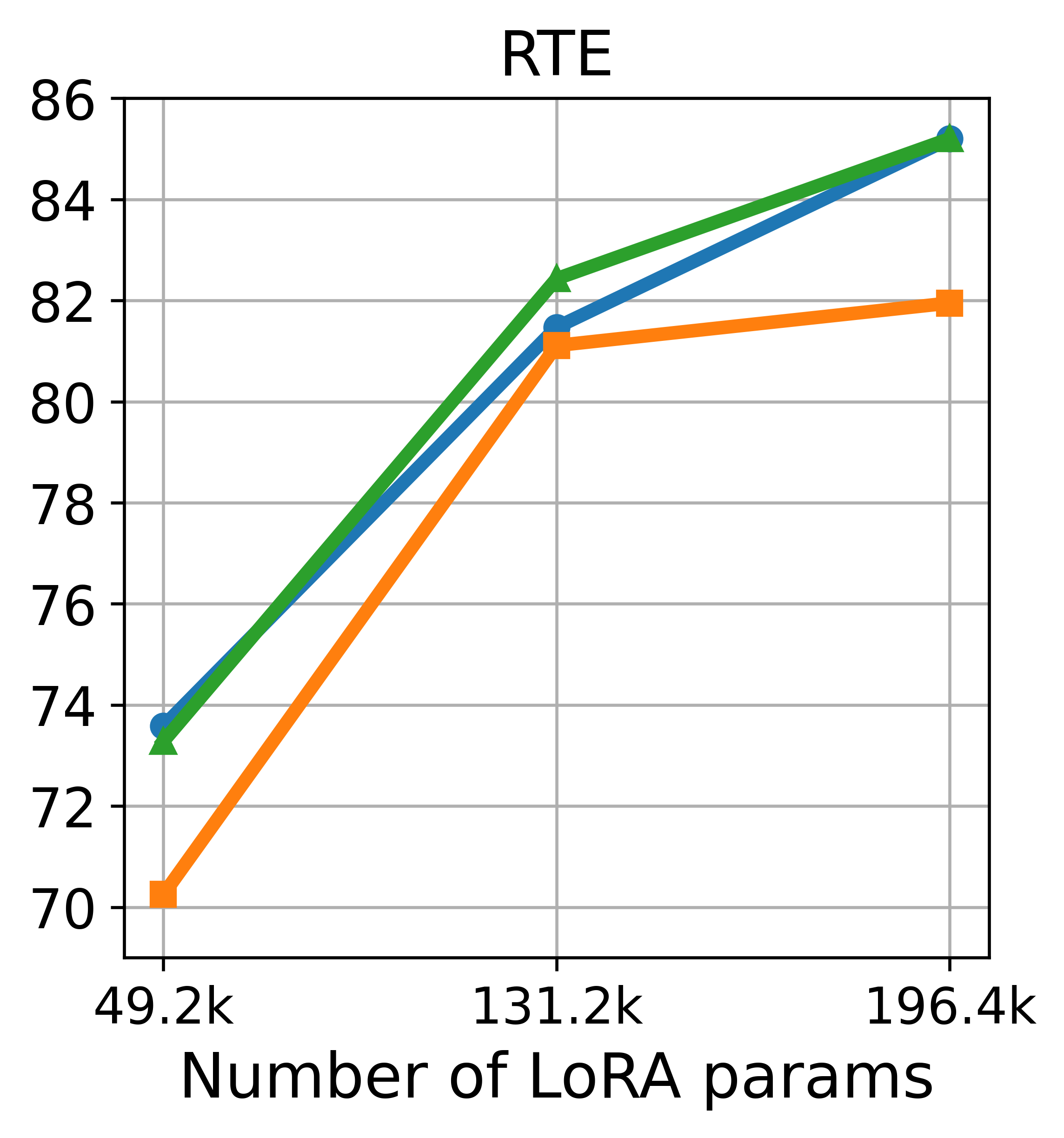}
  \label{fig:sub5}
\end{subfigure}
 \caption{Results with RoBERTa-Large models on GLUE under different fine-tuning parameter budgets, involving three clients with rank 4.}
    \label{fig:five_subfigures}
\hfill
\end{center}
\end{figure}

\begin{figure}[H]
    \begin{minipage}{0.58\textwidth}
        {\scriptsize
    \centering
    \begin{tabular}{ccccc}
    \toprule
         & BoolQ & PIQA & SIQA & HellaSwag \\
         \midrule
        LoRA & $\text{61.42}_{\pm \text{0.29}}$ & $\text{33.19}_{\pm \text{9.8}}$ & $\text{31.88}_{\pm \text{3.95}}$ & $\text{21.23}_{\pm \text{2.82}}$  \\
        FFA-LoRA & $\text{53.43}_{\pm \text{4.3}}$ & $\text{35.49}_{\pm \text{9.55}}$ & $\text{10.63}_{\pm \text{8.44}}$ & $\text{11.81}_{\pm \text{4.53}}$ \\
        \rowcolor{ours}{RoLoRA}  &  $\text{\textbf{61.83}}_{\pm \text{0.22}}$ & $\text{\textbf{61.26}}_{\pm \text{3.3}}$ & $\text{\textbf{39.76}}_{\pm \text{0.41}}$ & $\text{\textbf{27.49}}_{\pm \text{2.34}}$ \\
        \midrule
        & WinoGrande & ARC-e & ARC-c & OBQA \\
        \midrule
         LoRA &  $\text{31.36}_{\pm \text{5.02}}$ & $\text{27.36}_{\pm \text{0.89}}$ & $\text{32.03}_{\pm \text{1.14}}$  &$\text{26.07}_{\pm \text{2.32}}$ \\
        FFA-LoRA &  $\text{1.61}_{\pm \text{2.14}}$ & $\text{6.88}_{\pm \text{0.42}}$ & $\text{7.93}_{\pm \text{0.89}}$ & $\text{15.0}_{\pm \text{5.41}}$\\
        \rowcolor{ours}{RoLoRA}  &  $\text{\textbf{47.67}}_{\pm \text{0.75}}$ & $\text{\textbf{33.19}}_{\pm \text{1.29}}$ & $\text{\textbf{40.13}}_{\pm \text{1.73}}$ & $\text{\textbf{31.67}}_{\pm \text{1.4}}$\\
        \bottomrule
    \end{tabular}
    \captionof{table}{Results with Llama-2-7B models on commonsense reasoning tasks. This involves 50 clients using rank 8.}
    \label{tab:reasoning}}
    \end{minipage}
    \hfill
      \centering
    \begin{minipage}{0.36\textwidth}
        \centering
        \includegraphics[width=0.67\textwidth]{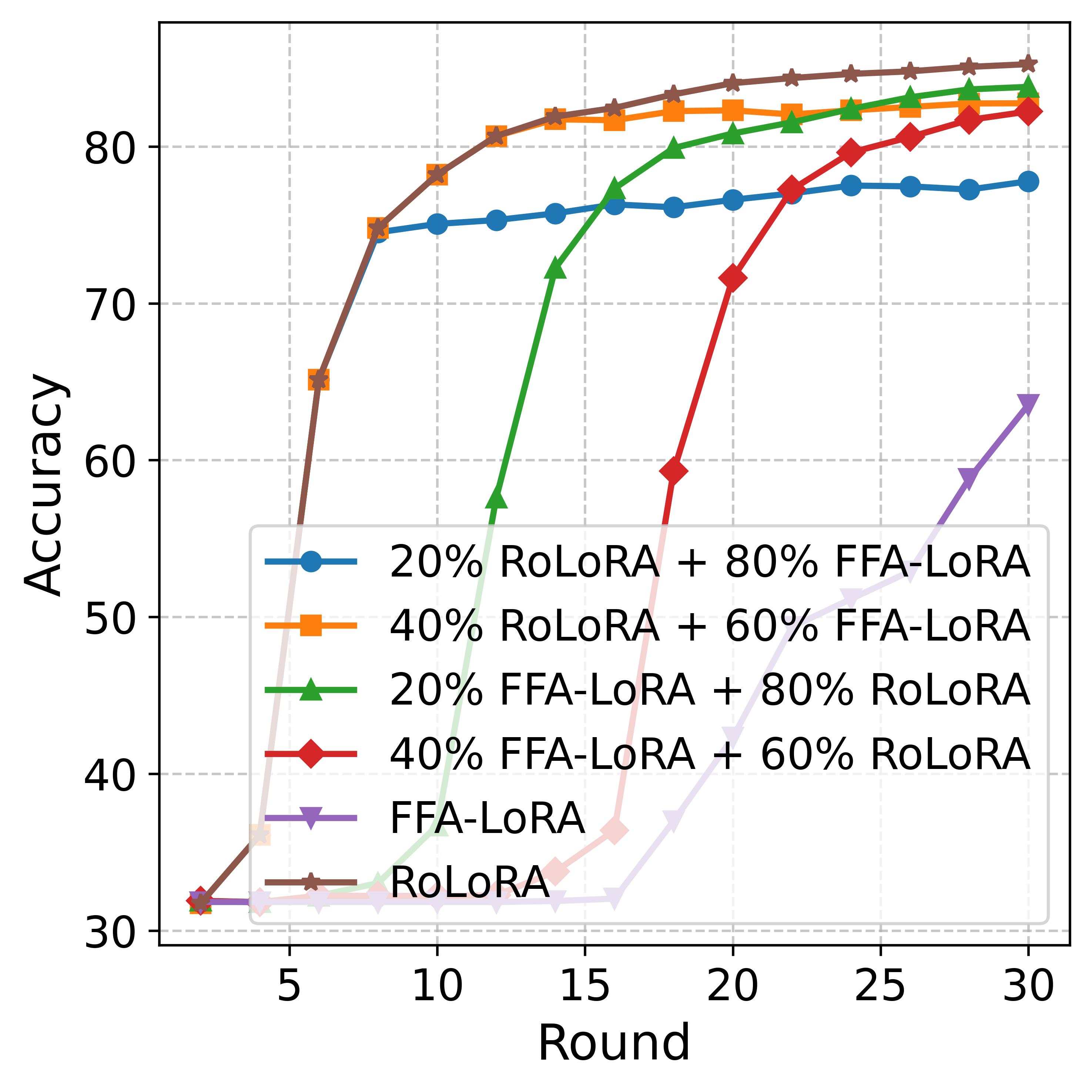}
        \caption{Ablation study on learning vs. freezing 
A on MNLI task.}
        \label{fig:ablation}
    \end{minipage}
    
\end{figure}

\vspace{-0.2cm}
\subsection{Commonsense Reasoning Tasks}


\paragraph{Results.}
We evaluate RoLoRA against FFA-LoRA and LoRA on Llama-2-7B\cite{touvron2023llama2openfoundation} for commonsense reasoning tasks. In Table~\ref{tab:reasoning}, we compare the results of three methods with Llama-2-7B models on 8 commonsense reasoning tasks. The configurations are presented in Appendix~\ref{sec:setup-CRT}. The performance is reported as the mean accuracy with standard deviations across 3 trials. RoLoRA consistently achieves the highest accuracy across all tasks, demonstrating significant improvements over both LoRA and FFA-LoRA. We also highlights that FFA-LoRA exhibits large performance variances across trials, such as a standard deviation of 9.55 for PIQA and 8.44 for SIQA, respectively. This significant variability is likely due to the initialization quality of parameter $\mathbf{A}$, as different initializations could lead to varying optimization trajectories and final performance outcomes as discussed in Section~\ref{sec:analysis}. Additional results on this task are presented in Table~\ref{tab:reasoning-rank2-appendix} in Appendix~\ref{sec:setup-CRT}.

\subsection{Ablation Study}

\paragraph{Learning vs. Freezing A}
we conducted an experiment comparing performance of FFA-LoRA, RoLoRA, and different mixing strategies under the setting with 50 clients. In these strategies, for example, 20\%RoLoRA+80\%FFA-LoRA means we finetune with RoLoRA (where A is learned) for the first 20\% of communication rounds, followed by FFA-LoRA (where A is frozen) for the remaining 80\%. The results are shown in the Figure~\ref{fig:ablation}. We observe that finetuning with RoLoRA generally leads to faster convergence and higher final accuracy. This highlights the benefits of learning A, especially in early training.
\begin{wrapfigure}[15]{R}{0.4\textwidth}
\begin{center}
\includegraphics[width=0.4\textwidth]{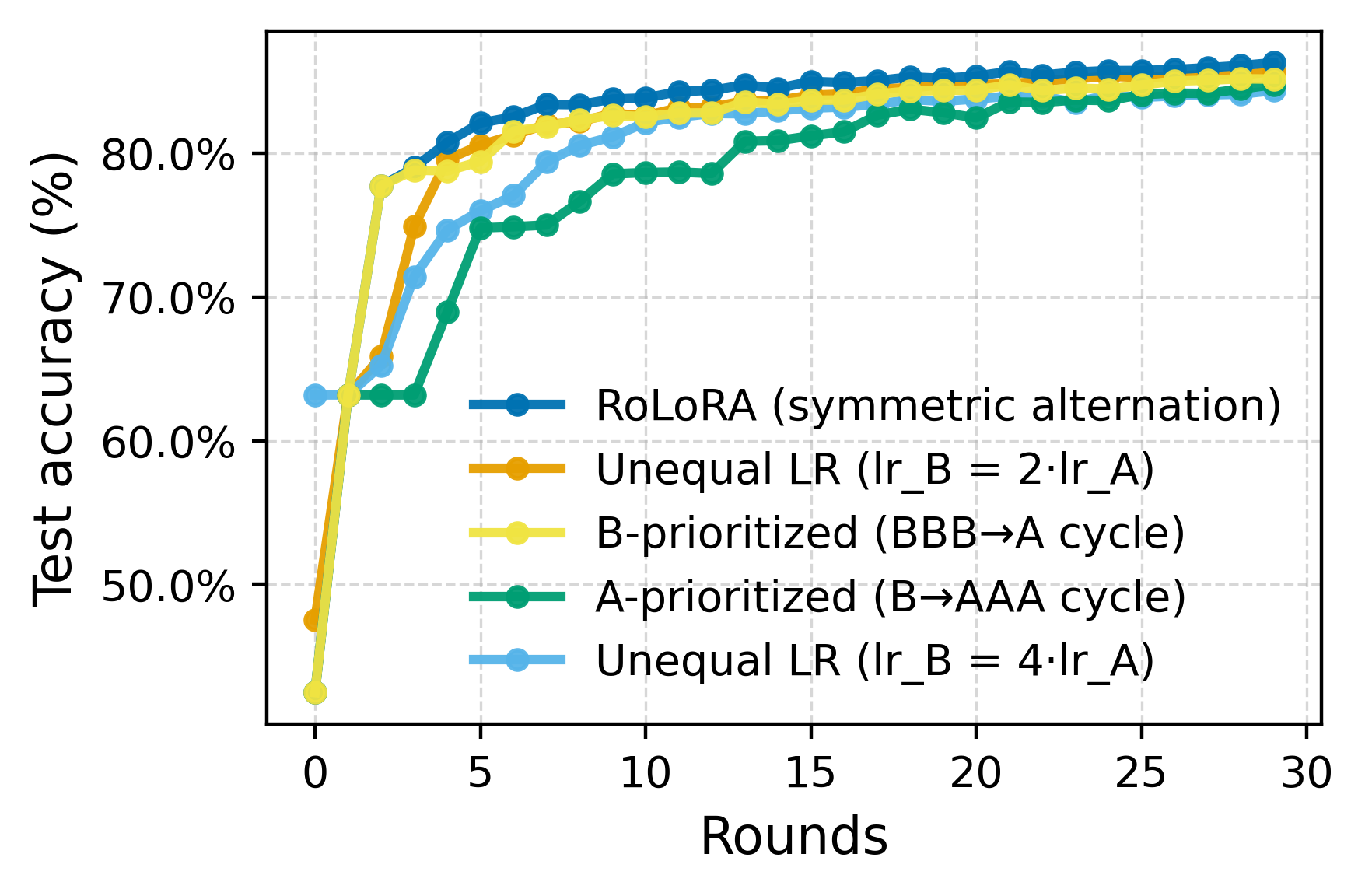}
 \caption{Asymmetry vs. Symmetry in LoRA updates. Accuracy vs. round.}
 \label{fig:asy-sy}
 \end{center}
 \end{wrapfigure}
\paragraph{Symmetry vs. Asymmetry Update}
In standard LoRA implementations, LoRA-$A$ is randomly initialized while LoRA-$B$ is set to zero, which implicitly assigns asymmetric roles. In our study, we view LoRA-$A$ as a learnable basis and LoRA-$B$ as coefficients on that basis. Motivated by this, we investigated whether an asymmetric update policy might be preferred. We systematically compare four strategies: (i) the symmetric alternation used in standard RoLoRA, (ii) B-prioritized multi-step updates (B, B, B, A, $\dots$), (iii) A-prioritized multi-step updates (B, A, A, A, $\dots$), and (iv) unequal learning rates (e.g., setting $lr_{B}=2lr_{A}$ or $lr_{B}=4lr_{A}$). As shown in Figure~\ref{fig:asy-sy}, balanced AB alternation yields the highest accuracy and the most stable trajectory, while aggressively prioritizing either A or B degrades performance.

\paragraph{Effect of Local Steps on RoLoRA and FFA-LoRA} To evaluate the effect of local steps, we have conducted an ablation study on the number of local steps in a 50-client setting with rank-4 adapters, as shown in Table~\ref{tab:ablation-local-steps}. For a fair comparison, we kept the total computational budget ($\#$Local Steps $\times$ $\#$Total Rounds) constant across all settings. FFA-LoRA's performance consistently degrades on both datasets as the number of local steps increases. This indicates that with more local work per round, FFA-LoRA suffers severely from client drift, where local models overfit to their own data. RoLoRA maintains high performance across all settings. 

\begin{table}[h]
    \centering
    {\scriptsize
    \begin{tabular}{cccccc}
    \toprule
         & (Local Steps, Total Rounds)  & (1,600)  & (5,120) & (10,60) & (20,30)\\
         \midrule
        \multirow{2}{*}{MNLI} & FFA-LoRA & 72.52 \com 0.68 & 71.73 \com 1.17  & 69.64 \com 4.31 & 69.97 \com 5.57 \\
         &  \cellcolor{ours} RoLoRA  & \cellcolor{ours}84.39 \com 0.34 &\cellcolor{ours} 84.96 \com 0.18 & \cellcolor{ours}84.79  \com 0.23& \cellcolor{ours} 82.98 \com 3.36\\
         \midrule
       \multirow{2}{*}{QQP}  &  FFA-LoRA & 80.51 \com 1.38  & 80.2 \com 1.65 & 79.07 \com 1.21 & 78.44 \com 0.41\\
         &\cellcolor{ours} RoLoRA & \cellcolor{ours}85.24 \com 0.56 &\cellcolor{ours} 85.44 \com 0.8 &\cellcolor{ours} 84.77 \com 0.77 &\cellcolor{ours} 85.71 \com 0.18\\
         \bottomrule
    \end{tabular}}
    \vspace{4pt}
    \caption{Results on RoBERTa-Large on MNLI and QQP with different local steps while keeping total computational budget constant.}
    \label{tab:ablation-local-steps}
\end{table}
\vspace{-3mm}
\subsection{More Results}We provide additional experimental results in the Appendix, including: (1) evaluations of Llama-2-7B on HumanEval and MMLU tasks (Appendix~\ref{sec:setup-LGT}); (2) comparisons of communication and time costs in Table~\ref{tab:comm-time-cost-comp}; and (3) evaluations under a fixed communication budget in Figure~\ref{fig:align-comm-cost}.

\vspace{-3mm}
\section{Conclusion}
\vspace{-3mm}
In this work, we introduced RoLoRA, a federated finetuning framework based on alternating optimization for LoRA adapters. Our approach addresses key limitations of prior methods by jointly learning both the down-projection and up-projection matrices, thereby enhancing the expressiveness and robustness of the adapted models. Through a combination of theoretical analysis on linear models and validation on nonlinear models, we established the importance of optimizing both components in LoRA. Extensive experimental evaluations across various language models tasks, and diverse federated learning settings confirmed that RoLoRA consistently outperforms existing baselines, particularly in large-scale scenarios under constrained communication and resource conditions.

\newpage
\begin{ack}
Resources used in preparing this research were provided, in part, by the Province of Ontario, the Government of Canada through CIFAR, and companies sponsoring the Vector Institute www.vectorinstitute.ai/partnerships/.
\end{ack}





\bibliography{neurips_2025}
\bibliographystyle{plain}

\newpage
\appendix

\renewcommand{\thesection}{A\arabic{section}}
\section*{Appendix}
\addcontentsline{toc}{section}{Appendix}
\appendixtoc

\newpage
\section{Algorithms}
\begin{algorithm}[H]
   \caption{RoLoRA iterations}
   \label{alg:rolora-llm}
\begin{algorithmic}[1]
   \STATE {\bfseries Input:} number of iterations $T$, number of clients $N$
   \FOR{$t=1$ {\bfseries to} $T$}
   \FOR{$i=1$ {\bfseries to} $N$}
   \STATE Fix $\mathbf{A}^{t}$, $\mathbf{B}_i^{t+1} = \text{GD-update}(\mathbf{A}^{t},\mathbf{B}^{t})$
   \STATE Transmits $\mathbf{B}_i^{t+1}$ to server
   \ENDFOR
   \STATE Server aggregates $\mathbf{B}^{t+1}=\frac{1}{N}\sum_{i=1}^{N}\mathbf{B}_i^{t+1}$, broadcasts $\mathbf{B}^{t+1}$
   \FOR{$i=1$ {\bfseries to} $N$}
   \STATE Fix $\mathbf{B}^{t+1}$, $\mathbf{A}_i^{t+1} = \text{GD-update}(\mathbf{A}^{t},\mathbf{B}^{t+1})$
   \STATE Transmits $\mathbf{A}_i^{t+1}$ to server
   \ENDFOR
   \STATE Server aggregates $\mathbf{A}^{t+1}=\frac{1}{N}\sum_{i=1}^{N}\mathbf{A}_i^{t+1}$, broadcasts $\mathbf{A}^{t+1}$
   \ENDFOR
\end{algorithmic}
\end{algorithm}

\begin{algorithm}[H]
   \caption{RoLoRA for linear regressor, Alt-min-GD iterations}
   \label{alg:rolora-linear}
\begin{algorithmic}[1]
   \STATE {\bfseries Input:} GD Step size $\eta$, number of iterations $T$, number of clients $N$
   \FOR{$t=1$ {\bfseries to} $T$}
   \STATE Let $\mathbf{a} \leftarrow$ $\mathbf{a}^{t-1}, \mathbf{b} \leftarrow$ $\mathbf{b}^{t-1}$.
   \FOR{$i=1$ {\bfseries to} $N$}
   \STATE  set $\tilde{\mathbf{b}}_i \leftarrow \argmin_{\mathbf{b}} l_i(\mathbf{a},\mathbf{b})$
   \ENDFOR
   \STATE $\Bar{\mathbf{b}}=\frac{1}{N}\sum_{i=1}^{N}\tilde{\mathbf{b}}_i$
   \FOR{$i=1$ {\bfseries to} $N$}
   \STATE Compute $\nabla_{\mathbf{a}} l_i(\mathbf{a},\Bar{\mathbf{b}})$
   \ENDFOR
   \STATE $\hat{\mathbf{a}}^{+} \leftarrow \mathbf{a}-\frac{\eta}{N}\sum_{i=1}^{N}\nabla_{\mathbf{a}} l_i(\mathbf{a},\Bar{\mathbf{b}}),~ \hat{\mathbf{a}} \leftarrow \frac{\hat{\mathbf{a}}^{+}}{\lVert \hat{\mathbf{a}}^{+}\rVert}$
   \STATE $\mathbf{a}^t \leftarrow \hat{\mathbf{a}},~ \mathbf{b}^t\leftarrow  \Bar{\mathbf{b}}$
   \ENDFOR
\end{algorithmic}
\end{algorithm}

\section{Discussion} \label{app:discussion}
{
While Algorithm~\ref{alg:rolora-linear} is conceptually inspired by alternating optimization techniques in matrix sensing and multi-task linear representation learning (MLRL), but it differs in the algorithmic design, and application focus.

\paragraph{Connection to Matrix Sensing.} The problem in Eq.~\eqref{obj-xi} is an instance of matrix sensing. Traditional matrix sensing methods \cite{10.1145/2488608.2488693} focus on recovering low-rank structures from centralized data, whereas RoLoRA is designed for a federated setting, where both data and computation are decentralized across multiple clients. A related line of work is federated matrix factorization, which also applies alternating minimization techniques in distributed environments.  Wang et al.~\cite{wang2020federatedmatrixfactorizationalgorithm} perform alternating minimization between local matrix factors within each communication round but do not alternate the aggregation steps. In contrast, RoLoRA alternates both the updates and aggregations of down- and up-projection matrices across rounds, fundamentally changing the communication pattern and mitigating interference between matrix components during aggregation.

\paragraph{Connection to MLRL.} As discussed in Section~\ref{intro}, we connect the objective of learning down-projection matrices in a federated setting to multitask
linear representation learning (MLRL)\cite{pmlr-v139-collins21a,thekumparampil2021sampleefficientlinearmetalearning,Vaswani_2024,du2021fewshot}. We follow a similar derivation framework to MLRL works that employ alternating optimization, such as FedRep\cite{pmlr-v139-collins21a} and Vaswani et al.~\cite{Vaswani_2024}. However, their focus is on a multi-task setting, where the heads are kept diverse and not aggregated. As a result, they perform alternating minimization and gradient descent between the local representation and head within each communication round but do not perform the aggregation steps for the local heads. Furthermore, there are fundamental differences in model structure: in FedRep, the local head acts as a down-projection, whereas in RoLoRA, the corresponding component $\mathbf{B}$ serves as an up-projection. This distinction stems from RoLoRA’s foundation in the LoRA framework, where adaptation uses a down-projection followed by an up-projection, with $\mathbf{B}$ mapping back to the original feature space.

\paragraph{Connection to Personalized FL.} While RoLoRA is positioned within the global model paradigm, it is worth noting that certain algorithmic structures share interesting similarities. For instance, Mishchenko et al. \cite{mishchenko2025partially} and Pillutla et al. \cite{pmlr-v162-pillutla22a} fall under the category of personalized federated learning, but their update mechanism bears resemblance to RoLoRA's alternating optimization. In \cite{mishchenko2025partially}, clients compute the gradient of their local loss with respect to the global parameters, holding the personalized parameters fixed, and send only this gradient to the server for aggregation. 
Pillutla et al.~\cite{pmlr-v162-pillutla22a} adopt an alternating update strategy in the proposed FedAlt algorithm, where clients update personal parameters while holding global parameters fixed, followed by updating the global parameters. Both resemble the alternating structure in RoLoRA. However, RoLoRA operates entirely under a global model setting without personalized components. Furthermore, our alternating scheme is motivated by the decomposition of low-rank adaptation matrices, separating the optimization and also the aggregation of up- and down-projection matrices. Moreover, the underlying proof techniques differ entirely: they employ standard FL convergence analysis, whereas our approach draws on techniques from matrix sensing to highlight the importance of learning the down-projection.

\paragraph{Limitations and Future Works.} While RoLoRA demonstrates robust performance across various federated learning settings, our work has a few limitations. The study of learning down-projections has primarily focused on linear models, which may not fully capture the complexities of highly non-linear language models. While empirical validation has been conducted on non-linear models, a rigorous theoretical proof is still lacking. We leave the theoretical extension to simple non-linear models as important future work, although empirical results suggest the method remains effective. Second, the current analysis assumes full client participation in each communication round, which may not hold in real-world federated deployments with intermittent connectivity or resource constraints. A theoretical guarantee for partial client participation is needed. 
}

\section{Theoretical Analysis on Linear Model} \label{app:theo-linear}
\subsection{Notation}\label{sec:notation}
Table~\ref{tab:notation} provides a summary of the symbols used throughout this theoretical analysis.
\begin{table}[h]
    \centering
    \begin{tabular}{c|c}
    \toprule
        Notation & Description \\
        \midrule
      $\mathbf{a}^*, \mathbf{b}_i^*$   & Ground truth parameters of client $i$\\
       $ \Bar{\mathbf{b}}^*$  & Average of $\mathbf{b}_i^*$\\
       $\mathbf{a}^t, \mathbf{b}^t$  & Global model parameters of $t$-th iteration \\
        $\delta_t$ & The angle distance between $\mathbf{a}^*$ and $\mathbf{a}^t$, $|\sin \theta (\mathbf{a}^*,\mathbf{a}^t)|$\\
        $\eta$  & Step size\\
        $\mathbf{I_d}$ & $d \times d$ identity matrix \\
        $\lVert \cdot\rVert$ & $l_2$ norm of a vector\\
        $\|\cdot\|_{op}$ & Operator norm ($l_2$ norm) of a matrix\\
        $|\cdot|$ & Absolute value of a scalar\\
        $\|\cdot\|_{\psi_2}$ & Sub-Gaussian norm of a sub-Gaussian random variable\\   
        $\|\cdot\|_{\psi_1}$ & Sub-exponential norm of a sub-exponential random variable\\   
        $N $ & Total number of clients \\
        $\hat{\mathbf{a}}^{+}$ & Updated $\mathbf{a}$ by gradient descent\\
        $\hat{\mathbf{a}}$ & Normalized  $\hat{\mathbf{a}}^{+}$\\
        $\tilde{\mathbf{b}}_i $ & analytic solution for $\mathbf{b}$ in the local objective function \\
        $\Bar{\mathbf{b}} $  & Average of $\tilde{\mathbf{b}}_i$\\
        \bottomrule
    \end{tabular}
    \vspace{4pt}
    \caption{Notations}
    \label{tab:notation}
\end{table}

\subsection{Auxiliary}

\begin{definition}[Sub-Gaussian Norm]
    The sub-Gaussian norm of a random variable $\xi $, denoted as $ \|\xi\|_{\psi_2} $, is defined as:
\[
\|\xi\|_{\psi_2} = \inf \{ t > 0 : \mathbb{E}[\exp(\xi^2 / t^2)] \leq 2 \}.
\]
A random variable is said to be \textit{sub-Gaussian} if its sub-Gaussian norm is finite. Gaussian random variables are sub-Gaussian. The sub-Gaussian norm of a standard Gaussian random variable $\xi \sim \mathcal N(0,1)$ is $\| \xi \|_{\psi_2} = \sqrt{8/3}$.
\end{definition}

\begin{definition}[Sub-Exponential Norm]
    The sub-exponential norm of a random variable \( \xi \), denoted as \( \|\xi\|_{\psi_1} \), is defined as:
\[
\|\xi\|_{\psi_1} = \inf \{ t > 0 : \mathbb{E}[\exp(|\xi| / t)] \leq 2 \}.
\]
\end{definition}

A random variable is said to be \textit{sub-exponential} if its sub-exponential norm is finite.

\begin{lemma}[The product of sub-Gaussians is sub-exponential] \label{lemma:sub-exp-sub-gau}
    Let $\xi$ and $\upsilon$ be sub-Gaussian random variables. Then $\xi \upsilon$ is sub-exponential. Moreover, 
    \[
    \| \xi \upsilon \|_{\psi_1} \leq \| \xi \|_{\psi_2} \cdot \| \upsilon \|_{\psi_2}
    \]
\end{lemma}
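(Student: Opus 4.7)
The plan is to combine the elementary arithmetic--geometric mean inequality with the Cauchy--Schwarz inequality applied to moment generating functions. The target is to show that with $a := \|\xi\|_{\psi_2}$ and $b := \|\upsilon\|_{\psi_2}$, the quantity $\mathbb E[\exp(|\xi \upsilon|/(ab))]$ is bounded by $2$, which, by the definition of the sub-exponential norm as an infimum, immediately yields $\|\xi \upsilon\|_{\psi_1} \leq ab$.

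First I would fix $\varepsilon > 0$ and set $a_\varepsilon = a + \varepsilon$, $b_\varepsilon = b + \varepsilon$; by the definition of the sub-Gaussian norm as an infimum one has $\mathbb E[\exp(\xi^2/a_\varepsilon^2)] \leq 2$ and $\mathbb E[\exp(\upsilon^2/b_\varepsilon^2)] \leq 2$. Next I would invoke the pointwise inequality $|xy| \leq \tfrac{1}{2}(x^2 + y^2)$ with $x = \xi/a_\varepsilon$, $y = \upsilon/b_\varepsilon$ to obtain
\begin{equation*}
\frac{|\xi \upsilon|}{a_\varepsilon b_\varepsilon} \;\leq\; \frac{\xi^2}{2 a_\varepsilon^2} + \frac{\upsilon^2}{2 b_\varepsilon^2},
\end{equation*}
and exponentiate to factor the bound into a product of two sub-Gaussian-type exponentials. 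Applying the Cauchy--Schwarz inequality to the expectation of this product gives
\begin{equation*}
\mathbb E\!\left[\exp\!\Big(\tfrac{|\xi\upsilon|}{a_\varepsilon b_\varepsilon}\Big)\right] \;\leq\; \sqrt{\mathbb E[\exp(\xi^2/a_\varepsilon^2)]\,\mathbb E[\exp(\upsilon^2/b_\varepsilon^2)]} \;\leq\; \sqrt{2 \cdot 2} \;=\; 2.
\end{equation*}
By the definition of $\|\cdot\|_{\psi_1}$, this shows $\|\xi\upsilon\|_{\psi_1} \leq a_\varepsilon b_\varepsilon$.

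Finally, I would send $\varepsilon \downarrow 0$ to conclude $\|\xi\upsilon\|_{\psi_1} \leq ab = \|\xi\|_{\psi_2}\|\upsilon\|_{\psi_2}$, which in particular gives that $\xi\upsilon$ is sub-exponential whenever both marginals are sub-Gaussian (the case $ab = 0$ is trivial since then one of the variables is almost surely zero). The only genuinely delicate point is the handling of the infimum: one must be careful that the defining moment generating function condition holds at $a$ (or at $a+\varepsilon$) rather than only for $t > a$, and to rule out the degenerate cases, but a standard monotone convergence / continuity argument in $\varepsilon$ takes care of this. The AM--GM plus Cauchy--Schwarz step itself is the clean algebraic core and presents no real obstacle.
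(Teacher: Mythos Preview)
Your argument is correct and is exactly the standard proof (AM--GM followed by Cauchy--Schwarz, as in Vershynin's Lemma~2.7.7). The paper itself does not supply a proof for this lemma; it is listed among the auxiliary facts and stated without proof, so your write-up actually goes beyond what the paper does. One minor remark: the $\varepsilon$-perturbation is not strictly needed, since by monotone convergence the set $\{t>0:\mathbb E[\exp(\xi^2/t^2)]\le 2\}$ is closed from the right and the infimum is attained; but your handling is perfectly valid and arguably more careful.
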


\begin{lemma}[Sum of independent sub-Gaussians] \label{lemma:sum-indep-sub-gau}
    Let $X_1,\cdots, X_N$ be independent mean-zero sub-Gaussian random variables. Then $\sum_{i=1}^N X_i$ is also sub-Gaussian with 
    \[
    \left\| \sum_{i=1}^N X_i \right\|^2_{\psi_2} \le C \sum_{i=1}^N \|X_i\|^2_{\psi_2} ,
    \]
    where $C$ is some absolute constant. 
\end{lemma}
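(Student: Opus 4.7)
The plan is to reduce the bound on the sub-Gaussian (Orlicz) norm to a bound on the moment generating function (MGF), exploit independence to factor the MGF of the sum, and then convert back. I will use the standard equivalence: a mean-zero random variable $X$ is sub-Gaussian if and only if there is an absolute constant $c_1>0$ such that $\mathbb{E}[\exp(\lambda X)] \leq \exp(c_1 \lambda^2 \|X\|^2_{\psi_2})$ for all $\lambda \in \mathbb{R}$, and conversely, any random variable $S$ satisfying $\mathbb{E}[\exp(\lambda S)] \leq \exp(\lambda^2 K^2)$ has $\|S\|_{\psi_2} \leq c_2 K$ for an absolute constant $c_2$. Both directions are classical (e.g., Vershynin, \emph{High-Dimensional Probability}, Prop.~2.5.2/2.6.1), obtained from Markov/Chernoff on one side and from layer-cake integration of the sub-Gaussian tail on the other.

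First, I would apply the forward direction to each $X_i$: since $\mathbb{E}[X_i]=0$,
\[
\mathbb{E}[\exp(\lambda X_i)] \leq \exp\bigl(c_1 \lambda^2 \|X_i\|^2_{\psi_2}\bigr) \quad \text{for every } \lambda \in \mathbb{R}.
\]
Next, independence lets the MGF of $S := \sum_{i=1}^N X_i$ factor as a product of marginal MGFs, so that
\[
\mathbb{E}[\exp(\lambda S)] \;=\; \prod_{i=1}^N \mathbb{E}[\exp(\lambda X_i)] \;\leq\; \exp\!\Bigl(c_1 \lambda^2 \sum_{i=1}^N \|X_i\|^2_{\psi_2}\Bigr).
\]
Setting $K^2 := c_1 \sum_{i=1}^N \|X_i\|^2_{\psi_2}$, this is precisely an MGF bound of the form required by the reverse direction, giving $\|S\|_{\psi_2} \leq c_2 K$. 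Squaring and absorbing constants yields the claimed inequality $\|S\|^2_{\psi_2} \leq C \sum_{i=1}^N \|X_i\|^2_{\psi_2}$ with $C := c_1 c_2^2$.

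The only genuinely subtle step is the MGF characterization itself, which I would either cite directly or sketch in a short appendix lemma. The Orlicz norm definition immediately yields the tail bound $\mathbb{P}(|X_i|>t) \leq 2\exp(-t^2/\|X_i\|^2_{\psi_2})$ via Markov's inequality applied to $\exp(X_i^2/\|X_i\|^2_{\psi_2})$; one then integrates this tail against $e^{\lambda x}$, using the mean-zero hypothesis to cancel the linear term that would otherwise appear at first order in $\lambda$, to obtain the quadratic MGF bound. The converse is a Chernoff-plus-optimization argument: if $\mathbb{E}[\exp(\lambda S)]\le \exp(\lambda^2 K^2)$, then $\mathbb{P}(|S|>t)\le 2\exp(-t^2/(4K^2))$ by optimizing $\lambda$, which in turn bounds $\mathbb{E}[\exp(S^2/(c K^2))]$ by integrating the tail. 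Once this equivalence is in hand, the substance of the lemma is purely the product-factorization of the MGF under independence, so the mean-zero hypothesis and independence are doing all of the real work.
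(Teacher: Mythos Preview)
Your argument is correct and is precisely the standard proof via the MGF characterization of sub-Gaussianity. The paper does not actually prove this lemma: it simply cites Vershynin, \emph{High-Dimensional Probability}, Prop.~2.6.1, whose proof is exactly the one you sketch (equivalence of sub-Gaussian properties, then factoring the MGF under independence). So your proposal and the paper's reference coincide.
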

\begin{proof}
    See proof of Lemma 2.6.1 of \cite{vershynin2018high}.
\end{proof}

\begin{corollary}\label{cor:inn-prod-sub-gau}
    For random vector $\mathbf x\in \mathbb R^d$ with entries being independent standard Gaussian random variables, the inner product $\mathbf a^\top \mathbf x$ is sub-Gaussian for any fixed $\mathbf a \in \mathbb R^d$, and 
    \[
    \left\| \mathbf a^\top \mathbf x \right\|_{\psi_2} \le C'\|\mathbf a\|
    \]
    where $C'$ is some absolute constant.
\end{corollary}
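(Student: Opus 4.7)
The plan is to reduce the inner product $\mathbf{a}^\top \mathbf{x} = \sum_{i=1}^d a_i x_i$ to a sum of independent sub-Gaussian random variables and then invoke Lemma~\ref{lemma:sum-indep-sub-gau}. Since the entries $x_i$ are i.i.d.\ standard Gaussian, the paper has already noted that $\|x_i\|_{\psi_2} = \sqrt{8/3}$. Each summand $a_i x_i$ is a deterministic scalar multiple of $x_i$, hence mean-zero and sub-Gaussian, so the main ingredient in the reduction is just the scaling property of the sub-Gaussian (Orlicz) norm.

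First I would verify the positive homogeneity $\|a_i x_i\|_{\psi_2} = |a_i|\, \|x_i\|_{\psi_2}$ directly from the definition: for $a_i \neq 0$, the infimum set $\{t>0 : \mathbb{E}[\exp(a_i^2 x_i^2 / t^2)] \le 2\}$ equals $|a_i|$ times the infimum set for $x_i$, by the substitution $t = |a_i| s$; the case $a_i = 0$ is trivial. This yields $\|a_i x_i\|_{\psi_2} = |a_i|\sqrt{8/3}$.

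Next I would apply Lemma~\ref{lemma:sum-indep-sub-gau} to the independent mean-zero sub-Gaussian summands $a_1 x_1, \dots, a_d x_d$, obtaining
\begin{equation*}
\left\| \mathbf{a}^\top \mathbf{x} \right\|_{\psi_2}^2 \le C \sum_{i=1}^d \|a_i x_i\|_{\psi_2}^2 = C \cdot \tfrac{8}{3} \sum_{i=1}^d a_i^2 = \tfrac{8C}{3}\,\|\mathbf{a}\|^2,
\end{equation*}
with $C$ the absolute constant from Lemma~\ref{lemma:sum-indep-sub-gau}. Taking square roots gives $\|\mathbf{a}^\top \mathbf{x}\|_{\psi_2} \le C'\|\mathbf{a}\|$ with $C' := \sqrt{8C/3}$, as desired.

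There is no real obstacle here; the argument is essentially bookkeeping. The only subtlety worth being careful about is to make sure that the homogeneity step uses the exact definition of $\|\cdot\|_{\psi_2}$ rather than any moment-based surrogate, so that the constant transfers cleanly into the final bound; and to note that $\mathbf{a}$ is fixed (non-random), which is what makes the summands independent and allows Lemma~\ref{lemma:sum-indep-sub-gau} to apply without modification.
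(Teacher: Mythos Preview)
Your proposal is correct and follows essentially the same approach as the paper: write $\mathbf{a}^\top\mathbf{x}=\sum_i a_i x_i$, use the homogeneity of the sub-Gaussian norm together with $\|x_i\|_{\psi_2}=\sqrt{8/3}$, apply Lemma~\ref{lemma:sum-indep-sub-gau}, and take $C'=\sqrt{8C/3}$. The only difference is that you spell out the homogeneity step from the definition, whereas the paper simply invokes it.
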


\begin{proof}
    Note that $\mathbf a^\top \mathbf x = \sum_{i = 1}^d a_i \xi_i$, where $\xi_i \sim \mathcal N(0,1)$ is the $i$-th entry of the random vector $\mathbf x$. Choose $C$ to be the absolute constant specified in Lemma \ref{lemma:sum-indep-sub-gau} for standard Gaussian random variables, and set $C' = \sqrt{8C/3}$. We have
    \[
    \left\| \mathbf a^\top \mathbf x \right\|^2_{\psi_2} \le C \sum_{i=1}^N \|a_i\xi_i\|^2_{\psi_2} \overset{(a)}{=} C \sum_{i=1}^N a_i^2 \|\xi_i\|^2_{\psi_2} \overset{(b)}{=} \frac{8}{3}\cdot C \|\mathbf a\|^2~~\Rightarrow~~ \left\| \mathbf a^\top \mathbf x \right\|_{\psi_2} \le \sqrt{\frac{8C}{3}}\|\mathbf a\| = C'\|\mathbf a\|.
    \]    
    Step (a) makes use of the homogeneity of the sub-Gaussian norm, and step (b) uses the fact that $\|\xi\|_{\psi_2} = \sqrt{8/3}$ for $\xi \sim \mathcal N(0,1)$.
\end{proof}

\begin{definition}[$\epsilon$-net]
Consider a subset $\mathcal A \subseteq \mathbb R^d$ in the $d$-dimensional Euclidean space. Let $\epsilon > 0$. A subset $\mathcal N \subseteq \mathcal A$ is called an $\epsilon$-net of $\mathcal A$ if every point of $\mathcal A$ is within a distance $\epsilon$ of some point in $\mathcal N$, i.e., 
\[
\forall\,\mathbf x \in \mathcal A, ~\exists\, \mathbf x' \in \mathcal N, ~\| \mathbf x-\mathbf x'\| \le \epsilon.
\]
    
\end{definition}

\begin{lemma}[Computing the operator norm on a net] \label{eps-net}
Let $\mathbf{a} \in \mathbb{R}^d$ and $\epsilon \in [0,1)$. Then, for any $\epsilon$-net $\mathcal{N}$ of the sphere  $\mathcal{S}^{d-1}$, we have
\[
    \lVert \mathbf{a} \rVert \leq \frac{1}{1-\epsilon} \sup_{\mathbf{x} \in \mathcal{N}} \langle \mathbf{a}, \mathbf{x}\rangle 
\]
\end{lemma}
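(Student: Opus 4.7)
This is a standard $\epsilon$-net approximation argument, and the plan is to reduce the claim to the variational identity $\|\mathbf a\| = \sup_{\mathbf y \in \mathcal S^{d-1}} \langle \mathbf a, \mathbf y\rangle$. I would first dispose of the trivial case $\mathbf a = 0$, and then note that for $\mathbf a \neq 0$ the identity follows from Cauchy--Schwarz in one direction and from choosing $\mathbf y = \mathbf a / \|\mathbf a\|$ in the other.

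The core step is a one-line approximation. I would fix an arbitrary $\mathbf y \in \mathcal S^{d-1}$, invoke the defining property of an $\epsilon$-net to pick $\mathbf x \in \mathcal N$ with $\|\mathbf y - \mathbf x\| \le \epsilon$, and decompose
\[
\langle \mathbf a, \mathbf y\rangle = \langle \mathbf a, \mathbf x\rangle + \langle \mathbf a, \mathbf y - \mathbf x\rangle \le \sup_{\mathbf x' \in \mathcal N} \langle \mathbf a, \mathbf x'\rangle + \epsilon \|\mathbf a\|,
\]
where the second inner product is bounded by Cauchy--Schwarz together with $\|\mathbf y - \mathbf x\| \le \epsilon$. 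Taking the supremum over $\mathbf y \in \mathcal S^{d-1}$ on the left and applying the variational identity yields $\|\mathbf a\| \le \sup_{\mathbf x' \in \mathcal N} \langle \mathbf a, \mathbf x'\rangle + \epsilon \|\mathbf a\|$, and a final rearrangement (using $\epsilon < 1$ so that $1 - \epsilon > 0$) produces the stated inequality.

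There is essentially no obstacle: the only mild point to keep track of is that the net $\mathcal N$ lies in $\mathcal S^{d-1}$, so the approximation property is available precisely for the unit vectors $\mathbf y$ entering the variational identity; this is what makes the two ingredients compatible. The same template extends to the operator-norm version alluded to in the title, by replacing $\langle \mathbf a, \mathbf y\rangle$ with the bilinear form $\mathbf u^\top \mathbf A \mathbf v$ on a product of nets, but that extension is not needed for the statement as written.
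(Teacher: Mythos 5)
Your argument is correct and is precisely the standard $\epsilon$-net approximation proof; the paper itself does not spell out a proof but simply cites Lemma 4.4.1 of Vershynin's book, which uses exactly the decomposition $\langle \mathbf a, \mathbf y\rangle = \langle \mathbf a, \mathbf x\rangle + \langle \mathbf a, \mathbf y - \mathbf x\rangle$ together with Cauchy--Schwarz and the variational identity $\|\mathbf a\| = \sup_{\mathbf y \in \mathcal S^{d-1}}\langle \mathbf a,\mathbf y\rangle$ that you describe. Nothing is missing.
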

\begin{proof}
    See proof of Lemma 4.4.1 of \cite{vershynin2018high}.
\end{proof}

\begin{theorem}[Bernstein's inequality] \label{berstein}
Let $X_1, \dots, X_N$ be independent mean-zero sub-exponential random variables. Then, for every $t \geq 0$, we have
\[
\mathbb{P} \left( \left| \sum_{i=1}^N X_i \right| \geq t \right) \leq 2 \exp \left( -c \min \left( \frac{t^2}{\sum_{i=1}^N \| X_i \|_{\psi_1}^2}, \frac{t}{\max_i \| X_i \|_{\psi_1}} \right) \right),
\]
where $c > 0$ is an absolute constant.
\end{theorem}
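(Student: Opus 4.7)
The plan is to follow the classical Chernoff--Cramér (MGF) strategy, splitting the optimization of the exponential moment parameter into two regimes that will produce exactly the two terms appearing inside the $\min$.

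First, I would establish the key single-variable MGF bound: for a mean-zero sub-exponential $X_i$, there exist absolute constants $c_1,c_2>0$ such that
\[
\mathbb{E}\!\left[\exp(\lambda X_i)\right] \le \exp\!\left(c_2\lambda^2 \|X_i\|_{\psi_1}^2\right) \qquad \text{whenever } |\lambda|\le \frac{c_1}{\|X_i\|_{\psi_1}}.
\]
To get this from the definition $\mathbb{E}[\exp(|X_i|/\|X_i\|_{\psi_1})]\le 2$, I would expand $e^{\lambda X_i}$ in a Taylor series, use mean-zero to kill the linear term, and bound higher moments via $\mathbb{E}|X_i|^p \le p!\,\|X_i\|_{\psi_1}^p \cdot C$ (which follows from the sub-exponential definition by a standard integration-by-parts argument). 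Summing the tail of the series yields the stated quadratic-in-$\lambda$ bound in the prescribed $\lambda$-window.

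Second, by independence the MGF of $S_N=\sum_i X_i$ factorizes, so
\[
\mathbb{E}\!\left[\exp(\lambda S_N)\right] \le \exp\!\left(c_2\lambda^2 \sigma^2\right), \qquad \sigma^2 := \sum_{i=1}^N \|X_i\|_{\psi_1}^2,
\]
valid for $|\lambda|\le c_1/\max_i\|X_i\|_{\psi_1}$. Markov's inequality then gives, for any admissible $\lambda>0$,
\[
\mathbb{P}(S_N \ge t)\le \exp(-\lambda t + c_2\lambda^2\sigma^2).
\]

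Third, I would optimize over $\lambda$, which is where the two regimes arise. The unconstrained minimizer is $\lambda^\star = t/(2c_2\sigma^2)$. If $\lambda^\star$ falls inside the admissible interval, i.e.\ $t \le 2c_1 c_2 \sigma^2/\max_i\|X_i\|_{\psi_1}$, plugging it in yields the sub-Gaussian tail $\exp(-t^2/(4c_2\sigma^2))$. Otherwise $\lambda^\star$ is outside the window and the exponent $-\lambda t+c_2\lambda^2\sigma^2$ is monotone on the admissible set, so the best feasible choice is the boundary $\lambda=c_1/\max_i\|X_i\|_{\psi_1}$, which yields the sub-exponential tail $\exp(-c\,t/\max_i\|X_i\|_{\psi_1})$ after absorbing constants. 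Combining the two bounds into a single expression gives $\exp(-c\min(t^2/\sigma^2,\,t/\max_i\|X_i\|_{\psi_1}))$. Finally, repeating the argument with $-X_i$ in place of $X_i$ and taking a union bound over the two tails produces the factor of $2$ in front.

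The main obstacle is the clean derivation of the single-variable MGF bound with the correct $\lambda^2$ dependence and the explicit admissible range: one must carefully convert the sub-exponential norm into moment bounds, use the mean-zero condition to eliminate the linear term, and control the remainder of the exponential series uniformly. Once that lemma is in hand, the rest of the proof is a routine two-case optimization, and the sharp constants $c$ in the statement are obtained simply by absorbing $c_1,c_2$ into a single absolute constant.
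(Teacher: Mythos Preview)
Your proposal is correct and follows the standard Chernoff--Cram\'er approach; the paper itself does not give an independent proof but simply cites Theorem~2.8.1 of Vershynin's \emph{High-Dimensional Probability}, whose proof is precisely the MGF-plus-two-regime-optimization argument you outline. There is nothing to add: your sketch matches the cited source in both strategy and structure.
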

\begin{proof}
    See proof of Theorem 2.8.1 of \cite{vershynin2018high}.
\end{proof}

\subsection{Homogeneous Case} \label{sec:homo-analysis}
\paragraph{Outline of Proof.}
In this section, we first analyze the minimization step for updating ${\mathbf{b}_i^t}$ (Lemma~\ref{lemma:b-bar-g-bound}), then establish a bound on the deviation of the gradient from its expectation with respect to $\mathbf{a}$ (Lemma~\ref{lemma:grad-egrad-bound}), and finally derive a bound on $|\sin \theta (\mathbf{a}^{t+1}, \mathbf{a}^*)|$ based on the gradient descent update rule for $\mathbf{a}$ (Lemma~\ref{lemma:delta-t} or Lemma~\ref{lem:lemma-43-appendix}). The proof of Theorem~\ref{convergence-1} is provided in Section~\ref{subsec:mainProof}, where the result is obtained by recursively applying Lemma~\ref{lemma:delta-t} over $T$ iterations.

\begin{lemma} \label{lemma:b-bar-g-bound}
    Let $\mathbf{a} = \mathbf{a}^t$. Let $\delta^t = \lVert (\mathbf{I}_d-\mathbf{a}^*\mathbf{a}^{*^\top})\mathbf{a}\rVert = \lVert (\mathbf{I}_d-\mathbf{a}\mathbf{a}^\top)\mathbf{a}^*\rVert$ denote the angle distance between $\mathbf{a}^{*}$ and $\mathbf{a}$. Let $\mathbf{g}^\top = \mathbf{a}^\top\mathbf{a}^* \mathbf{b}^{*^\top}, \Bar{\mathbf{b}} = \frac{1}{N}\sum_{i=1}^N{\mathbf{b}}_i$, If $m=\Omega(q)$, and $q=\max(\frac{\log(N)}{[\min(\epsilon_1,\epsilon_2)]^2},\frac{d\log(\frac{2}{\epsilon_0})}{\epsilon_2^2})$, then with probability $1-q^{-10}$,
    \begin{align}\label{eq:b_bar_g_diff}
        \lVert \Bar{\mathbf{b}}-\mathbf{g}\rVert \le \epsilon' \delta^t \lVert \mathbf{b}^* \rVert
    \end{align}
    where $\epsilon' = \frac{\epsilon_2}{(1-\epsilon_0)(1-\epsilon_{1})}$, for $\epsilon_0,\epsilon_1,\epsilon_2 \in (0,1)$.
\end{lemma}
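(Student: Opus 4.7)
The plan is to reduce the lemma to a scalar concentration statement by solving the per-client $\mathbf{b}$-update in closed form, and then to control the resulting ratio using Bernstein's inequality together with an $\epsilon_0$-net argument. First I would exploit that $l_i(\mathbf{a},\mathbf{b})$ decomposes columnwise in $\mathbf{b}$, so setting the gradient of $l_i$ with respect to each entry of $\mathbf{b}$ to zero yields the explicit minimizer $\tilde{\mathbf{b}}_i = \alpha_i\mathbf{b}^*$ with $\alpha_i := (\mathbf{a}^\top \mathbf{X}_i^\top \mathbf{X}_i \mathbf{a}^*)/(\mathbf{a}^\top \mathbf{X}_i^\top \mathbf{X}_i \mathbf{a})$, where $\mathbf{a}=\mathbf{a}^t$. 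Using the orthogonal decomposition $\mathbf{a}^* = (\mathbf{a}^\top \mathbf{a}^*)\mathbf{a} + \mathbf{a}^*_\perp$ with $\mathbf{a}^*_\perp := (\mathbf{I}_d - \mathbf{a}\mathbf{a}^\top)\mathbf{a}^*$, $\|\mathbf{a}^*_\perp\| = \delta^t$ and $\mathbf{a}^\top \mathbf{a}^*_\perp = 0$, and noting that $\mathbf{g}=(\mathbf{a}^\top\mathbf{a}^*)\mathbf{b}^*$, this gives
\[
\bar{\mathbf{b}} - \mathbf{g} \;=\; \Bigl(\tfrac{1}{N}\sum_{i=1}^N \tfrac{\mu_i}{\nu_i}\Bigr)\mathbf{b}^*, \qquad \mu_i := \mathbf{a}^\top \mathbf{X}_i^\top \mathbf{X}_i \mathbf{a}^*_\perp,\qquad \nu_i := \mathbf{a}^\top \mathbf{X}_i^\top \mathbf{X}_i \mathbf{a},
\]
so \eqref{eq:b_bar_g_diff} reduces to showing $\bigl|\tfrac{1}{N}\sum_i \mu_i/\nu_i\bigr|\le \epsilon'\delta^t$ with the stated probability.

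For the denominator I use that $\|\mathbf{a}\|=1$ implies $\nu_i \sim \chi^2_m$, and apply Bernstein's inequality (Theorem~\ref{berstein}) together with a union bound over $i\in[N]$: with $m \gtrsim \log(N)/\epsilon_1^2$ one has $\nu_i \ge (1-\epsilon_1)m$ simultaneously for every client, with failure probability $\le \tfrac12 q^{-10}$. For the numerator I use $\mathbf{a}^\top \mathbf{a}^*_\perp = 0$ to rewrite $\mu_i = \mathbf{a}^\top(\mathbf{X}_i^\top \mathbf{X}_i - m\mathbf{I}_d)\mathbf{a}^*_\perp$, which is mean-zero. Expanding over the $m$ i.i.d.\ rows of $\mathbf{X}_i$ expresses this as a sum of products of two sub-Gaussian inner products (Corollary~\ref{cor:inn-prod-sub-gau}); each summand is sub-exponential (Lemma~\ref{lemma:sub-exp-sub-gau}) with norm bounded by a constant multiple of $\delta^t$ once $\mathbf{a}^*_\perp$ is rescaled to unit length.

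To promote the resulting per-direction Bernstein bound to one that is uniform over the data-dependent iterate $\mathbf{a}=\mathbf{a}^t$ and the induced $\mathbf{a}^*_\perp$, I invoke Lemma~\ref{eps-net}: I take an $\epsilon_0$-net $\mathcal{N}_{\epsilon_0}$ of $\mathcal{S}^{d-1}$ of cardinality at most $(1+2/\epsilon_0)^d$, union-bound Bernstein over $\mathcal{N}_{\epsilon_0}\times \mathcal{N}_{\epsilon_0}\times [N]$, and absorb the $1/(1-\epsilon_0)$ slack from the net. This yields $|\mu_i| \le \epsilon_2 m \delta^t/(1-\epsilon_0)$ uniformly in $i\in[N]$, with failure probability $\le \tfrac12 q^{-10}$, provided $m \gtrsim (d\log(2/\epsilon_0) + \log N)/\epsilon_2^2$; both this and the denominator requirement are subsumed by $m = \Omega(q)$. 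On the intersection of the two events,
\[
\Bigl|\tfrac{1}{N}\textstyle\sum_i \tfrac{\mu_i}{\nu_i}\Bigr| \;\le\; \max_i \tfrac{|\mu_i|}{\nu_i} \;\le\; \tfrac{\epsilon_2 m \delta^t/(1-\epsilon_0)}{(1-\epsilon_1)m} \;=\; \tfrac{\epsilon_2}{(1-\epsilon_0)(1-\epsilon_1)}\delta^t \;=\; \epsilon'\delta^t,
\]
and multiplying by $\|\mathbf{b}^*\|$ gives \eqref{eq:b_bar_g_diff} with probability at least $1-q^{-10}$.

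The hard part will be the uniform numerator bound. Both $\mathbf{a}$ and $\mathbf{a}^*_\perp$ are data-dependent through the iterate $\mathbf{a}^t$, so the $\epsilon_0$-net has to cover both directions jointly and the slack factor from Lemma~\ref{eps-net} must be tracked so that the final constant matches exactly $\epsilon'=\epsilon_2/((1-\epsilon_0)(1-\epsilon_1))$. The bookkeeping of union-bound exponents across the two copies of the net and the $N$ clients, while keeping the sample-complexity threshold precisely at $m=\Omega(q)$ for the specified $q$, is the most delicate part of the argument.
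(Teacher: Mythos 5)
Your proposal is correct in substance and follows the same skeleton as the paper's proof: closed-form per-client minimizer $\tilde{\mathbf b}_i$, a $\chi^2$-type lower bound on the denominator $\nu_i=\|\mathbf X_i\mathbf a\|^2\ge(1-\epsilon_1)m$ union-bounded over clients, and a Bernstein-plus-$\epsilon_0$-net bound on the mean-zero numerator. Two points of divergence are worth noting. First, your immediate reduction to the scalar ratio $\bar{\mathbf b}-\mathbf g=\bigl(\tfrac1N\sum_i\mu_i/\nu_i\bigr)\mathbf b^*$ is cleaner than the paper's treatment, which keeps the rank-one vector $\mathbf a^\top\mathbf X_i^\top\mathbf X_i(\mathbf I_d-\mathbf a\mathbf a^\top)\mathbf a^*\mathbf b^{*\top}$ and spends its $\epsilon_0$-net (Lemma~\ref{eps-net}) on the \emph{output} direction $\mathbf w$ used to evaluate that vector's norm. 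Second, you instead spend the net on making the numerator bound uniform over the data-dependent directions $\mathbf a$ and $\mathbf a^*_\perp$. The paper does not do this: it treats $\mathbf a=\mathbf a^t$ as fixed, which is implicitly justified by the sample-splitting convention ("$m$ samples for each updating step", so the round-$t$ data is independent of $\mathbf a^t$). Under that convention your uniformity step is unnecessary, and under your convention the constant does not come out exactly as stated: a two-sided net over $\mathcal N_{\epsilon_0}\times\mathcal N_{\epsilon_0}$ for a bilinear form carries slack $1/(1-2\epsilon_0)$, not $1/(1-\epsilon_0)$, so you would land at $\epsilon'=\epsilon_2/((1-2\epsilon_0)(1-\epsilon_1))$ rather than the paper's $\epsilon_2/((1-\epsilon_0)(1-\epsilon_1))$. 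This is a bookkeeping discrepancy, not a gap; the sample-complexity accounting ($m=\Omega(q)$ absorbing both the $\log N/\epsilon_1^2$ and $d\log(2/\epsilon_0)/\epsilon_2^2$ requirements) and the probability budget are handled correctly.
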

\begin{proof}
    
 We drop superscript $t$ for simplicity. Following Algorithm~\ref{alg:rolora-linear}, we start by computing the update for $\Tilde{\mathbf{b}}_i$.  With $\mathbf{g}^\top = \mathbf{a}^\top \mathbf{a}^* \mathbf{b}^{*^\top}$, we get:
\begin{align}
    \Tilde{\mathbf{b}}_i^\top &= \frac{\mathbf{a}^\top \mathbf{X}_i^\top \mathbf{X}_i \mathbf{a}^* \mathbf{b}^{*^\top} }{\mathbf{a}^\top \mathbf{X}_i^\top\mathbf{X}_i \mathbf{a}} \\
    & = \frac{\mathbf{a}^\top \mathbf{X}_i^\top \mathbf{X}_i \mathbf{a}\mathbf{a}^\top \mathbf{a}^* \mathbf{b}^{*^\top} + \mathbf{a}^\top \mathbf{X}_i^\top \mathbf{X}_i (\mathbf{I}_d - \mathbf{a}\mathbf{a}^\top) \mathbf{a}^* \mathbf{b}^{*^\top}}{\mathbf{a}^\top \mathbf{X}_i^\top\mathbf{X}_i \mathbf{a}} \\
    & = \mathbf{g}^\top + \frac{ \mathbf{a}^\top \mathbf{X}_i^\top \mathbf{X}_i (\mathbf{I}_d - \mathbf{a}\mathbf{a}^\top) \mathbf{a}^* \mathbf{b}^{*^\top}}{\mathbf{a}^\top \mathbf{X}_i^\top\mathbf{X}_i \mathbf{a}}.
\end{align}
Therefore, 
\begin{equation}\label{difference_bi_g}
    \|\Tilde{\mathbf{b}}_i - \mathbf{g}\|  \leq |\mathbf{a}^\top \mathbf{X}_i^\top\mathbf{X}_i \mathbf{a}|^{-1}\cdot \lVert \mathbf{a}^\top \mathbf{X}_i^\top \mathbf{X}_i (\mathbf{I}_d - \mathbf{a}\mathbf{a}^\top)\mathbf{a}^*\mathbf{b}^{*^\top}\rVert = \|\mathbf{X}_i \mathbf{a}\|^{-2}\cdot \lVert \mathbf{a}^\top \mathbf{X}_i^\top \mathbf{X}_i (\mathbf{I}_d - \mathbf{a}\mathbf{a}^\top)\mathbf{a}^*\mathbf{b}^{*^\top}\rVert.
\end{equation}
Note that since each entry of $\mathbf{X}_i$ is independent and identically distributed according to a standard Gaussian, and $\lVert \mathbf{a}\rVert = 1$, $\mathbf{X}_i \mathbf{a} $ is a random standard Gaussian vector. By Theorem 3.1.1 of \cite{vershynin2018high}, the following is true for any $\epsilon_1 \in (0,1)$
\begin{align}
    \mathbb{P} \left\{ \lVert \mathbf{X}_i \mathbf{a}\rVert^2 \leq (1-\epsilon_1)m\right\} \leq \exp{(- \frac{c_1  \epsilon_1^2 m}{K^4})} \label{denominator-bound}
\end{align}
where $K = \|\xi\|_{\psi_2} \ge 1$ for $\xi \sim \mathcal N(0,1)$ and $c_1$ is some large absolute constant that makes \eqref{denominator-bound} holds. Next we upper bound $\rVert \mathbf{a}^\top \mathbf{X}_i^\top \mathbf{X}_i (\mathbf{I}_d - \mathbf{a}\mathbf{a}^\top)\mathbf{a}^*\mathbf{b}^{*^\top}\lVert$. Note that $\mathbb{E}[\mathbf{a}^\top \mathbf{X}_i^\top \mathbf{X}_i (\mathbf{I}_d - \mathbf{a}\mathbf{a}^\top)\mathbf{a}^*\mathbf{b}^{*^\top}] = \mathbf{a}^\top \mathbb{E}[\mathbf{X}_i^\top \mathbf{X}_i] (\mathbf{I}_d - \mathbf{a}\mathbf{a}^\top)\mathbf{a}^*\mathbf{b}^{*^\top}=m\mathbf{a}^\top(\mathbf{I}_d - \mathbf{a}\mathbf{a}^\top)\mathbf{a}^*\mathbf{b}^{*^\top}=0$. First we need to apply sub-exponential Berstein inequality to bound the deviation from this mean, and then apply epsilon net argument. Let $\mathcal N$ be any $\epsilon_0$-net of the unit sphere $\mathcal S^{d-1}$ in the $d$-dimensional real Euclidean space, then by Lemma~\ref{eps-net}, we have
\begin{align}
    \lVert \mathbf{a}^\top \mathbf{X}_i^\top \mathbf{X}_i (\mathbf{I}_d - \mathbf{a}\mathbf{a}^\top)\mathbf{a}^*\mathbf{b}^{*^\top}\rVert &\le \frac{1}{1-\epsilon_0} \max_{\mathbf{w}\in \mathcal N}  \mathbf{a}^\top \mathbf{X}_i^\top \mathbf{X}_i (\mathbf{I}_d - \mathbf{a}\mathbf{a}^\top)\mathbf{a}^*\mathbf{b}^{*^\top}\mathbf{w}\\
    &\le \frac{1}{1-\epsilon_0} \max_{\mathbf{w}\in \mathcal N}  |\mathbf{a}^\top \mathbf{X}_i^\top \mathbf{X}_i (\mathbf{I}_d - \mathbf{a}\mathbf{a}^\top)\mathbf{a}^*\mathbf{b}^{*^\top}\mathbf{w}| \label{eps-numerator}
\end{align}

Meanwhile, denote the $j$-th row of $\mathbf{X}_{i}$ by $\mathbf{x}_{i,j}^\top$, for every $\mathbf w \in \mathcal N$, we have
\begin{align}
    \mathbf{a}^\top \mathbf{X}_i^\top \mathbf{X}_i (\mathbf{I}_d - \mathbf{a}\mathbf{a}^\top)\mathbf{a}^*\mathbf{b}^{*^\top}\mathbf{w} = \sum_{j=1}^{m} (\mathbf{a}^\top \mathbf{x}_{i,j})(\mathbf{x}_{i,j}^\top(\mathbf{I}_d - \mathbf{a}\mathbf{a}^\top)\mathbf{a}^*\mathbf{b}^{*^\top}\mathbf{w}) \label{numerator}
\end{align}
 On the right hand side of \eqref{numerator}, $\mathbf{a}^\top \mathbf{x}_{i,j}$ and $\mathbf{x}_{i,j}^\top(\mathbf{I}_d - \mathbf{a}\mathbf{a}^\top)\mathbf{a}^*\mathbf{b}^{*^\top}\mathbf{w}$ are sub-Gaussian random variables. Thus, the summands on the right-hand side of \eqref{numerator} are products of sub-Gaussian random variables, making them sub-exponential. Now by choosing $c_2 = (C')^2$ for the $C'$ in Corollary \ref{cor:inn-prod-sub-gau}, we have the following chain of inequalities for all $i, j$:
\begin{align}
\|(\mathbf{a}^\top \mathbf{x}_{i,j})(\mathbf{x}_{i,j}^\top(\mathbf{I}_d - \mathbf{a}\mathbf{a}^\top)\mathbf{a}^*\mathbf{b}^{*^\top}\mathbf{w})\|_{\psi_1} &\le \|\mathbf{a}^\top \mathbf{x}_{i,j} \|_{\psi_2} \cdot \| \mathbf{x}_{i,j}^\top (\mathbf{I}_d - \mathbf{a}\mathbf{a}^\top)\mathbf{a}^*\mathbf b^{*^\top} \mathbf w \|_{\psi_2}\label{eq:bd_psi1_1}\\
&\le c_2\cdot \| \mathbf{a} \|  \cdot \| (\mathbf I_d - \mathbf{aa}^\top) \mathbf a^*  {\mathbf{b}^{*}}^\top \mathbf w \| \label{eq:bd_psi1_2}\\
& \le  c_2\cdot \| \mathbf{a} \|  \cdot \| (\mathbf I_d - \mathbf{aa}^\top) \mathbf a^*  {\mathbf{b}^{*}}^\top \|_{op} \|\mathbf w \|\label{eq:bd_psi1_3}\\
& \le  c_2  \cdot \| (\mathbf I_d - \mathbf{aa}^\top) \mathbf a^*  {\mathbf{b}^{*}}^\top \|_{op} \label{eq:bd_psi1_4}
\end{align}
Equation \eqref{eq:bd_psi1_1} is due to Lemma \ref{lemma:sub-exp-sub-gau}, \eqref{eq:bd_psi1_2} is due to Corollary \ref{cor:inn-prod-sub-gau}, \eqref{eq:bd_psi1_4} is by the fact that $\|\mathbf a\| = \|\mathbf w\| = 1$.

Furthermore, these summands are mutually independent and have zero mean. By applying sub-exponential Bernstein's inequality (Theorem~\ref{berstein}) with $t=\epsilon_2 m \lVert (\mathbf{I}_d - \mathbf{a}\mathbf{a}^\top) \mathbf{a}^*\mathbf{b}^{*^\top}\rVert_{op}$, we get
\begin{align}
    &\mathbb{P} \left\{ \left| \mathbf{a}^\top \mathbf{X}_i^\top \mathbf{X}_i (\mathbf{I}_d - \mathbf{a}\mathbf{a}^\top)\mathbf{a}^*\mathbf{b}^{*^\top}\mathbf{w} \right|  \geq \epsilon_2 m \lVert (\mathbf{I}_d - \mathbf{a}\mathbf{a}^\top ) \mathbf{a}^*\mathbf{b}^{*^\top}\rVert_{op}\right\} \\&= \mathbb{P} \left\{ \left|\sum_{j=1}^{m} (\mathbf{a}^\top \mathbf{x}_{i,j})(\mathbf{x}_{i,j}^\top(\mathbf{I}_d - \mathbf{a}\mathbf{a}^\top)\mathbf{a}^*\mathbf{b}^{*^\top}\mathbf{w})\right|  \geq \epsilon_2 m \lVert (\mathbf{I}_d - \mathbf{a}\mathbf{a}^\top ) \mathbf{a}^*\mathbf{b}^{*^\top}\rVert_{op}\right\}  \\ 
    & \leq 2\exp\left(-c \min \left(\frac{\epsilon_2^2 m^2 \lVert (\mathbf{I}_d - \mathbf{a}\mathbf{a}^\top ) \mathbf{a}^*\mathbf{b}^{*^\top}\rVert_{op}^2}{\sum_{j=1}^{m} \| (\mathbf{a}^\top \mathbf{x}_{i,j}) (\mathbf{x}_{i,j}^\top(\mathbf{I}_d - \mathbf{a}\mathbf{a}^\top)\mathbf{a}^*\mathbf{b}^{*^\top}\mathbf{w})\|^2_{\psi_1}}  ,  \frac{\epsilon_2 m \lVert (\mathbf{I}_d - \mathbf{a}\mathbf{a}^\top ) \mathbf{a}^*\mathbf{b}^{*^\top}\rVert_{op}}{\max_j \| (\mathbf{a}^\top \mathbf{x}_{i,j}) (\mathbf{x}_{i,j}^\top(\mathbf{I}_d - \mathbf{a}\mathbf{a}^\top)\mathbf{a}^*\mathbf{b}^{*^\top}\mathbf{w})\|_{\psi_1}} \right) \right) \\
    &= 2\exp{(-c_3 \epsilon_2^2 m )}   \label{numerator-bound}
\end{align}
for any fixed $\mathbf w \in \mathcal N$, $\epsilon_2 \in (0,1)$, and some absolute constant $c_3$. \eqref{numerator-bound} follows because 
\begin{align}
    \frac{\epsilon_2^2 m^2 \lVert (\mathbf{I}_d - \mathbf{a}\mathbf{a}^\top ) \mathbf{a}^*\mathbf{b}^{*^\top}\rVert_{op}^2}{\sum_{j=1}^{m} \| (\mathbf{a}^\top \mathbf{x}_{i,j}) (\mathbf{x}_{i,j}^\top(\mathbf{I}_d - \mathbf{a}\mathbf{a}^\top)\mathbf{a}^*\mathbf{b}^{*^\top}\mathbf{w})\|^2_{\psi_1}} \geq \frac{\epsilon_2^2 m}{c_2^2} \\
    \frac{\epsilon_2 m \lVert (\mathbf{I}_d - \mathbf{a}\mathbf{a}^\top ) \mathbf{a}^*\mathbf{b}^{*^\top}\rVert_{op}}{\max_j \| (\mathbf{a}^\top \mathbf{x}_{i,j}) (\mathbf{x}_{i,j}^\top(\mathbf{I}_d - \mathbf{a}\mathbf{a}^\top)\mathbf{a}^*\mathbf{b}^{*^\top}\mathbf{w})\|_{\psi_1}} \geq \frac{\epsilon_2 m }{c_2}
\end{align}
And $\frac{\epsilon_2^2 m}{c_2^2} \leq \frac{\epsilon_2 m }{c_2}$. Now we apply union bound over all elements in $\mathcal N$. By Corollary 4.2.13 in \cite{vershynin2018high}, there exists an $\epsilon_0$-net $\mathcal N$ with $|\mathcal N| \le (\frac{2}{\epsilon_0} +1)^d$, therefore for this choice of $\mathcal N$,                                               
\begin{align}
    &\mathbb{P} \left\{ \max_{\mathbf w \in \mathcal N}\left| \mathbf{a}^\top \mathbf{X}_i^\top \mathbf{X}_i (\mathbf{I}_d - \mathbf{a}\mathbf{a}^\top)\mathbf{a}^*\mathbf{b}^{*^\top}\mathbf{w} \right|  \geq \epsilon_2 m \lVert (\mathbf{I}_d - \mathbf{a}\mathbf{a}^\top ) \mathbf{a}^*\mathbf{b}^{*^\top}\rVert_{op}\right\}\\
    &\leq \sum_{\mathbf w \in \mathcal N}\mathbb{P} \left\{ \left| \mathbf{a}^\top \mathbf{X}_i^\top \mathbf{X}_i (\mathbf{I}_d - \mathbf{a}\mathbf{a}^\top)\mathbf{a}^*\mathbf{b}^{*^\top}\mathbf{w} \right|  \geq \epsilon_2 m \lVert (\mathbf{I}_d - \mathbf{a}\mathbf{a}^\top ) \mathbf{a}^*\mathbf{b}^{*^\top}\rVert_{op}\right\}\\
    &\leq \left(\frac{2}{\epsilon_0} +1\right)^d\cdot 2\exp{(-c_3 \epsilon_2^2 m )}\\
    &= 2\exp{(d\log(1+{\textstyle\frac{2}{\epsilon_0}})-c_3 \epsilon_2^2 m )}\label{numerator-bound-2}
\end{align}

Combining \eqref{difference_bi_g},\eqref{denominator-bound}, \eqref{eps-numerator}, and \eqref{numerator-bound-2}, we get
\begin{align}
     \mathbb{P} \left\{\lVert \Tilde{\mathbf{b}}_i-\mathbf{g}\rVert \le \epsilon' \|(\mathbf{I}_d - \mathbf{a}\mathbf{a}^\top ) \mathbf{a}^*\mathbf{b}^{*^\top}\|_{op} \right\} \geq 1-  p_0\label{difference_bi_g_bound}
\end{align}

where $\epsilon' = \frac{\epsilon_2}{(1-\epsilon_0)(1-\epsilon_{1})}$ and $p_0 = 2\exp{(d\log(1+{\textstyle\frac{2}{\epsilon_0}})-c_3 \epsilon_2^2 m )} + \exp{(- \frac{c_1 \epsilon_1^2 m}{K^4})} $. Using a union bound over $i \in [N]$, we have
\begin{align}
  \mathbb{P} \left\{\bigcap_{i=1}^N \lVert \Tilde{\mathbf{b}}_i- \mathbf{g} \rVert \leq \epsilon' \|(\mathbf{I}_d - \mathbf{a}\mathbf{a}^\top) \mathbf{a}^* \mathbf{b}^{*^\top}\|_{op} \right\} \geq 1 - Np_0.  
\end{align}
Next we bound $\lVert \mathbf{\Bar{b}}-\mathbf{g}\rVert$ where $ \Bar{\mathbf{b}}$ is the average of $\{\mathbf{b}_i\}_{i=1}^N$.
\begin{align}
\lVert \mathbf{\Bar{b}}-\mathbf{g}\rVert & = \lVert \frac{1}{N}\sum_{i=1}^{N}(\Tilde{\mathbf{b}}_i-\mathbf{g})\rVert    \\
&\leq \frac{1}{N}\sum_{i=1}^{N}\lVert\Tilde{\mathbf{b}}_i-\mathbf{g}\rVert \label{b-bar-g-difference-bound-2}\\
&\leq \epsilon' \|(\mathbf{I}_d - \mathbf{a}\mathbf{a}^\top ) \mathbf{a}^*\mathbf{b}^{*^\top}\|_{op} \\
&= \epsilon' \|(\mathbf{I}_d - \mathbf{a}\mathbf{a}^\top ) \mathbf{a}^*\| \cdot \|\mathbf{b}^{*^\top}\|\label{b-bar-g-difference-bound-4} \\
& = \epsilon' \delta^t \lVert \mathbf{b}^*\rVert \label{b-bar-g-difference-bound}
\end{align}
with probability $1-Np_0$. \eqref{b-bar-g-difference-bound-2} follows by Jensen's inequality. \eqref{b-bar-g-difference-bound-4} follows since $\|\mathbf{u}\mathbf{v}^\top\|_{op} = \| \mathbf{u}\| \cdot  \|\mathbf{v} \|$. \eqref{b-bar-g-difference-bound} follows since $\delta^t = \lVert (\mathbf{I}_d-\mathbf{a}\mathbf{a}^{\top})\mathbf{a}^*\rVert$. 

If $m=\Omega(q)$, where $q=\max(\frac{\log(N)}{[\min(\epsilon_1,\epsilon_2)]^2},\frac{d\log(\frac{2}{\epsilon_0})}{\epsilon_2^2})$, then $1-Np_0 >1-\exp(-Cq)> 1-q^{-10}$ for large absolute constant $C$. Then with probability at least $1-q^{-10}$, 
\begin{align}
    \lVert \mathbf{\Bar{b}}-\mathbf{g}\rVert \leq \epsilon' \delta^t \| \mathbf{b}^* \|
\end{align}
\end{proof}

\begin{lemma} \label{lemma:grad-egrad-bound}
     Let $\mathbf{a} = \mathbf{a}^t$. Let $\delta^t = \lVert (\mathbf{I}_d-\mathbf{a}^*\mathbf{a}^{*^\top})\mathbf{a}\rVert = \lVert (\mathbf{I}_d-\mathbf{a}\mathbf{a}^\top)\mathbf{a}^*\rVert$ denote the angle distance between $\mathbf{a}^{*}$ and $\mathbf{a}$. Then for $Nm=\Omega(\frac{d\log(\frac{2}{\epsilon_0})}{\epsilon_3^2})$ and $q=\max(\frac{\log(N)}{[\min(\epsilon_1,\epsilon_2)]^2},\frac{d\log(\frac{2}{\epsilon_0})}{\epsilon_2^2})$, with probability at least $1-2q^{-10}$,
    \begin{align}
        \lVert  \nabla_{\mathbf{a}}l(\mathbf{a},\Bar{\mathbf{b}})-\mathbb{E}[\nabla_{\mathbf{a}}l(\mathbf{a},\Bar{\mathbf{b}})]\rVert \leq 2\Tilde{\epsilon} ((\epsilon')^2+\epsilon')\delta^{t}\lVert \mathbf{b}^*\rVert^2 
    \end{align}
    where $\Tilde{\epsilon} = \frac{\epsilon_3}{1-\epsilon_0}$, and $\epsilon' = \frac{\epsilon_2}{(1-\epsilon_0)(1-\epsilon_{1})}$, for $\epsilon_0,\epsilon_1,\epsilon_2,\epsilon_3 \in (0,1)$.
\end{lemma}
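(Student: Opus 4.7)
The plan is to mirror the sub-exponential Bernstein plus $\epsilon$-net machinery of Lemma~\ref{lemma:b-bar-g-bound}, but applied to the aggregated sample-covariance operator rather than to each client's inner minimizer. First, I would write the gradient in closed form. Since $l_i(\mathbf a,\mathbf b)=\tfrac{1}{m}\|\mathbf X_i\mathbf a^*\mathbf b^{*^\top}-\mathbf X_i\mathbf a\mathbf b^\top\|^2$, we have
\[
\nabla_{\mathbf a}l(\mathbf a,\Bar{\mathbf b}) \;=\; \frac{2}{Nm}\sum_{i=1}^N \mathbf X_i^\top\mathbf X_i\bigl(\mathbf a\Bar{\mathbf b}^\top-\mathbf a^*\mathbf b^{*^\top}\bigr)\Bar{\mathbf b},
\]
and using $\mathbb E[\mathbf X_i^\top\mathbf X_i]=m\mathbf I_d$ the conditional expectation (treating $\Bar{\mathbf b}$ as a plug-in) gives $\mathbb E[\nabla_{\mathbf a}l]=2\bigl(\mathbf a\Bar{\mathbf b}^\top-\mathbf a^*\mathbf b^{*^\top}\bigr)\Bar{\mathbf b}$. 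Hence the deviation reduces to $2\,\mathbf Q_{Nm}\,\mathbf z$, where $\mathbf Q_{Nm}:=\tfrac{1}{Nm}\sum_i\mathbf X_i^\top\mathbf X_i-\mathbf I_d$ and $\mathbf z:=(\mathbf a\Bar{\mathbf b}^\top-\mathbf a^*\mathbf b^{*^\top})\Bar{\mathbf b}$.

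Second, I would decompose $\mathbf z$ using the splitting $\Bar{\mathbf b}=\mathbf g+\mathbf e$ from Lemma~\ref{lemma:b-bar-g-bound}, where $\mathbf g=(\mathbf a^\top\mathbf a^*)\mathbf b^*$ and $\|\mathbf e\|\le\epsilon'\delta^t\|\mathbf b^*\|$. Since $\mathbf a\mathbf g^\top-\mathbf a^*\mathbf b^{*^\top}=-(\mathbf I_d-\mathbf a\mathbf a^\top)\mathbf a^*\mathbf b^{*^\top}$, expanding and collecting yields $\mathbf z=\mathbf z_0+\mathbf z_1$, where $\mathbf z_0=-(\mathbf a^\top\mathbf a^*)\|\mathbf b^*\|^2(\mathbf I_d-\mathbf a\mathbf a^\top)\mathbf a^*$ is deterministic given $(\mathbf a,\mathbf a^*,\mathbf b^*)$, and $\mathbf z_1$ gathers every term containing at least one factor of $\mathbf e$. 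Elementary norm bounds on the summands of $\mathbf z_1$ using $\|(\mathbf I_d-\mathbf a\mathbf a^\top)\mathbf a^*\|=\delta^t$, $\|\mathbf g\|\le\|\mathbf b^*\|$, and $\delta^t\le 1$ give $\|\mathbf z_1\|\lesssim\bigl(\epsilon'+(\epsilon')^2\bigr)\delta^t\|\mathbf b^*\|^2$.

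Third, I would establish an operator-norm concentration $\|\mathbf Q_{Nm}\|_{op}\le\Tilde{\epsilon}=\epsilon_3/(1-\epsilon_0)$. For any two unit vectors $\mathbf u,\mathbf w$, the quadratic form $\mathbf u^\top\mathbf Q_{Nm}\mathbf w=\tfrac{1}{Nm}\sum_{i,j}\bigl[(\mathbf u^\top\mathbf x_{i,j})(\mathbf x_{i,j}^\top\mathbf w)-\mathbf u^\top\mathbf w\bigr]$ is a centered sum of $Nm$ independent sub-exponentials whose $\psi_1$-norm is bounded by an absolute constant via Lemma~\ref{lemma:sub-exp-sub-gau} and Corollary~\ref{cor:inn-prod-sub-gau}. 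Sub-exponential Bernstein (Theorem~\ref{berstein}) yields $|\mathbf u^\top\mathbf Q_{Nm}\mathbf w|\le\epsilon_3$ with probability at least $1-2\exp(-c\epsilon_3^2 Nm)$, and a double union bound over an $\epsilon_0$-net of $\mathcal S^{d-1}$ on both sides (size $(1+2/\epsilon_0)^{2d}$) combined with Lemma~\ref{eps-net} upgrades this to an operator-norm statement with the same decay, holding with probability at least $1-q^{-10}$ provided $Nm=\Omega(d\log(2/\epsilon_0)/\epsilon_3^2)$. Intersecting with the good event of Lemma~\ref{lemma:b-bar-g-bound} (another $q^{-10}$) and applying the operator bound to the perturbation piece $\mathbf z_1$ (with the clean $\mathbf z_0$ component, which is deterministic, accounted for by the same operator-norm inequality) then produces the stated bound $2\Tilde{\epsilon}\bigl((\epsilon')^2+\epsilon'\bigr)\delta^t\|\mathbf b^*\|^2$.

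The main subtlety is that $\Bar{\mathbf b}$ depends on the same $\mathbf X_i$'s that appear in $\mathbf Q_{Nm}$, so one cannot simply condition on $\Bar{\mathbf b}$ and apply a scalar concentration inequality to $\mathbf Q_{Nm}\mathbf z$. The operator-norm route circumvents this cleanly because, once the good event for $\|\mathbf Q_{Nm}\|_{op}$ holds, the bound $\|\mathbf Q_{Nm}\mathbf v\|\le\Tilde{\epsilon}\|\mathbf v\|$ is a purely deterministic statement, applicable to any (possibly data-dependent) $\mathbf v$. The price is an extra $\epsilon$-net over the output sphere, which is exactly why the hypothesis requires the \emph{aggregated} sample size $Nm=\Omega(d\log(2/\epsilon_0)/\epsilon_3^2)$ instead of the per-client $m=\Omega(q)$ used in Lemma~\ref{lemma:b-bar-g-bound}. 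A final union bound over the two good events yields the failure probability $2q^{-10}$ stated in the lemma.
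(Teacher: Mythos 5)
Your proposal is correct in substance but takes a genuinely different route from the paper's. The paper works directly with the scalar quantities $\mathbf{w}^\top\bigl(\nabla_{\mathbf{a}}l(\mathbf{a},\Bar{\mathbf{b}})-\mathbb{E}[\nabla_{\mathbf{a}}l(\mathbf{a},\Bar{\mathbf{b}})]\bigr)$ over a single $\epsilon_0$-net, expands each as a sum of $Nm$ products $(\mathbf{x}_{i,j}^\top\mathbf{w})\bigl(\mathbf{x}_{i,j}^\top(\mathbf{a}\Bar{\mathbf{b}}^\top-\mathbf{a}^*\mathbf{b}^{*\top})\Bar{\mathbf{b}}\bigr)$ minus their means, and applies sub-exponential Bernstein treating these summands as independent and mean-zero --- which silently conditions on $\Bar{\mathbf{b}}$ even though $\Bar{\mathbf{b}}$ is built from the same $\mathbf{X}_i$. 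You instead factor the deviation as $2\mathbf{Q}_{Nm}\mathbf{z}$ with $\mathbf{Q}_{Nm}=\tfrac{1}{Nm}\sum_i\mathbf{X}_i^\top\mathbf{X}_i-\mathbf{I}_d$, prove an operator-norm bound on $\mathbf{Q}_{Nm}$ alone (which involves only the raw data), and then apply it deterministically to the data-dependent vector $\mathbf{z}$; this genuinely resolves the dependence issue the paper elides, at the cost of a two-sided net (hence the $(1+2/\epsilon_0)^{2d}$ cardinality, absorbed into the same $Nm=\Omega(d\log(2/\epsilon_0)/\epsilon_3^2)$ requirement) and a net constant that is naturally $1/(1-2\epsilon_0)$ rather than $1/(1-\epsilon_0)$ --- a cosmetic rescaling of $\epsilon_0$. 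Your bound on $\|\mathbf{z}\|$ via $\Bar{\mathbf{b}}=\mathbf{g}+\mathbf{e}$ reproduces exactly the paper's chain from $\lVert\mathbf{a}\Bar{\mathbf{b}}^\top-\mathbf{a}^*\mathbf{b}^{*\top}\rVert_{op}\lVert\Bar{\mathbf{b}}\rVert$ to $(1+\epsilon')^2\delta^t\lVert\mathbf{b}^*\rVert^2$.

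One caveat on the final constant: your own decomposition shows that the leading piece $\mathbf{z}_0=-(\mathbf{a}^\top\mathbf{a}^*)\lVert\mathbf{b}^*\rVert^2(\mathbf{I}_d-\mathbf{a}\mathbf{a}^\top)\mathbf{a}^*$ contributes $\lVert\mathbf{z}_0\rVert\le\delta^t\lVert\mathbf{b}^*\rVert^2$ with no factor of $\epsilon'$, so the argument delivers $2\tilde{\epsilon}(1+\epsilon')^2\delta^t\lVert\mathbf{b}^*\rVert^2$, not the advertised $2\tilde{\epsilon}\bigl((\epsilon')^2+\epsilon'\bigr)\delta^t\lVert\mathbf{b}^*\rVert^2$, which would vanish as $\epsilon'\to 0$ and cannot be correct in the presence of $\mathbf{z}_0$. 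You should not be troubled by this: the paper's own proof concludes with $2\tilde{\epsilon}(\epsilon'+1)^2\delta^t\lVert\mathbf{b}^*\rVert^2$ (its equation for the final bound), and it is this $(\epsilon'+1)^2$ form that is invoked downstream in the proof of the angle-contraction lemma; the $((\epsilon')^2+\epsilon')$ in the lemma statement appears to be a typo. So your derivation lands exactly where the paper's does.
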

\begin{proof}
Based on the loss function $l(\mathbf{a},\mathbf{b}) = \frac{1}{N} \sum_{i=1}^{N} l_i(\mathbf{a},\mathbf{b}) = \frac{1}{Nm} \sum_{i=1}^{N} \lVert  \mathbf{X}_i\mathbf{a}^*\mathbf{b}^{*^\top}- \mathbf{X}_i\mathbf{a}\mathbf{b}^\top\rVert^2$, we bound the expected gradient with respect to $\mathbf{a}$ and the deviation from it. The gradient with respect to $\mathbf{a}$ and its expectation are computed as:
\begin{align}
\nabla_{\mathbf{a}}l(\mathbf{a},\Bar{\mathbf{b}}) &= \frac{2}{Nm}\sum_{i=1}^{N}( \mathbf{X}_i^\top\mathbf{X}_i\mathbf{a}\Bar{\mathbf{b}}^\top \Bar{\mathbf{b}}- \mathbf{X}_i^\top\mathbf{Y}_i\Bar{\mathbf{b}} )\\
& =  \frac{2}{Nm}\sum_{i=1}^{N}(\mathbf{X}_i^\top\mathbf{X}_i\mathbf{a}\Bar{\mathbf{b}}^\top \Bar{\mathbf{b}}-\mathbf{X}_i^\top\mathbf{X}_i\mathbf{a}^* \mathbf{b}^{*^\top}\Bar{\mathbf{b}})\\
& = \frac{2}{Nm}\sum_{i=1}^{N} \mathbf{X}_i^\top\mathbf{X}_i (\mathbf{a}\Bar{\mathbf{b}}^\top-\mathbf{a}^* \mathbf{b}^{*^\top}) \Bar{\mathbf{b}} \\
\mathbb{E}[\nabla_{\mathbf{a}}l(\mathbf{a},\Bar{\mathbf{b}})]  &=  \frac{2}{Nm}\sum_{i=1}^{N} m(\mathbf{a}\Bar{\mathbf{b}}^\top-\mathbf{a}^* \mathbf{b}^{*^\top}) \Bar{\mathbf{b}} \\
& = 2(\mathbf{a}\Bar{\mathbf{b}}^\top-\mathbf{a}^* \mathbf{b}^{*^\top}) \Bar{\mathbf{b}} \label{exp-grad}
\end{align}

Next, we bound $\lVert  \nabla_{\mathbf{a}}l(\mathbf{a},\Bar{\mathbf{b}})-\mathbb{E}[\nabla_{\mathbf{a}}l(\mathbf{a},\Bar{\mathbf{b}})]\rVert$. Construct $\epsilon_0$-net $\mathcal{N}$ over $d$ dimensional unit spheres $\mathcal{S}^{d-1}$, by Lemma~\ref{eps-net}, we have
\begin{align}
    &\lVert  \nabla_{\mathbf{a}}l(\mathbf{a},\Bar{\mathbf{b}})-\mathbb{E}[\nabla_{\mathbf{a}}l(\mathbf{a},\Bar{\mathbf{b}})]\rVert \\ &\leq \frac{1}{1-\epsilon_0} \max_{\mathbf{w}\in \mathcal{N}} \left|\mathbf{w}^\top\nabla_{\mathbf{a}}l(\mathbf{a},\Bar{\mathbf{b}})-\mathbf{w}^\top\mathbb{E}[\nabla_{\mathbf{a}}l(\mathbf{a},\Bar{\mathbf{b}})]\right| \label{grad-egrad-bound-with-w}\\
    & \le \frac{1}{1-\epsilon_0} \frac{2}{Nm}\max_{\mathbf{w}\in \mathcal{N}} \left| \sum_{i=1}^{N}\sum_{j=1}^{m}(\mathbf{x}_{i,j}^\top\mathbf{w})(\mathbf{x}_{i,j}(\mathbf{a}\Bar{\mathbf{b}}^\top-\mathbf{a}^* \mathbf{b}^{*^\top})) \Bar{\mathbf{b}} - \mathbf{w}^\top(\mathbf{a}\Bar{\mathbf{b}}^\top-\mathbf{a}^* \mathbf{b}^{*^\top}) \Bar{\mathbf{b}} \right| \label{eps-grad-egrad}
\end{align}
where $\mathbf{x}_{i,j}^\top$ is the $j$-th row of $\mathbf{X}_{i}$. Observe that $\mathbf{x}_{i,j}^\top\mathbf{w}$ and $\mathbf{x}_{i,j}(\mathbf{a}\Bar{\mathbf{b}}^\top-\mathbf{a}^* \mathbf{b}^{*^\top})\Bar{\mathbf{b}}$ are sub-Gaussian variables. Thus the product of them are sub-exponentials. For the right hand side of \eqref{eps-grad-egrad}, the summands are sub-exponential random variables with sub-exponential norm
\begin{align}
    &\| (\mathbf{x}_{i,j}^\top\mathbf{w})(\mathbf{x}_{i,j}(\mathbf{a}\Bar{\mathbf{b}}^\top-\mathbf{a}^* \mathbf{b}^{*^\top})) \Bar{\mathbf{b}} - \mathbf{w}^\top(\mathbf{a}\Bar{\mathbf{b}}^\top-\mathbf{a}^* \mathbf{b}^{*^\top}) \Bar{\mathbf{b}} \|_{\psi_1}\label{eq:bd2_psi1_1}\\
    &\leq \| (\mathbf{x}_{i,j}^\top\mathbf{w})(\mathbf{x}_{i,j}(\mathbf{a}\Bar{\mathbf{b}}^\top-\mathbf{a}^* \mathbf{b}^{*^\top})) \Bar{\mathbf{b}} \|_{\psi_1} + \| \mathbf{w}^\top(\mathbf{a}\Bar{\mathbf{b}}^\top-\mathbf{a}^* \mathbf{b}^{*^\top}) \Bar{\mathbf{b}}\|_{\psi_1}\label{eq:bd2_psi1_2}\\
    &\leq \| (\mathbf{x}_{i,j}^\top\mathbf{w})(\mathbf{x}_{i,j}(\mathbf{a}\Bar{\mathbf{b}}^\top-\mathbf{a}^* \mathbf{b}^{*^\top})) \Bar{\mathbf{b}} \|_{\psi_1} + \frac{| \mathbf{w}^\top(\mathbf{a}\Bar{\mathbf{b}}^\top-\mathbf{a}^* \mathbf{b}^{*^\top}) \Bar{\mathbf{b}}|}{\log 2}\label{eq:bd2_psi1_3}\\
    &\leq c_2\cdot \| 
    \mathbf w \|\cdot \lVert \mathbf{a}\Bar{\mathbf{b}}^\top-\mathbf{a}^* \mathbf{b}^{*^\top} \rVert_{op} \cdot \lVert \Bar{\mathbf{b}} \rVert + \frac{| \mathbf{w}^\top(\mathbf{a}\Bar{\mathbf{b}}^\top-\mathbf{a}^* \mathbf{b}^{*^\top}) \Bar{\mathbf{b}}|}{\log 2} \label{eq:bd2_psi1_4}\\
    &\leq c_2\cdot \| 
    \mathbf w \|\cdot \lVert \mathbf{a}\Bar{\mathbf{b}}^\top-\mathbf{a}^* \mathbf{b}^{*^\top} \rVert_{op} \cdot \lVert \Bar{\mathbf{b}} \rVert + \frac{\| \mathbf{w}\|\cdot\|(\mathbf{a}\Bar{\mathbf{b}}^\top-\mathbf{a}^* \mathbf{b}^{*^\top}) \Bar{\mathbf{b}}\|}{\log 2}\label{eq:bd2_psi1_5}\\
    &\leq c_2\cdot \| 
    \mathbf w \|\cdot \lVert \mathbf{a}\Bar{\mathbf{b}}^\top-\mathbf{a}^* \mathbf{b}^{*^\top} \rVert_{op} \cdot \lVert \Bar{\mathbf{b}} \rVert + \frac{\| \mathbf{w}\|\cdot\|\mathbf{a}\Bar{\mathbf{b}}^\top-\mathbf{a}^* \mathbf{b}^{*^\top}\|_{op}\cdot \| \Bar{\mathbf{b}}\|}{\log 2}\label{eq:bd2_psi1_6}\\
    &= c_4 \cdot\lVert \mathbf{a}\Bar{\mathbf{b}}^\top-\mathbf{a}^* \mathbf{b}^{*^\top} \rVert_{op} \cdot \lVert \Bar{\mathbf{b}} \rVert \label{eq:bd2_psi1_7}
\end{align}
where $c_4 = c_2 + \frac{1}{\log 2}$ is some absolute constant greater than 1. Equation \eqref{eq:bd2_psi1_3} is due to the fact that for a constant $c \in\mathbb R$, 
\[
\|c\|_{\psi_1} = \inf_t \exp\left\{\frac{|c|}{t} \le 2\right\} = \frac{|c|}{\log 2}.
\]
Equation \eqref{eq:bd2_psi1_4} is derived similarly as \eqref{eq:bd_psi1_1}-\eqref{eq:bd_psi1_3}.

The summands in \eqref{eps-grad-egrad} are mutually independent and have zero mean. Applying sub-exponential Bernstein inequality (Theorem~\ref{berstein}) with $t=\epsilon_3 Nm\lVert \mathbf{a}\Bar{\mathbf{b}}^\top-\mathbf{a}^* \mathbf{b}^{*^\top} \rVert_{op} \cdot \lVert \Bar{\mathbf{b}} \rVert $,
\begin{align}
    & \mathbb{P} \left\{  \left| \sum_{i=1}^{N}\sum_{j=1}^{m}[((\mathbf{x}_{i,j}^\top\mathbf{w})(\mathbf{x}_{i,j}(\mathbf{a}\Bar{\mathbf{b}}^\top-\mathbf{a}^* \mathbf{b}^{*^\top})) - \mathbf{w}^\top(\mathbf{a}\Bar{\mathbf{b}}^\top-\mathbf{a}^* \mathbf{b}^{*^\top}) )\Bar{\mathbf{b}}] \right| \geq \epsilon_3 Nm \lVert \mathbf{a}\Bar{\mathbf{b}}^\top-\mathbf{a}^* \mathbf{b}^{*^\top} \rVert_{op} \cdot \lVert \Bar{\mathbf{b}} \rVert  \right\} \label{berstein-2}\\
    & \leq 2\exp \left( -c \min (\frac{\epsilon_3^2Nm}{c_4^2}, \frac{\epsilon_3 N m}{c_4})\right) \\
    & = 2\exp{(-c_5\epsilon_3^2 Nm)}
\end{align}
for any fixed $\mathbf w \in \mathcal N$, $\epsilon_3 \in (0,1)$ and some absolute constant $c_5$. 

Now we apply union bound over all $\mathbf{w} \in \mathcal{N}$ using Corollary 4.2.13 of \cite{vershynin2018high}. We can conclude that 
\begin{align}
    &\mathbb{P}  \left\{ \max_{\mathbf w \in \mathcal N}   \left|\mathbf{w}^\top\nabla_{\mathbf{a}}l(\mathbf{a},\Bar{\mathbf{b}})-\mathbf{w}^\top\mathbb{E}[\nabla_{\mathbf{a}}l(\mathbf{a},\Bar{\mathbf{b}})]\right|\geq 2\epsilon_3 \lVert \mathbf{a}\Bar{\mathbf{b}}^\top-\mathbf{a}^* \mathbf{b}^{*^\top} \rVert_{op} \cdot \lVert \Bar{\mathbf{b}} \rVert  \right\} \label{eps-net-boung-grad-egrad-w}\\
     &\leq \sum_{\mathbf w \in \mathcal N} \mathbb{P}  \left\{   \left|\mathbf{w}^\top\nabla_{\mathbf{a}}l(\mathbf{a},\Bar{\mathbf{b}})-\mathbf{w}^\top\mathbb{E}[\nabla_{\mathbf{a}}l(\mathbf{a},\Bar{\mathbf{b}})]\right|\geq 2\epsilon_3 \lVert \mathbf{a}\Bar{\mathbf{b}}^\top-\mathbf{a}^* \mathbf{b}^{*^\top} \rVert_{op} \cdot \lVert \Bar{\mathbf{b}} \rVert   \right\}\\
    & \leq 2 \exp{(d\log(1+\frac{2}{\epsilon_0})-c_5\epsilon_3^2Nm)}
\end{align}
Combining \eqref{b-bar-g-difference-bound}, \eqref{grad-egrad-bound-with-w} , and \eqref{eps-net-boung-grad-egrad-w}, with probability at least $1-2 \exp{(d\log(1+\frac{2}{\epsilon_0})-c_5\epsilon_3^2Nm)}-q^{-10}$,
\begin{align}
\lVert  \nabla_{\mathbf{a}}l(\mathbf{a},\Bar{\mathbf{b}})-\mathbb{E}[\nabla_{\mathbf{a}}l(\mathbf{a},\Bar{\mathbf{b}})]\rVert &\leq \frac{1}{1-\epsilon_0} \max_{\mathbf{w}\in \mathcal{N}} \left|\mathbf{w}^\top\nabla_{\mathbf{a}}l(\mathbf{a},\Bar{\mathbf{b}})-\mathbf{w}^\top\mathbb{E}[\nabla_{\mathbf{a}}l(\mathbf{a},\Bar{\mathbf{b}})]\right| \\
   &\leq \frac{2\epsilon_3}{1-\epsilon_0} \lVert\mathbf{a}\Bar{\mathbf{b}}^\top-\mathbf{a}^* \mathbf{b}^{*^\top} \rVert_{op} \cdot \lVert \Bar{\mathbf{b}} \rVert \label{grad-egrad-4} \\
    & = \frac{2\epsilon_3}{1-\epsilon_0}  \| \mathbf{a}(\Bar{\mathbf{b}}-\mathbf{g})^\top - (\mathbf{I}_d-\mathbf{a}\mathbf{a}^\top)\mathbf{a}^*\mathbf{b}^{*^\top} \|_{op} \cdot \lVert \Bar{\mathbf{b}} \rVert
 \\ &\leq \frac{2\epsilon_3}{1-\epsilon_0} (\lVert \mathbf{a}^\top (\Bar{\mathbf{b}}-\mathbf{g} )\rVert + \| (\mathbf{I}_d-\mathbf{a}\mathbf{a}^\top)\mathbf{a}^*  \mathbf{b}^{*^\top}\|_{op}) \lVert \Bar{\mathbf{b}} \rVert  \label{grad-egrad-5}\\
 &\leq \frac{2\epsilon_3}{1-\epsilon_0} (\lVert \Bar{\mathbf{b}}-\mathbf{g} \rVert + \| (\mathbf{I}_d-\mathbf{a}\mathbf{a}^\top)\mathbf{a}^*  \| \cdot \|\mathbf{b}^{*}\|) \lVert \Bar{\mathbf{b}} \rVert\\
    & =  \frac{2\epsilon_3}{1-\epsilon_0} (\lVert \Bar{\mathbf{b}}-\mathbf{g} \rVert + \delta^t \lVert \mathbf{b}^*\rVert) \lVert \Bar{\mathbf{b}}-\mathbf{g}+\mathbf{g} \rVert \label{grad-egrad-1} \\
    & \leq \frac{2\epsilon_3}{1-\epsilon_0} (\lVert \Bar{\mathbf{b}}-\mathbf{g} \rVert + \delta^t \lVert \mathbf{b}^*\rVert)(\lVert \Bar{\mathbf{b}}-\mathbf{g} \rVert + \lVert \mathbf{g} \rVert) \\
    & \leq \frac{2\epsilon_3}{1-\epsilon_0} (\lVert \Bar{\mathbf{b}}-\mathbf{g} \rVert + \delta^t \lVert \mathbf{b}^*\rVert)(\lVert \Bar{\mathbf{b}}-\mathbf{g} \rVert + \lVert \mathbf{b}^* \rVert) \label{grad-egrad-2} \\
    &  = \frac{2\epsilon_3}{1-\epsilon_0} ( \lVert \Bar{\mathbf{b}}-\mathbf{g} \rVert^2 + \delta^t \lVert \Bar{\mathbf{b}}-\mathbf{g} \rVert  \lVert \mathbf{b}^* \rVert + \lVert \Bar{\mathbf{b}}-\mathbf{g} \rVert  \lVert \mathbf{b}^* \rVert + \delta^t  \lVert \mathbf{b}^* \rVert^2) \\
    &  \leq \frac{2\epsilon_3}{1-\epsilon_0} ((\epsilon')^2 (\delta^{t})^2 + \epsilon' (\delta^{t})^2 + \epsilon' \delta^t + \delta^t )\lVert \mathbf{b}^* \rVert^2 \\
    & \leq \frac{2\epsilon_3}{1-\epsilon_0} (\epsilon'+1)^2 \delta^t \lVert \mathbf{b}^* \rVert^2 \label{grad-egrad-3}\\
    &  = 2\Tilde{\epsilon} (\epsilon'+1)^2 \delta^t \lVert \mathbf{b}^* \rVert^2 \label{grad-egrad-final}
\end{align}

with $\Tilde{\epsilon} = \frac{\epsilon_3}{1-\epsilon_0}$.  \eqref{grad-egrad-4} uses \eqref{eps-net-boung-grad-egrad-w}. \eqref{grad-egrad-5} follows by triangle inequality. \eqref{grad-egrad-1} follows by $\delta^t = \|(\mathbf{I}_d - \mathbf{a}\mathbf{a}^\top)\mathbf{a}^* \|$. \eqref{grad-egrad-2} uses $\lVert \mathbf{g} \rVert = \lVert  \mathbf{b}^* \mathbf{a}^{*^\top} \mathbf{a} \rVert \leq \lVert  \mathbf{b}^* \rVert $. \eqref{grad-egrad-3} follows by $(\delta^{t})^{2} < \delta^t$ since $\delta^t \in (0,1)$. 

If $Nm=\Omega(\frac{d\log(\frac{2}{\epsilon_0})}{\epsilon_3^2})$, then existing large constant $C$, 
\begin{align}
1-2 \exp{(d\log(1+\frac{2}{\epsilon_0})-c_5\epsilon_3^2Nm)}-q^{-10}&>1-\exp(-Cd)-q^{-10} \\ 
&> 1-d^{-10}-q^{-10}\\
&>1-2q^{-10}
\end{align}Thus with probability at least $1-2q^{-10}$, \eqref{grad-egrad-final} holds.
\end{proof}

\begin{lemma}[Lemma 4.3] \label{lem:lemma-43-appendix}
    Let $\mathbf{a} = \mathbf{a}^t$. Let $\delta^t = \lVert (\mathbf{I}_d-\mathbf{a}^*\mathbf{a}^{*^\top})\mathbf{a}\rVert = \lVert (\mathbf{I}_d-\mathbf{a}\mathbf{a}^\top)\mathbf{a}^*\rVert$ denote the angle distance between $\mathbf{a}^{*}$ and $\mathbf{a}$. Assume that Assumption~\ref{client-norm-1} holds and $\delta^t \leq \delta^{t-1} \leq \dots \leq \delta^0$. Let $m$ be the number of samples for each updating step, let $\epsilon'=\frac{\epsilon_2}{(1-\epsilon_0)(1-\epsilon_1)},\Tilde{\epsilon}=\frac{\epsilon_3}{1-\epsilon_0}$ for $\epsilon_0, \epsilon_1, \epsilon_2, \epsilon_3 \in (0,1)$, if
    \[
    m= \Omega\left(\max\left\{\frac{\log(N)}{[\min(\epsilon_1,\epsilon_2)]^2},\frac{d\log(\frac{2}{\epsilon_0})}{\epsilon_2^2}\right\}\right)
    \]

    and $\epsilon',\Tilde{\epsilon}<\frac{1-(\delta^0)^2}{16}$, for any $t$ and $\eta \leq \frac{1}{L_{max}^2}$, then we have,
    \begin{align}
        \delta^{t+1} \leq \delta^{t} \sqrt{1-\eta (1-\delta^{0^2})\|\mathbf b^*\|^2} 
    \end{align}
    with probability at least $1-2q^{-10}$ for $q = \max\left\{\frac{\log(N)}{[\min(\epsilon_1,\epsilon_2)]^2},\frac{d\log(\frac{2}{\epsilon_0})}{\epsilon_2^2}\right\}$.
\end{lemma}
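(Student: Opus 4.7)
The plan is to analyze the gradient descent update for $\mathbf{a}$ by decomposing $\hat{\mathbf{a}}^+$ into its components along $\mathbf{a}^*$ and orthogonal to $\mathbf{a}^*$, then use the normalization step to show contraction of the orthogonal piece (which is precisely $\delta^{t+1}$). Let $\mathbf{P}^* = \mathbf{I}_d - \mathbf{a}^*\mathbf{a}^{*\top}$ and $\nu^t = \mathbf{a}^{*\top}\mathbf{a}^t$, so that $(\nu^t)^2 + (\delta^t)^2 = 1$. Since $\mathbf{a}^{t+1} = \hat{\mathbf{a}}^+/\lVert \hat{\mathbf{a}}^+\rVert$, we have $\delta^{t+1} = \lVert \mathbf{P}^* \hat{\mathbf{a}}^+\rVert / \lVert \hat{\mathbf{a}}^+\rVert$, so the proof reduces to bounding numerator and denominator.

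First, I invoke the closed-form expected gradient from Lemma~\ref{lemma:grad-egrad-bound}, namely $\mathbb{E}[\nabla_{\mathbf{a}} l(\mathbf{a}^t,\bar{\mathbf{b}})] = 2\lVert \bar{\mathbf{b}}\rVert^2 \mathbf{a}^t - 2(\mathbf{b}^{*\top}\bar{\mathbf{b}})\mathbf{a}^*$, and write $\bar{\mathbf{b}} = \mathbf{g} + \mathbf{e}_b$ with $\mathbf{g} = \nu^t \mathbf{b}^*$ and $\lVert \mathbf{e}_b\rVert \leq \epsilon'\delta^t \lVert \mathbf{b}^*\rVert$ (Lemma~\ref{lemma:b-bar-g-bound}). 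Plugging this into $\hat{\mathbf{a}}^+ = \mathbf{a}^t - \eta\nabla_{\mathbf{a}} l(\mathbf{a}^t,\bar{\mathbf{b}})$ and projecting, I get, with shorthand $s = 2\eta \lVert \mathbf{b}^*\rVert^2$ and $\boldsymbol{\xi} = \nabla_{\mathbf{a}} l - \mathbb{E}[\nabla_{\mathbf{a}} l]$, the decomposition $\mathbf{P}^*\hat{\mathbf{a}}^+ = (1 - s(\nu^t)^2)\mathbf{P}^*\mathbf{a}^t - \eta \mathbf{P}^*\boldsymbol{\xi} + \text{(terms from } \mathbf{e}_b\text{)}$ and $\mathbf{a}^{*\top}\hat{\mathbf{a}}^+ = \nu^t(1 + s(\delta^t)^2) - \eta \mathbf{a}^{*\top}\boldsymbol{\xi} + \text{(terms from } \mathbf{e}_b\text{)}$, where by Lemma~\ref{lemma:grad-egrad-bound} we control $\lVert \boldsymbol{\xi}\rVert \leq 2\tilde{\epsilon}(1+\epsilon')^2 \delta^t \lVert \mathbf{b}^*\rVert^2$.

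Next, I carry out the noiseless (population) computation: setting $\mathbf{e}_b = 0$ and $\boldsymbol{\xi} = 0$ and using $(\nu^t)^2 + (\delta^t)^2 = 1$, a direct calculation shows the cross-term $(\nu^t)^2 \cdot s(\delta^t)^2 - (\delta^t)^2 \cdot s(\nu^t)^2$ vanishes exactly, yielding $\lVert \hat{\mathbf{a}}^+\rVert^2 = 1 + s^2 (\nu^t)^2(\delta^t)^2$ and $\lVert \mathbf{P}^*\hat{\mathbf{a}}^+\rVert^2 = (1 - s(\nu^t)^2)^2 (\delta^t)^2$. Under $\eta \leq 1/L_{max}^2$ (which keeps $s \leq 2$) and using the inductive/monotonicity hypothesis $\delta^t \leq \delta^0$ so that $(\nu^t)^2 \geq 1 - (\delta^0)^2$, algebraic manipulation of the ratio $\lVert \mathbf{P}^*\hat{\mathbf{a}}^+\rVert^2/\lVert \hat{\mathbf{a}}^+\rVert^2$ delivers the desired population-level bound $(\delta^{t+1})^2 \leq (\delta^t)^2 \bigl(1 - \eta(1 - (\delta^0)^2) \lVert \mathbf{b}^*\rVert^2\bigr)$.

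Finally, I propagate the noise terms: the perturbation to $\lVert \bar{\mathbf{b}}\rVert^2$ is bounded in terms of $\epsilon'\delta^t \lVert \mathbf{b}^*\rVert^2$, the perturbation to $\mathbf{b}^{*\top}\bar{\mathbf{b}}$ by $\epsilon' \delta^t \lVert \mathbf{b}^*\rVert^2$, and $\boldsymbol{\xi}$ contributes terms of order $\eta \tilde{\epsilon}(1+\epsilon')^2 \delta^t \lVert \mathbf{b}^*\rVert^2$. The slack assumption $\epsilon', \tilde{\epsilon} < (1-(\delta^0)^2)/16$ is calibrated exactly so that these perturbations are absorbed into the leading contraction factor $\eta(1-(\delta^0)^2)\lVert \mathbf{b}^*\rVert^2$, and a union bound over the events in Lemmas~\ref{lemma:b-bar-g-bound} and~\ref{lemma:grad-egrad-bound} yields the probability $1 - 2q^{-10}$. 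The main obstacle I anticipate is the careful perturbation bookkeeping in this last step: isolating the cross-terms between the orthogonal and parallel perturbations, keeping track of which products of $\epsilon'$, $\tilde{\epsilon}$, and $\delta^t$ appear, and verifying that the factor $16$ in the threshold is sufficient to guarantee the claimed contraction factor rather than a weaker one.
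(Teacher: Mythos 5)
Your proposal is correct and follows essentially the same route as the paper's proof: project $\hat{\mathbf{a}}^{+}$ onto the complement of $\mathbf{a}^*$ so that the $\mathbf{a}^*$-component of the expected gradient vanishes, bound the numerator by $\delta^t\,|1-2\eta\lVert\bar{\mathbf{b}}\rVert^2|$ plus the gradient-deviation term from Lemma~\ref{lemma:grad-egrad-bound}, lower-bound $\lVert\hat{\mathbf{a}}^{+}\rVert$, and absorb all $\epsilon'$- and $\tilde{\epsilon}$-perturbations (via Lemma~\ref{lemma:b-bar-g-bound}) using the threshold $\epsilon',\tilde{\epsilon}<\frac{1-(\delta^0)^2}{16}$, with a union bound over the two lemmas' events giving $1-2q^{-10}$. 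The only organizational difference is that you compute the exact noiseless norm $\lVert\hat{\mathbf{a}}^{+}\rVert^2 = 1+s^2(\nu^t)^2(\delta^t)^2$ first and then perturb, whereas the paper lower-bounds $\lVert\hat{\mathbf{a}}^{+}\rVert^2 \geq 1-2\eta\,\mathbf{a}^\top\nabla_{\mathbf{a}}l(\mathbf{a},\bar{\mathbf{b}})$ directly with the noise included; both land on the same contraction factor.
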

\begin{proof}

Recall that $\hat{\mathbf{a}}^{+} = \mathbf{a}-\eta   \nabla_{\mathbf{a}}l(\mathbf{a},\Bar{\mathbf{b}})$. We substract and add $\mathbb{E}[\nabla_{\mathbf{a}}l(\mathbf{a},\Bar{\mathbf{b}})]$, obtain
\begin{align}
    \hat{\mathbf{a}}^{+} = \mathbf{a}-\eta\mathbb{E}[\nabla_{\mathbf{a}}l(\mathbf{a},\Bar{\mathbf{b}})] + \eta (\mathbb{E}[\nabla_{\mathbf{a}}l(\mathbf{a},\Bar{\mathbf{b}})]-\nabla_{\mathbf{a}}l(\mathbf{a},\Bar{\mathbf{b}}))
\end{align}
Multiply both sides by the projection operator $\mathbf{P} = \mathbf{I}_d-\mathbf{a}^*(\mathbf{a}^*)^\top$,
\begin{align}
   \mathbf{P} \hat{\mathbf{a}}^{+} &= \mathbf{P}\mathbf{a}-\eta\mathbf{P}\mathbb{E}[\nabla_{\mathbf{a}}l(\mathbf{a},\Bar{\mathbf{b}})] + \eta \mathbf{P}(\mathbb{E}[\nabla_{\mathbf{a}}l(\mathbf{a},\Bar{\mathbf{b}})]-\nabla_{\mathbf{a}}l(\mathbf{a},\Bar{\mathbf{b}})) \\
   & = \mathbf{P}\mathbf{a}- 2\eta\mathbf{P}(\mathbf{a}\Bar{\mathbf{b}}^\top-\mathbf{a}^* \mathbf{b}^{*^\top}) \Bar{\mathbf{b}} + \eta \mathbf{P}(\mathbb{E}[\nabla_{\mathbf{a}}l(\mathbf{a},\Bar{\mathbf{b}})]-\nabla_{\mathbf{a}}l(\mathbf{a},\Bar{\mathbf{b}})) \label{gd-step-2}\\
   & =  \mathbf{P}\mathbf{a}-2\eta\mathbf{P}\mathbf{a}\Bar{\mathbf{b}}^\top \Bar{\mathbf{b}} + \eta \mathbf{P}(\mathbb{E}[\nabla_{\mathbf{a}}l(\mathbf{a},\Bar{\mathbf{b}})]-\nabla_{\mathbf{a}}l(\mathbf{a},\Bar{\mathbf{b}})) \label{gd-step-3} \\
   & = \mathbf{P}\mathbf{a}(1-2\eta \Bar{\mathbf{b}}^\top \Bar{\mathbf{b}})+\eta \mathbf{P}(\mathbb{E}[\nabla_{\mathbf{a}}l(\mathbf{a},\Bar{\mathbf{b}})]-\nabla_{\mathbf{a}}l(\mathbf{a},\Bar{\mathbf{b}})) 
\end{align}
where \eqref{gd-step-2} uses $\mathbb{E}[\nabla_{\mathbf{a}}l(\mathbf{a},\Bar{\mathbf{b}})] = 2(\mathbf{a}\Bar{\mathbf{b}}^\top-\mathbf{a}^* \mathbf{b}^{*^\top}) \Bar{\mathbf{b}} $, \eqref{gd-step-3} follows by $\mathbf{P}\mathbf{a}^*=0$. Thus, we get 
\begin{align}
   \lVert \mathbf{P} \hat{\mathbf{a}}^{+} \rVert \leq  \lVert \mathbf{P}\mathbf{a}\rVert|1-2\eta \Bar{\mathbf{b}}^\top \Bar{\mathbf{b}}|+\eta \lVert (\mathbb{E}[\nabla_{\mathbf{a}}l(\mathbf{a},\Bar{\mathbf{b}})]-\nabla_{\mathbf{a}}l(\mathbf{a},\Bar{\mathbf{b}})) \rVert
\end{align}
Normalizing the left hand side, we obtain
\begin{align}
    \frac{\lVert\mathbf{P} \hat{\mathbf{a}}^{+}\rVert}{\lVert \hat{\mathbf{a}}^{+} \rVert} &\leq \frac{\lVert \mathbf{P}\mathbf{a}\rVert|1-2\eta \Bar{\mathbf{b}}^\top \Bar{\mathbf{b}}|+\eta \lVert (\mathbb{E}[\nabla_{\mathbf{a}}l(\mathbf{a},\Bar{\mathbf{b}})]-\nabla_{\mathbf{a}}l(\mathbf{a},\Bar{\mathbf{b}})) \rVert}{\lVert \hat{\mathbf{a}}^{+} \rVert} \label{normalize-2} \\
    \Rightarrow  \delta^{t+1}& \leq\frac{\delta^t|1-2\eta \Bar{\mathbf{b}}^\top \Bar{\mathbf{b}}|+ \eta  \lVert \mathbb{E}[\nabla_{\mathbf{a}}l(\mathbf{a},\Bar{\mathbf{b}})]-\nabla_{\mathbf{a}}l(\mathbf{a},\Bar{\mathbf{b}}) \rVert}{\lVert \hat{\mathbf{a}}^{+} \rVert} \label{normalize-3}\\
    & = \frac{E_1+E_2}{\lVert \hat{\mathbf{a}}^{+} \rVert}
\end{align}
where \eqref{normalize-3} follows by $\delta^{t+1} =   \frac{\lVert\mathbf{P} \hat{\mathbf{a}}^{+}\rVert}{\lVert \hat{\mathbf{a}}^{+} \rVert} $ and $\delta^t = \lVert \mathbf{P}\mathbf{a}\rVert$. We need to upper bound $E_1$ and $E_2$ accordingly. $E_2$ is upper bounded based on Lemma~\ref{lemma:grad-egrad-bound}. With probability at least $1-2q^{-10}$,
\begin{align}
    E_2 &= \eta  \lVert \mathbb{E}[\nabla_{\mathbf{a}}l(\mathbf{a},\Bar{\mathbf{b}})]-\nabla_{\mathbf{a}}l(\mathbf{a},\Bar{\mathbf{b}}) \rVert \\
    &\leq 2\eta \Tilde{\epsilon} (\epsilon'+1)^2\delta^t\lVert \mathbf{b}^*\rVert^2 \label{E2}
\end{align}

To upper bound $E_1$, we need to lower bound $\lVert \Bar{\mathbf{b}}\rVert^2$. We can first lower bound $\lVert \Bar{\mathbf{b}}\rVert$ by:
\begin{align}
    \lVert \Bar{\mathbf{b}}\rVert & = \lVert \mathbf{g} - (\mathbf{g}-\Bar{\mathbf{b}})\rVert \\
    &\geq  \lVert \mathbf{g} \rVert - \lVert \mathbf{g}-\Bar{\mathbf{b}} \rVert \\
    &=\sqrt{1-(\delta^{t})^2} \lVert {\mathbf{b}^*}\rVert- \lVert \mathbf{g}-\Bar{\mathbf{b}} \rVert \label{b-bar-3}\\
    &\geq \sqrt{1-(\delta^{t})^2} \lVert {\mathbf{b}^*}\rVert - \epsilon'\delta^t \lVert \mathbf{b}^*\rVert  \label{b-bar}
\end{align}
with probability at least $1-q^{-10}$. \eqref{b-bar-3} follows by $\mathbf{g}^\top = \mathbf{a}^\top \mathbf{a}^* \mathbf{b}^{*^\top}$ and $\mathbf{a}^\top \mathbf{a}^* = \cos{\theta}(\mathbf{a}, \mathbf{a}^*)$, \eqref{b-bar} follows by Lemma~\ref{lemma:b-bar-g-bound}.
Assuming $\delta^t\leq \dots \leq \delta^0$, we choose $\epsilon'<\frac{{1-(\delta^0)^2}}{16}$ to make $\sqrt{1-(\delta^{t})^2} \lVert {\mathbf{b}^*}\rVert - \epsilon'\delta^t \lVert \mathbf{b}^*\rVert\geq0$.
Hence $\lVert \Bar{\mathbf{b}}\rVert^2$ is lower bounded by:
\begin{align}
    \lVert \Bar{\mathbf{b}}\rVert^2 & \geq (\sqrt{1-(\delta^{t})^2} \lVert {\mathbf{b}^*}\rVert - \epsilon'\delta^t \lVert \mathbf{b}^*\rVert )^2 \\
    & = (1-(\delta^{t})^2)\lVert {\mathbf{b}^*}\rVert^2 + (\epsilon')^2(\delta^{t})^2\lVert{\mathbf{b}^*}\rVert^2 -2\epsilon' \delta^{t}\sqrt{1-(\delta^{t})^2}\lVert {\mathbf{b}^*}\rVert^2 \\
    &\geq (1-(\delta^{t})^2)\lVert {\mathbf{b}^*}\rVert^2 + (\epsilon')^2(\delta^{t})^2\lVert{\mathbf{b}^*}\rVert^2 - \epsilon' \lVert{\mathbf{b}^*}\rVert^2 \label{b-bar-square-lower-bound-step-1}\\
    &\geq (1-(\delta^{0})^2)\lVert {\mathbf{b}^*}\rVert^2 - \epsilon' \lVert{\mathbf{b}^*}\rVert^2 \label{b-bar-square-lower-bound-step-2}
\end{align}
with probability at least $1-q^{-10}$. \eqref{b-bar-square-lower-bound-step-1} follows by $xy\leq \frac{1}{2}$ for $x^2+y^2=1$, \eqref{b-bar-square-lower-bound-step-2} follows by assuming $\delta^t \leq \delta^{t-1}\leq \dots \leq \delta^{0}$. $E_1$ is upper bounded below.
\begin{align}
    E_1 &= \delta^t|1-2\eta \Bar{\mathbf{b}}^\top \Bar{\mathbf{b}}| \\
    & \leq \delta^t |1-2\eta ((1-(\delta^{0})^2) - \epsilon' )\lVert {\mathbf{b}^*}\rVert^2 | \label{E1}
\end{align}
with probability at least $1-q^{-10}$. Next we lower bound $\lVert \hat{\mathbf{a}}^{+} \rVert$.
\begin{align}
    \lVert \hat{\mathbf{a}}^{+} \rVert^2 & = \lVert \mathbf{a}-\eta \nabla_{\mathbf{a}}l(\mathbf{a},\Bar{\mathbf{b}})\rVert^2 \\
    & = \mathbf{a}^\top\mathbf{a}+\eta^2\lVert\nabla_{\mathbf{a}}l(\mathbf{a},\Bar{\mathbf{b}}) \rVert^2 - 2\eta\mathbf{a}^\top \nabla_{\mathbf{a}}l(\mathbf{a},\Bar{\mathbf{b}}) \label{a-hat-norm-2} \\
    & \geq \mathbf{a}^\top\mathbf{a} - 2\eta\mathbf{a}^\top \nabla_{\mathbf{a}}l(\mathbf{a},\Bar{\mathbf{b}}) \label{a-hat-norm-3} \\
    & = 1- 2\eta\mathbf{a}^\top \nabla_{\mathbf{a}}l(\mathbf{a},\Bar{\mathbf{b}})  \label{a-hat-norm-4}\\
    & = 1- 2\eta  \mathbf{a}^\top (\nabla_{\mathbf{a}}l(\mathbf{a},\Bar{\mathbf{b}}) -  \mathbb{E}[\nabla_{\mathbf{a}}l(\mathbf{a},\Bar{\mathbf{b}})])-2\eta  \mathbf{a}^\top  \mathbb{E}[\nabla_{\mathbf{a}}l(\mathbf{a},\Bar{\mathbf{b}})] 
\end{align}
where \eqref{a-hat-norm-3} follows by $\eta^2\lVert\nabla_{\mathbf{a}}l(\mathbf{a},\Bar{\mathbf{b}}) \rVert^2 \geq 0$, and \eqref{a-hat-norm-4} follows by $\mathbf{a}^\top\mathbf{a}=1$. The first subtrahend $2\eta  \mathbf{a}^\top (\nabla_{\mathbf{a}}l(\mathbf{a},\Bar{\mathbf{b}}) -  \mathbb{E}[\nabla_{\mathbf{a}}l(\mathbf{a},\Bar{\mathbf{b}})])$ is upper bounded such that
\begin{align}
    2\eta  \mathbf{a}^\top (\nabla_{\mathbf{a}}l(\mathbf{a},\Bar{\mathbf{b}}) -  \mathbb{E}[\nabla_{\mathbf{a}}l(\mathbf{a},\Bar{\mathbf{b}})]) & \leq  2\eta \| \mathbf{a} \|\cdot \| (\nabla_{\mathbf{a}}l(\mathbf{a},\Bar{\mathbf{b}}) -  \mathbb{E}[\nabla_{\mathbf{a}}l(\mathbf{a},\Bar{\mathbf{b}})])\|   \\
    & = 2\eta \| (\nabla_{\mathbf{a}}l(\mathbf{a},\Bar{\mathbf{b}}) -  \mathbb{E}[\nabla_{\mathbf{a}}l(\mathbf{a},\Bar{\mathbf{b}})])\| \\
    &\leq 4\eta \Tilde{\epsilon} (\epsilon'+1)^2\lVert \mathbf{b}^* \rVert^2 \label{denominator-a-grad-egrad}
\end{align}
with probability at least $1-2q^{-10}$.  
\eqref{denominator-a-grad-egrad} uses Lemma~\ref{lemma:b-bar-g-bound}. The second subtrahend is upper bounded such that
\begin{align}
    2\eta  \mathbf{a}^\top  \mathbb{E}[\nabla_{\mathbf{a}}l(\mathbf{a},\Bar{\mathbf{b}})] & =  4\eta  \mathbf{a}^\top (\mathbf{a}\Bar{\mathbf{b}}^\top-\mathbf{a}^* \mathbf{b}^{*^\top}) \Bar{\mathbf{b}} \\
    & = 4\eta   \mathbf{a}^\top (\mathbf{a}\Bar{\mathbf{b}}^\top-\mathbf{a}^* \mathbf{b}^{*^\top}) \mathbf{g} - 4\eta  \mathbf{a}^\top (\mathbf{a}\Bar{\mathbf{b}}^\top-\mathbf{a}^* \mathbf{b}^{*^\top}) (\mathbf{g}-\Bar{\mathbf{b}} ) \label{a-expectation-inner-product}
\end{align}
where $\mathbf{a}^\top (\mathbf{a}\Bar{\mathbf{b}}^\top-\mathbf{a}^* \mathbf{b}^{*^\top}) \mathbf{g}=-\mathbf{a}^\top ((\mathbf{I}_d-\mathbf{a}\mathbf{a}^\top)\mathbf{a}^*\mathbf{b}^{*^\top}+\mathbf{a}(\mathbf{g}-\mathbf{\Bar{b}})^\top) \mathbf{g}=(\mathbf{\Bar{b}}-\mathbf{g})^\top \mathbf{g}$. The second term is simplified via $\mathbf{a}^\top (\mathbf{a}\Bar{\mathbf{b}}^\top-\mathbf{a}^* \mathbf{b}^{*^\top}) (\mathbf{g}-\Bar{\mathbf{b}} ) =\mathbf{a}^\top((\mathbf{I}_d-\mathbf{a}\mathbf{a}^\top)\mathbf{a}^*\mathbf{b}^{*^\top}+\mathbf{a}(\mathbf{g}-\mathbf{\Bar{b}})^\top) (\mathbf{\Bar{b}}-\mathbf{g}) = -(\mathbf{g}-\Bar{\mathbf{b}} )^2$. Both simplifications use $\mathbf{a}^\top(\mathbf{I}_d-\mathbf{a}\mathbf{a}^\top)=0$ and $\mathbf{a}^\top\mathbf{a}=1$. \eqref{a-expectation-inner-product} becomes 
\begin{align}
    2\eta  \mathbf{a}^\top  \mathbb{E}[\nabla_{\mathbf{a}}l(\mathbf{a},\Bar{\mathbf{b}})] &= 4\eta (\mathbf{\Bar{b}}-\mathbf{g})^\top \mathbf{g} + 4\eta (\mathbf{g}-\Bar{\mathbf{b}} )^2 \\
    &\leq 4\eta \lVert \mathbf{g}-\mathbf{\Bar{b}} \rVert \lVert \mathbf{b}^* \rVert +4 \eta \lVert \mathbf{g}-\mathbf{\Bar{b}} \rVert^2\\
    & \leq 4\eta \epsilon' \delta^t \lVert \mathbf{b}^*\rVert^2 + 4\eta (\epsilon')^2 (\delta^{t})^2 \lVert \mathbf{b}^*\rVert^2  \label{denominator-a-expectation-grad-1}\\
    &\leq 4\eta ((\epsilon')^2+\epsilon')\lVert \mathbf{b}^*\rVert^2 \label{denominator-a-expectation-grad}
\end{align}
with probability at least $1-q^{-10}$. \eqref{denominator-a-expectation-grad-1} uses Lemma~\ref{lemma:b-bar-g-bound}. Combining \eqref{denominator-a-grad-egrad} and \eqref{denominator-a-expectation-grad}, we obtain
\begin{align}
    \lVert \hat{\mathbf{a}}^+\rVert^2 &\geq 1-4\eta \Tilde{\epsilon} (\epsilon'+1)^2\lVert \mathbf{b}^* \rVert^2 - 4\eta ((\epsilon')^2+\epsilon')\lVert \mathbf{b}^*\rVert^2 \label{sqrt-a-hat-plus}
\end{align}
with probability at least $1-2q^{-10}$. Combining \eqref{E1}, \eqref{E2} and \eqref{sqrt-a-hat-plus}, we obtain
\begin{align}
   \delta^{t+1} \leq \frac{E_1+E_2}{\lVert \hat{\mathbf{a}}^+\rVert} \leq \frac{\delta^t( 1-2\eta ((1-(\delta^{0})^2) - \epsilon' )\lVert {\mathbf{b}^*}\rVert^2 +2\eta \Tilde{\epsilon} (\epsilon' +1)^2\lVert \mathbf{b}^*\rVert^2)}{\sqrt{1-4\eta \Tilde{\epsilon} (\epsilon'+1)^2\lVert \mathbf{b}^* \rVert^2 - 4\eta ((\epsilon')^2+\epsilon')\lVert \mathbf{b}^*\rVert^2}} = \delta^t C
\end{align}
We can choose $\epsilon',\Tilde{\epsilon}<\frac{1-(\delta^0)^2}{16}$ such that $(1-(\delta^{0})^2)> \max (4(\Tilde{\epsilon}(\epsilon'+1)^2+(\epsilon')^2+\epsilon'), 2\epsilon' +2\Tilde{\epsilon}(\epsilon'+1)^2)$ holds. Then we obtain
\begin{align}
    C &= \frac{ 1-2\eta ((1-(\delta^{0})^2) - \epsilon' )\lVert {\mathbf{b}^*}\rVert^2 +2\eta \Tilde{\epsilon} (\epsilon' +1)^2\lVert \mathbf{b}^*\rVert^2}{\sqrt{1-4\eta \Tilde{\epsilon} (\epsilon'+1)^2\lVert \mathbf{b}^* \rVert^2 - 4\eta ((\epsilon')^2+\epsilon')\lVert \mathbf{b}^*\rVert^2}} \\
    &=  \frac{ 1-2\eta (1-(\delta^{0})^2)\lVert {\mathbf{b}^*}\rVert^2 + 2\eta \epsilon' \lVert {\mathbf{b}^*}\rVert^2 +2\eta \Tilde{\epsilon} (\epsilon' +1)^2\lVert \mathbf{b}^*\rVert^2}{\sqrt{1-4\eta (\Tilde{\epsilon} (\epsilon'+1)^2 +  (\epsilon')^2+\epsilon')\lVert \mathbf{b}^*\rVert^2}} \\
    &\leq \frac{ 1-2\eta (1-(\delta^{0})^2)\lVert {\mathbf{b}^*}\rVert^2 + \eta (2\epsilon' + 2\Tilde{\epsilon} (\epsilon'+1)^2)\lVert \mathbf{b}^*\rVert^2}{\sqrt{1-4\eta (\Tilde{\epsilon} (\epsilon'+1)^2 +  (\epsilon')^2+\epsilon')\lVert \mathbf{b}^*\rVert^2}} \\
    &\leq \frac{1-\eta (1-(\delta^{0})^2)\lVert \mathbf{b}^*\rVert^2}{\sqrt{1-\eta (1-(\delta^{0})^2)\lVert \mathbf{b}^*\rVert^2}} \\
    & = \sqrt{1-\eta (1-(\delta^{0})^2)\lVert \mathbf{b}^*\rVert^2}
\end{align}
Assuming $\eta \leq \frac{1}{L_{max}^2} \le \frac{1}{\lVert \mathbf{b}^*\rVert^2}$, $1-\eta (1-(\delta^{0})^2)\lVert \mathbf{b}^*\rVert^2$ is strictly positive. Therefore we obtain, with probability at least $1-2q^{-10}$, 
\begin{align}
    \delta^{t+1} &\leq \delta^{t} \sqrt{1-\eta (1-(\delta^{0})^2)\lVert \mathbf{b}^*\rVert^2}.
\end{align}

\end{proof}
\subsection{Proof of Theorem 5.4} 
\label{subsec:mainProof}
\begin{proof}
    In Lemma~\ref{lemma:delta-t}, we have shown the angle distance between $\mathbf{a}$ and $\mathbf{a}^*$ decreasing in $t$-th iteration such that with probability at least $1-2q^{-10}$ for $q=\max\{\log(N),d\}$, $\delta^{t+1} \leq \delta^{t} C $ for $c\in(0,1), C = \sqrt{1-c(1-(\delta^{0})^2)}$ with proper choice of step size $\eta$.
    \paragraph{Proving $\delta^{1}\leq \delta^{0} C $}.
    Now we are to prove that for the first iteration, $\delta^{1}\leq \delta^{0} C $ with certain probability. 

    By Lemma~\ref{lemma:b-bar-g-bound}, we get $\|\Bar{\mathbf{b}}-\mathbf{g}\| \leq \epsilon' \delta^0 \|\mathbf{b}^* \|$ with probability at least $1-q^{-10}$. 

    By Lemma~\ref{lemma:grad-egrad-bound}, we get $\lVert  \nabla_{\mathbf{a}}l(\mathbf{a},\Bar{\mathbf{b}})-\mathbb{E}[\nabla_{\mathbf{a}}l(\mathbf{a},\Bar{\mathbf{b}})]\rVert \leq 2\Tilde{\epsilon} ((\epsilon')^2+\epsilon')\delta^{0}\lVert \mathbf{b}^*\rVert^2$ with probability at least $1-2q^{-10}$.

By Lemma~\ref{lem:lemma-43-appendix}, without assuming decreasing angles, we obtain, with probability at least $1-2q^{-10}$, 
\begin{align}
    \delta^{1} &\leq \delta^{0} \sqrt{1-\eta (1-(\delta^{0})^2)\lVert \mathbf{b}^*\rVert^2} .
\end{align}

    \paragraph{Inductive Hypothesis.}
    Based on inductive hypothesis, by proving $\delta^{1}\leq \delta^{0} C$, the assumption that $\delta^{t}\leq \delta^{t-1} C\leq \dots \leq \delta^{1} C^{t-1}$, and proving $\delta^{t+1}\leq \delta^{t} C $, we conclude that $\delta^{t}\leq \delta^{t-1} C$ holds for all $t \in [T]$ iterations.
    We take a union bound over all $t \in [T]$ such that,
    \begin{align}
        \mathbb{P} \left\{\bigcap_{t=0}^{T-1} \delta^{t+1} \leq \delta^{t} \sqrt{1-c(1-(\delta^{0})^2)} \right\} \geq 1 - 2Tq^{-10}.  
    \end{align}
    \paragraph{Solve for $T$.}
    In order to achieve $\epsilon$-recovery of $\mathbf{a}^*$, we need
    \begin{align}
        \delta^{0} ({1-c(1-(\delta^{0})^2)})^{\frac{T}{2}} &\leq \epsilon \\
        ({1-c (1-(\delta^{0})^2)})^{\frac{T}{2}} &\leq \frac{\epsilon}{\delta^0} \\
        \frac{T}{2} \log{({1-c (1-(\delta^{0})^2)})} &\leq \log (\frac{\epsilon}{\delta^0})  \\
        \end{align} 
        We proceed such that
        \begin{align}
        T &\geq \frac{2\log (\frac{\epsilon}{\delta^0})}{\log{({1-c (1-(\delta^{0})^2)})}} \\
         &> \frac{2\log (\frac{\epsilon}{\delta^0})}{-c (1-(\delta^{0})^2)} \label{T-2}\\
         &= \frac{2}{c (1-(\delta^{0})^2)} \log(\frac{\delta^0}{\epsilon}) 
    \end{align} 
    where \eqref{T-2} follows by using $\log(1-x)<-x$ for $|x|<1$. Thus, with probability at least $1-2Tq^{-10}$, $\delta^T = \sin \theta ({\mathbf{a}^T},\mathbf{a}^*) \leq \epsilon$. 

    \paragraph{Convergence to the target model.} We now aim to upper bound $\| \mathbf{a}^T(\mathbf{b}^{T+1})^\top- \mathbf{a}^* ({\mathbf{b}}^*)^\top\|$. Recall that $(\mathbf g^T)^\top = (\mathbf a^T)^\top \mathbf a^* \mathbf b^{*^\top} $ and $\delta^T = \|  (\mathbf I_d - \mathbf{a}^T(\mathbf a^T)^\top )  \mathbf{a}^* \|$, we have
    \begin{align}
        \lVert \mathbf{a}^T(\mathbf{b}^{T+1})^\top- \mathbf{a}^* \mathbf b^{*^\top}\rVert &= \lVert \mathbf{a}^T(\mathbf{b}^{T+1})^\top - \mathbf{a}^T(\mathbf{g}^{T})^\top + \mathbf{a}^T(\mathbf{g}^{T})^\top- \mathbf{a}^* \mathbf b^{*^\top}\rVert \\
        &\le  \lVert \mathbf{a}^T(\mathbf{b}^{T+1})^\top - \mathbf{a}^T(\mathbf{g}^{T})^\top\| + \| \mathbf{a}^T(\mathbf{g}^{T})^\top- \mathbf{a}^* \mathbf b^{*^\top}\rVert \\
        &= \lVert \mathbf{a}^T(\mathbf{b}^{T+1}-\mathbf{g}^{T})^\top \| + \| (\mathbf{a}^T(\mathbf a^T)^\top  - \mathbf I_d) \mathbf{a}^* \mathbf b^{*^\top}\rVert \\
        &= \|\mathbf{a}^T\| \|\mathbf{b}^{T+1}-\mathbf{g}^{T} \| + \|  (\mathbf I_d - \mathbf{a}^T(\mathbf a^T)^\top )  \mathbf{a}^* \| \|{\mathbf{b}}^*\rVert \\
        &\le \epsilon' \delta^T \|\mathbf b^*\| + \delta^T\|\mathbf b^*\| \label{eq:conv_target_1}\\
        &= (1 + \epsilon') \epsilon \|\mathbf b^*\| \\
        &= (1 + \epsilon') \epsilon \|\mathbf a^* \mathbf b^{*^\top}\|\label{eq:conv_target_2}
    \end{align}
    where \eqref{eq:conv_target_1}  is by Lemma \ref{lemma:b-bar-g-bound} and the fact that $\|\mathbf a^T\| = 1$, and \eqref{eq:conv_target_2} is due to the fact that $\| \mathbf x \mathbf y^\top\| =\| \mathbf x \|\| \mathbf y\| $ and $\|\mathbf a^*\| = 1$.
\end{proof}

\subsubsection{Proof of Proposition 5.5} \label{fa-heter-saturate}
\begin{proof}
    We start by fixing $\mathbf{a}^0$ and updating $\mathbf{b}_i$ to minimize the objective. Let $\mathbf{a}=\mathbf{a}^0$. We obtain
    \begin{align}
        \mathbf{b}_i^\top & = \frac{\mathbf{a}^\top \mathbf{X}_i^\top \mathbf{X}_i \mathbf{a}^* \mathbf{b}^{*^\top} }{\mathbf{a}^\top \mathbf{X}_i^\top\mathbf{X}_i \mathbf{a}} \\
        ({\mathbf{b}}^{FFA})^\top  & = \frac{1}{N} \sum_{i=1}^{N} \frac{\mathbf{a}^\top \mathbf{X}_i^\top \mathbf{X}_i \mathbf{a}^* \mathbf{b}^{*^\top} }{\mathbf{a}^\top \mathbf{X}_i^\top\mathbf{X}_i \mathbf{a}}
    \end{align}
   let $\Bar{\mathbf{b}}={\mathbf{b}}^{FFA}$. We aim to compute the expected value of $\frac{1}{N}\sum_{i=1}^{N}\frac{1}{m}\lVert \mathbf{X}_i \mathbf{a}^*\mathbf{b}^{*^{\top}}- \mathbf{X}_i \mathbf{a}\mathbf{\Bar{b}}^{\top}\rVert^2 $ where the expectation is over all the randomness in the $\mathbf{X}_i$. We define 
    \begin{align}
        s_i = \frac{\mathbf{a}^\top \mathbf{X}_i^\top \mathbf{X}_i \mathbf{a}^* }{\mathbf{a}^\top \mathbf{X}_i^\top\mathbf{X}_i \mathbf{a}}  = \frac{(\mathbf{X_i}\mathbf{a})^\top(\mathbf{X_i}\mathbf{a}^*)}{\lVert \mathbf{X}_i \mathbf{a} \rVert^2}
    \end{align}
    so that $\Bar{\mathbf{b}} = \frac{1}{N}\sum_{i=1}^{N}s_i \mathbf{b}^*=\Bar{s}\mathbf{b}^*$. For each $i$, the norm becomes
    \begin{align}
        \lVert \mathbf{X}_i \mathbf{a}^*\mathbf{b}^{*^{\top}}- \mathbf{X}_i \mathbf{a}\mathbf{\Bar{b}}^{\top}\rVert^2 & = \lVert(\mathbf{X}_i \mathbf{a}^* - \Bar{s}\mathbf{X}_i \mathbf{a}){\mathbf{b}}^{*^\top}\rVert^2 \\
        & = \lVert \mathbf{X}_i \mathbf{a}^* - \Bar{s}\mathbf{X}_i \mathbf{a} \rVert^2 \lVert \mathbf{b}^* \rVert^2
    \end{align}
    using the fact that $\lVert \mathbf{u}\mathbf{v}^\top\rVert^2 = \lVert \mathbf{u}\rVert^2 \lVert \mathbf{v}\rVert^2$ for vectors $\mathbf{u}$ and $\mathbf{v}$. Therefore, $\mathbb{E}[\frac{1}{N}\sum_{i=1}^{N}\frac{1}{m}\lVert \mathbf{X}_i \mathbf{a}^*\mathbf{b}^{*^{\top}}- \mathbf{X}_i \mathbf{a}\mathbf{\Bar{b}}^{\top}\rVert^2] $ is reduced to $\mathbb{E}[\frac{1}{N}\sum_{i=1}^{N}\frac{1}{m}\lVert \mathbf{X}_i \mathbf{a}^*- \mathbf{X}_i \mathbf{a}\rVert^2] \cdot \lVert\mathbf{b}^*\rVert^2$.

    Since each entry of $\mathbf{X}_i$ is independently and identically distributed according to a standard Gaussian distribution, both $\mathbf{a}^*$ and $\mathbf{a}$ are unit vectors, the vectors $ \mathbf{X}_i \mathbf{a}^*$ and $ \mathbf{X}_i \mathbf{a}$ are $\mathcal{N}(0,\mathbf{I}_m)$. The cross-covariance is $\alpha \mathbf{I}_m$ where $\alpha = \mathbf{a}^\top \mathbf{a}^*$. 
    
    By linearity, we can show that $\frac{1}{N}\sum_{i=1}^{N}\frac{1}{m}\lVert \mathbf{X}_i \mathbf{a}^*- \Bar{s} \mathbf{X}_i \mathbf{a}\rVert^2$ has the same expectation as $\frac{1}{m}\lVert \mathbf{X}_1 \mathbf{a}^* - \Bar{s}\mathbf{X}_1 \mathbf{a} \rVert^2$ because all $(\mathbf{X}_i \mathbf{a}^*, \mathbf{X}_i \mathbf{a})$ are i.i.d. pairs. Let $z_1 = \frac{s_1}{N}$ and $z_2 = \frac{s_2+\dots + s_N}{N}$, we have $\lVert \mathbf{X}_1 \mathbf{a}^* - z_1\mathbf{X}_1 \mathbf{a} - z_2\mathbf{X}_1 \mathbf{a} \rVert^2$. Let $\mathbf v = \mathbf{X}_1 \mathbf{a}^*, \mathbf{u}_1 = z_1 \mathbf{X}_1 \mathbf{a}$ and $\mathbf{u}_2 = z_2 \mathbf{X}_1 \mathbf{a}$. Thus,
    \begin{align}
        \lVert \mathbf{X}_1 \mathbf{a}^* - z_1\mathbf{X}_1 \mathbf{a} - z_2\mathbf{X}_1 \mathbf{a} \rVert^2 &= \|\mathbf{v}-\mathbf{u}_1-\mathbf{u}_2 \|^2 \\
        & = \mathbf{v}^\top \mathbf{v} + \mathbf{u}_1^\top \mathbf{u}_1 + \mathbf{u}_2^\top \mathbf{u}_2 -2 \mathbf{v}^\top \mathbf{u}_1 - 2\mathbf{v}^\top \mathbf{u}_2 + 2 \mathbf{u}_1^\top \mathbf{u}_2 \label{expand}
    \end{align}
    Now we compute the expectation term by term.
    \paragraph{Expected value of $\mathbf{v}^\top \mathbf{v}$} We have $\mathbb{E}[\mathbf{v}^\top \mathbf{v}]= \mathbb{E}[\| \mathbf{X}_1 \mathbf{a}^*\|^2]=m$.
    \paragraph{Expected value of $\mathbf{u}_1^\top \mathbf{u}_1$} We have
    \begin{align}
        \mathbf{u}_1^\top \mathbf{u}_1 & = z_1^2 \| \mathbf{X}_1 \mathbf{a} \|^2 \\
        & = \frac{s_1^2}{N^2} \| \mathbf{X}_1 \mathbf{a} \|^2 \\
        & = \frac{1}{N^2} \frac{((\mathbf{X}_1 \mathbf{a})^\top(\mathbf{X}_1 \mathbf{a}^*))^2}{\|  \mathbf{X}_1 \mathbf{a} \|^4} \| \mathbf{X}_1 \mathbf{a} \|^2 \\
         & = \frac{1}{N^2} \frac{((\mathbf{X}_1 \mathbf{a})^\top(\mathbf{X}_1 \mathbf{a}^*))^2}{\|  \mathbf{X}_1 \mathbf{a} \|^2} \label{u1-square-4}
    \end{align}
    Note that $(\mathbf{X}_1 \mathbf{a}^*, \mathbf{X}_1 \mathbf{a})$ is a correlated Gaussian pair with correlation $\alpha = \mathbf{a}^\top\mathbf{a}^*$. Without loss of generality, we assume $\mathbf{a}=\mathbf{e}_1$ thus $\mathbf{a}^*=\alpha \mathbf{e}_1+\sqrt{1-\alpha^2}\mathbf{e}_2$, where $\mathbf{e}_1$ and $\mathbf{e}_2$ are standard basis vectors in $\mathbb{R}^d$. So we can get $\mathbf{X}_1\mathbf{a}=\mathbf{X}_1 \mathbf{e}_1=\mathbf{x}_{1,1}$, where $\mathbf{x}_{1,1}$ denotes the first column of $\mathbf{X}_1$. Accordingly $\mathbf{X}_1\mathbf{a}^* = \alpha \mathbf{X}_1 \mathbf{e}_1+\sqrt{1-\alpha^2}\mathbf{X}_1 \mathbf{e}_2= \alpha \mathbf{x}_{1,1} + \sqrt{1-\alpha^2} \mathbf{x}_{1,2}$ where $\mathbf{x}_{1,2}$ denotes the second column of $\mathbf{X}_{1}$. Therefore \eqref{u1-square-4} can be written as $\frac{1}{N^2}\frac{(\mathbf{x}_{1,1}^\top(\alpha \mathbf{x}_{1,1} + \beta \mathbf{x}_{1,2}))^2}{\| \mathbf{x}_{1,1} \|^2}$. Now we take expectation of it.
    \begin{align}
        \mathbb{E}\left[\frac{1}{N^2} \frac{((\mathbf{X}_1 \mathbf{a})^\top(\mathbf{X}_1 \mathbf{a}^*))^2}{\|  \mathbf{X}_1 \mathbf{a} \|^2} \right] = \mathbb{E}\left[\frac{1}{N^2}\frac{(\mathbf{x}_{1,1}^\top(\alpha \mathbf{x}_{1,1} + \beta \mathbf{x}_{1,2}))^2}{\| \mathbf{x}_{1,1} \|^2}\right] & = \frac{1}{N^2} \mathbb{E}\left[ \frac{(\mathbf{x}_{1,1}^\top(\alpha \mathbf{x}_{1,1} + \beta \mathbf{x}_{1,2}))^2}{\| \mathbf{x}_{1,1} \|^2} \right] 
    \end{align}
    Let $r_1=\| \mathbf{x}_{1,1}\|^2$ and $r_2 = \mathbf{x}_{1,1}^\top \mathbf{x}_{1,2}$. We have
    \begin{align}
        \mathbb{E}\left[ \frac{(\mathbf{x}_{1,1}^\top(\alpha \mathbf{x}_{1,1} + \beta \mathbf{x}_{1,2}))^2}{\| \mathbf{x}_{1,1} \|^2} \right]  & = \mathbb{E}\left[ \frac{(\alpha r_1+\beta r_2)^2}{r_1} \right] \\
        & = \mathbb{E}\left[ \frac{\alpha^2 r_1^2 +\beta^2 r_2^2 + 2\alpha \beta r_1 r_2}{r_1} \right]\\
        & = \mathbb{E}\left[ \alpha^2 r_1\right] + \mathbb{E}\left[ \frac{\beta^2 r_2^2}{r_1}\right] + \mathbb{E}\left[ 2\alpha \beta r_2\right]
    \end{align}
    where $\mathbb{E}\left[ \alpha^2 r_1\right] = \alpha^2 \mathbb{E}\left[ \| \mathbf{x}_{1,1}\|^2 \right]=\alpha^2 m$, and $\mathbb{E}\left[ 2\alpha \beta r_2\right] = 2\alpha \beta \mathbb{E}\left[ r_2\right]=2\alpha \beta \mathbb{E}\left[ \mathbf{x}_{1,1}^\top \mathbf{x}_{1,2} \right]=0$ because $\mathbf{x}_{1,1}$ and $\mathbf{x}_{1,2}$ are independent standard Gaussian vectors. Then we analyze $\mathbb{E}\left[ \frac{\beta^2 r_2^2}{r_1}\right] = \beta^2 \mathbb{E}\left[ \frac{r_2^2}{r_1}\right]$. Condition on $\mathbf{x}_{1,1}$, 
    \begin{align}
        \mathbb{E}\left[ r_2 | \mathbf{x}_{1,1}\right]=\mathbb{E}\left[ \mathbf{x}_{1,1}^\top \mathbf{x}_{1,2} | \mathbf{x}_{1,1}\right]= \mathbf{x}_{1,1}^\top \mathbb{E}\left[ \mathbf{x}_{1,2} \right] = 0
    \end{align}
    and Var$(r_2|\mathbf{x}_{1,1}) = \| \mathbf{x}_{1,1} \|^2=r_1$, thus
    \begin{align}
        r_2|\mathbf{x}_{1,1} = \mathbf{x}_{1,1}^\top \mathbf{x}_{1,2} | \mathbf{x}_{1,1}  \sim \mathcal{N}(0,r_1) \label{x-1-x-2-con-x-1}
    \end{align} Then we obtain  
    \begin{align}
        \mathbb{E}\left[ r_2^2 | \mathbf{x}_{1,1}\right]=r_1 \label{x-1-x-2-square-given-x1}
    \end{align}
    Therefore $\mathbb{E}\left[ \frac{r_2^2}{r_1} | \mathbf{x}_{1,1}\right]=\frac{\mathbb{E}\left[ r_2^2 | \mathbf{x}_{1,1} \right]}{r_1}= 1$. We take total expectation $\mathbb{E}\left[ \frac{r_2^2}{r_1}\right] = \mathbb{E}\left[ \mathbb{E}\left[ \frac{r_2^2}{r_1} | \mathbf{x}_{1,1}\right] \right]=1$. Summarizing,
    \begin{align}
         \mathbb{E}\left[\frac{((\mathbf{X}_1 \mathbf{a})^\top(\mathbf{X}_1 \mathbf{a}^*))^2}{\|  \mathbf{X}_1 \mathbf{a} \|^2} \right] & =  \mathbb{E}\left[ \frac{(\alpha r_1+\beta r_2)^2}{r_1} \right]  \\
        & =  \mathbb{E}\left[ \alpha^2 r_1\right] + \mathbb{E}\left[ \frac{\beta^2 r_2^2}{r_1}\right] + \mathbb{E}\left[ 2\alpha \beta r_2\right]  \\
        & = 
   \alpha^2 m + \beta^2  \label{E-u1-u1-1} \\
   \mathbb{E}\left[ \mathbf{u}_1^\top \mathbf{u}_1 \right] & = \frac{1}{N^2}\mathbb{E}\left[\frac{((\mathbf{X}_1 \mathbf{a})^\top(\mathbf{X}_1 \mathbf{a}^*))^2}{\|  \mathbf{X}_1 \mathbf{a} \|^2} \right]\\&= \frac{\alpha^2 m + (1-\alpha^2)}{N^2} \label{u1-u1}
    \end{align}
    \paragraph{Expected value of $\mathbf{u}_2^\top \mathbf{u}_2$} We have $\mathbf{u}_2^\top \mathbf{u}_2=z_2^2 \| \mathbf{X}_1 \mathbf{a}\|^2$ where $z_2 = \frac{s_2 + \dots +s_N}{N}$ is independent of pair $(\mathbf{X}_1 \mathbf{a}^*, \mathbf{X}_1 \mathbf{a})$. To compute $\mathbb{E} \left[ z_2^2 \| \mathbf{X}_1 \mathbf{a}\|^2 \right]$, first we condition on $z_2$ to obtain,  
    \begin{align}
        \mathbb{E} \left[ z_2^2 \| \mathbf{X}_1 \mathbf{a}\|^2 | z_2 \right] =  z_2^2  \mathbb{E} \left[  \| \mathbf{X}_1 \mathbf{a}\|^2\right] =z^2_2 m
    \end{align}
    Then we take total expectation $ \mathbb{E} \left[ z_2^2 \| \mathbf{X}_1 \mathbf{a}\|^2 \right] = \mathbb{E} \left[ \mathbb{E} \left[ z_2^2 \| \mathbf{X}_1 \mathbf{a}\|^2 | z_2 \right] \right] = \mathbb{E} \left[z_2^2 m \right] = m \mathbb{E} \left[z_2^2 \right] $. 
    \begin{align}
        \mathbb{E} \left[z_2^2 \right] & = \mathbb{E} \left[ \frac{(s_2+\dots +s_N)^2}{N^2} \right] \\
        & = \frac{1}{N^2} \mathbb{E} \left[ \sum_{i=2}^{N} s_i^2 + \sum_{\substack{i=1, j=1\\ i\neq j}}^{N} s_i s_j\right] \\
        & = \frac{1}{N^2}  \left(\sum_{i=2}^{N} \mathbb{E} \left[s_i^2 \right]+ \sum_{\substack{i=1, j=1\\ i\neq j}}^{N}  \mathbb{E} \left[ s_i s_j \right]\right)
    \end{align}
    Write $s_i = \frac{(\mathbf{X_i}\mathbf{a})^\top(\mathbf{X_i}\mathbf{a}^*)}{\lVert \mathbf{X}_i \mathbf{a} \rVert^2}$. Without loss of generality, we assume $\mathbf{a}=\mathbf{e}_1$ thus $\mathbf{a}^*=\alpha \mathbf{e}_1+\sqrt{1-\alpha^2}\mathbf{e}_2$, where $\mathbf{e}_1$ and $\mathbf{e}_2$ are standard basis vectors in $\mathbb{R}^d$. 
    Thus, we have 
$
\mathbf{X}_i\mathbf{a} = \mathbf{X}_i\mathbf{e}_1 = \mathbf{x}_{i,1},
$
where \(\mathbf{x}_{i,1}\) represents the first column of \(\mathbf{X}_i\). Similarly, 
$
\mathbf{X}_i\mathbf{a}^* = \alpha \mathbf{X}_i\mathbf{e}_1 + \sqrt{1-\alpha^2}\mathbf{X}_i\mathbf{e}_2 = \alpha \mathbf{x}_{i,1} + \sqrt{1-\alpha^2} \mathbf{x}_{i,2},
$
where \(\mathbf{x}_{i,2}\) denotes the second column of \(\mathbf{X}_i\).

Hence,
\begin{align}
(\mathbf{X_i}\mathbf{a})^\top(\mathbf{X_i}\mathbf{a}^*)  = \mathbf{x}_{i,1}^\top (\alpha \mathbf{x}_{i,1} + \sqrt{1-\alpha^2} \mathbf{x}_{i,2}) = \alpha \| \mathbf{x}_{i,1} \|^2 + \sqrt{1-\alpha^2} (\mathbf{x}_{i,1}^\top \mathbf{x}_{i,2}) \label{x-i-1-top-x-i-a-star}
\end{align}
With $\| \mathbf{X_i}\mathbf{a} \|^2 = \| \mathbf{x}_{i,1}\|^2$, we have
\begin{align}
    s_i = \frac{\alpha \| \mathbf{x}_{i,1} \|^2 + \sqrt{1-\alpha^2} (\mathbf{x}_{i,1}^\top \mathbf{x}_{i,2})}{\| \mathbf{x}_{i,1}\|^2}=\alpha + \sqrt{1-\alpha^2} \frac{\mathbf{x}_{i,1}^\top \mathbf{x}_{i,2}}{\| \mathbf{x}_{i,1}\|^2} \label{s-i}
\end{align}
Let $R = \frac{\mathbf{x}_{i,1}^\top \mathbf{x}_{i,2}}{\| \mathbf{x}_{i,1}\|^2}$. Then 
\begin{align}
    \mathbb{E}\left[ s_i^2 \right] & = \mathbb{E}\left[ \left( \alpha + \sqrt{1-\alpha^2} R\right)^2 \right]\\
    & =  \mathbb{E}\left[ \alpha^2 + (1-\alpha^2)R^2+ 2\alpha\sqrt{1-\alpha^2} R\right] \\
    & = \alpha^2+ (1-\alpha^2) \mathbb{E}\left[ R^2 \right] +2\alpha\sqrt{1-\alpha^2} \mathbb{E}\left[ R \right] \label{E-s-i-square}
\end{align}
For $\mathbb{E}\left[ R \right]= \mathbb{E}\left[ \frac{\mathbf{x}_{i,1}^\top \mathbf{x}_{i,2}}{\| \mathbf{x}_{i,1}\|^2} \right]$, similarly as in \eqref{x-1-x-2-con-x-1}, $\mathbf{x}_{i,1}^\top \mathbf{x}_{i,2}| \mathbf{x}_{i,1} \sim \mathcal{N}(0, \| \mathbf{x}_{i,1}\|^2)$, thus $\frac{\mathbf{x}_{i,1}^\top \mathbf{x}_{i,2}}{\| \mathbf{x}_{i,1}\|^2} |\mathbf{x}_{i,1} \sim  \mathcal{N}(0, \frac{1}{\| \mathbf{x}_{i,1}\|^2})$, then 
\begin{align}
    \mathbb{E}\left[ R \right]=\mathbb{E}\left[ \mathbb{E}\left[ R | \mathbf{x}_{i,1} \right] \right] =0
\end{align}
For $\mathbb{E}\left[ R^2 \right]$, since $\mathbf{x}_{i,1}^\top \mathbf{x}_{i,2}| \mathbf{x}_{i,1} \sim \mathcal{N}(0, \| \mathbf{x}_{i,1}\|^2)$, so $\mathbb{E}\left[  (\mathbf{x}_{i,1}^\top \mathbf{x}_{i,2})^2 |\mathbf{x}_{i,1} \right]=\| \mathbf{x}_{i,1}\|^2$. Thus, with $R^2 = \frac{(\mathbf{x}_{i,1}^\top \mathbf{x}_{i,2})^2}{\| \mathbf{x}_{i,1}\|^4}$, 
\begin{align}
   &\mathbb{E}\left[ R^2 | \mathbf{x}_{i,1} \right]= \frac{\mathbb{E}\left[  (\mathbf{x}_{i,1}^\top \mathbf{x}_{i,2})^2 |\mathbf{x}_{i,1} \right]}{\| \mathbf{x}_{i,1}\|^4} = \frac{1}{\| \mathbf{x}_{i,1}\|^2} \\
   &\mathbb{E}\left[ R^2 \right] = \mathbb{E}\left[ \frac{1}{\| \mathbf{x}_{i,1}\|^2} \right]
\end{align}
For a $m$-dimensional standard Gaussian vector, $\| \mathbf{x}_{i,1}\|^2$ follows a chi-squared distribution with $m$ degrees of freedom. Therefore, $\mathbb{E}\left[ R^2 \right] = \frac{1}{m-2}$. \eqref{E-s-i-square} becomes
\begin{align}
     \mathbb{E}\left[ s_i^2 \right] &= \alpha^2+ (1-\alpha^2) \mathbb{E}\left[ R^2 \right] +2\alpha\sqrt{1-\alpha^2} \mathbb{E}\left[ R \right]\\
     &= \alpha^2 +(1-\alpha^2)\frac{1}{m-2}
\end{align}
Now we compute $\mathbb{E}\left[ s_is_j \right]$ for $i\neq j$. By independence of $s_i$ and $s_j$, $\mathbb{E}\left[ s_is_j \right]= \mathbb{E}\left[ s_i \right] \cdot \mathbb{E}\left[ s_j \right]=\mathbb{E}\left[ s_i \right]^2$. Take expectation of \eqref{s-i}, \begin{align}
    \mathbb{E}\left[ s_i \right] &= \mathbb{E}\left[ \alpha + \sqrt{1-\alpha^2} \frac{\mathbf{x}_{i,1}^\top \mathbf{x}_{i,2}}{\| \mathbf{x}_{i,1}\|^2} \right] \\
    & = \alpha + \sqrt{1-\alpha^2}   \mathbb{E}\left[ \frac{\mathbf{x}_{i,1}^\top \mathbf{x}_{i,2}}{\| \mathbf{x}_{i,1}\|^2} \right] \\
    &= \alpha + \sqrt{1-\alpha^2}   \mathbb{E}\left[ R \right] \\
    & = \alpha \label{expectation-s-i}
\end{align}
Hence, $\mathbb{E}\left[ s_is_j \right]=\alpha^2$. Summarizing,
\begin{align}
\mathbb{E}\left[\mathbf{u}_2^\top \mathbf{u}_2\right] &=m  \mathbb{E}\left[ z_2^2\right]\\
& = \frac{m}{N^2} \left((N-1)\mathbb{E}\left[ s_i^2\right]  + (N-1)(N-2)\mathbb{E}\left[ s_i s_j\right]\right) \\
& = \frac{m}{N^2}\left( (N-1) 
 (\alpha^2 +(1-\alpha^2)\frac{1}{m-2})+(N-1)(N-2)\alpha^2\right) \\
 & = \frac{m}{N^2}\left[ (N-1)^2 \alpha^2 + (N-1)\frac{1-\alpha^2}{m-2}\right] \label{u2-u2}
\end{align}

    \paragraph{Expected value of $\mathbf{v}^\top \mathbf{u}_1$} We have $\mathbf{v}^\top \mathbf{u}_1 = z_1 (\mathbf{X}_1 \mathbf{a}^*)^\top (\mathbf{X}_1 \mathbf{a})= \frac{s_1}{N}(\mathbf{X}_1 \mathbf{a}^*)^\top (\mathbf{X}_1 \mathbf{a})$. We factor out $\frac{1}{N}$, $\mathbb{E}\left[ \mathbf{v}^\top \mathbf{u}_1\right]=\frac{1}{N}\mathbb{E}\left[ s_1 (\mathbf{X}_1 \mathbf{a}^*)^\top (\mathbf{X}_1 \mathbf{a})\right]$. By \eqref{E-u1-u1-1},
    \begin{align}
        \mathbb{E}\left[ s_1 (\mathbf{X}_1 \mathbf{a}^*)^\top (\mathbf{X}_1 \mathbf{a})\right] & = \mathbb{E}\left[\frac{((\mathbf{X}_1 \mathbf{a})^\top(\mathbf{X}_1 \mathbf{a}^*))^2}{\|  \mathbf{X}_1 \mathbf{a} \|^2} \right] \\
        & = \alpha^2m + (1-\alpha^2)
    \end{align}
Then 
\begin{align}
    \mathbb{E}\left[ \mathbf{v}^\top \mathbf{u}_1\right]= \frac{\alpha^2m + (1-\alpha^2)}{N} \label{v-u1}
\end{align}

    \paragraph{Expected value of $\mathbf{v}^\top \mathbf{u}_2$} We have $\mathbf{v}^\top \mathbf{u}_2 = z_2 (\mathbf{X}_1 \mathbf{a}^*)^\top (\mathbf{X}_1 \mathbf{a})$. Condition on $z_2$ which is independent of $(\mathbf{X}_1 \mathbf{a}^*)^\top (\mathbf{X}_1 \mathbf{a})$, we obtain
    \begin{align}
        \mathbb{E}\left[  z_2 (\mathbf{X}_1 \mathbf{a}^*)^\top (\mathbf{X}_1 \mathbf{a}) |z_2\right] & = z_2 \mathbb{E}\left[  (\mathbf{X}_1 \mathbf{a}^*)^\top (\mathbf{X}_1 \mathbf{a}) \right] 
    \end{align}
Still we assume $\mathbf{a}=\mathbf{e}_1$ thus $\mathbf{a}^*=\alpha \mathbf{e}_1+\sqrt{1-\alpha^2}\mathbf{e}_2$, where $\mathbf{e}_1$ and $\mathbf{e}_2$ are standard basis vectors in $\mathbb{R}^d$. 
With $\mathbf{X}_1\mathbf{a}=\mathbf{X}_1 \mathbf{e}_1=\mathbf{x}_{1,1}$, where $\mathbf{x}_{1,1}$ denotes the first column of $\mathbf{X}_1$, and $\mathbf{X}_1\mathbf{a}^* = \alpha \mathbf{X}_1 \mathbf{e}_1+\sqrt{1-\alpha^2}\mathbf{X}_1 \mathbf{e}_2= \alpha \mathbf{x}_{1,1} + \sqrt{1-\alpha^2} \mathbf{x}_{1,2}$ where $\mathbf{x}_{1,2}$ denotes the second column of $\mathbf{X}_{1}$, using \eqref{x-i-1-top-x-i-a-star}, 
\begin{align}
    \mathbb{E}\left[  (\mathbf{X}_1 \mathbf{a}^*)^\top (\mathbf{X}_1 \mathbf{a}) \right] &= \mathbb{E}\left[ \alpha \| \mathbf{x}_{1,1}\|^2 + \sqrt{1-\alpha^2} (\mathbf{x}_{1,1}^\top \mathbf{x}_{1,2})\right] \\
    & =  \alpha \mathbb{E}\left[ \| \mathbf{x}_{1,1}\|^2 \right] + z_2 \sqrt{1-\alpha^2} \mathbb{E}\left[ \mathbf{x}_{1,1}^\top \mathbf{x}_{1,2}\right] \\
    & = \alpha m \label{exp-x1-a-x-1-a-star}
\end{align}
Thus $z_2 \mathbb{E}\left[  (\mathbf{X}_1 \mathbf{a}^*)^\top (\mathbf{X}_1 \mathbf{a}) \right]= z_2 \alpha m$, Then we take total expectation
\begin{align}
\mathbb{E}\left[\mathbb{E}\left[  z_2 (\mathbf{X}_1 \mathbf{a}^*)^\top (\mathbf{X}_1 \mathbf{a}) |z_2\right]\right] &= \mathbb{E}\left[ z_2 \alpha m \right] \\
& = \alpha m \mathbb{E}\left[ z_2 \right]
\end{align}
where $z_2=\frac{s_2+\dots +s_N}{N}$. Therefore,
\begin{align}
    \alpha m \mathbb{E}\left[ z_2 \right] = \frac{\alpha m}{N} \sum_{i=2}^N \mathbb{E}\left[s_i\right]
    =\frac{ m}{N} (N-1) \alpha^2 \label{expectation-z-2}
\end{align}
where \eqref{expectation-z-2} follows by $\mathbb{E}\left[ s_i \right]=\alpha$. Summarizing, we obtain $\mathbb{E}\left[ \mathbf{v}^\top \mathbf{u}_2\right]= \frac{ m}{N} (N-1) \alpha^2$.

    \paragraph{Expected value of $\mathbf{u}_1^\top \mathbf{u}_2$} We have $\mathbf{u}_1^\top \mathbf{u}_2= z_1 z_2 \| \mathbf{X}_1 \mathbf{a} \|^2$. By definition of $z_1$ and $z_2$, we obtain
    \begin{align}
         z_1 z_2 \| \mathbf{X}_1 \mathbf{a} \|^2 = \frac{1}{N^2}((\mathbf{X}_1 \mathbf{a}^*)^\top (\mathbf{X}_1 \mathbf{a}) ) \sum_{i=2}^N s_i
    \end{align}
    Since $(\mathbf{X}_1 \mathbf{a}^*)^\top (\mathbf{X}_1 \mathbf{a})$ depends only on $\mathbf{X}_1$ , $\sum_{i=2}^N s_i$ is independent of $\mathbf{X}_1$, we obtain
    \begin{align}
        \mathbb{E}\left[ \frac{1}{N^2}((\mathbf{X}_1 \mathbf{a}^*)^\top (\mathbf{X}_1 \mathbf{a}) ) \sum_{i=2}^N s_i \right]
       &  = \frac{1}{N^2} \mathbb{E}\left[ (\mathbf{X}_1 \mathbf{a}^*)^\top (\mathbf{X}_1 \mathbf{a})\right] \cdot \mathbb{E}\left[\sum_{i=2}^N s_i \right] \\
       & = \frac{1}{N^2} \mathbb{E}\left[ (\mathbf{X}_1 \mathbf{a}^*)^\top (\mathbf{X}_1 \mathbf{a})\right] \cdot (N-1) \mathbb{E}\left[ s_i \right] \\
       & = \frac{(N-1)m\alpha^2 }{N^2} \label{u1-u2}
    \end{align}
    where \eqref{u1-u2} follows by $\mathbb{E}\left[ (\mathbf{X}_1 \mathbf{a}^*)^\top (\mathbf{X}_1 \mathbf{a})\right]=\alpha m$ and $\mathbb{E}\left[ s_i \right]=\alpha$.
    
    Combining \eqref{u1-u1}, \eqref{u2-u2},\eqref{v-u1},\eqref{expectation-z-2},\eqref{u1-u2} and \eqref{expand}, 
    \begin{align}
        \frac{1}{m}\lVert \mathbf{X}_1 \mathbf{a}^*- \Bar{s} \mathbf{X}_1 \mathbf{a}\rVert^2 & = \frac{1}{m}(\mathbf{v}^\top \mathbf{v} + \mathbf{u}_1^\top \mathbf{u}_1 + \mathbf{u}_2^\top \mathbf{u}_2 -2 \mathbf{v}^\top \mathbf{u}_1 - 2\mathbf{v}^\top \mathbf{u}_2 + 2 \mathbf{u}_1^\top \mathbf{u}_2) \\
        & = (1-\alpha^2)\left[ 1+ \frac{N(4-m)-2}{N^2m(m-2)}\right] \\
        & = (\delta^0)^2(1+\Tilde{c})
    \end{align}
    where $\delta^0$ is the angle distance between $\mathbf{a}$ and $\mathbf{a}^*$. The quantity $ \Tilde{c}= \frac{N(4-m)-2}{N^2m(m-2)}=O(\frac{1}{Nm})$ as $N$ and $m$ approach infinity. Therefore,

\begin{align}
        \mathbb{E}\left[\frac{1}{N}\sum_{i=1}^{N}\frac{1}{m}\lVert \mathbf{X}_i \mathbf{a}^*\mathbf{b}^{*^{\top}}- \mathbf{X}_i \mathbf{a}\mathbf{\Bar{b}}^{\top}\rVert^2\right] = (1+\Tilde{c})  (\delta^0)^2 \lVert \mathbf{b}^*\rVert^2
    \end{align}

\end{proof}

\subsection{Heterogeneous Case} \label{rank-1-heter-clients}
Consider a federated setting with $N$ clients,  each with the following local linear model
\begin{equation}
    f_i(\mathbf{X}_i)=\mathbf{X}_i\mathbf{a}\mathbf{b}^\top \label{linear-model}
\end{equation}
where $\mathbf{a} \in \mathbb{R}^{d}$ is a unit vector and $\mathbf{b} \in \mathbb{R}^d$ are the LoRA weights corresponding to rank $r=1$. In this setting, we model the local data of $i$-th client such that $\mathbf{Y}_i=\mathbf{X}_i\mathbf{a}^*\mathbf{b}_i^{*^\top}$ for some ground truth LoRA weights $\mathbf{a}^* \in \mathbb{R}^{d}$, which is a unit vector, and local $\mathbf{b}_i^* \in \mathbb{R}^d$. We consider the following objective
\begin{equation}
    \min_{\mathbf{a}\in \mathbb{R}^{d},\mathbf{b}\in \mathbb{R}^d} \frac{1}{N}\sum_{i=1}^N l_i(\mathbf{a},\mathbf{b}) 
\end{equation}
We consider the local population loss $l_i(\mathbf{a},\mathbf{b}) = \lVert \mathbf{a}^*\mathbf{b}_i^{*^\top}- \mathbf{a}\mathbf{b}^\top\rVert^2$.

We aim to learn a shared model ($\mathbf{a},\mathbf{b}$) for all the clients. It is straightforward to observe that ($\mathbf{{a}}',\mathbf{b}'$) is a global minimizer of if and only if $\mathbf{{a}}'\mathbf{b'}^{^\top}=\mathbf{a}^*\Bar{\mathbf{b}}^*$, where $\Bar{\mathbf{b}}^*=\frac{1}{N}\sum_{i=1}^{N} \mathbf{b}_i^*$. The solution is unique and satisfies $\mathbf{a}'=\mathbf{a}^*$ and $\mathbf{b}'=\Bar{\mathbf{b}}^*$. With this global minimizer, we obtain the corresponding minimum global error of $\frac{1}{N} \sum_{i=1}^N \lVert \mathbf{a}^*(\mathbf{b}_i^*-\Bar{\mathbf{b}}^*)^\top \rVert^2$.

We aim to show that the training procedure described in Algorithm~\ref{alg:rolora-linear} learns the global minimizer $(\mathbf{a}^*,\Bar{\mathbf{b}}^*)$. First, we make typical assumption and definition.
\begin{assumption}  \label{client-norm}
     There exists $L_{max} < \infty$ (known a priori),  s.t. $\lVert \Bar{\mathbf{b}}^*\rVert \leq  L_{max}$.
\end{assumption}

\begin{definition} (Client variance) \label{client-var}
    For $\gamma>0$, we define $\gamma^2 := \frac{1}{N} \sum_{i=1}^{N}\lVert \mathbf{b}_i^*-\Bar{\mathbf{b}}^*\rVert^2$, where $\Bar{\mathbf{b}}^* = \frac{1}{N} \sum_{i=1}^{N} \mathbf{b}_i^*$. 
\end{definition}

\begin{theorem} (Convergence of RoLoRA for linear regressor in heterogeneous setting) \label{convergence}
Let $\delta^t = \lVert (\mathbf{I}_d-\mathbf{a}^*\mathbf{a}^{*^\top})\mathbf{a}^t\rVert $ be the angle distance between $\mathbf{a}^{*}$ and $\mathbf{a}^{t}$ of $t$-th iteration. Suppose we are in the setting described in Section~\ref{rank-1-heter-clients} and apply Algorithm~\ref{alg:rolora-linear} for optimization. Given a random initial $\mathbf{a}^0$, an initial angle distance $\delta_0 \in (0,1)$, we set the step size $\eta \leq \frac{1}{2L^2_{\max}}$ and the number of iterations $T \geq \frac{1}{c (1-(\delta^{0})^2)} \log(\frac{\delta^0}{\epsilon}) $ for $c \in (0,1)$. Under these conditions, we achieve the following
\begin{align}
    \sin \theta (\mathbf{a}^T, \mathbf{a}^*)\leq \epsilon \nonumber , \text{ and } \lVert \mathbf{a}^T(\mathbf{b}^{T+1})^\top- \mathbf{a}^* (\Bar{\mathbf{b}}^*)^\top\rVert \leq \epsilon \lVert \mathbf{a}^* (\Bar{\mathbf{b}}^*)^\top \rVert \nonumber
\end{align}
which we refer to as $\epsilon$-accurate recovery of the global minimizer.
\end{theorem}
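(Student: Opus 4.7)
The plan is to convert the one-step contraction in Lemma~\ref{lemma:delta-t} into a global convergence result via induction and a union bound, and then translate the angle-distance bound on $\mathbf{a}^T$ into the claimed bound on the rank-one product $\mathbf{a}^T(\mathbf{b}^{T+1})^\top$. The key observation is that Lemma~\ref{lemma:delta-t} already does essentially all of the heavy lifting (concentration for $\bar{\mathbf b}$ and for the gradient deviation, plus the geometric contraction factor), so what remains is a relatively clean recursion argument.

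First I would dispose of the base case $t=0$: the monotonicity condition $\delta^t \le \delta^{t-1} \le \cdots \le \delta^0$ required by Lemma~\ref{lemma:delta-t} is vacuous at $t=0$, so the lemma applies and yields $\delta^1 \le \delta^0 C$ with $C := \sqrt{1 - c(1-(\delta^0)^2)}$ for some $c \in (0,1)$ absorbing $\eta\|\mathbf{b}^*\|^2$. Since $C<1$, the monotonicity hypothesis is automatically propagated: assuming $\delta^t \le \delta^{t-1} \le \cdots \le \delta^0$, another invocation of Lemma~\ref{lemma:delta-t} gives $\delta^{t+1} \le \delta^t C \le \delta^t$, so the induction closes and $\delta^T \le \delta^0 C^T$. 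Each per-round invocation fails with probability at most $2q^{-10}$, so a union bound over $t=0,1,\dots,T-1$ yields the stated $1 - 2Tq^{-10}$ success probability.

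Next I would convert this geometric bound into the iteration count. Requiring $\delta^0 C^T \le \epsilon$ is equivalent to $T \ge 2\log(\delta^0/\epsilon)/(-\log(1-c(1-(\delta^0)^2)))$, and the elementary inequality $-\log(1-x) \ge x$ for $x \in (0,1)$ shows that the theorem's choice $T \ge \frac{2}{c(1-(\delta^0)^2)}\log(\delta^0/\epsilon)$ is sufficient, giving $\sin\theta(\mathbf{a}^T,\mathbf{a}^*) = \delta^T \le \epsilon$. For the product recovery bound I would add and subtract $\mathbf{a}^T(\mathbf g^T)^\top$ with $(\mathbf g^T)^\top := (\mathbf a^T)^\top \mathbf a^* \mathbf b^{*\top}$ and apply the triangle inequality, obtaining
\begin{equation*}
\|\mathbf a^T (\mathbf b^{T+1})^\top - \mathbf a^* \mathbf b^{*\top}\| \le \|\mathbf a^T\|\,\|\mathbf b^{T+1} - \mathbf g^T\| + \|(\mathbf I_d - \mathbf a^T (\mathbf a^T)^\top)\mathbf a^*\|\,\|\mathbf b^*\|.
\end{equation*}
The first term is controlled by $\epsilon'\delta^T\|\mathbf b^*\|$ via Lemma~\ref{lemma:b-bar-g-bound} applied at round $T$ (the extra failure event is already inside the $2Tq^{-10}$ budget), while the second term equals $\delta^T\|\mathbf b^*\|$ by Definition~\ref{def:angle-distance}; combining with $\delta^T \le \epsilon$ and $\|\mathbf a^* \mathbf b^{*\top}\| = \|\mathbf b^*\|$ gives the desired $(1+\epsilon')\epsilon\|\mathbf a^* \mathbf b^{*\top}\|$ bound.

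The main obstacle at the theorem level is essentially bookkeeping: ensuring the decreasing-angle hypothesis of Lemma~\ref{lemma:delta-t} is not invoked circularly, and that the per-round high-probability events compose into a single uniform statement over $T$ rounds. The induction handles the former automatically (because $C<1$), and a standard union bound handles the latter. The genuine analytical difficulty — showing the single-step contraction and bounding the concentration errors $\bar{\mathbf b} - \mathbf g$ and $\nabla_{\mathbf a} l - \mathbb E[\nabla_{\mathbf a} l]$ — has already been discharged by Lemmas~\ref{lemma:b-bar-g-bound}, \ref{lemma:grad-egrad-bound}, and \ref{lemma:delta-t}.
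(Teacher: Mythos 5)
Your proposal proves a different theorem from the one stated. Everything you invoke --- Lemma~\ref{lemma:delta-t}, Lemma~\ref{lemma:b-bar-g-bound}, Lemma~\ref{lemma:grad-egrad-bound}, the square-root contraction factor $\sqrt{1-c(1-(\delta^0)^2)}$, the union bound giving success probability $1-2Tq^{-10}$, the iteration count $T \ge \frac{2}{c(1-(\delta^0)^2)}\log(\delta^0/\epsilon)$, and the final $(1+\epsilon')\epsilon$ bound --- belongs to the \emph{homogeneous, finite-sample} result (Theorem~\ref{convergence-1}). The statement at hand is Theorem~\ref{convergence}, the \emph{heterogeneous} case of Section~\ref{rank-1-heter-clients}, where each client has its own $\mathbf b_i^*$, the target is $\mathbf a^*\bar{\mathbf b}^{*\top}$ with $\bar{\mathbf b}^* = \frac{1}{N}\sum_i \mathbf b_i^*$, and crucially the local loss is the \emph{population} loss $l_i(\mathbf a,\mathbf b) = \|\mathbf a^*\mathbf b_i^{*\top} - \mathbf a\mathbf b^\top\|^2$ with no data matrices $\mathbf X_i$. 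There is no randomness to concentrate, no $q$, no $\epsilon_0,\dots,\epsilon_3$, and the theorem's conclusion is deterministic.

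Concretely, three things in your argument do not match the claim. First, the correct one-step lemma here is Lemma~\ref{converge-iter}, which gives the \emph{linear} contraction $\delta^{t+1} \le \delta^t\bigl(1 - 2\eta(1-(\delta^0)^2)\|\bar{\mathbf b}^*\|^2\bigr)$ under $\eta \le \frac{1}{2L_{\max}^2}$; because there is no square root, solving $\delta^0(1-c(1-(\delta^0)^2))^T \le \epsilon$ via $\log(1-x) < -x$ yields $T \ge \frac{1}{c(1-(\delta^0)^2)}\log(\delta^0/\epsilon)$ --- your factor of $2$ in the exponent is an artifact of the homogeneous lemma. Second, your probabilistic bookkeeping (union bound, $1-2Tq^{-10}$) has no counterpart in this theorem and should be dropped entirely; the induction propagating $\delta^t \le \delta^{t-1} \le \cdots \le \delta^0$ is the only bookkeeping needed, and your observation that $C<1$ closes it is correct and is exactly what the paper does. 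Third, for the product bound your decomposition overshoots: in this setting Lemma~\ref{a-update-homo} gives $\mathbf b^{T+1} = \bar{\mathbf b}^*\mathbf a^{*\top}\mathbf a^T$ \emph{exactly}, i.e.\ $\mathbf b^{T+1} = \mathbf g^T$, so the first term of your triangle inequality vanishes and one gets
$\|\mathbf a^T(\mathbf b^{T+1})^\top - \mathbf a^*\bar{\mathbf b}^{*\top}\| = \|(\mathbf I_d - \mathbf a^T(\mathbf a^T)^\top)\mathbf a^*\|\cdot\|\bar{\mathbf b}^*\| \le \epsilon\|\mathbf a^*\bar{\mathbf b}^{*\top}\|$
with no $(1+\epsilon')$ slack, matching the theorem. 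The high-level skeleton (contraction $\to$ induction $\to$ solve for $T$ $\to$ rank-one recovery) is the same as the paper's, but as written the proof establishes a different statement with different constants, a different success guarantee, and a different final bound.
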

Theorem~\ref{convergence} follows by recursively applying Lemma~\ref{converge-iter} for $T$ iterations. We start by computing the update rule for ${\mathbf{a}}$ as in Lemma~\ref{a-update-homo}. Using Lemma~\ref{a-update-homo}, we analyze the convergence of ${\mathbf{a}}$ in Lemma~\ref{converge-iter}. We also show the global error that can be achieved by FFA-LoRA within this setting in Proposition~\ref{fa-saturate}.

\begin{lemma} (Update for ${\mathbf{a}}$) \label{a-update-homo}
    In RoLoRA for linear regressor, the update for ${\mathbf{a}}$ and $b$ in each iteration is:
    \begin{align}
    \mathbf{b}^{t+1} &=\Bar{\mathbf{b}} = \Bar{\mathbf{b}}^*\mathbf{a^*}^\top \mathbf{a}^t \\
       \mathbf{a}^{t+1} &= \hat{\mathbf{a}} = \frac{\mathbf{a}^t- 2\eta (\mathbf{a}^t\Bar{\mathbf{b}}^\top\Bar{\mathbf{b}}-\mathbf{a}^*\mathbf{\Bar{b}}^{*^\top}\Bar{\mathbf{b}})}{\lVert \hat{\mathbf{a}}^{+} \rVert}
    \end{align}
where $\Bar{\mathbf{b}}^* = \sum_{i=1}^{N} \mathbf{b}_i^*, \lVert \hat{\mathbf{a}}^{+}\rVert =  \lVert \mathbf{a}^t- 2\eta (\mathbf{a}^t\Bar{\mathbf{b}}^\top\Bar{\mathbf{b}}-\mathbf{a}^*\mathbf{\Bar{b}}^{*^\top}\Bar{\mathbf{b}})\rVert$.
\end{lemma}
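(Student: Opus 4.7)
The plan is to derive both updates by direct computation, following exactly the two steps of Algorithm~\ref{alg:rolora-linear} applied to the local population loss $l_i(\mathbf{a},\mathbf{b}) = \lVert \mathbf{a}^*\mathbf{b}_i^{*^\top}-\mathbf{a}\mathbf{b}^\top\rVert^2$ in the heterogeneous setting. First, I would fix $\mathbf{a}=\mathbf{a}^t$ and solve the inner minimization $\tilde{\mathbf{b}}_i=\argmin_{\mathbf{b}} l_i(\mathbf{a}^t,\mathbf{b})$ in closed form. Writing $l_i$ as a trace and taking $\nabla_{\mathbf{b}} l_i$ yields the stationarity condition $\mathbf{b}(\mathbf{a}^{t\top}\mathbf{a}^t)=\mathbf{b}_i^{*}(\mathbf{a}^{*\top}\mathbf{a}^t)$. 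Since $\mathbf{a}^t$ is constrained to be a unit vector throughout Algorithm~\ref{alg:rolora-linear} (line 11 renormalizes after every GD step), $\mathbf{a}^{t\top}\mathbf{a}^t=1$, giving the clean expression $\tilde{\mathbf{b}}_i=(\mathbf{a}^{*\top}\mathbf{a}^t)\mathbf{b}_i^{*}$.

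Second, I would average over clients, using linearity and the definition $\Bar{\mathbf{b}}^* = \frac{1}{N}\sum_{i=1}^N \mathbf{b}_i^*$ from Section~\ref{rank-1-heter-clients}, to obtain
\begin{equation}
\Bar{\mathbf{b}} = \frac{1}{N}\sum_{i=1}^N \tilde{\mathbf{b}}_i = (\mathbf{a}^{*\top}\mathbf{a}^t)\,\Bar{\mathbf{b}}^{*},
\end{equation}
which is precisely the first claimed identity $\mathbf{b}^{t+1}=\Bar{\mathbf{b}}^*\mathbf{a}^{*\top}\mathbf{a}^t$ (a vector scaled by the scalar inner product).

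Third, I would compute the per-client gradient with respect to $\mathbf{a}$ at the point $(\mathbf{a}^t,\Bar{\mathbf{b}})$. Expanding $l_i(\mathbf{a},\Bar{\mathbf{b}})$ and differentiating gives
\begin{equation}
\nabla_{\mathbf{a}} l_i(\mathbf{a},\Bar{\mathbf{b}}) = 2\bigl(\mathbf{a}\,\Bar{\mathbf{b}}^\top - \mathbf{a}^*\mathbf{b}_i^{*^\top}\bigr)\Bar{\mathbf{b}}.
\end{equation}
Averaging over $i\in[N]$ and pulling $\Bar{\mathbf{b}}$ out on the right, the client-dependent piece $\mathbf{b}_i^{*^\top}$ collapses to $\Bar{\mathbf{b}}^{*^\top}$ by definition, yielding $\frac{1}{N}\sum_i \nabla_{\mathbf{a}} l_i(\mathbf{a}^t,\Bar{\mathbf{b}}) = 2(\mathbf{a}^t\Bar{\mathbf{b}}^\top\Bar{\mathbf{b}} - \mathbf{a}^*\Bar{\mathbf{b}}^{*^\top}\Bar{\mathbf{b}})$.

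Finally, I would substitute this averaged gradient into the GD step (line 11 of Algorithm~\ref{alg:rolora-linear}) to get $\hat{\mathbf{a}}^{+} = \mathbf{a}^t - 2\eta(\mathbf{a}^t\Bar{\mathbf{b}}^\top\Bar{\mathbf{b}} - \mathbf{a}^*\Bar{\mathbf{b}}^{*^\top}\Bar{\mathbf{b}})$ and normalize to recover $\mathbf{a}^{t+1}=\hat{\mathbf{a}}^{+}/\lVert\hat{\mathbf{a}}^{+}\rVert$. The derivation is essentially a matrix-calculus exercise; the only subtlety is keeping track of the unit-norm constraint on $\mathbf{a}^t$ (needed to simplify the least-squares solution for $\mathbf{b}$) and correctly using $\frac{1}{N}\sum_i \mathbf{b}_i^* = \Bar{\mathbf{b}}^*$ when aggregating the gradients, so I do not anticipate a genuine obstacle — the claim is just the explicit form of one alt-min/GD round.
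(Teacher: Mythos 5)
Your proposal is correct and follows essentially the same route as the paper's proof: closed-form least-squares solution for each $\tilde{\mathbf b}_i$ using $\mathbf a^{t\top}\mathbf a^t=1$, linear averaging to get $\Bar{\mathbf b}=(\mathbf a^{*\top}\mathbf a^t)\Bar{\mathbf b}^*$, averaging the per-client gradients so that $\mathbf b_i^{*}$ collapses to $\Bar{\mathbf b}^*$, then the GD step and renormalization. (Minor point: you correctly use $\Bar{\mathbf b}^*=\frac{1}{N}\sum_i\mathbf b_i^*$, consistent with Definition~\ref{client-var}, whereas the lemma statement drops the $\frac{1}{N}$ — an apparent typo in the paper.)
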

\begin{proof}
\textbf{Minimization for $b_i$. } At the start of each iteration, each client computes the analytic solution for $\mathbf{b}_i$ by fixing $\mathbf{a}$ and solving their local objective $\argmin_{\mathbf{b}_i} \lVert \mathbf{a}^*\mathbf{b}_i^{*^\top}- \mathbf{a}\mathbf{b}_i^\top \rVert^2$, where $\mathbf{a}^*$ and $\mathbf{a}$ are both unit vectors. Setting $\mathbf{a} = \mathbf{a}^t$, we obtain $\mathbf{b}_i$ such that
\begin{equation}
    \mathbf{b}_i = \frac{\mathbf{b}_i^{*}\mathbf{a}^{*^\top} \mathbf{a} }{\mathbf{a}^\top \mathbf{a}} =\mathbf{b}_i^{*}  \mathbf{a}^{*^\top} \mathbf{a} \label{b-update}
\end{equation}
\eqref{b-update} follows since $\mathbf{a}^\top \mathbf{a}=1$.

\textbf{Aggregation for $\mathbf{b}_i$. } The server simply computes the average of $\{\mathbf{b}_i\}_{i=1}^N$ and gets 
\begin{equation}
\Bar{\mathbf{b}} = \sum_{i=1}^N \mathbf{b}_i = \sum_{i=1}^N \mathbf{b}_i^* \mathbf{a^*}^\top \mathbf{a} = \Bar{\mathbf{b}}^*\mathbf{a^*}^\top \mathbf{a}
\end{equation}
The server then sends $\Bar{\mathbf{b}}$ to clients for synchronization.

\textbf{Gradient Descent for $\hat{\mathbf{a}}$.}
In this step, each client fixes $\mathbf{b}_i$ to $\Bar{\mathbf{b}}$ received from the server and update $\mathbf{a}$ using gradient descent. With the following gradient
\begin{equation}
    \nabla_{\mathbf{a}}l_i(\mathbf{a}, \Bar{\mathbf{b}}) = 2(\mathbf{a}\Bar{\mathbf{b}}^\top\Bar{\mathbf{b}}-\mathbf{a}^*\mathbf{b}_i^{*^\top}\Bar{\mathbf{b}})
\end{equation}
Thus, with step size $\eta$, $\mathbf{a}$ is updated such as
\begin{align}
\hat{\mathbf{a}}^{+} &= \mathbf{a}-\frac{\eta}{N}\sum_{i=1}^{N} \nabla_{\mathbf{a}}l_i(\mathbf{a},\Bar{\mathbf{b}}) \nonumber\\
&= \mathbf{a}-2\frac{\eta}{N} \sum_{i=1}^{N} (\mathbf{a}\Bar{\mathbf{b}}^\top\Bar{\mathbf{b}}-\mathbf{a}^*\mathbf{b}_i^{*^\top}\Bar{\mathbf{b}}) \nonumber\\
& = \mathbf{a}- 2\eta (\mathbf{a}\Bar{\mathbf{b}}^\top\Bar{\mathbf{b}}-\mathbf{a}^*\mathbf{\Bar{b}}^{*^\top}\Bar{\mathbf{b}}) \\
\hat{\mathbf{a}} &= \frac{\mathbf{a}- 2\eta (\mathbf{a}\Bar{\mathbf{b}}^\top\Bar{\mathbf{b}}-\mathbf{a}^*\mathbf{\Bar{b}}^{*^\top}\Bar{\mathbf{b}})}{\lVert \hat{\mathbf{a}}^{+} \rVert}
\end{align}
\end{proof}

\begin{lemma} \label{converge-iter}
    Let $\delta_t = |\sin \theta (\mathbf{a}^*, \mathbf{a}^t)|$ be the angle distance between $\mathbf{a}^*$ and $ \mathbf{a}^t$. Assume that Assumption~\ref{client-norm} holds and $\delta_t \leq \delta_{t-1} \leq \dots \leq \delta_0$, if $\eta \leq \frac{1}{2L^2_{\max}}$, then 
    \begin{equation}
        |\sin \theta (\mathbf{a}^{t+1}, \mathbf{a}^*)| = \delta_{t+1} \leq \delta_{t} \cdot (1-2\eta (1-(\delta^0)^2) \|\Bar{\mathbf b}^*\|^2)
    \end{equation}
\end{lemma}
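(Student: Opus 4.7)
The plan is to follow the same template as the proof of Lemma~\ref{lem:lemma-43-appendix}, but in this population-level heterogeneous case the analysis simplifies dramatically since no sampling concentration is needed. First I would use Lemma~\ref{a-update-homo} to write $\hat{\mathbf{a}}^+ = \mathbf{a}^t - 2\eta\bigl(\mathbf{a}^t \bar{\mathbf{b}}^\top \bar{\mathbf{b}} - \mathbf{a}^* \bar{\mathbf{b}}^{*\top}\bar{\mathbf{b}}\bigr)$ and exploit the closed-form aggregation identity $\bar{\mathbf{b}} = \bar{\mathbf{b}}^* \mathbf{a}^{*\top}\mathbf{a}^t$ established in the same lemma. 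This immediately gives $\|\bar{\mathbf{b}}\|^2 = \|\bar{\mathbf{b}}^*\|^2 (\mathbf{a}^{*\top}\mathbf{a}^t)^2 = \|\bar{\mathbf{b}}^*\|^2(1-\delta_t^2)$ and, combined with the step-size bound $\eta \le 1/(2L_{\max}^2) \le 1/(2\|\bar{\mathbf{b}}^*\|^2)$, the positivity $1-2\eta\|\bar{\mathbf{b}}\|^2 \ge 0$.

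Next I apply the projection $\mathbf{P} = \mathbf{I}_d - \mathbf{a}^*\mathbf{a}^{*\top}$ to $\hat{\mathbf{a}}^+$. Because $\mathbf{P}\mathbf{a}^* = \mathbf{0}$, the term containing $\bar{\mathbf{b}}^{*\top}\bar{\mathbf{b}}$ drops out and the residual update collapses to $\mathbf{P}\hat{\mathbf{a}}^+ = \mathbf{P}\mathbf{a}^t\bigl(1 - 2\eta\|\bar{\mathbf{b}}\|^2\bigr)$. Taking norms, using $\|\mathbf{P}\mathbf{a}^t\| = \delta_t$, substituting $\|\bar{\mathbf{b}}\|^2 = \|\bar{\mathbf{b}}^*\|^2(1-\delta_t^2)$, and invoking the monotonicity hypothesis $\delta_t \le \delta_0$ to replace $1-\delta_t^2$ by a uniform lower bound $1-\delta_0^2$, I obtain
\begin{align*}
\|\mathbf{P}\hat{\mathbf{a}}^+\| = \delta_t\bigl(1 - 2\eta(1-\delta_t^2)\|\bar{\mathbf{b}}^*\|^2\bigr) \le \delta_t\bigl(1 - 2\eta(1-\delta_0^2)\|\bar{\mathbf{b}}^*\|^2\bigr).
\end{align*}

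It then remains to lower-bound the denominator $\|\hat{\mathbf{a}}^+\|$ to translate this into a bound on $\delta_{t+1} = \|\mathbf{P}\hat{\mathbf{a}}^+\|/\|\hat{\mathbf{a}}^+\|$. Decomposing $\hat{\mathbf{a}}^+$ along $\{\mathbf{a}^t,\mathbf{a}^*\}$ as $(1-2\eta\|\bar{\mathbf{b}}\|^2)\mathbf{a}^t + 2\eta(\bar{\mathbf{b}}^{*\top}\bar{\mathbf{b}})\mathbf{a}^*$ and expanding $\|\hat{\mathbf{a}}^+\|^2$ using $\|\mathbf{a}^t\|=\|\mathbf{a}^*\|=1$ and $\mathbf{a}^{*\top}\mathbf{a}^t \cdot \|\bar{\mathbf{b}}^*\|^2 = \|\bar{\mathbf{b}}^*\|^2 \mathbf{a}^{*\top}\mathbf{a}^t$, the linear-in-$\eta$ cross terms cancel exactly and one is left with $\|\hat{\mathbf{a}}^+\|^2 = 1 + 4\eta^2\|\bar{\mathbf{b}}^*\|^4(\mathbf{a}^{*\top}\mathbf{a}^t)^2 \delta_t^2 \ge 1$. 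Dividing the numerator bound by this denominator yields $\delta_{t+1} \le \delta_t\bigl(1 - 2\eta(1-\delta_0^2)\|\bar{\mathbf{b}}^*\|^2\bigr)$ as claimed.

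The main obstacle I anticipate is the bookkeeping in the $\|\hat{\mathbf{a}}^+\|^2$ expansion: one must track that the $O(\eta)$ terms cancel so the lower bound is \emph{exactly} $1$, since any $O(\eta)$ deficit in the denominator would spoil the contraction factor in the conclusion. The monotonicity hypothesis $\delta_t \le \delta_0$ is used only once, to pass from $1-\delta_t^2$ to $1-\delta_0^2$; as in the remark following Lemma~\ref{lemma:delta-t}, this assumption will be shed via induction when Lemma~\ref{converge-iter} is iterated to prove Theorem~\ref{convergence}.
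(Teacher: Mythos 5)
Your proposal is correct and follows essentially the same route as the paper's proof: project with $\mathbf{P}=\mathbf{I}_d-\mathbf{a}^*\mathbf{a}^{*\top}$ so the $\mathbf{a}^*$ term drops, bound the numerator by $\delta_t|1-2\eta\|\bar{\mathbf{b}}\|^2|$, and show the normalizing denominator satisfies $\|\hat{\mathbf{a}}^+\|^2 = 1 + 4\eta^2\|\bar{\mathbf{b}}^*\|^4\alpha^2(1-\alpha^2)\ge 1$ because the $O(\eta)$ cross terms cancel — the paper writes this as $1+4\eta^2\bar{\mathbf{b}}^\top\bar{\mathbf{b}}(\bar{\mathbf{b}}^{*\top}\bar{\mathbf{b}}^*-\bar{\mathbf{b}}^\top\bar{\mathbf{b}})$, which is the same quantity after substituting $\bar{\mathbf{b}}=\bar{\mathbf{b}}^*\mathbf{a}^{*\top}\mathbf{a}^t$. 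Your explicit expansion and your use of the step-size and monotonicity hypotheses match the paper's argument step for step.
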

\begin{proof}
    From Lemma~\ref{a-update-homo}, $\mathbf{a}^{t+1}$ and $\mathbf{b}^{t+1}$ are computed as follows:
    \begin{align}
    \mathbf{b}^{t+1} &=\Bar{\mathbf{b}} = \Bar{\mathbf{b}}^*\mathbf{a^*}^\top \mathbf{a}^t \\
       \mathbf{a}^{t+1} &=  \frac{\mathbf{a}^t- 2\eta (\mathbf{a}^t\Bar{\mathbf{b}}^\top\Bar{\mathbf{b}}-\mathbf{a}^*\mathbf{\Bar{b}}^{*^\top}\Bar{\mathbf{b}})}{\lVert \mathbf{a}^t- 2\eta (\mathbf{a}^t\Bar{\mathbf{b}}^\top\Bar{\mathbf{b}}-\mathbf{a}^*\mathbf{\Bar{b}}^{*^\top}\Bar{\mathbf{b}})\rVert}\label{as-update}
    \end{align}
    Note that $\mathbf{a}^t$ and $\mathbf{a}^{t+1}$ are both unit vectors. Now, we multiply both sides of Equation~\eqref{as-update} by the projection operator $\mathbf{P} = \mathbf{I}_d-\mathbf{a}^*(\mathbf{a}^*)^\top$, which is the projection to the direction orthogonal to $\mathbf{a}^*$. We obtain:
    \begin{align}
        \mathbf{P}\mathbf{a}^{t+1} &= \frac{\mathbf{P}\mathbf{a}^t- 2\eta \mathbf{P}\mathbf{a}^t \Bar{\mathbf{b}}^\top\Bar{\mathbf{b}}+\mathbf{P}\mathbf{a}^*\mathbf{\Bar{b}}^{*^\top}\Bar{\mathbf{b}}}{\lVert \mathbf{a}^t- 2\eta (\mathbf{a}^t\Bar{\mathbf{b}}^\top\Bar{\mathbf{b}}-\mathbf{a}^*\mathbf{\Bar{b}}^{*^\top}\Bar{\mathbf{b}})\rVert} \label{projection} \\
        & = \frac{\mathbf{P}\mathbf{a}^t- 2\eta \mathbf{P}\mathbf{a}^t\Bar{\mathbf{b}}^\top\Bar{\mathbf{b}}}{\lVert \mathbf{a}^t- 2\eta (\mathbf{a}^t\Bar{\mathbf{b}}^\top\Bar{\mathbf{b}}-\mathbf{a}^*\mathbf{\Bar{b}}^{*^\top}\Bar{\mathbf{b}})\rVert} \label{proj-simp}
    \end{align}

    The third term of the numerator is canceled since $\mathbf{P}\mathbf{a}^* = (\mathbf{I}_d-\mathbf{a}^*(\mathbf{a}^*)^\top)\mathbf{a}^*=0$. Thus,
    \begin{equation}
        \|\mathbf{P}\mathbf{a}^{t+1}\| \leq \frac{\|\mathbf{P}\mathbf{a}^t\| |1- 2\eta \mathbf{\Bar{b}}^{\top}\Bar{\mathbf{b}}|}{\lVert \mathbf{a}^t- 2\eta (\mathbf{a}^t\Bar{\mathbf{b}}^\top\Bar{\mathbf{b}}-\mathbf{a}^*\mathbf{\Bar{b}}^{*^\top}\Bar{\mathbf{b}})\rVert}
    \end{equation}

    Let $\delta_t = |\sin \theta (\mathbf{a}^*, \mathbf{a}^t)|$. Equation~\eqref{projection} becomes:
    \begin{align}
        \delta_{t+1} &\leq  \delta^{t} \frac{ |1- 2\eta \mathbf{\Bar{b}}^{\top}\Bar{\mathbf{b}}|}{\lVert \mathbf{a}^t- 2\eta (\mathbf{a}^t\Bar{\mathbf{b}}^\top\Bar{\mathbf{b}}-\mathbf{a}^*\mathbf{\Bar{b}}^{*^\top}\Bar{\mathbf{b}})\rVert} \\
        & = \delta_{t} \frac{|1- 2\eta \mathbf{\Bar{b}}^{\top}\Bar{\mathbf{b}}|}{\lVert \mathbf{a}^t(1- 2\eta \mathbf{\Bar{b}}^{\top}\Bar{\mathbf{b}}) +2\eta\mathbf{a}^*\mathbf{\Bar{b}}^{*^\top}\Bar{\mathbf{b}}\rVert} \\
        & = \delta_{t} C
    \end{align}

  Obviously $C \ge 0$. We drop the superscript $t$ when it is clear from context. Note that we have
  \begin{align}
      C^2 &= \frac{|1- 2\eta \mathbf{\Bar{b}}^{\top}\Bar{\mathbf{b}}|^2}{\lVert \mathbf{a}(1- 2\eta \mathbf{\Bar{b}}^{\top}\Bar{\mathbf{b}}) +2\eta\mathbf{a}^*\mathbf{\Bar{b}}^{*^\top}\Bar{\mathbf{b}}\rVert^2} \\
      & = \frac{|1- 2\eta \mathbf{\Bar{b}}^{\top}\Bar{\mathbf{b}} |^2}{(1- 2\eta \mathbf{\Bar{b}}^{\top}\Bar{\mathbf{b}})^2 \mathbf{a}^\top\mathbf{a} + 4\eta^2 (\Bar{\mathbf{b}}^{*^\top}\Bar{\mathbf{b}})^2 + 4\eta (1- 2\eta \mathbf{\Bar{b}}^{\top}\Bar{\mathbf{b}}) \mathbf{a}^\top \mathbf{a}^* \Bar{\mathbf{b}}^{*^\top}\Bar{\mathbf{b}} } \\
      & = \frac{|1- 2\eta \mathbf{\Bar{b}}^{\top}\Bar{\mathbf{b}}|^2}{(1- 2\eta \mathbf{\Bar{b}}^{\top}\Bar{\mathbf{b}})^2  + 4\eta^2 (\Bar{\mathbf{b}}^{*^\top}\Bar{\mathbf{b}})^2  + 4\eta (1- 2\eta \mathbf{\Bar{b}}^{\top}\Bar{\mathbf{b}})\mathbf{a}^\top \mathbf{a}^* \Bar{\mathbf{b}}^{*^\top}\Bar{\mathbf{b}} }  \label{ineq-C}
  \end{align}
  Recall that $\Bar{\mathbf{b}} = \Bar{\mathbf{b}}^* \mathbf{a}^{*^\top}\mathbf{a} = \Bar{\mathbf{b}}^* \cos \theta (\mathbf{a}^*, \mathbf{a})$, \eqref{ineq-C} becomes:
  \begin{align}
      C^2 &= \frac{|1- 2\eta \mathbf{\Bar{b}}^{\top}\Bar{\mathbf{b}}|^2}{(1- 2\eta \mathbf{\Bar{b}}^{\top}\Bar{\mathbf{b}})^2  + 4\eta^2 (\Bar{\mathbf{b}}^{*^\top}\Bar{\mathbf{b}})^2  + 4\eta (1- 2\eta \mathbf{\Bar{b}}^{\top}\Bar{\mathbf{b}})\mathbf{a}^\top \mathbf{a}^* \Bar{\mathbf{b}}^{*^\top}\Bar{\mathbf{b}} }  \\
      & = \frac{|1- 2\eta \mathbf{\Bar{b}}^{\top}\Bar{\mathbf{b}}|^2}{1  +  4 \eta^2 \mathbf{\Bar{b}}^{\top}\Bar{\mathbf{b}} (\mathbf{\Bar{b}}^{*^\top}\Bar{\mathbf{b}}^*  - \mathbf{\Bar{b}}^{\top}\Bar{\mathbf{b}})} \\
      & \leq (1-2\eta \mathbf{\Bar{b}}^{\top}\Bar{\mathbf{b}})^2 \label{ineq-C-2}\\
      &= (1-2\eta \|\Bar{\mathbf{b}}\|^2)^2 \label{ineq-C-3}
  \end{align}
  where \eqref{ineq-C-2} holds because $
\mathbf{\Bar{b}}^{*^\top}\Bar{\mathbf{b}}^*  - \mathbf{\Bar{b}}^{\top}\Bar{\mathbf{b}} =(1-\cos^2 \theta (\mathbf{a}^*, \mathbf{a}))\mathbf{\Bar{b}}^{*^\top}\Bar{\mathbf{b}}^*  \geq 0$. Equation \eqref{ineq-C-3} implies $C\le 1-2\eta \|\Bar{\mathbf{b}}\|^2$  if $2\eta \|\Bar{\mathbf{b}}\|^2\le 1$, which can be ensured by choosing a proper step size $\eta \leq \frac{1}{2L_{max}^2} \leq \frac{1}{2\|\Bar{\mathbf{b}}\|^2}$. Now by the assumption that $\delta_t \leq \delta_{t-1} \leq \dots \leq \delta_0$,
\begin{align}
    C&\le 1-2\eta \|\Bar{\mathbf{b}}\|^2 \\
    &= 1-2\eta \cos^2 \theta (\mathbf{a}^*, \mathbf{a}) \|\Bar{\mathbf{b}}^*\|^2 \\
    &= 1-2\eta (1-(\delta^t)^2) \|\Bar{\mathbf{b}}^*\|^2 \\
    &\le 1-2\eta (1-(\delta^0)^2) \|\Bar{\mathbf{b}}^*\|^2 
\end{align}

Summarizing, we obtain $\delta^{t+1}\leq \delta^{t} C \le\delta^{t}  (1 - 2 \eta (1-(\delta^0)^2)\|\Bar{\mathbf{b}}^*\|^2)$.
\end{proof}
\begin{proposition} \label{fa-saturate}(FFA-LoRA lower bound)
Suppose we are in the setting described in Section~\ref{rank-1-heter-clients}. For any set of ground truth parameters ($\mathbf{a}^*,\{\mathbf{b}_i^*\}_{i=1}^N$), initialization $\mathbf{a}^0$, initial angle distance $\delta_0\in (0,1)$, we apply Freezing-A scheme to obtain a shared global model ($\mathbf{a}^0,\mathbf{b}^{FFA}$), where ${\mathbf{b}}^{FFA} = {\mathbf{b}}^*{\mathbf{a}^*}^\top \mathbf{a}^0$. The global loss is 
\begin{equation}
    \frac{1}{N}\sum_{i=1}^N l_i(\mathbf{a}^0,{\mathbf{b}}^{FFA}) = \gamma^2 + \| \mathbf{\Bar{b}}^*\|^2\delta_0^2 \label{fa-lower-bound}
\end{equation}
\end{proposition}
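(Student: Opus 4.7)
}
The plan is to substitute the closed-form FFA-LoRA solution into the population loss, split the residual into a mean-variation piece and an angular-error piece, and then exploit that the two pieces are orthogonal in the Frobenius inner product after averaging over clients. First I would set $\alpha = \mathbf{a}^{*\top}\mathbf{a}^0$ so that $|\sin\theta(\mathbf{a}^*,\mathbf{a}^0)|^2 = 1-\alpha^2 = \delta_0^2$. Because each client solves $\min_{\mathbf{b}}\lVert \mathbf{a}^*\mathbf{b}_i^{*\top}-\mathbf{a}^0\mathbf{b}^\top\rVert^2$ with $\mathbf{a}^0$ a unit vector, the same calculation as in Lemma~\ref{a-update-homo} yields $\mathbf{b}_i = \alpha\mathbf{b}_i^*$ and hence $\mathbf{b}^{FFA} = \alpha\bar{\mathbf{b}}^*$. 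Therefore $\mathbf{a}^0(\mathbf{b}^{FFA})^\top = \alpha\,\mathbf{a}^0(\bar{\mathbf{b}}^*)^\top$.

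Next I would decompose the per-client residual as
\begin{equation}
    \mathbf{a}^*\mathbf{b}_i^{*\top}-\mathbf{a}^0(\mathbf{b}^{FFA})^\top
    = \mathbf{a}^*(\mathbf{b}_i^*-\bar{\mathbf{b}}^*)^\top + (\mathbf{a}^*-\alpha\mathbf{a}^0)(\bar{\mathbf{b}}^*)^\top,
\end{equation}
which isolates the heterogeneity term from the subspace-mismatch term. Expanding the Frobenius norm squared and using the rank-one identity $\langle \mathbf{u}\mathbf{v}^\top,\mathbf{p}\mathbf{q}^\top\rangle_F = (\mathbf{u}^\top\mathbf{p})(\mathbf{v}^\top\mathbf{q})$, the first diagonal piece contributes $\lVert\mathbf{b}_i^*-\bar{\mathbf{b}}^*\rVert^2$ (since $\lVert\mathbf{a}^*\rVert=1$), and the second contributes $\lVert\mathbf{a}^*-\alpha\mathbf{a}^0\rVert^2\lVert\bar{\mathbf{b}}^*\rVert^2$. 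A short calculation shows $\lVert\mathbf{a}^*-\alpha\mathbf{a}^0\rVert^2 = 1-\alpha^2 = \delta_0^2$, so the second piece is already the target $\lVert\bar{\mathbf{b}}^*\rVert^2\delta_0^2$.

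The step I expect to be the main (though still mild) obstacle is handling the cross term cleanly. Using the same rank-one identity it equals $[(\mathbf{a}^*)^\top(\mathbf{a}^*-\alpha\mathbf{a}^0)]\cdot[(\mathbf{b}_i^*-\bar{\mathbf{b}}^*)^\top\bar{\mathbf{b}}^*] = \delta_0^2\,(\mathbf{b}_i^*-\bar{\mathbf{b}}^*)^\top\bar{\mathbf{b}}^*$; averaging over $i\in[N]$ this vanishes because $\frac{1}{N}\sum_i(\mathbf{b}_i^*-\bar{\mathbf{b}}^*)=\mathbf{0}$ by Definition~\ref{client-var}. Summing the remaining two pieces and averaging then produces $\frac{1}{N}\sum_i\lVert\mathbf{b}_i^*-\bar{\mathbf{b}}^*\rVert^2 + \lVert\bar{\mathbf{b}}^*\rVert^2\delta_0^2 = \gamma^2 + \lVert\bar{\mathbf{b}}^*\rVert^2\delta_0^2$, which is exactly \eqref{fa-lower-bound}. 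Finally I would remark that this identity is saturated (not just a lower bound) regardless of $\delta_0$ and irrespective of any further updates of $\mathbf{b}$, because the FFA-LoRA iterate converges in one shot to the minimizer for fixed $\mathbf{a}^0$, highlighting that the $\lVert\bar{\mathbf{b}}^*\rVert^2\delta_0^2$ term is an unavoidable floor controlled by the initial angular error.
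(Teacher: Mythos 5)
Your proposal is correct and follows essentially the same route as the paper: both substitute $\mathbf{b}^{FFA}=\alpha\bar{\mathbf{b}}^*$ with $\alpha=\mathbf{a}^{*\top}\mathbf{a}^0$, split the residual into the heterogeneity term $\mathbf{a}^*(\mathbf{b}_i^*-\bar{\mathbf{b}}^*)^\top$ plus the angular-mismatch term $(\mathbf{I}_d-\mathbf{a}^0\mathbf{a}^{0\top})\mathbf{a}^*(\bar{\mathbf{b}}^*)^\top$ (your $(\mathbf{a}^*-\alpha\mathbf{a}^0)(\bar{\mathbf{b}}^*)^\top$ is the same matrix), and kill the cross term by averaging over clients. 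The only cosmetic difference is that you evaluate the cross term explicitly via the rank-one inner-product identity before averaging, whereas the paper averages first and observes the trace vanishes; both are valid.
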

\begin{proof}
    Through single step of minimization on $\mathbf{b}_i$ and corresponding aggregation, the minimum of the global objective is reached by FFA-LoRA. ${\mathbf{b}}^{FFA}$ is obtained through:
    \begin{align}
        \mathbf{b}_i &= \frac{\mathbf{b}_i^* \mathbf{a^*}^\top \mathbf{a}^0}{\mathbf{a}^{0^\top}\mathbf{a}^0} = \mathbf{b}_i^* \mathbf{a^*}^\top \mathbf{a}^0 \\
        {\mathbf{b}}^{FFA} &= \frac{1}{N} \sum_{i=1}^{N} \mathbf{b}_i = \Bar{\mathbf{b}}^*\mathbf{a^*}^\top \mathbf{a}^0
    \end{align}
    Next we compute the global loss with a shared global model $(\mathbf{a}^0, \Bar{\mathbf{b}}^{FFA})$. Note that we use Tr$(.)$ to denote the trace of a matrix.
    \begin{align}
        &\frac{1}{N} \sum_{i=1}^{N} l_i(\mathbf{a}^0, {\mathbf{b}}^{FFA}) \\&= \frac{1}{N} \sum_{i=1}^{N} \lVert \mathbf{a}^* (\mathbf{b}_i^*)^\top - \mathbf{a}^0 ({\mathbf{b}}^{FFA})^\top  \rVert^2 \\
        & = \frac{1}{N} \sum_{i=1}^{N} \lVert \mathbf{a}^* (\mathbf{b}_i^*)^\top -\mathbf{a}^*(\Bar{\mathbf{b}}^*)^\top+ \mathbf{a}^*(\Bar{\mathbf{b}}^* )^\top- \mathbf{a}^0 ({\mathbf{b}}^{FFA})^\top  \rVert^2  \\
        & = \frac{1}{N} \sum_{i=1}^{N} (\lVert \mathbf{a}^* (\mathbf{b}_i^* )^\top-\mathbf{a}^*(\Bar{\mathbf{b}}^*)^\top \rVert^2 + \lVert \mathbf{a}^*(\Bar{\mathbf{b}}^*)^\top - \mathbf{a}^0 ({\mathbf{b}}^{FFA} )^\top \rVert^2 \nonumber \\
        &+ 2\text{Tr}((\mathbf{a}^* (\mathbf{b}_i^*)^\top -\mathbf{a}^*(\Bar{\mathbf{b}}^*)^\top)^\top(\mathbf{a}^*(\Bar{\mathbf{b}}^*)^\top - \mathbf{a}^0( {\mathbf{b}}^{FFA})^\top )) \\
        & = \frac{1}{N} \sum_{i=1}^{N} (\lVert \mathbf{a}^* (\mathbf{b}_i^*)^\top -\mathbf{a}^*(\Bar{\mathbf{b}}^*)^\top \rVert^2 + \lVert \mathbf{a}^*(\Bar{\mathbf{b}}^*)^\top - \mathbf{a}^0 ( {\mathbf{b}}^{FFA})^\top \rVert^2) \nonumber\\
       & + 2\text{Tr}((\mathbf{a}^* \frac{1}{N} \sum_{i=1}^{N} (\mathbf{b}_i^*)^\top -\mathbf{a}^*(\Bar{\mathbf{b}}^*)^\top)^\top(\mathbf{a}^*(\Bar{\mathbf{b}}^*)^\top - \mathbf{a}^0( {\mathbf{b}}^{FFA})^\top ))  \\
        & = \frac{1}{N} \sum_{i=1}^{N} (\lVert  \mathbf{a}(\mathbf{b}_i^*-\Bar{\mathbf{b}}^*)^\top \rVert^2 + \lVert \mathbf{a}^*(\Bar{\mathbf{b}}^*)^\top -  \mathbf{a}^0 {\mathbf{a}^0}^\top \mathbf{a}^* (\Bar{\mathbf{b}}^*)^\top \rVert^2) \label{heter-loss-6}  \\
        & = \frac{1}{N} \sum_{i=1}^{N} \lVert \mathbf{b}_i^*-\Bar{
        \mathbf{b}}^*\rVert^2 + \frac{1}{N} \sum_{i=1}^{N} \lVert(\mathbf{I}_d-\mathbf{a}^0\mathbf{a}^{0^\top})\mathbf{a}^*(\Bar{\mathbf{b}}^*)^\top \rVert^2  \label{heter-loss-7}\\
        & = \frac{1}{N} \sum_{i=1}^{N} \lVert \mathbf{b}_i^*-\Bar{
        \mathbf{b}}^*\rVert^2 + \frac{1}{N} \sum_{i=1}^{N} \| (\mathbf{I}_d-\mathbf{a}^0\mathbf{a}^{0^\top})\mathbf{a}^*\|^2 \|\Bar{\mathbf{b}}^* \|^2  \label{heter-loss-8}\\
        &=\gamma^2 +\|\Bar{\mathbf{b}}^* \|^2 \delta_0^2 \label{geq-1}
    \end{align} 
    where \eqref{heter-loss-6} holds since the last term is 0, \eqref{heter-loss-7} and \eqref{heter-loss-8} hold since $\|\mathbf{u}\mathbf{v}^\top\|=\|\mathbf{u}\|\cdot \|\mathbf{v}^\top \|$ for vector $\mathbf{u}$ and $\mathbf{v}$, \eqref{geq-1} holds because of Definition~\ref{client-var}.
\end{proof}

\paragraph{Proof of Theorem A.14}
\begin{proof}
    In order to achieve $\epsilon$-recovery of $\mathbf{a}^*$, we need
    \begin{align}
        \delta^{0} ({1-c(1-(\delta^{0})^2)})^{T} &\leq \epsilon \\
        ({1-c (1-(\delta^{0})^2)})^{T} &\leq \frac{\epsilon}{\delta^0} \\
        T \log{({1-c (1-(\delta^{0})^2)})} &\leq \log (\frac{\epsilon}{\delta^0})  \\
        \end{align} 
        We proceed such that
        \begin{align}
        T &\geq \frac{\log (\frac{\epsilon}{\delta^0})}{\log{({1-c (1-(\delta^{0})^2)})}} \\
         &> \frac{\log (\frac{\epsilon}{\delta^0})}{-c (1-(\delta^{0})^2)} \label{T-3} \\
         &= \frac{1}{c (1-(\delta^{0})^2)} \log(\frac{\delta^0}{\epsilon}) 
    \end{align} 
    where \eqref{T-3} follows by using $\log(1-x)<-x$ for $|x|<1$.

Now we show the convergence to the global minimizer.
Recall that $\mathbf{b}^{T+1} = \Bar{\mathbf b}^{*}\mathbf{a}^{*^\top}\mathbf{a}^T$ and $\delta^T = \|  (\mathbf I_d - \mathbf{a}^T(\mathbf a^T)^\top )  \mathbf{a}^* \|$, we have
    \begin{align}
        \lVert \mathbf{a}^T({\mathbf{b}}^{T+1})^\top- \mathbf{a}^* \Bar{\mathbf b}^{*^\top}\rVert &= \lVert \mathbf{a}^T (\mathbf{a}^T)^\top \mathbf{a}^* \Bar{\mathbf b}^{*^\top}  - \mathbf{a}^* \Bar{\mathbf b}^{*^\top}\rVert \\
        & = \| (\mathbf{a}^T (\mathbf{a}^T)^\top-\mathbf{I}_d) \mathbf{a}^* \Bar{ \mathbf b}^{*^\top}\|\\
        & = \| (\mathbf{I}_d-\mathbf{a}^T (\mathbf{a}^T)^\top) \mathbf{a}^*\| \cdot \|\Bar{\mathbf b}^{*} \|\\
        &\leq \epsilon \|\Bar{\mathbf b}^{*} \| \\
        & =  \epsilon \| \mathbf{a}^* \Bar{\mathbf b}^{*^\top} \label{eq:conv_target_2_1}\|
    \end{align}
    where \eqref{eq:conv_target_2_1} is due to the fact that $\| \mathbf x \mathbf y^\top\| =\| \mathbf x \|\| \mathbf y\| $ and $\|\mathbf a^*\| = 1$.
    
\end{proof}
Proposition~\ref{fa-saturate} shows that for any $\delta_0 \in (0,1)$, the global objective of FFA-LoRA is given by \eqref{geq-1}, comprising two terms: $\gamma^2$, reflecting the heterogeneity of $\{\mathbf{b}_i^*\}_{i=1}^N$, and $\|\Bar{\mathbf{b}}^* \|^2 \delta_0^2$, due to the angular distance between $\mathbf{a}^0$ and $\mathbf{a}^*$. By Theorem~\ref{convergence}, RoLoRA achieves $\epsilon$-accurate recovery of the global minimizer, with global loss upper bounded by $\gamma^2 + \|\Bar{\mathbf{b}}^* \|^2\epsilon^2$, since RoLoRA reduces the angular distance loss from $\|\Bar{\mathbf{b}}^* \|^2\delta_0^2$ to $\|\Bar{\mathbf{b}}^* \|^2\epsilon^2$. We can make $\epsilon$ arbitrarily small by increasing the iterations.


\newpage
\section{Convergence Analysis of Non-Convex Case} \label{app:convergence-non-convex}
We follow the approach of Li et al.~\cite{Li2020On} to demonstrate the convergence of RoLoRA (Algorithm~\ref{alg:rolora-llm}) in smooth, non-convex landscapes. Assumptions~\ref{assumption1} and~\ref{lipschitz} are standard and commonly employed in the convergence analysis of federated learning. Assumption~\ref{assumption3} is adapted from FedSA-LoRA~\cite{guo2025selective}, which proposes a personalized federated fine-tuning framework that maintains local diversity in $\mathbf{B}$ while aggregating only $\mathbf{A}$ through simultaneous updates of both $\mathbf{A}$ and $\mathbf{B}$. In contrast, our work focuses on a single global model with alternating optimization of $\mathbf{A}$ and $\mathbf{B}$.

In Assumption~\ref{assumption3}, bounding Frobenius norms of $\mathbf{A}$ and $\mathbf{B}$ is a standard weight‑regularity requirement in LoRA fine‑tuning practice, where small rank and scaling factors keep the adapters from exploding. The inner‑product conditions simply posit that each adapter has enough non‑degenerate singular values, requiring that the low-rank updates retain sufficient rank and alignment with local gradient directions. Formally, we require the smallest singular values of $\mathbf{A}_i^{t}$ and $\mathbf{B}_i^{t}$ to be lower bounded by $\sqrt{c_A}$ and $\sqrt{c_B}$.


\begin{assumption}[Bounded Stochastic Gradient] \label{assumption1}
Let a mini‐batch $
\mathbf{x}_{i},
$
be drawn uniformly at random from client $i$’s dataset, meaning $\mathbb{E}_{\mathbf{x}_{i}} [\nabla_{\mathbf{W}_i}\,l_i\bigl(\mathbf{W}_i,\,\mathbf{x}_{i} \bigr) ]= \nabla_{\mathbf{W}_i}\,l_i\bigl(\mathbf{W}_i\bigr)$. We assume that the expected squared norm of any stochastic gradient is uniformly bounded, that is,
\[
\mathbb{E}_{\mathbf{x}_{i}}\!\bigl\|\nabla_{\mathbf{W}}\,l_i\bigl(\mathbf{W},\,\mathbf{x}_{i}\bigr)\bigr\|^2
\;\le\;G^2
\]
where the expectation is over the random draw of the mini-batch $\mathbf{x}_{i}$, and $G>0$ is a constant.
\end{assumption}

\begin{assumption}[Lipschitz smooth] \label{lipschitz}
    Loss functions ${l}_1, \cdots, {l}_N$ are all $L$-smooth. For all weights $\mathbf{W}$ and $\mathbf{U}$:
    \begin{equation}
        l_i(\mathbf{V}) \leq l_i(\mathbf{W}) + \langle \nabla_{\mathbf{W}} l_i(\mathbf{W}), \mathbf{V}-\mathbf{W} \rangle_F + \frac{L}{2} \|\mathbf{V}-\mathbf{W}\|_F^2, \forall i \in [N] \nonumber
    \end{equation}
    
\end{assumption}

\begin{assumption} \label{assumption3}
Let $\mathbf{W}_i=\mathbf{W}_0+\mathbf{B}_i\mathbf{A}_i$ represent the model parameters for the $i$-th client. There exist constants $C_B>0$, $C_A>0$, $c_B > 0$, and $c_A > 0$ such that:
\begin{equation*}
    \begin{aligned}
        \|\mathbf{B}_i\|_F &\leq C_B, \\
        \|\mathbf{A}_i\|_F &\leq C_A, \\
        \langle {\mathbf{A}_i}{\mathbf{A}_i}^\top, \nabla_{\mathbf{W}} {l}_i(\mathbf{W}_i)\nabla_{\mathbf{W}} {l}_i(\mathbf{W}_i)^{\top}\rangle_F &\geq c_A \|\nabla_{\mathbf{W}} {l}_i(\mathbf{W}_i)\|_F^2, \\
        \langle {\mathbf{B}_i}^{\top}\mathbf{B}_i, \nabla_{\mathbf{W}}{l}_i(\mathbf{W}_i)^{\top}\nabla_{\mathbf{W}} {l}_i(\mathbf{W}_i)\rangle_F & \geq c_B \|\nabla_{\mathbf{W}} {l}_i(\mathbf{W}_i)\|_F^2,
    \end{aligned}
\end{equation*}
for all $i\in[N]$.
\end{assumption}

\begin{theorem}[Convergence to the stationary point] \label{convergence-non-convex}
    Let Assumption ~\ref{assumption1}, ~\ref{lipschitz}, and ~\ref{assumption3} hold. Suppose each client runs $2T$ rounds, each consisting of $Q$ local epochs, using a learning rate $\eta \propto O(1/\sqrt{T})$, then we obtain:
    \begin{align}
        \min_{0\leq t\le 2T}  \mathbb{E}[\|\nabla_{\mathbf{W}}l_i(\mathbf{W}^{t})\|_F^2] \leq \frac{ \Delta_i}{2T\eta c_{min}} + \frac{D\eta}{2  c_{min}}
    \end{align}
    where $\Delta_i = \mathbb{E}[l_i(\mathbf{W}^{0})] -l_i^*, c_{min} = \min (c_A ,c_B), $ $D$ is chosen such that $ D \eta^2 \geq D_A+ D_B$, and $D_A = L \eta C_A^2 Q^2 G^2 + \frac{1}{2} \eta G^2 + 2L \eta^2 C_A^2 Q^2 G^2 + \frac{L}{2} \eta^2 C_A^4 G^2 ,
    D_B = L \eta C_B^2 Q^2 G^2 + \frac{1}{2} \eta G^2 + 2L \eta^2 C_B^2 Q^2 G^2 + \frac{L}{2} \eta^2 C_B^4 G^2$.
\end{theorem}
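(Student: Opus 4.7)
The plan is to adapt the standard FedAvg non-convex analysis of Li et al.\ to the alternating structure of RoLoRA. I would analyze pairs of consecutive rounds: an odd $B$-update round followed by an even $A$-update round. For each client $i$, starting from $L$-smoothness of $l_i$, I write a one-step descent inequality
\begin{equation*}
l_i(\mathbf{W}^{t+1}) \le l_i(\mathbf{W}^{t}) + \langle \nabla_{\mathbf{W}} l_i(\mathbf{W}^{t}),\, \mathbf{W}^{t+1}-\mathbf{W}^{t}\rangle_F + \tfrac{L}{2}\,\|\mathbf{W}^{t+1}-\mathbf{W}^{t}\|_F^2,
\end{equation*}
and use the crucial bilinear identity that, in an odd round, $\mathbf{W}^{t+1}-\mathbf{W}^{t} = (\mathbf{B}^{t+1}-\mathbf{B}^{t})\mathbf{A}^{t}$ because $\mathbf{A}$ is frozen, and analogously $\mathbf{W}^{t+1}-\mathbf{W}^{t} = \mathbf{B}^{t+1}(\mathbf{A}^{t+1}-\mathbf{A}^{t})$ in an even round.

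Next I would unfold the aggregated update. Since $\mathbf{B}^{t+1}-\mathbf{B}^{t}$ equals $-\eta$ times the average over the $N$ clients of a $Q$-step stochastic gradient trajectory with respect to $\mathbf{B}$, I would split it into a ``main'' term equal to $-\eta Q \cdot \frac{1}{N}\sum_i \nabla_{\mathbf{B}}l_i(\mathbf{W}^{t}) = -\eta Q \cdot \frac{1}{N}\sum_i \nabla_{\mathbf{W}}l_i(\mathbf{W}^{t})(\mathbf{A}^{t})^\top$ plus a client-drift remainder. Substituting the main term into the inner product and applying the cyclic trace identity turns $\langle \nabla_{\mathbf{W}} l_i(\mathbf{W}^{t}), \nabla_{\mathbf{W}}l_i(\mathbf{W}^{t})(\mathbf{A}^{t})^\top \mathbf{A}^{t}\rangle_F = \langle \mathbf{A}^{t}(\mathbf{A}^{t})^\top,\, \nabla_{\mathbf{W}} l_i(\mathbf{W}^{t})^\top \nabla_{\mathbf{W}}l_i(\mathbf{W}^{t})\rangle_F$, which by Assumption~\ref{assumption3} is lower bounded by $c_A \|\nabla_{\mathbf{W}} l_i(\mathbf{W}^{t})\|_F^2$. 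The even round produces an analogous term bounded below by $c_B \|\nabla_{\mathbf{W}} l_i(\mathbf{W}^{t})\|_F^2$, which is where the $c_{\min}=\min(c_A,c_B)$ factor originates.

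The error terms collect into the constant $D$. The three sources are: (i) the quadratic $\tfrac{L}{2}\|\mathbf{W}^{t+1}-\mathbf{W}^{t}\|_F^2$ term, which I would bound using $\|\mathbf{A}^{t}\|_F\le C_A$ (resp.\ $\|\mathbf{B}^{t+1}\|_F\le C_B$) together with Assumption~\ref{assumption1}; (ii) the client-drift term coming from the difference between the $Q$-step local trajectory and the initial gradient, controlled by the standard $O(\eta^2 Q^2 G^2)$ bound as in Li et al.; and (iii) the stochastic variance from replacing the full gradient by a mini-batch gradient, again bounded via $G^2$. Tracking the constants $C_A^2, C_B^2, C_A^4, C_B^4, Q^2$, etc., yields exactly the $D_A$ and $D_B$ expressions so that $D\eta^2 \ge D_A + D_B$ absorbs all of them.

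Finally I would take expectation, sum the paired descent inequalities over $t=0,\ldots,2T-1$, telescope $\mathbb{E}[l_i(\mathbf{W}^{0})] - \mathbb{E}[l_i(\mathbf{W}^{2T})] \le \Delta_i$, divide by $2T\eta c_{\min}$, and bound $\min_t \mathbb{E}[\|\nabla_{\mathbf{W}} l_i(\mathbf{W}^{t})\|_F^2]$ by the average. Choosing $\eta = \Theta(1/\sqrt{T})$ balances the two terms and yields the $O(1/\sqrt{T})$ rate. \textbf{Main obstacle.} The non-trivial step is cleanly handling the alternation together with the multi-step local SGD: one must carefully propagate Assumption~\ref{assumption3} through a $Q$-step trajectory where the intermediate iterates $\mathbf{B}_i^{t,q}$ (or $\mathbf{A}_i^{t,q}$) drift away from the synchronized $\mathbf{B}^{t}$ (or $\mathbf{A}^{t}$), while verifying that the Frobenius-norm bounds $C_A,C_B$ used in the drift analysis remain valid along the trajectory. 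Managing this drift so that the cross-term retains its $c_{\min}\|\nabla_{\mathbf{W}} l_i\|_F^2$ lower bound, rather than being dominated by drift error, is what pins down the admissible stepsize scaling and the precise form of $D$.
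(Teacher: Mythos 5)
Your skeleton shares the paper's key ingredients: pairing odd ($\mathbf{B}$) and even ($\mathbf{A}$) rounds, using the chain rule $\nabla_{\mathbf{B}}l_i = \mathbf{A}^\top\nabla_{\mathbf{W}}l_i$ and $\nabla_{\mathbf{A}}l_i = \nabla_{\mathbf{W}}l_i\mathbf{B}^\top$, the cyclic trace identity feeding into Assumption~\ref{assumption3} to produce the $c_A$ and $c_B$ lower bounds, absorbing drift and variance into $D_A, D_B$ via Assumption~\ref{assumption1}, telescoping, and choosing $\eta\propto 1/\sqrt{T}$. However, your central decomposition is different from the paper's, and as written it has a gap.

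You apply $L$-smoothness once per round to the full aggregated update $\mathbf{W}^{t+1}-\mathbf{W}^{t}$ and split $\mathbf{B}^{t+1}-\mathbf{B}^{t}$ into a main term $-\eta Q\cdot\frac{1}{N}\sum_{j}\nabla_{\mathbf{W}}l_j(\mathbf{W}^{t})(\mathbf{A}^{t})^\top$ plus drift. The theorem is a per-client statement about $\|\nabla_{\mathbf{W}}l_i(\mathbf{W}^{t})\|_F^2$ for a fixed $i$, so the resulting inner product is $-\frac{\eta Q}{N}\sum_{j}\langle\nabla_{\mathbf{W}}l_i(\mathbf{W}^{t}),\,\mathbf{A}^{t}(\mathbf{A}^{t})^\top\nabla_{\mathbf{W}}l_j(\mathbf{W}^{t})\rangle_F$. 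Only the $j=i$ summand is controlled by Assumption~\ref{assumption3}; the cross-client terms $j\neq i$ have no sign, are $O(\eta)$ rather than $O(\eta^2)$, and scale with $\|\nabla_{\mathbf{W}}l_i\|_F$, so they cannot be absorbed into $D$ without destroying the $-\eta c_A\|\nabla_{\mathbf{W}}l_i\|_F^2$ descent (absent a bounded-heterogeneity assumption the paper does not make). Your proposal silently replaces the client average by $\nabla_{\mathbf{W}}l_i$ at exactly this step. The paper avoids the issue by applying smoothness \emph{twice} per round: first from $\mathbf{W}^{2t}$ to client $i$'s own first local iterate $\mathbf{W}^{2t}_{i,1}$, where the update is $-\eta\mathbf{A}^{t}(\mathbf{A}^{t})^\top\nabla_{\mathbf{W}}l_i(\mathbf{W}^{2t},\mathbf{x}_{i,1}^{2t})$ and the descent term $-\eta c_A\|\nabla_{\mathbf{W}}l_i(\mathbf{W}^{2t})\|_F^2$ comes out cleanly; then from $\mathbf{W}^{2t}_{i,1}$ to the aggregate $\mathbf{W}^{2t+1}$, where the entire inner product (including all cross-client contributions) is bounded in magnitude via Young's inequality, $\|\mathbf{A}^t\|_F\le C_A$, and $G^2$, and dumped into $D_A$ — this is where the terms $L\eta C_A^2Q^2G^2+\frac{1}{2}\eta G^2$ originate, and also why the descent coefficient is $\eta c_A$ rather than the $\eta Q c_A$ your split would target. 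To make your route rigorous you would either need to adopt this two-stage decomposition or add a gradient-dissimilarity assumption to control the cross terms.
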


According to Theorem~\ref{convergence-non-convex}, we achieve an $O\left(\frac{1}{\sqrt{T}}\right)$ convergence rate toward a stationary point under smooth, non-convex conditions, matching the convergence rate of FedAVG in the same setting. We follow a similar derivation framework to \cite{guo2025selective}, adopting similar proof techniques, such as applying Assumption~\ref{lipschitz} to the global and local models (Eq.\eqref{eq:lipschitz-1} and Eq.\eqref{eq:lipschitz-2}). However, due to structural differences between our algorithm and that of \cite{guo2025selective}—in particular, our use of alternating optimization rather than simultaneous updates—the derivation diverges in how key gradient bounds are applied and combined. As a result, even though the proof steps are analogous, our final convergence bound depends on $\min(c_A, c_B)$ and features decoupled $\mathbf{A}$- and $\mathbf{B}$-related terms, in contrast to the $c_A + c_B$ dependence along with cross-terms due to the simultaneous updates of $\mathbf{A}$ and $\mathbf{B}$ in \cite{guo2025selective}. Although FedSA-LoRA reports the same $O\left(\frac{1}{\sqrt{T}}\right)$ convergence rate, the two algorithms address fundamentally different FL scenarios: FedSA-LoRA optimizes personalized client models, whereas RoLoRA learns a single shared global model.

\begin{proof}
    Let $\mathbf{W}^{2t} = \mathbf{W}_0 + \mathbf{A}^t\mathbf{B}^t$ be the global model parameters at the $2t$-th communication round. Let $\mathbf{W}_{i,q}^{2t} = \mathbf{W}_0 + \mathbf{A}^t\mathbf{B}_{i,q}^t$ be the local model parameters of client $i$ at the $q$-th local epoch of the $2t$-th communication round, where each client performs a total of $Q$ local epochs. We define the following for convenience:

    \begin{align}
       & \text{The $2t$-th communication round:} \nonumber\\ &\quad \quad \quad \quad \mathbf{W}^{2t} = \mathbf{W}_0 + \mathbf{A}^t\mathbf{B}^t \label{w-2t}\\
        &\text{The $q$-th local epoch of the $2t$-th communication round:} \nonumber \\
        &\quad \quad \quad \quad  \mathbf{W}_{i,q}^{2t} = \mathbf{W}_0 + \mathbf{A}^t\mathbf{B}_{i,q}^t \label{w-2t-q} \\
        &\text{The $2t+1$-th communication round:} \nonumber \\
        &\quad \quad \quad \quad  \mathbf{W}^{2t+1} = \mathbf{W}_0 + \mathbf{A}^t\mathbf{B}^{t+1}    =   \mathbf{W}_0 + \frac{1}{N}\sum_{i=1}^{N}\mathbf{A}^t\mathbf{B}_{i,Q}^{t} \label{w-2t+1} \\
        &\text{The $q$-th local epoch of the $2t+1$-th communication round:} \nonumber \\
         &\quad \quad \quad \quad  \mathbf{W}_{i,q}^{2t+1} = \mathbf{W}_0 + \mathbf{A}_{i,q}^t\mathbf{B}^{t+1} \label{w-2t+1-q} \\
         &\text{The $2t+2$-th communication round:} \nonumber \\
         &\quad \quad \quad \quad  \mathbf{W}^{2t+2} =  \mathbf{W}_0 + \mathbf{A}^{t+1}\mathbf{B}^{t+1} = \mathbf{W}_0 + \frac{1}{N}\sum_{i=1}^{N}\mathbf{A}_{i,Q}^t\mathbf{B}^{t+1}\label{w-2t+2}  
    \end{align}

    According to chain rule, 
    \begin{align}
       \nabla_{\mathbf{B}} l_i(\mathbf{W}) =  \mathbf{A}^\top \nabla_{\mathbf{W}} l_i(\mathbf{W}) \\
       \nabla_{\mathbf{A}} l_i(\mathbf{W}) =  \nabla_{\mathbf{W}} l_i(\mathbf{W})\mathbf{B}^\top
    \end{align}

Now we apply Assumption~\ref{lipschitz} to local update of $\mathbf{B}$, and get
\begin{align}
    l_i(\mathbf{W}_{i,1}^{2t}) \leq l_i(\mathbf{W}^{2t}) + \langle \mathbf{W}_{i,1}^{2t}- \mathbf{W}^{2t}, \nabla_{\mathbf{W}} l_i(\mathbf{W}^{2t})\rangle_F + \frac{L}{2} \| \mathbf{W}_{i,1}^{2t} - \mathbf{W}^{2t} \|_F^2 \label{eq:lipschitz-1}
\end{align}
By \eqref{w-2t} and \eqref{w-2t-q}, 
\begin{align}
    \mathbf{W}_{i,1}^{2t} - \mathbf{W}^{2t} &= \mathbf{A}^t (\mathbf{B}_{i,1}^t-\mathbf{B}^t) \label{eq:gd-fixA}\\
    & = -\eta \mathbf{A}^t {\mathbf{A}^t}^\top \nabla_{\mathbf{W}} l_i(\mathbf{W}^{2t},\mathbf{x}_{i,1}^{2t})
\end{align}
where $\eta$ is the learning rate. Then
\begin{align}
    l_i(\mathbf{W}_{i,1}^{2t}) \leq l_i(\mathbf{W}^{2t}) -\eta \langle \mathbf{A}^t {\mathbf{A}^t}^\top \nabla_{\mathbf{W}} l_i(\mathbf{W}^{2t},\mathbf{x}_{i,1}^{2t}), \nabla_{\mathbf{W}} l_i(\mathbf{W}^{2t})\rangle_F \nonumber\\+ \frac{L}{2} \eta^2 \|  \mathbf{A}^t {\mathbf{A}^t}^\top \nabla_{\mathbf{W}} l_i(\mathbf{W}^{2t},\mathbf{x}_{i,1}^{2t}) \|_F^2 \label{update1}
\end{align}
Taking expectation of \eqref{update1},
\begin{align}
    \mathbb{E}[l_i(\mathbf{W}_{i,1}^{2t})] \leq \mathbb{E}[ l_i(\mathbf{W}^{2t})] - \eta \langle \mathbf{A}^t {\mathbf{A}^t}^\top \nabla_{\mathbf{W}} l_i(\mathbf{W}^{2t}), \nabla_{\mathbf{W}} l_i(\mathbf{W}^{2t})\rangle_F + \frac{L}{2} \eta^2  \mathbb{E}[\|  \mathbf{A}^t {\mathbf{A}^t}^\top \nabla_{\mathbf{W}} l_i(\mathbf{W}^{2t},\mathbf{x}_{i,1}^{2t}) \|_F^2 ] \label{exp-1}
\end{align}
The inner product term is lower bounded such that
\begin{align}
    \langle \mathbf{A}^t {\mathbf{A}^t}^\top \nabla_{\mathbf{W}} l_i(\mathbf{W}^{2t}), \nabla_{\mathbf{W}} l_i(\mathbf{W}^{2t})\rangle_F & = \text{Tr}[{\nabla_{\mathbf{W}} l_i (\mathbf{W}^{2t})}^\top \mathbf{A}^t {\mathbf{A}^t}^\top \nabla_{\mathbf{W}} l_i (\mathbf{W}^{2t})] \\
    & = \text{Tr}[ \mathbf{A}^t {\mathbf{A}^t}^\top \nabla_{\mathbf{W}} l_i (\mathbf{W}^{2t}){\nabla_{\mathbf{W}} l_i (\mathbf{W}^{2t})}^\top] \\
     & =  \langle \mathbf{A}^t {\mathbf{A}^t}^\top , \nabla_{\mathbf{W}} l_i(\mathbf{W}^{2t}){\nabla_{\mathbf{W}} l_i(\mathbf{W}^{2t})}^\top \rangle_F\\
    & \geq  c_A \| \nabla_{\mathbf{W}} l_i(\mathbf{W}^{2t})  \|_F^2 \label{eq:ip-1}
\end{align}
where \eqref{eq:ip-1} follows by Assumption~\ref{assumption3}. Moreover,
\begin{align}
    \frac{L}{2} \eta^2  \mathbb{E}[\|  \mathbf{A}^t {\mathbf{A}^t}^\top \nabla_{\mathbf{W}} l_i(\mathbf{W}^{2t},\mathbf{x}_{i,1}^{2t}) \|_F^2 ] &\leq  \frac{L}{2} \eta^2  \mathbb{E}[\| \mathbf{A}^t\|_F^4 \cdot \|\nabla_{\mathbf{W}} l_i(\mathbf{W}^{2t},\mathbf{x}_{i,1}^{2t})\|_F^2] \label{quadr-1} \\
    &  \leq \frac{L}{2} \eta^2  \mathbb{E}[C_A^4 \cdot \|\nabla_{\mathbf{W}}  l_i(\mathbf{W}^{2t},\mathbf{x}_{i,1}^{2t})\|_F^2] \\
    & =  \frac{L}{2} \eta^2 C_A^4\mathbb{E}[ \|\nabla_{\mathbf{W}} l_i(\mathbf{W}^{2t},\mathbf{x}_{i,1}^{2t})\|_F^2] \\
    & \leq \frac{L}{2} \eta^2 C_A^4 G^2 \label{quadr-2}
\end{align}
where \eqref{quadr-1} follows by Assumption~\ref{assumption3}, \eqref{quadr-2} follows by Assumption~\ref{assumption1}. Combining \eqref{exp-1}, \eqref{eq:ip-1}, and \eqref{quadr-2}, we get
\begin{align}
    \mathbb{E}[l_i(\mathbf{W}_{i,1}^{2t})] \leq \mathbb{E}[ l_i(\mathbf{W}^{2t})] - \eta c_A \| \nabla_{\mathbf{W}} l_i(\mathbf{W}^{2t})  \|_F^2  + \frac{L}{2} \eta^2 C_A^4 G^2 \label{smooth-1-results}
\end{align}

Next we apply Assumption~\ref{assumption3} to aggregation step and take expectation on both sides,
\begin{align}
     \mathbb{E}[l_i(\mathbf{W}^{2t+1})] \leq \mathbb{E}[l_i(\mathbf{W}^{2t}_{i,1})] + \mathbb{E}[\langle \mathbf{W}^{2t+1} - \mathbf{W}^{2t}_{i,1}, \nabla_{\mathbf{W}} l_i(\mathbf{W}^{2t}_{i,1})\rangle_F ]+ \frac{L}{2} \mathbb{E}[ \| \mathbf{W}^{2t+1} - \mathbf{W}^{2t}_{i,1} \|_F^2] \label{eq:lipschitz-2}
\end{align}
where 
\begin{align}
     \mathbf{W}^{2t+1} - \mathbf{W}^{2t}_{i,1} &= \frac{1}{N}\sum_{j=1}^{N} \mathbf{A}^t \mathbf{B}^t_{j,Q}-\mathbf{A}^t \mathbf{B}^t_{i,1} \label{eq:aggre-fixA}\\
     & = \frac{1}{N}\sum_{j=1}^{N} \mathbf{A}^t ( \mathbf{B}^t_{j,Q}-\mathbf{B}^t_{i,1})  \label{smooth-2}
\end{align}
We have
\begin{align}
    \mathbf{B}^t_{j,Q} &= \mathbf{B}^t - \eta \sum_{q=0}^{Q-1} \nabla_{\mathbf{B}} l_j(\mathbf{W}^{2t}_{j,q},\mathbf{x}_{j,q}^{2t}) \\
    \mathbf{B}^t_{i,1} &= \mathbf{B}^t - \eta \nabla_{\mathbf{B}} l_i (\mathbf{W}^{2t},\mathbf{x}_{i,1}^{2t}) \\
    \mathbf{B}^t_{j,Q} - \mathbf{B}^t_{i,1} &= \eta \sum_{q=0}^{Q-1} (\nabla_{\mathbf{B}} l_i (\mathbf{W}^{2t},\mathbf{x}_{i,1}^{2t}) - \nabla_{\mathbf{B}} l_j(\mathbf{W}^{2t}_{j,q},\mathbf{x}_{j,q}^{2t})) \\
    & = \eta {\mathbf{A}^t}^\top \sum_{q=0}^{Q-1}(\nabla_{\mathbf{W}} l_i (\mathbf{W}^{2t},\mathbf{x}_{i,1}^{2t}) - \nabla_{\mathbf{W}} l_j(\mathbf{W}^{2t}_{j,q},\mathbf{x}_{j,q}^{2t}))
\end{align}
Thus,
\begin{align}
     \mathbf{W}^{2t+1} - \mathbf{W}^{2t}_{i,1} &= \frac{\eta}{N} \mathbf{A}^t \sum_{j=1}^{N} \sum_{q=0}^{Q-1} (\nabla_{\mathbf{W}} l_i (\mathbf{W}^{2t},\mathbf{x}_{i,1}^{2t}) - \nabla_{\mathbf{W}} l_j(\mathbf{W}^{2t}_{j,q},\mathbf{x}_{j,q}^{2t}))
\end{align}
Therefore, 
\begin{align}
 \left\|\mathbf{W}^{2t+1} - \mathbf{W}^{2t}_{i,1} \right\|_F^2 
&=  \left\|\frac{\eta}{N}\, \mathbf{A}^t \sum_{j=1}^{N} \sum_{q=0}^{Q-1} \left( \nabla_{\mathbf{W}} l_i (\mathbf{W}^{2t},\mathbf{x}_{i,1}^{2t}) - \nabla_{\mathbf{W}} l_j(\mathbf{W}^{2t}_{j,q},\mathbf{x}_{j,q}^{2t}) \right) \right\|_F^2  \\
&\le \frac{\eta^2}{N^2} \|\mathbf{A}^t\|_F^2 \left\| \sum_{j=1}^{N} \sum_{q=0}^{Q-1} \left( \nabla_{\mathbf{W}} l_i (\mathbf{W}^{2t},\mathbf{x}_{i,1}^{2t}) - \nabla_{\mathbf{W}} l_j(\mathbf{W}^{2t}_{j,q},\mathbf{x}_{j,q}^{2t}) \right) \right\|_F^2 \\
&\le \frac{\eta^2}{N} C_A^2 \sum_{j=1}^{N} \left\| \sum_{q=0}^{Q-1} \left( \nabla_{\mathbf{W}} l_i(\mathbf{W}^{2t},\mathbf{x}_{i,1}^{2t}) - \nabla_{\mathbf{W}}l_j(\mathbf{W}^{2t}_{j,q},\mathbf{x}_{j,q}^{2t}) \right) \right\|_F^2 \\
& \le \frac{\eta^2}{N} C_A^2 Q \sum_{j=1}^{N} \sum_{q=0}^{Q-1} \left\| \nabla_{\mathbf{W}} l_i(\mathbf{W}^{2t},\mathbf{x}_{i,1}^{2t}) - \nabla_{\mathbf{W}} l_j(\mathbf{W}^{2t}_{j,q},\mathbf{x}_{j,q}^{2t}) \right\|_F^2
\end{align}
Taking expectation, 
\begin{align}
    \mathbb{E} \left[\left\|\mathbf{W}^{2t+1} - \mathbf{W}^{2t}_{i,1} \right\|_F^2  \right]
    & \le \frac{\eta^2}{N} C_A^2 Q \sum_{j=1}^{N} \sum_{q=0}^{Q-1} \mathbb{E} \left[ \left\| \nabla_{\mathbf{W}} l_i(\mathbf{W}^{2t},\mathbf{x}_{i,1}^{2t}) - \nabla_{\mathbf{W}} l_j(\mathbf{W}^{2t}_{j,q},\mathbf{x}_{j,q}^{2t})) \right\|_F^2 \right]
\end{align}
For any matrices $\mathbf{A}$ and $\mathbf{B}$ (or vectors), we have
\[
\| \mathbf A - \mathbf B\|_F^{2}
= \|\mathbf A\|_F^{2} + \|\mathbf B\|_F^{2} - 2\langle \mathbf A, \mathbf B\rangle_F
\;\le\; \|\mathbf A\|_F^{2} + \|\mathbf B\|_F^{2} + 2\|\mathbf A\|_F\,\|\mathbf B\|_F
\;\le\; 2\|\mathbf A\|_F^{2} + 2\|\mathbf B\|_F^{2}.
\]
Thus,
\begin{align}
    &\mathbb{E} \left[ \left\| \nabla_{\mathbf{W}} l_i(\mathbf{W}^{2t},\mathbf{x}_{i,1}^{2t}) - \nabla_{\mathbf{W}} l_j(\mathbf{W}^{2t}_{j,q},\mathbf{x}_{j,q}^{2t}) \right\|_F^2 \right] \nonumber\\&\leq 2 \mathbb{E} \left[ \| \nabla_{\mathbf{W}} l_i(\mathbf{W}^{2t},\mathbf{x}_{i,1}^{2t}) \|_F^2 \right] + 2 \mathbb{E} \left[  \| \nabla_{\mathbf{W}} l_j(\mathbf{W}^{2t}_{j,q},\mathbf{x}_{j,q}^{2t}) \|_F^2 \right] \\
    &\leq 4G^2
\end{align}
Leading to
\begin{align}
    \frac{L}{2} \mathbb{E} \left[\left\|\mathbf{W}^{2t+1} - \mathbf{W}^{2t}_{i,1} \right\|_F^2  \right] \leq 2L \eta^2 C_A^2 Q^2 G^2 \label{quadr-smooth-2}
\end{align}
For the inner product term of \eqref{eq:lipschitz-2}, we have
\begin{align}
    \mathbb{E} \left[ \langle \mathbf{W}^{2t+1} - \mathbf{W}^{2t}_{i,1}, \nabla_{\mathbf{W}} l_i(\mathbf{W}^{2t}_{i,1})\rangle_F \right] & \leq \frac{1}{2\eta} \mathbb{E} \left[ \left\|  \mathbf{W}^{2t+1} - \mathbf{W}^{2t}_{i,1} \right\|_F^2 \right] + \frac{1}{2} \eta \mathbb{E} \left[ \left\| \nabla_{\mathbf{W}} l_i(\mathbf{W}^{2t}_{i,1})  \right\|_F^2 \right] \\
    & \le L \eta C_A^2 Q^2 G^2 + \frac{1}{2} \eta G^2 \label{ip-smooth-2}
\end{align}
Combining \eqref{eq:lipschitz-2}, \eqref{quadr-smooth-2}, and \eqref{ip-smooth-2}, we obtain
\begin{align}
    \mathbb{E}[l_i(\mathbf{W}^{2t+1})] \leq \mathbb{E}[l_i(\mathbf{W}^{2t}_{i,1})] + L \eta C_A^2 Q^2 G^2 + \frac{1}{2} \eta G^2  + 2L \eta^2 C_A^2 Q^2 G^2 \label{smooth-2-results}
\end{align}
Combining \eqref{smooth-1-results} and \eqref{smooth-2-results}, we derive
\begin{align}
   & \eta c_A \| \nabla_{\mathbf{W}}l_i (\mathbf{W}^{2t})\|_F^2 \nonumber\\ 
   &  \le \mathbb{E}[l_i(\mathbf{W}^{2t})]-\mathbb{E}[l_i(\mathbf{W}^{2t+1})] + L \eta C_A^2 Q^2 G^2 + \frac{1}{2} \eta G^2 + 2L \eta^2 C_A^2 Q^2 G^2 + \frac{L}{2} \eta^2 C_A^4 G^2
\end{align}
Analogously, applying the same analysis to the $(2t+1)$-th communication round, which fixes $\mathbf{B}$ and updates $\mathbf{A}$, introduces key modifications to steps such as Eq.\eqref{eq:gd-fixA} and Eq.\eqref{eq:aggre-fixA}, which govern the weight updates. These changes propagate through the subsequent steps that depend on the updated weights. As a result, we obtain
\begin{align}
     & \eta c_B \| \nabla_{\mathbf{W}}l_i (\mathbf{W}^{2t+1})\|_F^2 \nonumber\\ 
   &  \le \mathbb{E}[l_i(\mathbf{W}^{2t})]-\mathbb{E}[l_i(\mathbf{W}^{2t+1})] + L \eta C_B^2 Q^2 G^2 + \frac{1}{2} \eta G^2 + 2L \eta^2 C_B^2 Q^2 G^2 + \frac{L}{2} \eta^2 C_B^4 G^2
\end{align}
Add the two inequalities and then sum over $t=0,1,\dots, T-1$, we get
\begin{align}
    \sum_{t=0}^{T-1} \eta (c_A \| \nabla_{\mathbf{W}}l_i (\mathbf{W}^{2t})\|_F^2 + c_B \| \nabla_{\mathbf{W}}l_i (\mathbf{W}^{2t+1})\|_F^2  ) \leq \mathbb{E}[l_i(\mathbf{W}^{0})]-\mathbb{E}[l_i(\mathbf{W}^{2T})] +T (D_A+D_B)
\end{align}
where
\begin{align}
    D_A = L \eta C_A^2 Q^2 G^2 + \frac{1}{2} \eta G^2 + 2L \eta^2 C_A^2 Q^2 G^2 + \frac{L}{2} \eta^2 C_A^4 G^2 \\
    D_B = L \eta C_B^2 Q^2 G^2 + \frac{1}{2} \eta G^2 + 2L \eta^2 C_B^2 Q^2 G^2 + \frac{L}{2} \eta^2 C_B^4 G^2
\end{align}
Assume the per‑client loss is bounded below by $l_i^*$, let 
\begin{align}
    \Delta_i = \mathbb{E}[l_i(\mathbf{W}^{0})] -l_i^*, \quad c_{min} = \min (c_A ,c_B), 
\end{align}
Choosing $D$ such that $D\eta^2\geq D_A+D_B$, then
\begin{align}
    \min_{0\leq t\le 2T}  \mathbb{E}[\|\nabla_{\mathbf{W}}l_i(\mathbf{W}^{t})\|_F^2] \leq \frac{ \Delta_i}{2T\eta c_{min}} + \frac{D\eta}{2  c_{min}}
\end{align}
We choose $\eta \propto O(1/\sqrt{T})$ so that the overall convergence rate with a diminishing step size is $ O(1/\sqrt{T})$ which matches the canonical convergence speed of stochastic gradient methods in non‑convex settings.
\end{proof}

\newpage

\section{Experiments} \label{app:additional-exp}

\subsection{Impact of Non-Linearity on RoLoRA}
Across both linear and non-linear settings, all methods perform similarly, with RoLoRA showing modest improvement in the non-linear case, likely due to its better utilization of the added expressiveness from ReLU.
\begin{figure}[h!]
\begin{center}
\begin{subfigure}
  \centering
\includegraphics[width=0.6\linewidth]{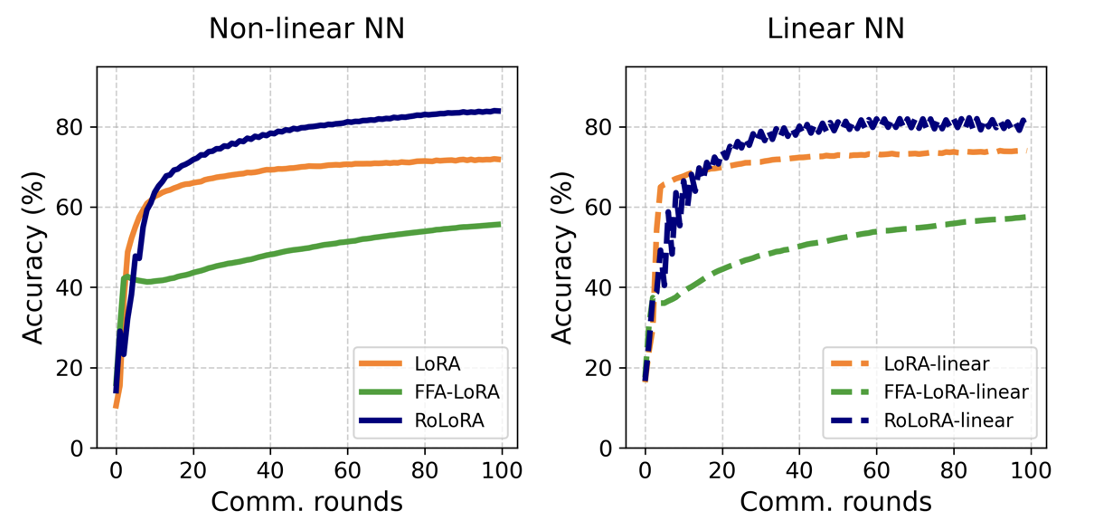}
\end{subfigure}
 \caption{Comparison of RoLoRA, LoRA, and FFA-LoRA on linear and non-linear networks. While overall performance is similar, RoLoRA shows modest gains in the non-linear setting, likely benefiting from ReLU’s added expressiveness.}
    \label{fig:non-linear-add-to-rolora}
\hfill
\end{center}
\end{figure}

\subsection{Hyper-parameters for GLUE task} \label{exp-setup}
\begin{table}[h]
    \centering
{\footnotesize
    \begin{tabular}{cccccc}
    \toprule
         & SST-2 & QNLI & MNLI & QQP & RTE \\
         \midrule
    Total comm. rounds     & 500&500&500&500&200 \\
    Batch Size &64&32&32&32&32\\
    Local Epochs & 20&20&20&20&20 \\
    \bottomrule
    \end{tabular}
     \vspace{4pt}
    \caption{Hyper-parameters configurations. Note the total communication rounds are for the setting with 3 clients. When increasing the number of clients, we decrease the total communication rounds accordingly to maintain a constant sample count used during fine-tuning }
    \label{tab:exp-set}}
\end{table}


We show the hyper-parameter configurations for each dataset in Table~\ref{tab:exp-set}. 

\subsection{Effect of Number of Clients}
\vspace{-0.3cm}
\paragraph{Configurations}
Table~\ref{tab:layer_type_index-client-num} shows the selected layer set attached with LoRA modules for Table~\ref{tab:Clients-num-flex-appendix}. We present Table~\ref{tab:Clients-num-flex-appendix} with the results of FlexLoRA~\cite{bai2024federatedfinetuninglargelanguage} added in Table~\ref{tab:Clients-num-flex-appendix}.
\begin{table}[h!]
\centering
{\footnotesize
    \begin{tabular}{ccccccc}
    \toprule
                            & Layer Attributes & SST-2 & QNLI & MNLI & QQP & RTE \\
                            \midrule
\multirow{2}{*}{$\mathcal{P}_2$} & Type             &     $W_v,W_q$   &  $W_v,W_q$     &   $W_v,W_q$    &   $W_v,W_q$   &   $W_v,W_q$   \\
                    & Index            &   $\{18,\ldots,23\}$    &   $\{15,\ldots,23\}$   &  $\{15,\ldots,23\}$    &   $\{15,\ldots,23\}$  &  $\{16,\ldots,23\}$    \\ \bottomrule
    \end{tabular}}
    \vspace{4pt}
    \caption{The selected layer set attached with LoRA modules for Table~\ref{tab:Clients-num-flex-appendix} and Table~\ref{tab:noniid-llm}}
    \label{tab:layer_type_index-client-num}
\end{table}

\paragraph{Rank-2 Results}
We show the effect of number of clients when using rank-2 LoRA modules in Table~\ref{tab:Clients-num-appendix-rank-2}.
\begin{table}[h!]
\centering
    {\small\begin{tabular}{ccccccc}
    \toprule
        Client num& Methods  & SST-2 & QNLI & MNLI & QQP & RTE \\
        \midrule 
         &LoRA&\textbf{95.64}&\textbf{92.04}&\textbf{85.85}&\textbf{86.16}&\textbf{82.19}\\      
        3 &FFA-LoRA&94.91&90.11&84.06&85.48&80.86\\
         &\cellcolor{ours}RoloRA&\cellcolor{ours}95.60&\cellcolor{ours}{91.62}&\cellcolor{ours}{85.66}&\cellcolor{ours}\textbf{86.16}&\cellcolor{ours}\textbf{82.19}\\
         \midrule 
                 &LoRA&94.27&86.91&81.22&82.07&46.21\\
       20 &FFA-LoRA&93.92&89.58&80.51&82.62&57.76\\
         &\cellcolor{ours}RoloRA&\cellcolor{ours}\textbf{94.84}&\cellcolor{ours}\textbf{90.77}&\cellcolor{ours}\textbf{85.13}&\cellcolor{ours}\textbf{85.10}&\cellcolor{ours}\textbf{81.23}\\
         \midrule 
                 &LoRA&93.23&82.57&58.96&76.96&49.10\\
        50 &FFA-LoRA&92.32&85.15&62.79&77.78&53.07\\
         &\cellcolor{ours}RoloRA&\cellcolor{ours}\textbf{94.61}&\cellcolor{ours}\textbf{89.83}&\cellcolor{ours}\textbf{85.15}&\cellcolor{ours}\textbf{85.55}&\cellcolor{ours}\textbf{72.92} \\
         \bottomrule
    \end{tabular}
    \vspace{4pt}
        \caption{Results with RoBERTa-Large models with varying client numbers (3, 20, 50) using rank-2 LoRA modules in federated setting, maintaining a constant sample count during fine-tuning.
        \label{tab:Clients-num-appendix-rank-2}}
}
\end{table}

\paragraph{Rank-32 Results}
In Table~\ref{tab:rank-32}, we provide additional experiments with rank-32 LoRA adapters in the 20-client and 50-client setting.

\begin{table}
    \centering
    {\footnotesize
    \begin{tabular}{cccc}
    \toprule
    Client num     & Methods & MNLI &QQP \\
    \midrule
         & LoRA & 79.72 \com 0.38 & 83.66 \com 0.02\\
      20   & FFA-LoRA & 80 \com 0.47  & 84.08 \com 0.31 \\
         & \cellcolor{ours}RoLoRA & \cellcolor{ours}85.91 \com 0.63 &\cellcolor{ours} 86.37 \com 0.09 \\
         \midrule
         & LoRA & 70.84 \com 4.63 & 79.75 \com 0.31 \\
      50   & FFA-LoRA & 74.47 \com 1.57 & 80.65 \com 0.31 \\
         & \cellcolor{ours}RoLoRA &\cellcolor{ours} 85.46 \com 0.08 &\cellcolor{ours} 86.15 \com 0.26\\
         \bottomrule
    \end{tabular}}
    \vspace{4pt}
    \caption{Results with RoBERTa-Large models in 20-client and 50-client setting using rank-32 LoRA adapters.}
    \label{tab:rank-32}
\end{table}
\paragraph{10-Client Setting}
In Table~\ref{tab:10-client-rank-4}, we provide results for 10-client setting with rank-4 adapter. The results show that RoLoRA still outperform other methods. In the 10-client setting, RoLoRA's performance gain over other methods falls between the gains observed in the 3-client and 20-client settings.

\begin{table}[h]
    \centering
    {\footnotesize
    \begin{tabular}{cccc}
    \toprule
         & MNLI & QQP & QNLI\\
         \midrule
        LoRA & 81.48 \com 2.19 & 84.1 \com 0.14 & 87.73 \com 0.67\\
        FFA-LoRA & 83.19 \com 0.64 & 84.35 \com 0.06 & 89.88 \com 0.13\\
        \rowcolor{ours} RoLoRA & 84.95 \com 0.8 & 95.25\com 0.39 & 90.3 \com 0.76\\
         \bottomrule
    \end{tabular}
    \vspace{4pt}}
    \caption{Results with RoBERTa-Large model with 10 clients using rank-4 LoRA adapters, running for 150 rounds in total.}
    \label{tab:10-client-rank-4}
\end{table}

\paragraph{FLoRA vs. RoLoRA} Table~\ref{tab:flora_vs_rolora} shows a comparison between FLoRA and RoLoRA. In the 3-client setting, we ran 500 rounds and scaled rounds down proportionally with more clients to keep the total sample budget fixed. RoLoRA consistently outperforms FLoRA across tasks and client counts. While FLoRA eventually converges (e.g., 83.3\% on MNLI after 4000 rounds), it does so much more slowly, highlighting RoLoRA's faster convergence and better scalability.

\begin{table}[h!]
{\footnotesize
\centering
\begin{tabular}{c c c c c}
\toprule
{Client num} & {Method} & {MNLI} & {QQP} & {QNLI} \\
\midrule
3  & FLoRA  & 39.29  & 51.05  & 59.88 \\
   & \cellcolor{ours} RoLoRA & \cellcolor{ours} \textbf{85.70} & \cellcolor{ours}  \textbf{86.14} & \cellcolor{ours} \textbf{91.64} \\
\midrule
20 & FLoRA  & 32.01  & 51.58  & 49.89 \\
   & \cellcolor{ours} RoLoRA &  \cellcolor{ours} \textbf{85.28} & \cellcolor{ours}  \textbf{85.83} & \cellcolor{ours} \textbf{90.35} \\
\midrule
50 & FLoRA  & 31.97  & 50.54  & 38.82 \\
   & \cellcolor{ours} RoLoRA &  \cellcolor{ours} \textbf{82.98} & \cellcolor{ours}  \textbf{85.71} & \cellcolor{ours} \textbf{90.00} \\
\bottomrule
\end{tabular}
\vspace{4pt}
\caption{Results with RoBERTa-Large models with rank-4 LoRA adapter for varying numbers of clients (3, 20, 50), comparing FLoRA with RoLoRA, maintaining a constant sample count during finetuning. In the 3-client setting, while FLoRA eventually converges (e.g., 83.3\% on MNLI after 4000 rounds), the figure shows results for only 500 rounds, within which FLoRA has not yet converged. This highlights RoLoRA's faster convergence and better scalability.}
\label{tab:flora_vs_rolora}}
\end{table}


\paragraph{Finetuning Dynamics within 100 Rounds}
Figure~\ref{fig:convergence-client50-100round} presents the 100-round extension of Figure 3, where RoLoRA consistently converges faster and achieves the highest accuracy.

We want to clarify that Figure~\ref{fig:convergence-speed} focuses on comparing convergence under a fixed sample budget rather than full convergence, and Table ~\ref{tab:Clients-num-flex-appendix} shows that this budget suffices for all methods when using 3 clients. However, as shown in Figure~\ref{fig:convergence-speed}, with 50 clients, only RoLoRA fully converges, underscoring its efficiency in low-resource settings.
\begin{figure}[h!]
\begin{center}
\begin{subfigure}
  \centering
  \includegraphics[width=0.28\linewidth]{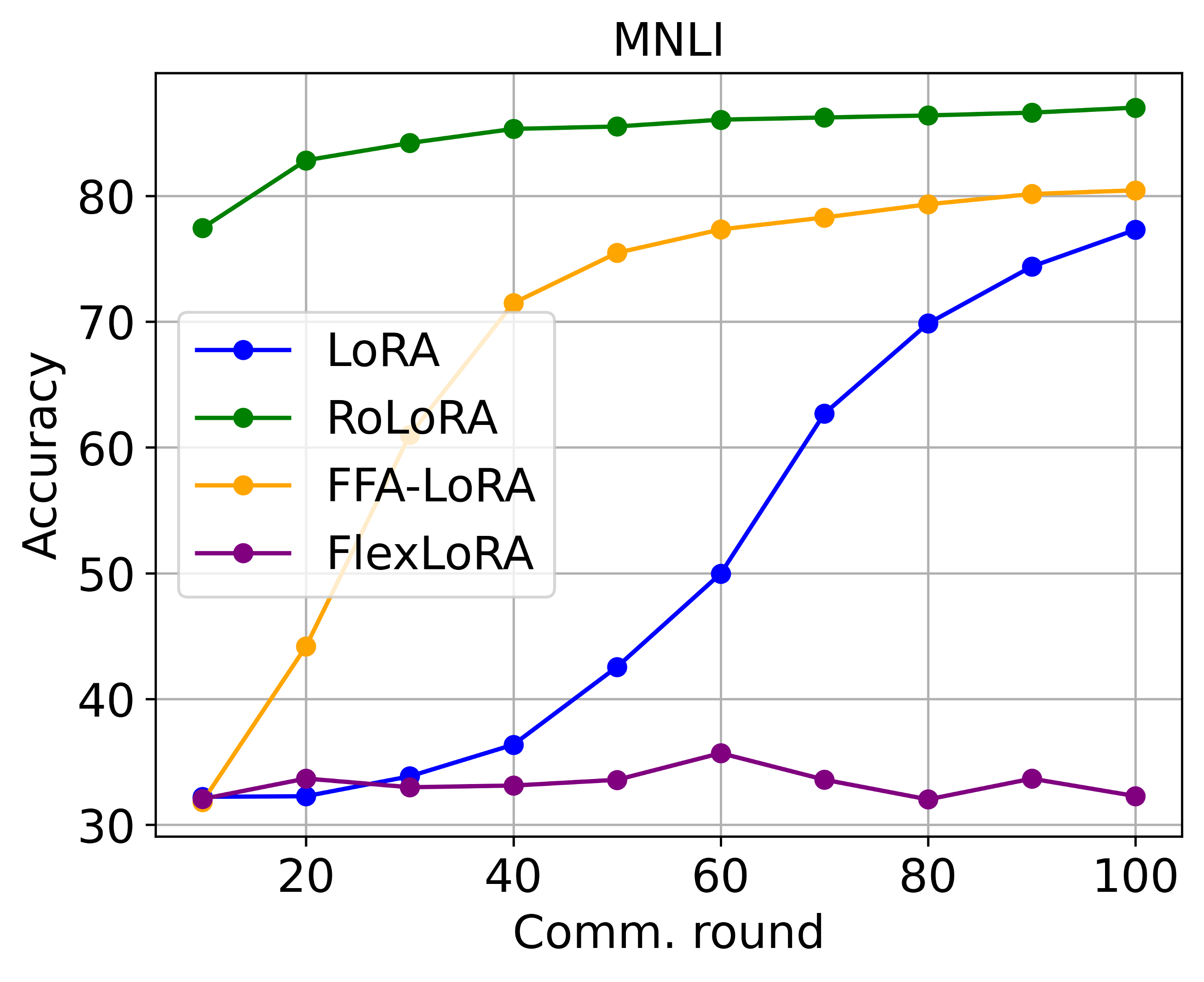}
\end{subfigure}
\begin{subfigure}
  \centering
  \includegraphics[width=0.28\linewidth]{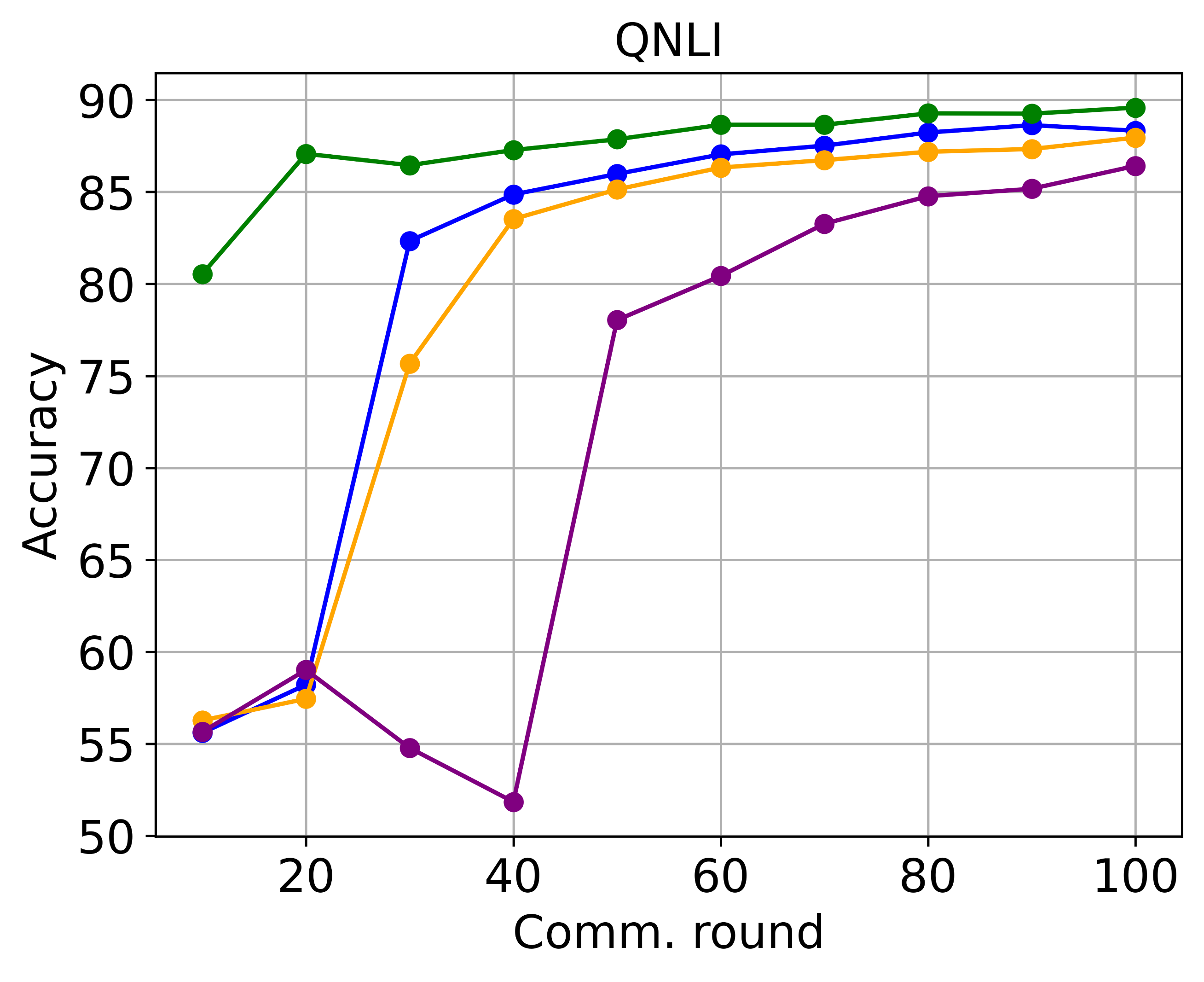}
\end{subfigure}
\begin{subfigure}
  \centering
  \includegraphics[width=0.28\linewidth]{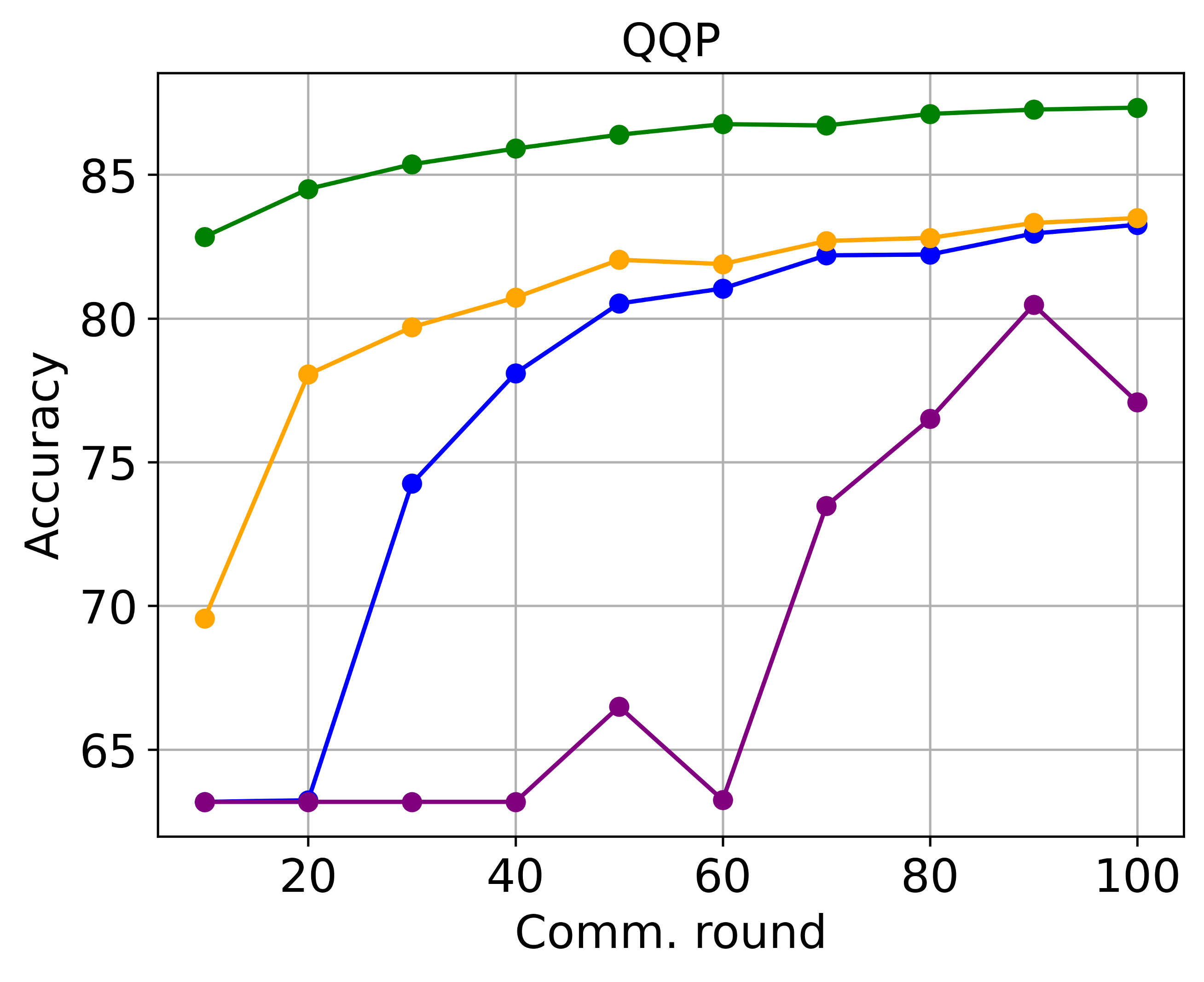}
\end{subfigure}
 \caption{Accuracies over 100 rounds. It involves 50 clients using rank 4.}
    \label{fig:convergence-client50-100round}
\hfill
\end{center}
\end{figure}

 \subsection{Effect of Number of LoRA Parameters} \label{app:num-lora-appendix}
 \vspace{-0.3cm}
 In Table~\ref{tab:layer_type_index}, we include the details about layers attached with LoRA adapters for different budget of finetuning parameters, for each dataset.

 \begin{table}[h!]
\centering
{\footnotesize
    \begin{tabular}{ccccccc}
    \toprule
                            & Layer Attributes & SST-2 & QNLI & MNLI & QQP & RTE \\
                            \midrule
\multirow{2}{*}{$\mathcal{P}_1$} & Type             &   $W_v$    &    $W_v,W_q$    &   $W_v,W_q$     &    $W_v,W_q$   &    $W_v,W_q$   \\
                    & Index            &  $\{21,\ldots,23\}$  &   $\{21,\ldots,23\}$    &   $\{21,\ldots,23\}$    &   $\{21,\ldots,23\}$   &  $\{21,\ldots,23\}$   \\
\multirow{2}{*}{$\mathcal{P}_2$} & Type             &     $W_v,W_q$   &  $W_v,W_q$     &   $W_v,W_q$    &   $W_v,W_q$   &   $W_v,W_q$   \\
                    & Index            &   $\{18,\ldots,23\}$    &   $\{15,\ldots,23\}$   &  $\{15,\ldots,23\}$    &   $\{15,\ldots,23\}$  &  $\{16,\ldots,23\}$   \\
\multirow{2}{*}{$\mathcal{P}_3$} & Type             &  $W_v,W_q$     &   $W_v,W_q$    &   $W_v,W_q$    &  $W_v,W_q$    &  $W_v,W_q$    \\
                    & Index            &    $\{0,\ldots,23\}$  &  $\{12,\ldots,23\}$    &   $\{12,\ldots,23\}$   &   $\{12,\ldots,23\}$  &   $\{12,\ldots,23\}$  \\ \bottomrule
    \end{tabular}}
    \vspace{4pt}
    \caption{The selected layer set attached with LoRA for the setup of Figure~\ref{fig:five_subfigures}}
    \label{tab:layer_type_index}
\end{table}

\begin{figure}[h!]
\begin{center}
\begin{subfigure}
  \centering
  \includegraphics[width=0.161\linewidth]{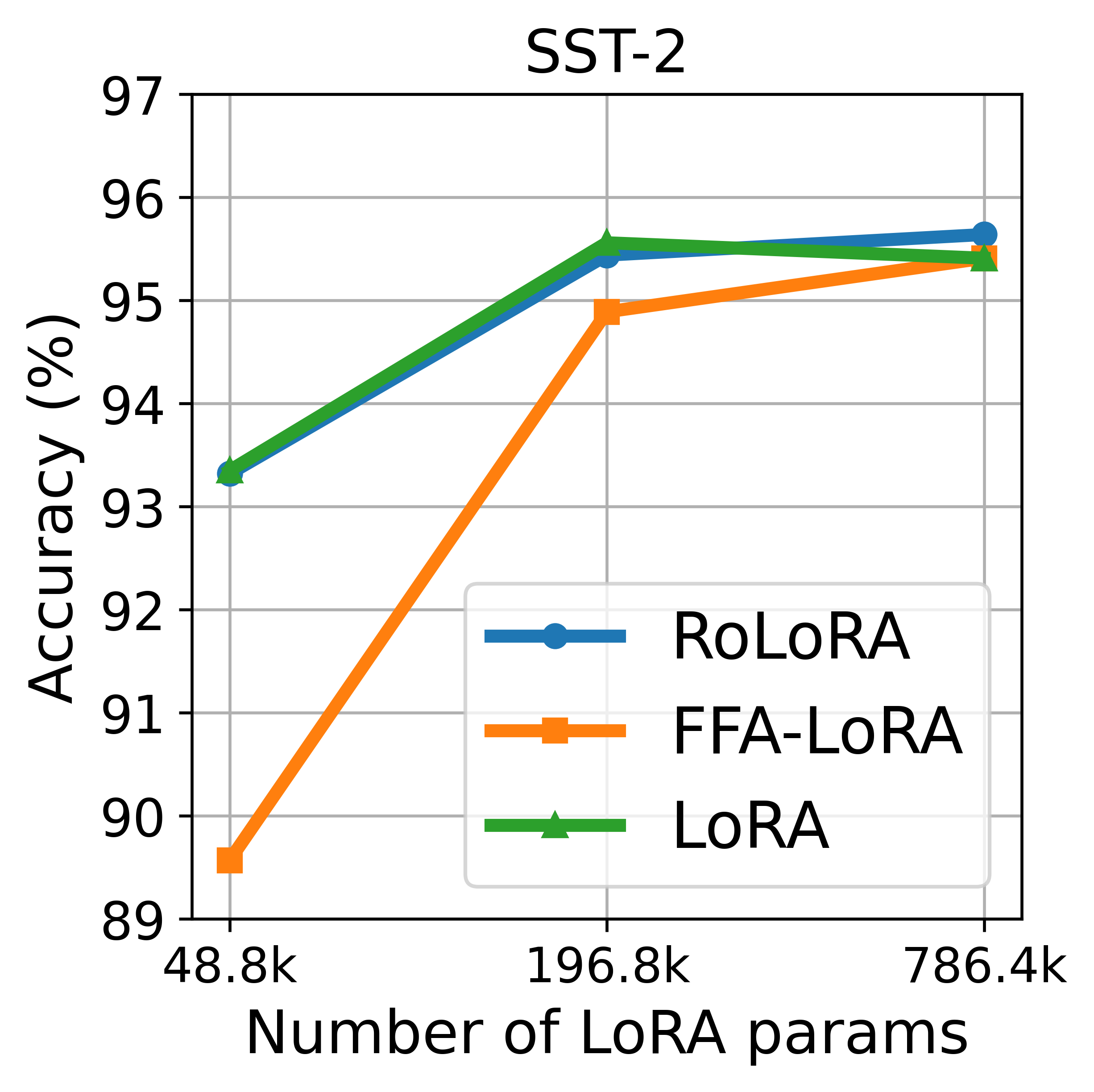}
  \label{fig:sub1-8}
\end{subfigure}
\hfill
\begin{subfigure}
  \centering
  \includegraphics[width=0.154\linewidth]{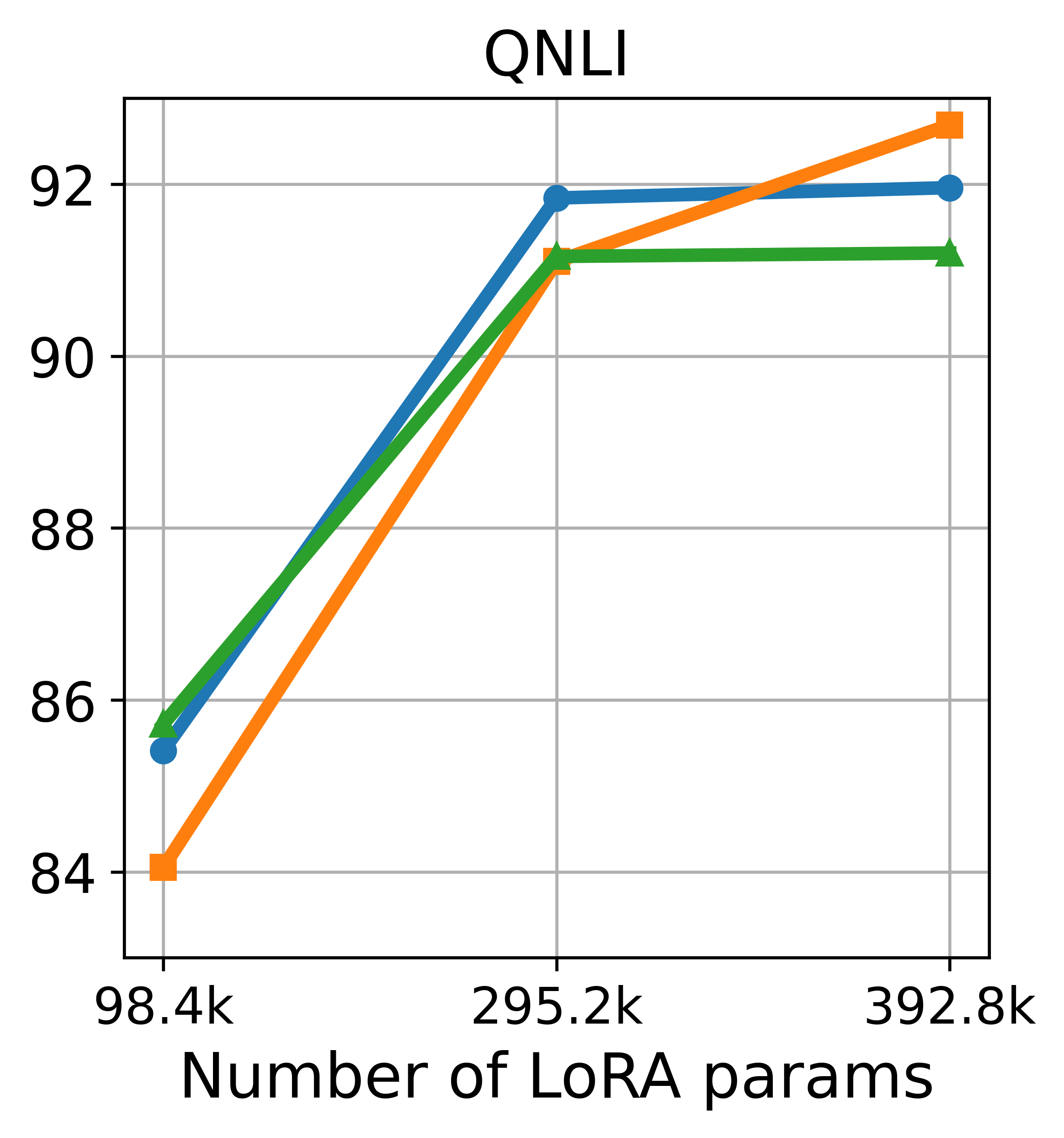}
  \label{fig:sub2-8}
\end{subfigure}
\hfill
\begin{subfigure}
  \centering
  \includegraphics[width=0.154\linewidth]{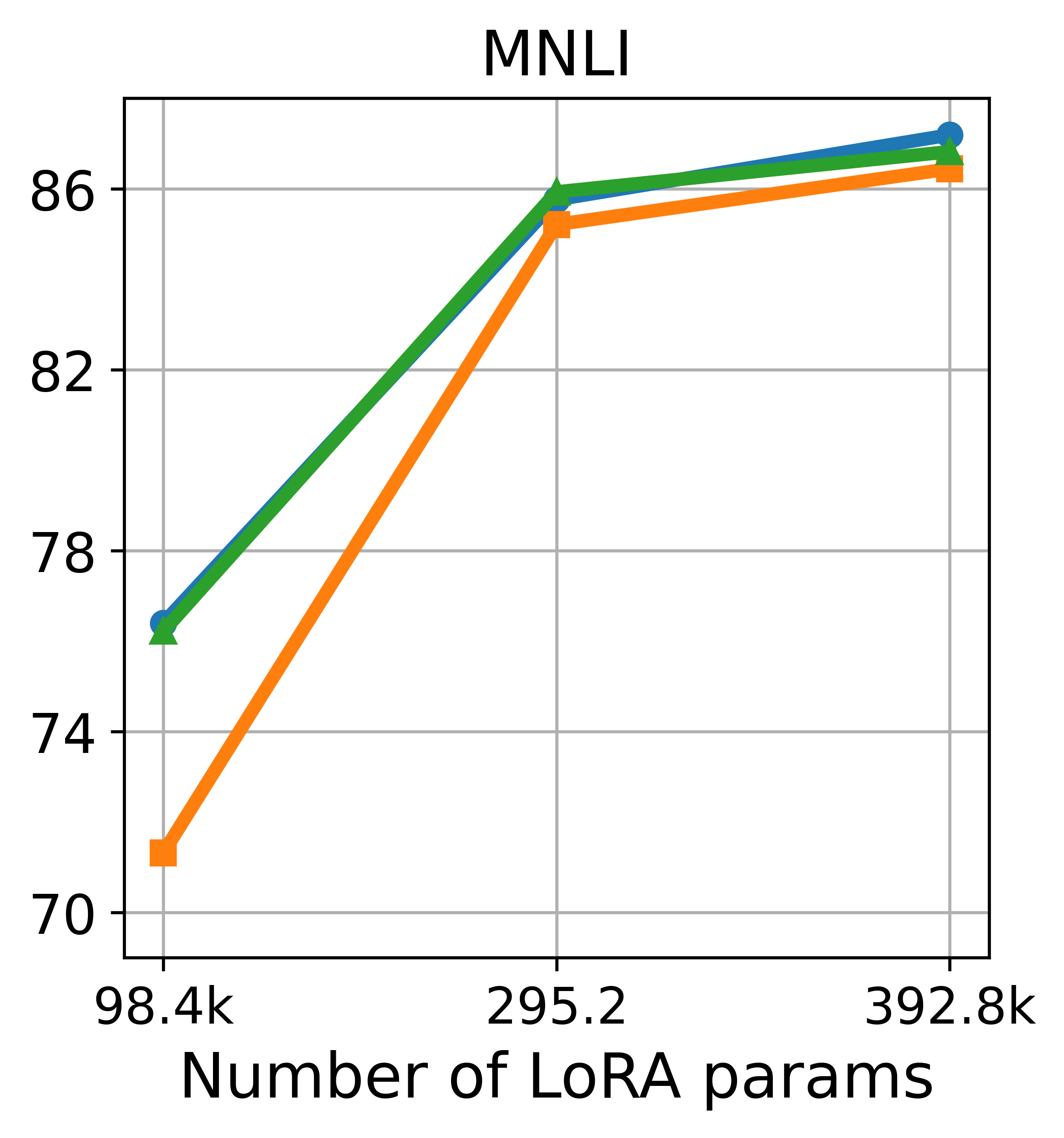}
  \label{fig:sub3-8}
\end{subfigure}
\hfill
\begin{subfigure}
  \centering
  \includegraphics[width=0.154\linewidth]{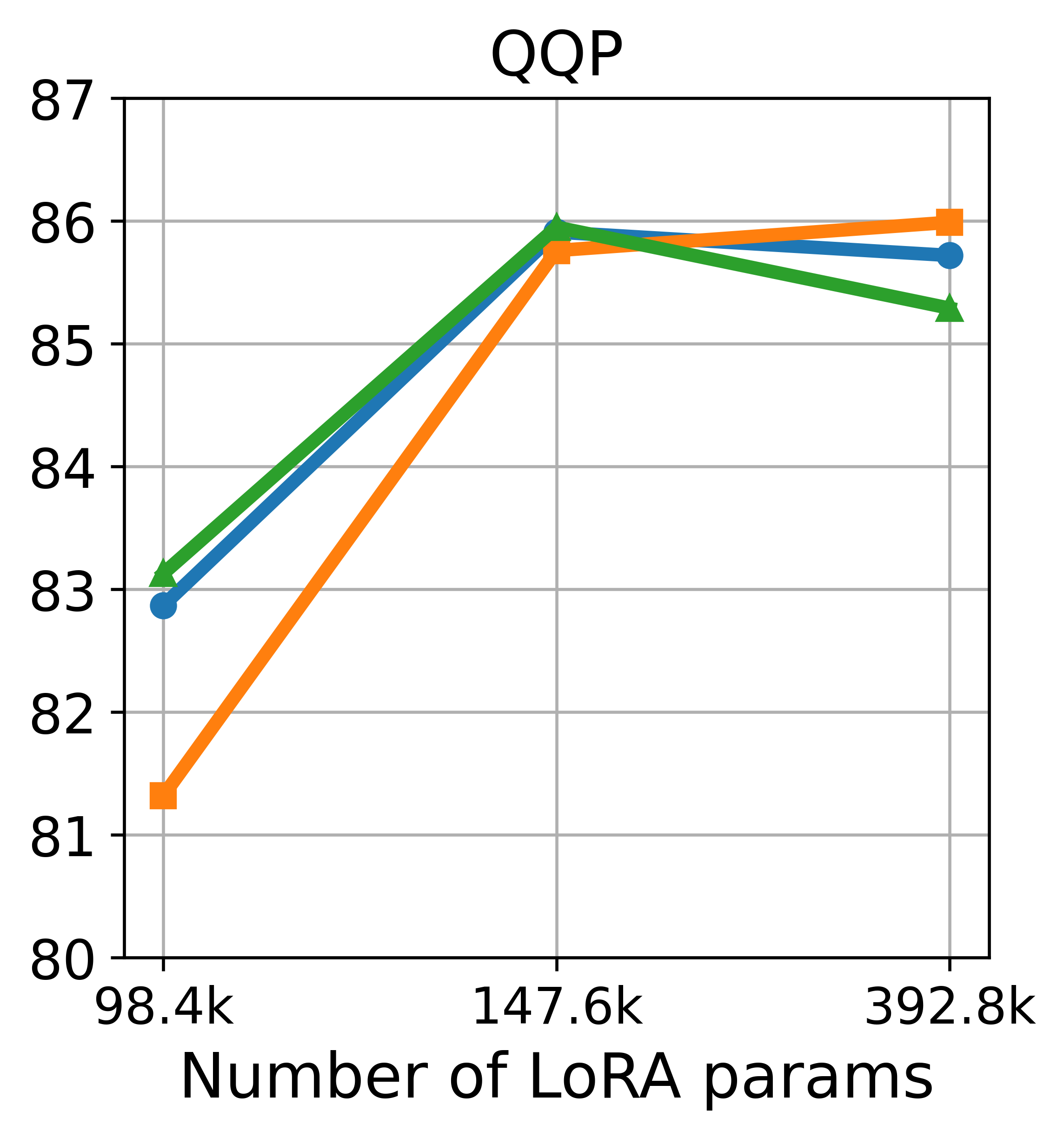}
  \label{fig:sub4-8}
\end{subfigure}
\hfill
\begin{subfigure}
  \centering
  \includegraphics[width=0.154\linewidth]{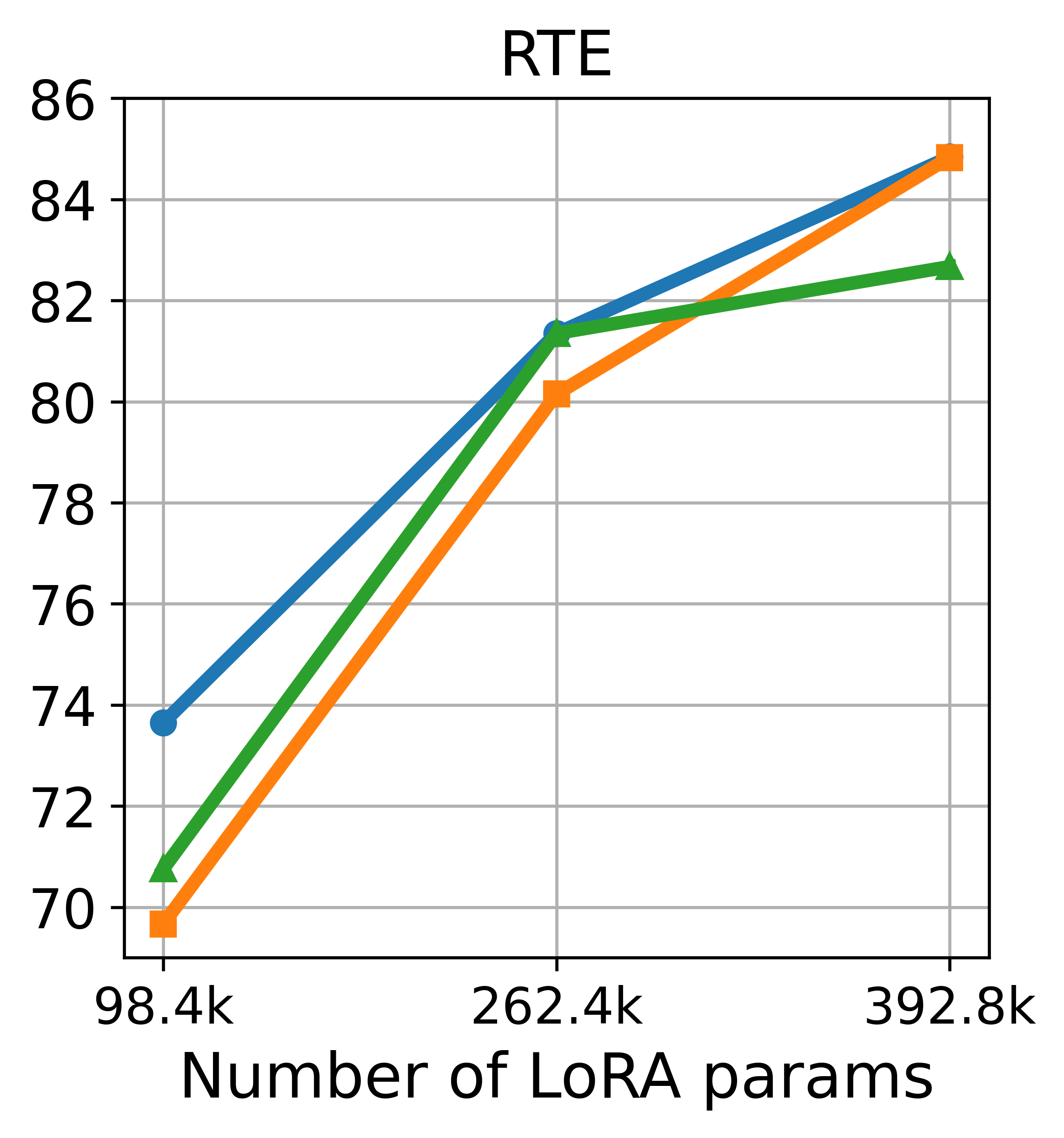}
  \label{fig:sub5-8}
\end{subfigure}
 \caption{Results with RoBERTa-Large models on GLUE of different budget of finetuning parameters. It involves 3 clients using rank 8.}
    \label{fig:five_subfigures-rank-8}
\hfill
\end{center}
\end{figure}
\begin{figure}[h!]
\begin{center}
\begin{subfigure}
  \centering
  \includegraphics[width=0.164\linewidth]{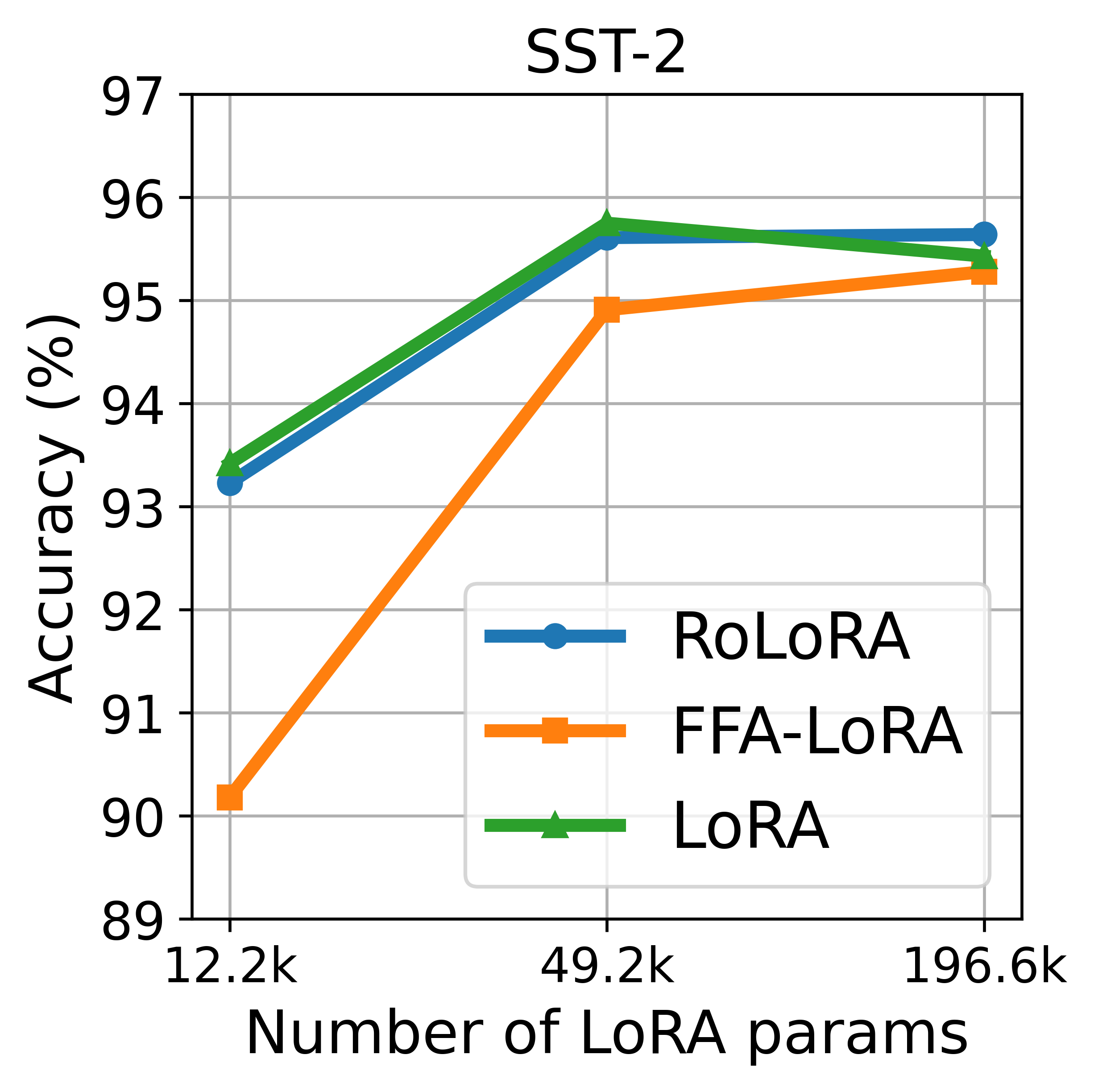}
  \label{fig:sub1-2}
\end{subfigure}
\hfill
\begin{subfigure}
  \centering
  \includegraphics[width=0.154\linewidth]{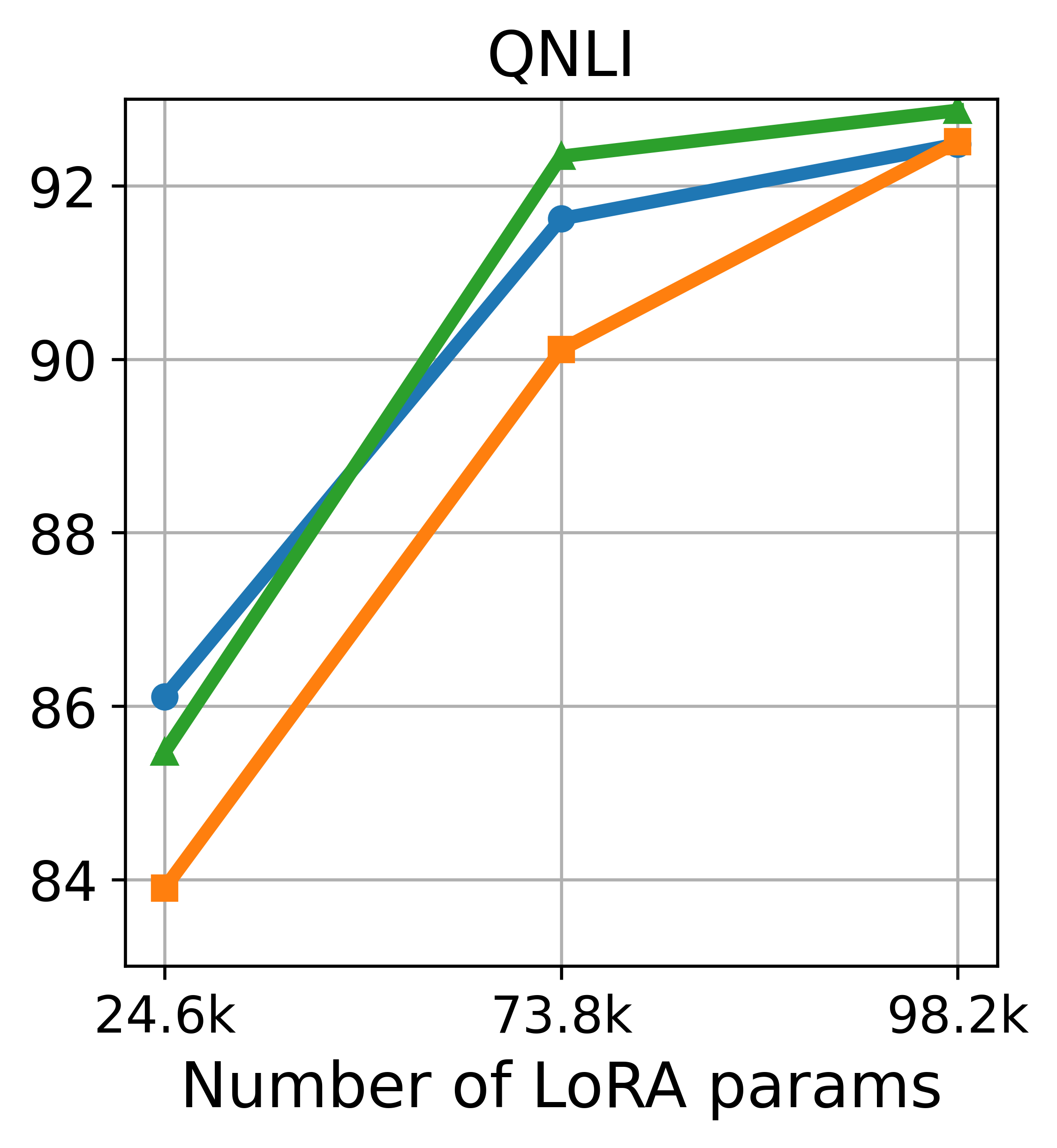}
  \label{fig:sub2-2}
\end{subfigure}
\hfill
\begin{subfigure}
  \centering
  \includegraphics[width=0.154\linewidth]{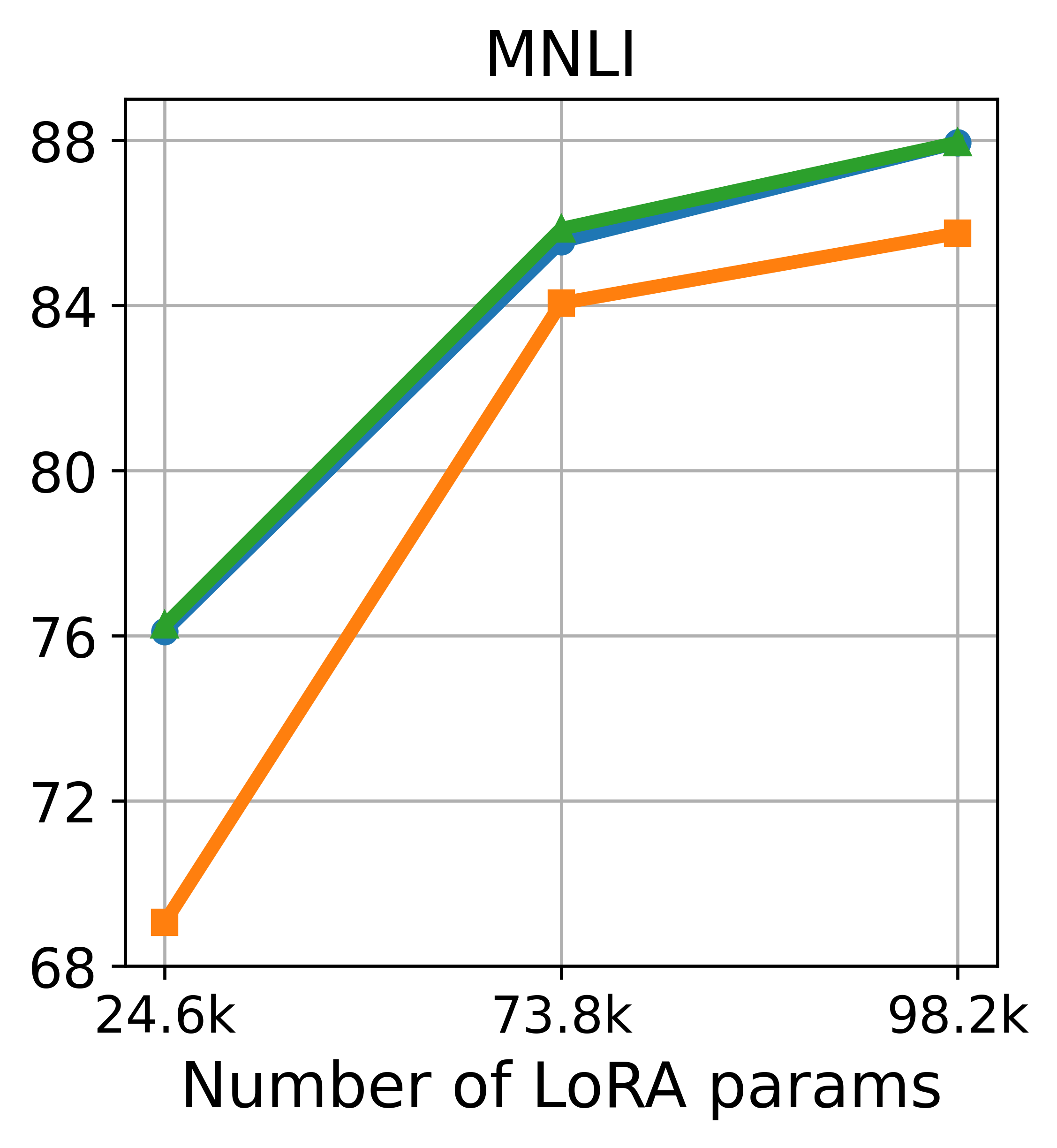}
  \label{fig:sub3-2}
\end{subfigure}
\hfill
\begin{subfigure}
  \centering
  \includegraphics[width=0.154\linewidth]{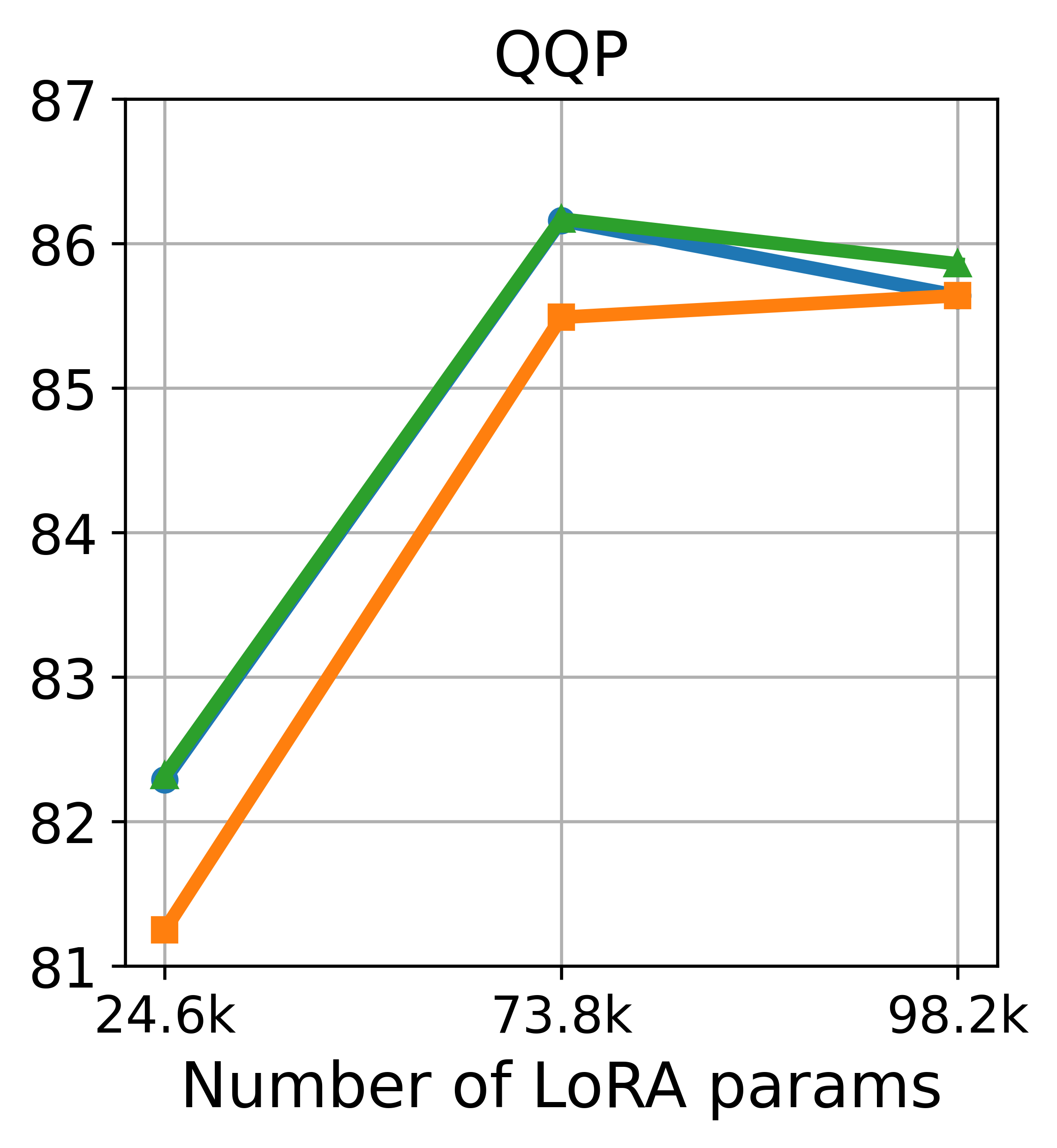}
  \label{fig:sub4-2}
\end{subfigure}
\hfill
\begin{subfigure}
  \centering
  \includegraphics[width=0.154\linewidth]{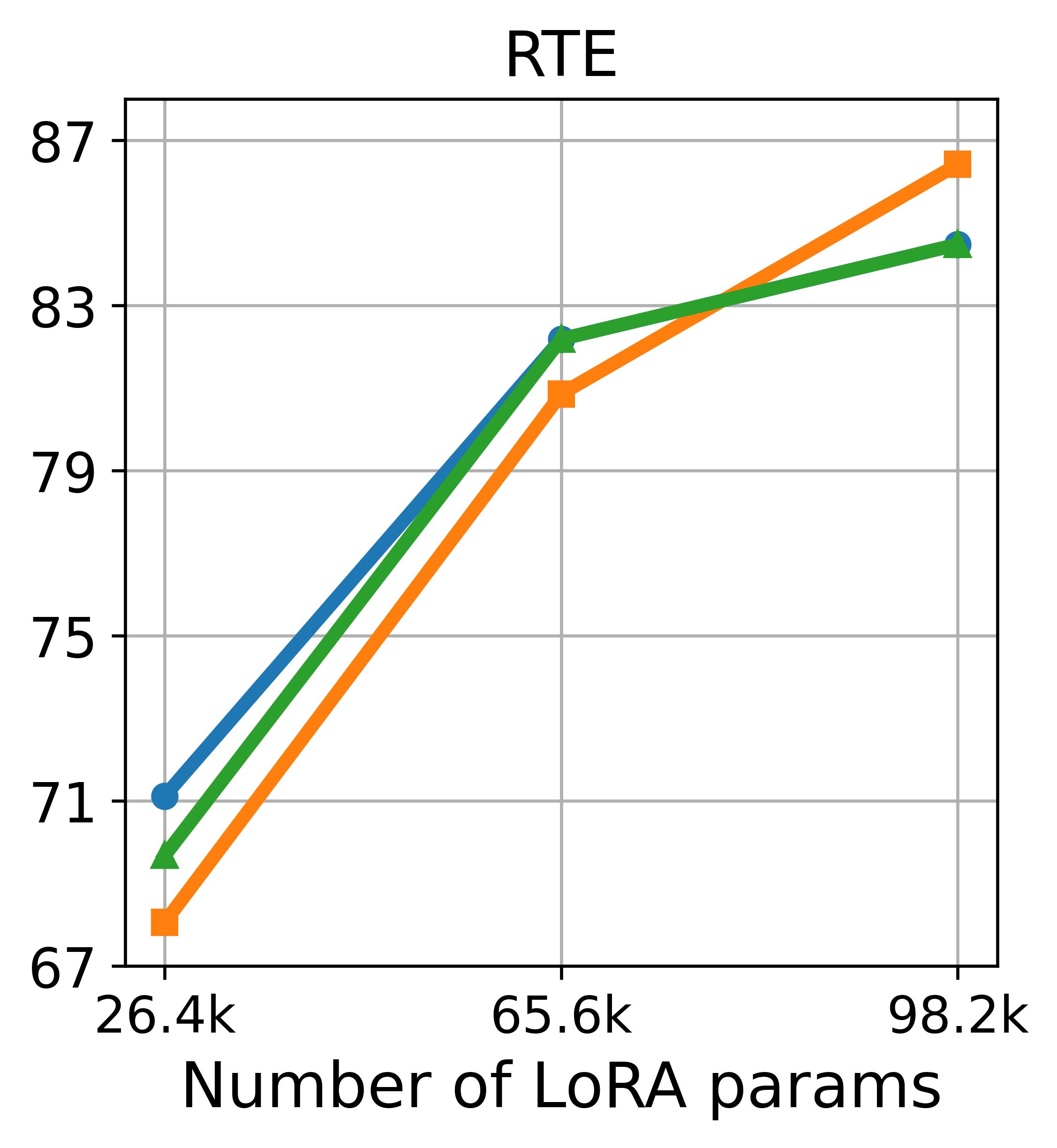}
  \label{fig:sub5-2}
\end{subfigure}
 \caption{Results with RoBERTa-Large models on GLUE of different budget of finetuning parameters. It involves 3 clients using rank 2.}
    \label{fig:five_subfigures-rank-2}
\hfill
\end{center}
\end{figure}
\begin{figure}[h!]
\begin{center}
\begin{subfigure}
  \centering
  \includegraphics[width=0.164\linewidth]{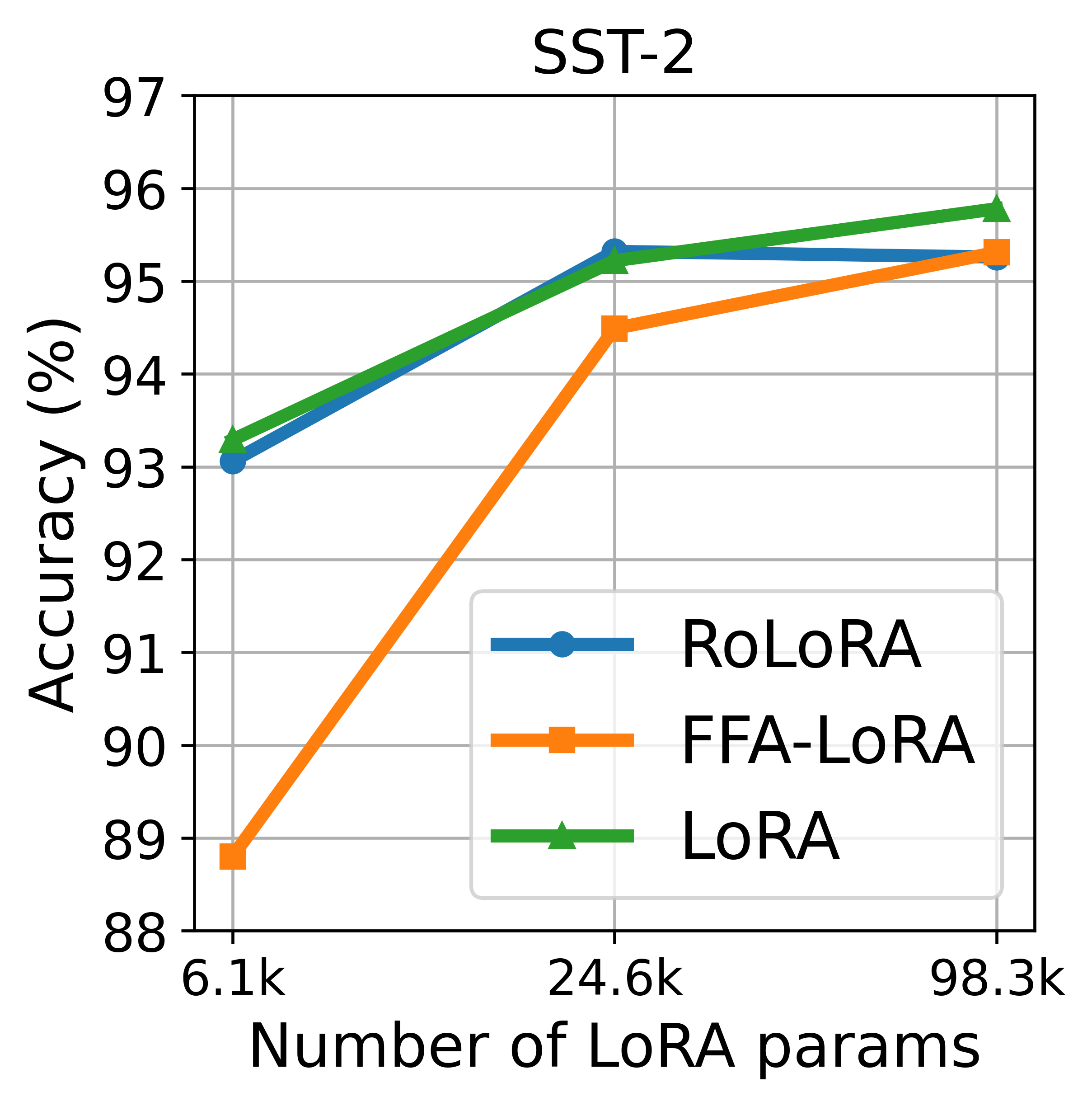}
  \label{fig:sub1-3}
\end{subfigure}
\hfill
\begin{subfigure}
  \centering
  \includegraphics[width=0.154\linewidth]{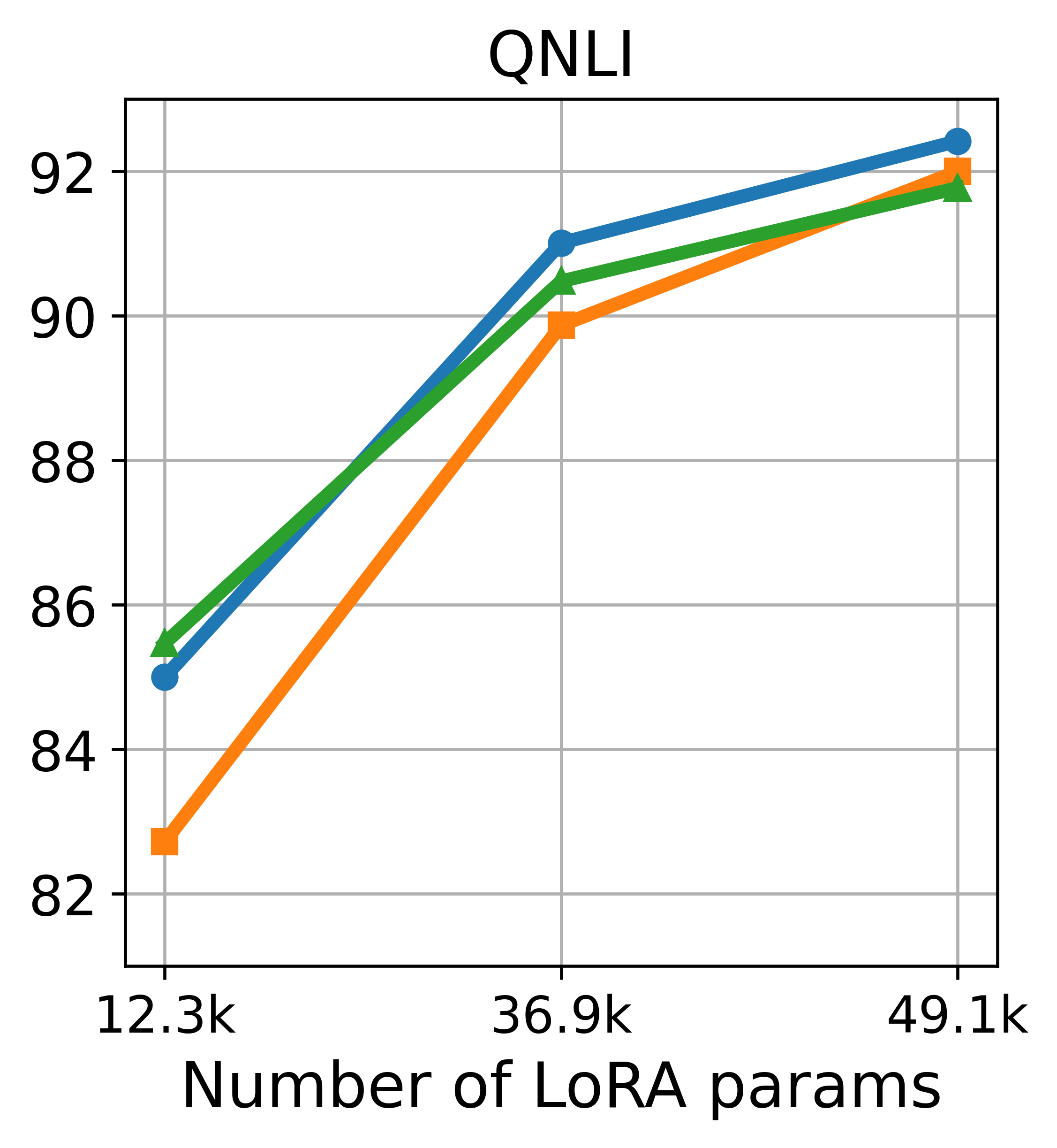}
  \label{fig:sub2-3}
\end{subfigure}
\hfill
\begin{subfigure}
  \centering
  \includegraphics[width=0.154\linewidth]{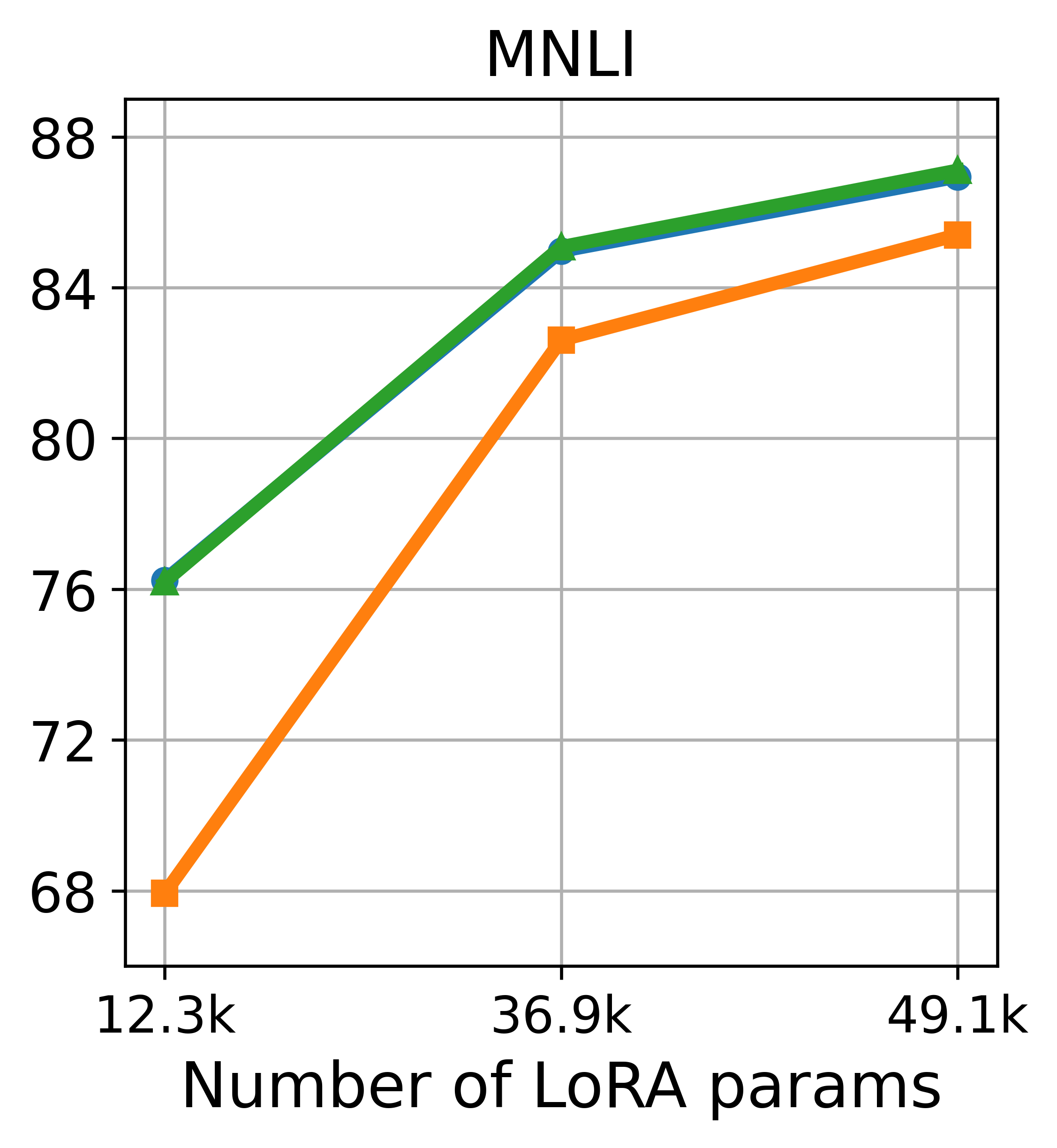}
  \label{fig:sub3-3}
\end{subfigure}
\hfill
\begin{subfigure}
  \centering
  \includegraphics[width=0.154\linewidth]{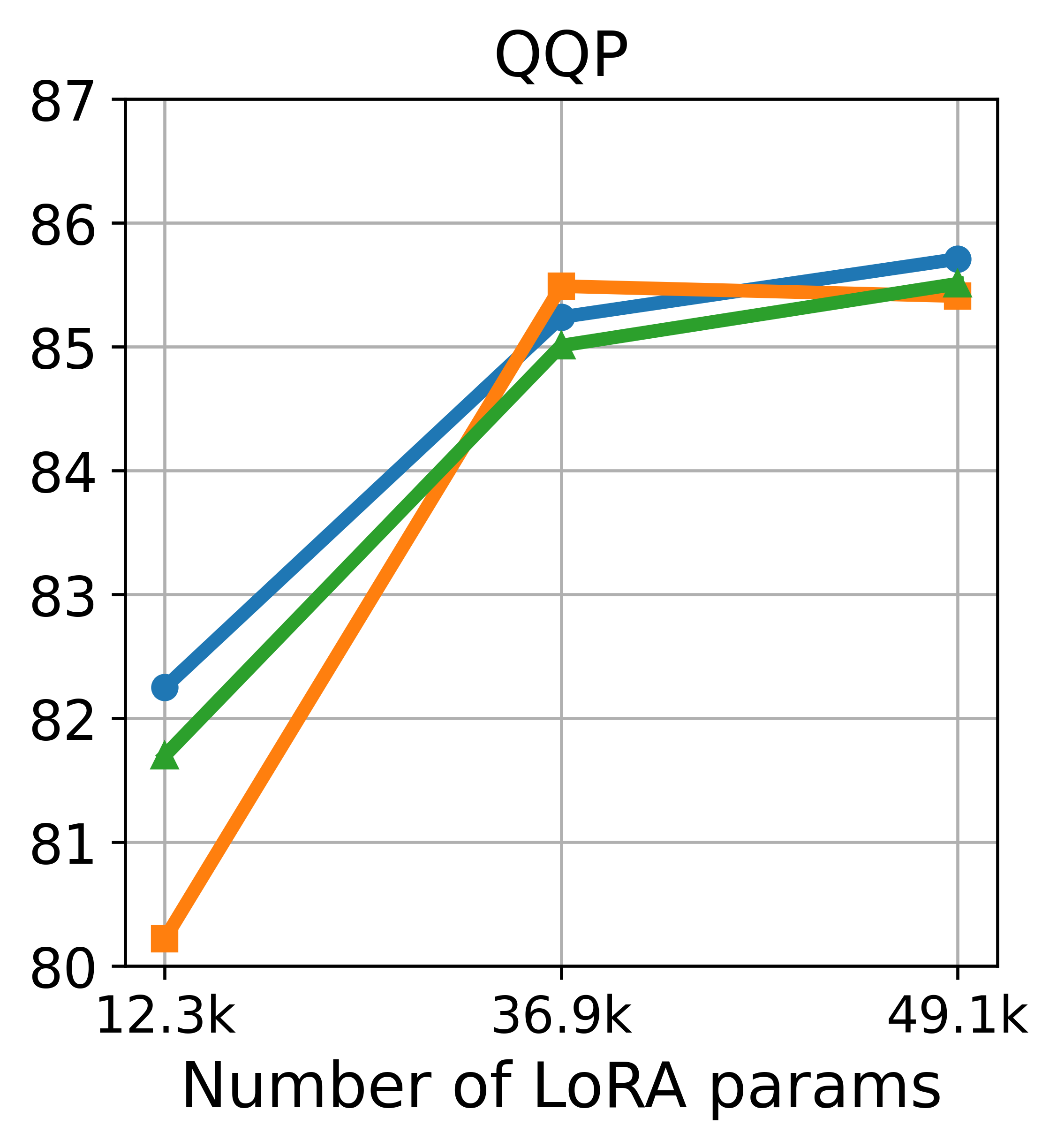}
  \label{fig:sub4-3}
\end{subfigure}
\hfill
\begin{subfigure}
  \centering
  \includegraphics[width=0.154\linewidth]{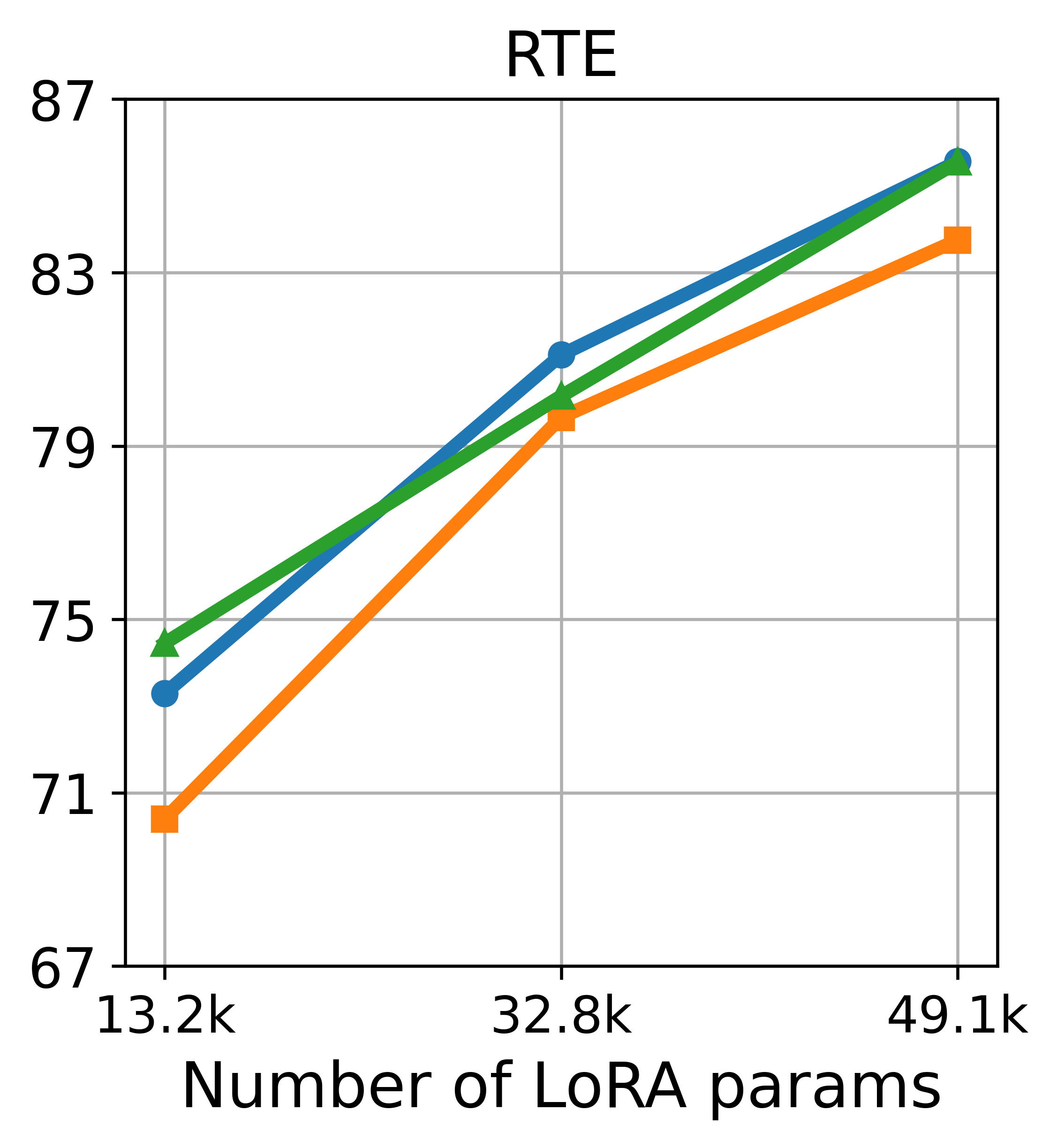}
  \label{fig:sub5-3}
\end{subfigure}
 \caption{Results with RoBERTa-Large models on GLUE of different budget of finetuning parameters. It involves 3 clients using rank 1.}
    \label{fig:five_subfigures-rank-1}
\hfill
\end{center}
\end{figure}
\vspace{-0.3cm}
 \subsection{Effect of Rank on RoLoRA and FFA-LoRA in the Centralized Setting} 
We evaluated FFA-LoRA and RoLoRA on MNLI and QQP using 8 LoRA adapters attached to query and value projection of last 4 layers, and trained for 50000 iterations to ensure full convergence. Increasing ranks can narrow the performance gap between the two schemes. Another related technique to narrow the performance gap between the two schemes is by increasing the number of adapters, as discussed in Section~\ref{exp-LU} (“Effect of Number of Fine-Tuning Parameters”). With sufficient adapters, FFA-LoRA can achieve comparable peak accuracy to RoLoRA. However, in federated settings where resources are constraiend, RoLoRA is more advantageous.
\begin{table}[h]
    \centering
    {\footnotesize
    \begin{tabular}{cccccc}
    \toprule
         &  & rank-1 & rank-2  & rank-32 & rank-64 \\
         \midrule
         & FFA-LoRA & 80.66 & 81.51 & 83.3 & 83.32 \\
        MNLI & RoLoRA & 83.93 & 84.59 & 85.78 & 85.79\\
         & Diff & 3.27 & +3.08 & +2.51 & +2.47\\
         \midrule
         & FFA-LoRA  & 69.61 & 74.01 & 75.53 & 75.51 \\
        QQP & RoLoRA & 77.26 & 77.41 & 78.03 & 78.05 \\
         & Diff & +7.65  &+3.40  & +2.5 & +2.54 \\
         \bottomrule
    \end{tabular}}
    \vspace{4pt}
    \caption{Evaluation on FFA-LoRA and RoLoRA in the centralized setting using only 8 LoRA adapters. Increasing ranks can narrow the performance gap between the two schemes. }
    \label{tab:centralized-rank}
\end{table}

 \subsection{Align the Communication Cost}\label{sec:align-appendices}
In Figure~\ref{fig:align-comm-cost}, we conduct a comparison of three methods under the constraint of identical communication costs under the assumption that the number of clients is small. To align the communication costs across these methods, two approaches are considered. The first approach involves doubling the rank of FFA-LoRA and RoLoRA, with results presented in Appendix~\ref{app:num-lora-appendix}. The second approach requires doubling the number of layers equipped with LoRA modules. In Figure~\ref{fig:align-comm-cost}, the latter strategy is employed. Specifically, for both FFA-LoRA and RoLoRA, we adjust the communication costs by doubling the number of layers equipped with LoRA modules, thereby standardizing the size of the transmitted messages. The configurations are presented in Table~\ref{tab:layer_type_index-align-comm-cost}. Figure~\ref{fig:align-comm-cost} demonstrates that when operating within a constrained communication cost budget, the performance of RoLoRA surpasses that of the other two methods for most of the tasks. 

\begin{figure}[h!]
\begin{center}
\begin{subfigure}
  \centering
  \includegraphics[width=0.2\linewidth]{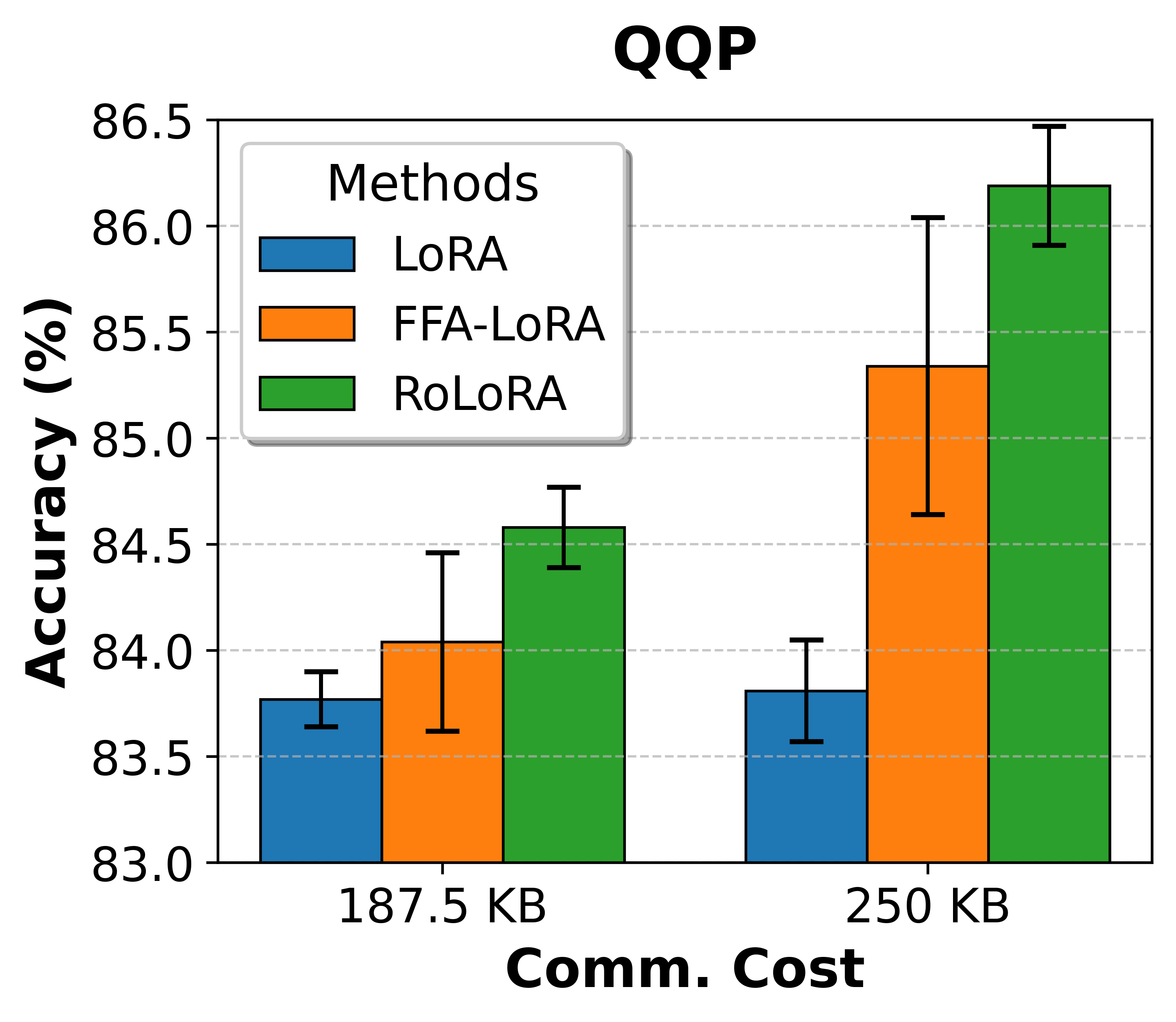}
  \label{fig:sub1-1}
\end{subfigure}
\begin{subfigure}
  \centering
  \includegraphics[width=0.2\linewidth]{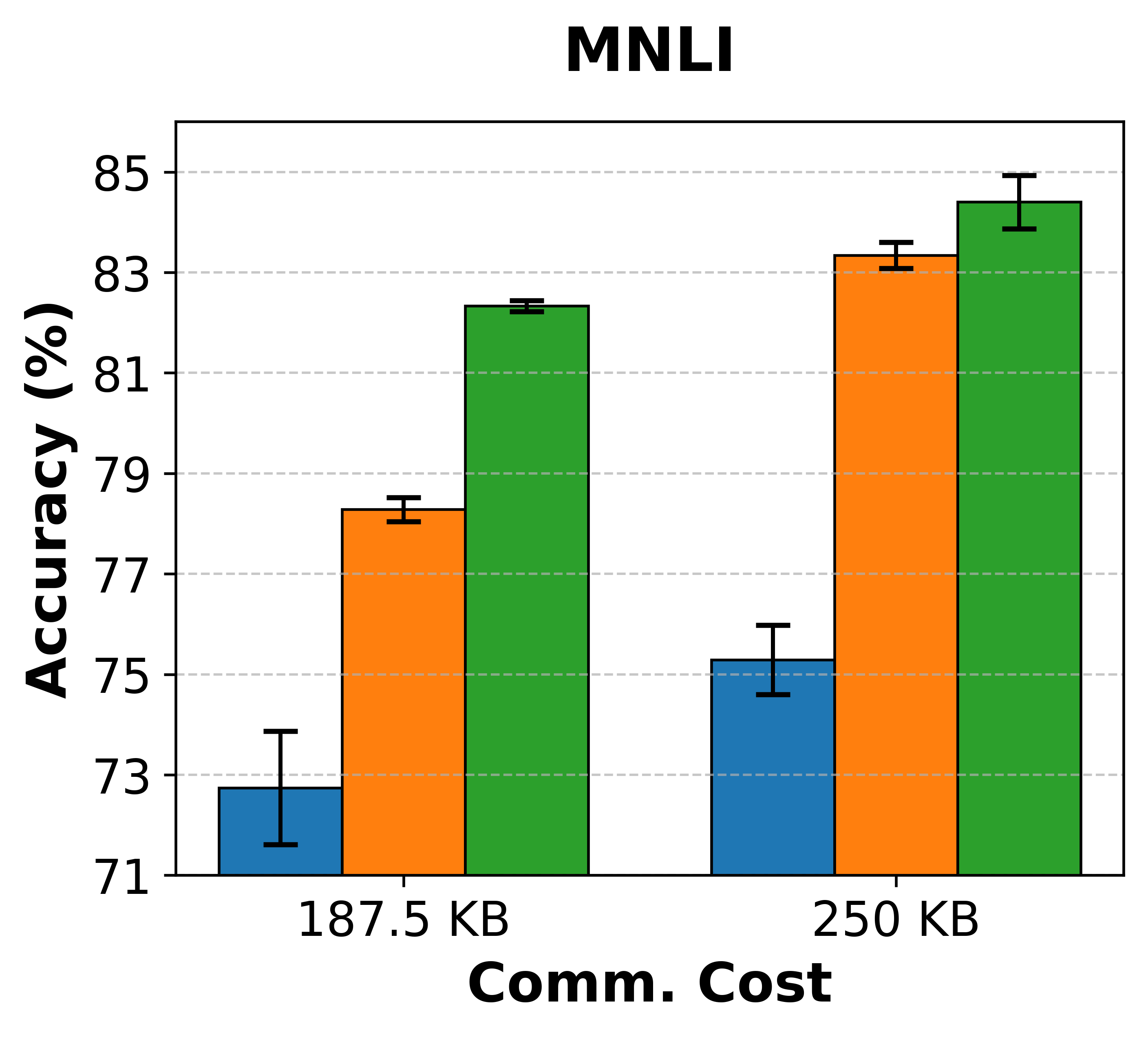}
  \label{fig:sub2-2}
\end{subfigure} 
\begin{subfigure}
  \centering
  \includegraphics[width=0.2\linewidth]{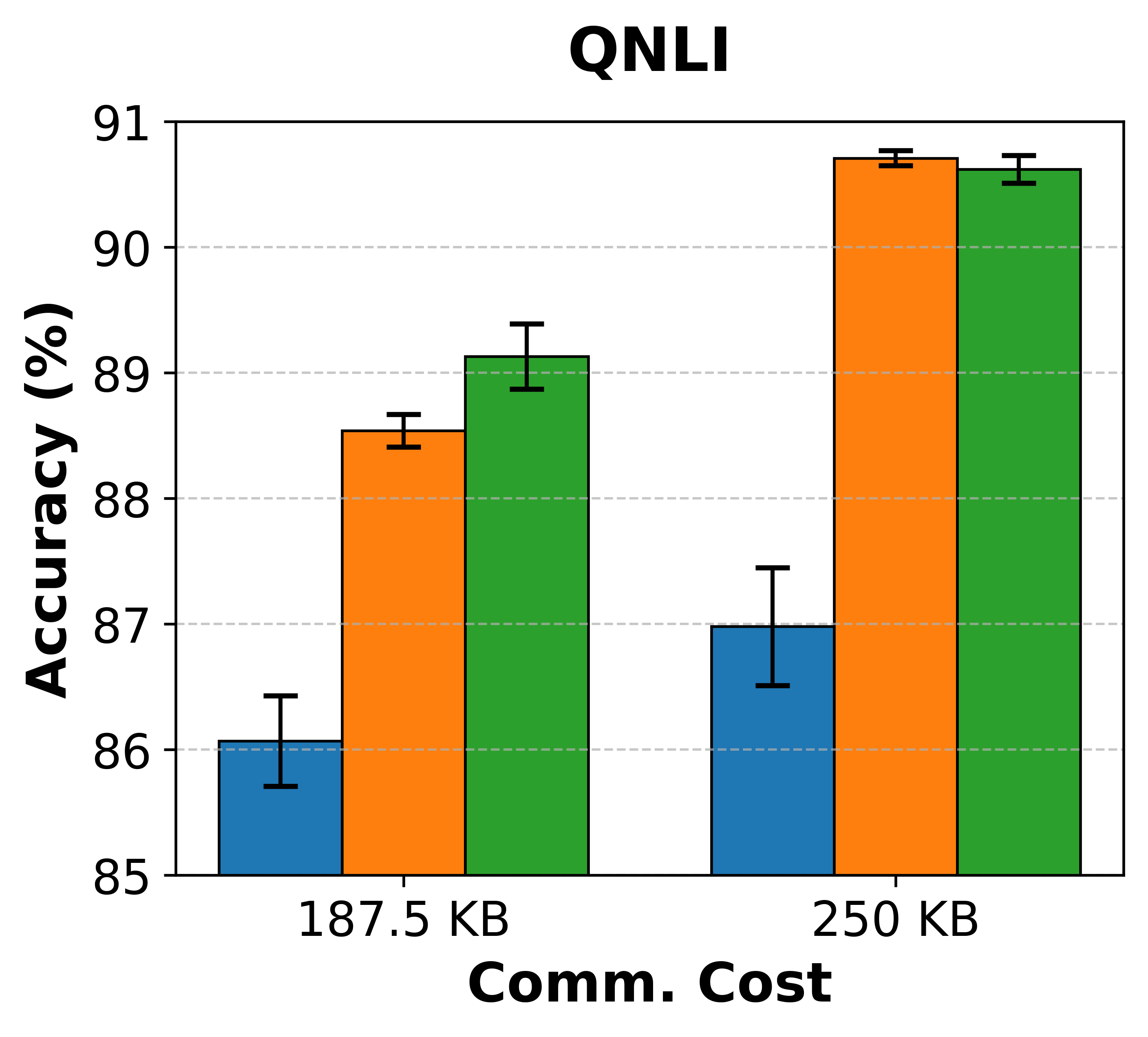}
  \label{fig:sub2-3}
\end{subfigure}
\begin{subfigure}
  \centering
  \includegraphics[width=0.2\linewidth]{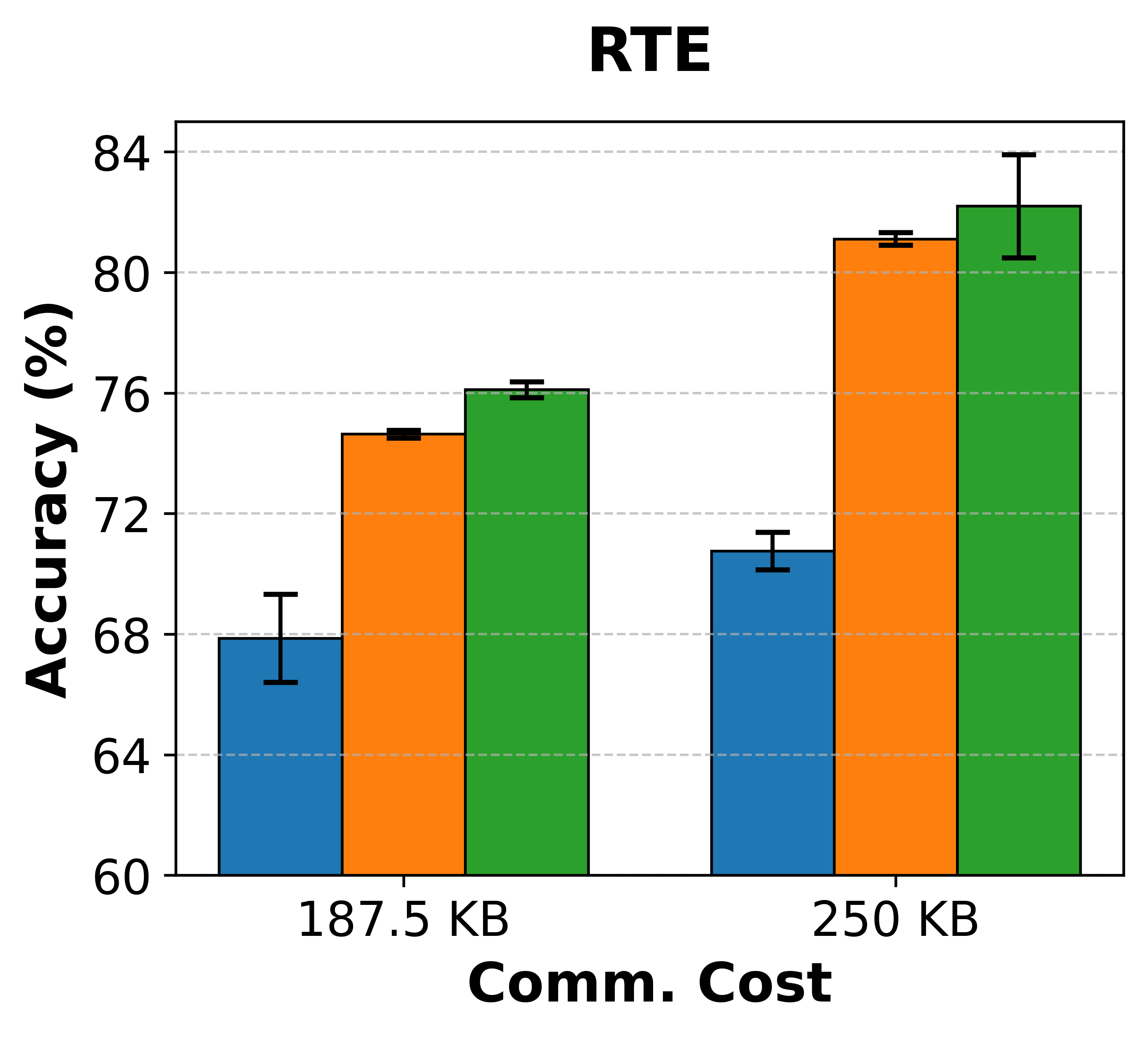}
  \label{fig:sub2-4}
\end{subfigure}
 \caption{RoBERTa-Large accuracies on QQP, MNLI, QNLI, and RTE with specific uplink communication budget. It involves 3 clients using rank 4. Error bars reflect standard deviations.}
 \label{fig:align-comm-cost}
 \end{center}
 \end{figure}
 
 In Table~\ref{tab:layer_type_index-align-comm-cost}, we include the details about layers attached with LoRA adapters.

\begin{table}[h!]
\centering
{\footnotesize
    \begin{tabular}{ccccc}
    \toprule
                            & Communication Cost & LoRA & FFA-LoRA & RoLoRA  \\
                            \midrule
\multirow{2}{*}{187.5 KB} & Type             &     $W_v,W_q$   &  $W_v,W_q$     &   $W_v,W_q$     \\
                    & Index            &   $\{21,\ldots,23\}$    &   $\{18,\ldots,23\}$   &  $\{18,\ldots,23\}$     \\
                    \midrule
                    \multirow{2}{*}{250 KB} & Type             &     $W_v,W_q$   &  $W_v,W_q$     &   $W_v,W_q$     \\
                    & Index            &   $\{20,\ldots,23\}$    &   $\{16,\ldots,23\}$   &  $\{16,\ldots,23\}$       \\ \bottomrule
    \end{tabular}}
    \vspace{4pt}
    \caption{The selected layer set attached with LoRA modules for the setup of Figure~\ref{fig:align-comm-cost}}
    \label{tab:layer_type_index-align-comm-cost}
\end{table}

 \subsection{Commonsense Reasoning Tasks} \label{sec:setup-CRT}
 \vspace{-0.3cm}
We present the configurations for Table~\ref{tab:reasoning} in Table~\ref{tab:commonsense-task}. We show the results under the same setup but using rank-2 LoRA modules in Table~\ref{tab:reasoning-rank2-appendix}.  

\begin{table}[h]
{\scriptsize
    \centering
    \begin{tabular}{ccccc}
    \toprule
        Total comm. rounds & Batch size &  Local Epochs & Layer type attached with LoRA &Layer index attached with LoRA \\
        \midrule
        10 & 1 & 30 & $W_k,W_v, W_q, W_o$ & $\{26,\ldots,31\}$\\
        \bottomrule
    \end{tabular}
    \vspace{4pt}
    \caption{Configurations for Commonsense Reasoning tasks.}
    \label{tab:commonsense-task}}
\end{table}
\begin{table}[h]
{\scriptsize
    \centering
    \begin{tabular}{ccccccccc}
    \toprule
         & BoolQ & PIQA & SIQA & HellaSwag & WinoGrande & ARC-e & ARC-c & OBQA\\
         \midrule
        LoRA & $\text{34.36}_{\pm \text{16.90}}$ & $\text{42.87}_{\pm \text{14.05}}$ & $\text{19.12}_{\pm \text{4.22}}$ & $\text{26.21}_{\pm \text{1.91}}$ & $\text{47.2}_{\pm \text{0.64}}$ & $\text{10.31}_{\pm \text{5.96}}$ & $\text{9.84}_{\pm \text{6.13}}$  &$\text{12.33}_{\pm \text{7.46}}$ \\
        FFA-LoRA & $\text{44.04}_{\pm \text{11.48}}$ & $\text{51.46}_{\pm \text{9.81}}$ & $\text{25.38}_{\pm \text{11.27}}$ & $\text{23.86}_{\pm \text{2.67}}$ & $\text{46.93}_{\pm \text{1.54}}$ & $\text{22.25}_{\pm \text{7.92}}$ & $\text{20.65}_{\pm \text{6.33}}$ & $\text{20.67}_{\pm \text{5.33}}$\\
         \rowcolor{ours}RoLoRA  &  $\text{61.3}_{\pm \text{0.99}}$ & $\text{60.81}_{\pm \text{6.35}}$ & $\text{37.97}_{\pm \text{5.39}}$ & $\text{29.62}_{\pm \text{2.62}}$ & $\text{49.59}_{\pm \text{1.2}}$ & $\text{37.05}_{\pm \text{2.92}}$ & $\text{29.09}_{\pm \text{3.33}}$ & $\text{28.93}_{\pm \text{4.64}}$\\
        \bottomrule
    \end{tabular}
    \vspace{4pt}
    \caption{Results with Llama-2-7B models on commonsense reasoning tasks. It involves 50 clients using rank 2.}
    \label{tab:reasoning-rank2-appendix}}
\end{table}
 \subsection{Language Generation Tasks} \label{sec:setup-LGT}
 \vspace{-0.3cm}
\paragraph{Model, Datasets and Metrics.}
We evaluate the performance of three federated finetuning methods with the model Llama-2-7B \cite{touvron2023llama2openfoundation}, on two datasets: CodeAlpaca \cite{codealpaca} for coding tasks, and Alpaca \cite{alpaca} for general instruction-following tasks. Using HumanEval \cite{chen2021codex} as the metric for CodeAlpaca, we assess the model’s ability to generate accurate code solutions. For Alpaca, we employ MMLU (Massive Multitask Language Understanding) \cite{hendrycks2021measuringmassivemultitasklanguage} to evaluate the model's performance across diverse domains. This provides an assessment of Llama-2-7B's coding proficiency, and general language capabilities when finetuning in the federated setting. We show the setup in Table~\ref{tab:my_label}.
\paragraph{Results}
Table ~\ref{llama-results} presents the evaluation results of the Llama-2-7B model using three methods, across two tasks: HumanEval, and MMLU. The metrics reported include the average and standard deviation of performance over five seeds, with 50 clients involved. The results show that RoLoRA achieves the highest scores across most metrics, demonstrating slightly improved performance compared to LoRA and FFA-LoRA. The improvements are more evident in certain subcategories of the MMLU dataset.

\begin{table}[h!]
    \centering
    {\scriptsize
    \begin{tabular}{ccccc}
    \toprule
        Total comm. rounds & Batch size &  Local Epochs & Layer type attached with LoRA &Layer index attached with LoRA \\
        \midrule
        100 & 1 & 30 & $W_k,W_v, W_q, W_o$ & $\{24,\ldots,31\}$\\
        \bottomrule
    \end{tabular}
    \vspace{4pt}
    \caption{Configurations for language generation tasks.}
    \label{tab:my_label}}
\end{table}
\begin{table}[h!]
\centering
{\footnotesize
\begin{tabular}{lccc}
\toprule
                           & LoRA & FFA-LoRA & \cellcolor{ours} RoLoRA \\ \midrule
HumanEval                  &   $\text{12.96}_{\pm \text{0.37}}$    &     $\text{13.29}_{\pm \text{0.21}}$     &   
\cellcolor{ours}$\text{13.45}_{\pm \text{0.28}}$ \\ \midrule
MMLU                       &   $\text{45.81}_{\pm \text{0.03}}$   &   $\text{45.80}_{\pm \text{0.02}}$        &   \cellcolor{ours}  $\text{45.93}_{\pm \text{0.01}}$     \\
\multicolumn{1}{r}{human}  &  $\text{43.26}_{\pm \text{0.04}}$    &         $\text{43.24}_{\pm \text{0.01}}$   & \cellcolor{ours} \cellcolor{ours} $\text{43.46}_{\pm \text{0.02}}$     \\
\multicolumn{1}{r}{other}  &  $\text{52.67}_{\pm \text{0.06}}$    &   $\text{52.72}_{\pm \text{0.05}}$       &   \cellcolor{ours}    $\text{52.83}_{\pm \text{0.04}}$   \\
\multicolumn{1}{r}{social} &    $\text{51.73}_{\pm \text{0.04}}$    &           $\text{51.64}_{\pm \text{0.05}}$ &    \cellcolor{ours}    $\text{51.81}_{\pm \text{0.04}}$ \\
\multicolumn{1}{r}{stem}   &   $\text{37.10}_{\pm \text{0.03}}$   &  $\text{37.12}_{\pm \text{0.01}}$        &    \cellcolor{ours}  $\text{37.05}_{\pm \text{0.02}}$   \\  
\bottomrule
\end{tabular}}
\vspace{4pt} 
\caption{Results with Llama-2-7B model on HumanEval, and MMLU. We report the average and std. over five seeds. The number of clients is 50. The metric used across all tasks is accuracy, where higher values indicate better performance.}
\label{llama-results}
\end{table}

\subsection{Communication and Time Cost Comparison}

Table~\ref{tab:comm-time-cost-comp} compares the communication cost and time cost in a 50-client setting on MNLI task. RoLoRA and FFA-LoRA have the lowest communication and time costs. 
\begin{table}[htbp]
\centering
{\footnotesize
\begin{tabular}{lcc}
\toprule
Method    & Comm. cost & Time cost \\
\midrule
LoRA      & 1500.8 KB   & 0.0415 sec \\
FFA-LoRA  & 750.4 KB    & 0.0163 sec \\
FlexLoRA  & 1500.8 KB   & 2.252 sec \\
FLoRA     & 193603.2 KB & 2.2179 sec \\
\rowcolor{ours}RoLoRA    & 750.4 KB    & 0.0182 sec \\
\bottomrule
\end{tabular}}
\vspace{4pt}
\captionsetup{justification=justified}
\caption{Communication and Time Cost Comparison}
\caption*{\textbf{Comm. cost}: Total message size sent and received by each client per communication round in the MNLI experiment shown in Table~\ref{tab:Clients-num-flex-appendix}. \\
\textbf{Time cost}: Mean server aggregation time per communication round for the MNLI experiment with 50 clients in Table~\ref{tab:Clients-num-flex-appendix}, averaged over 30 runs.}
\label{tab:comm-time-cost-comp}
\end{table}

\subsection{ \quad Privacy-preserving FL}
Beyond robustness, practical FL often requires privacy-preserving mechanisms, most notably cryptographic approaches such as secure aggregation \cite{bonawitz2016practical} or homomorphic encryption \cite{hosseini2025secure, 10206955,10619282}, and statistical protections like differential privacy (DP) \cite{el2022differential, wei2019federated}. While RoLoRA was not explicitly designed with privacy mechanisms such as differential privacy, we recognize that its ability to mitigate inexactness through alternating optimization and aggregation may help mitigate a key challenge for DP-aware federated learning, where inexact model updates can be particularly problematic. In Table~\ref{tab:PPFL}, we itegrated NbAFL \cite{wei2019federated} with $\epsilon=10$ and $\delta=1e-6 $ in a 3-client setting. RoLoRA outperform others across two tasks.
\begin{table}[htbp]
    \centering
{\footnotesize
    \begin{tabular}{ccc}
    \toprule
         & MNLI & QQP\\
         \midrule
       LoRA  & 72.79 \com 5.23& 57.97 \com 8.23\\
        FFA-LoRA & 79.65 \com 0.56 & 78.16 \com 0.6\\
        \rowcolor{ours}RoLoRA & 81.08 \com 0.81 & 81.64 \com 0.35\\
         \bottomrule
    \end{tabular}}
    \vspace{4pt}
    \caption{Evaluation on differential privacy using $\epsilon=10$ and $\delta=1e-6 $ in a 3-client setting.}
    \label{tab:PPFL}
\end{table}

\clearpage
\newpage

\section*{NeurIPS Paper Checklist}

The checklist is designed to encourage best practices for responsible machine learning research, addressing issues of reproducibility, transparency, research ethics, and societal impact. Do not remove the checklist: {\bf The papers not including the checklist will be desk rejected.} The checklist should follow the references and follow the (optional) supplemental material.  The checklist does NOT count towards the page
limit. 

Please read the checklist guidelines carefully for information on how to answer these questions. For each question in the checklist:
\begin{itemize}
    \item You should answer \answerYes{}, \answerNo{}, or \answerNA{}.
    \item \answerNA{} means either that the question is Not Applicable for that particular paper or the relevant information is Not Available.
    \item Please provide a short (1–2 sentence) justification right after your answer (even for NA). 
\end{itemize}

{\bf The checklist answers are an integral part of your paper submission.} They are visible to the reviewers, area chairs, senior area chairs, and ethics reviewers. You will be asked to also include it (after eventual revisions) with the final version of your paper, and its final version will be published with the paper.

The reviewers of your paper will be asked to use the checklist as one of the factors in their evaluation. While "\answerYes{}" is generally preferable to "\answerNo{}", it is perfectly acceptable to answer "\answerNo{}" provided a proper justification is given (e.g., "error bars are not reported because it would be too computationally expensive" or "we were unable to find the license for the dataset we used"). In general, answering "\answerNo{}" or "\answerNA{}" is not grounds for rejection. While the questions are phrased in a binary way, we acknowledge that the true answer is often more nuanced, so please just use your best judgment and write a justification to elaborate. All supporting evidence can appear either in the main paper or the supplemental material, provided in appendix. If you answer \answerYes{} to a question, in the justification please point to the section(s) where related material for the question can be found.

IMPORTANT, please:
\begin{itemize}
    \item {\bf Delete this instruction block, but keep the section heading ``NeurIPS Paper Checklist"},
    \item  {\bf Keep the checklist subsection headings, questions/answers and guidelines below.}
    \item {\bf Do not modify the questions and only use the provided macros for your answers}.
\end{itemize}


\begin{enumerate}

\item {\bf Claims}
    \item[] Question: Do the main claims made in the abstract and introduction accurately reflect the paper's contributions and scope?
    \item[] Answer: \answerYes{}
    \item[] Justification: Please see Section~\ref{rolora-framework}, Section~\ref{sec:analysis} and Section~\ref{exp}.
    \item[] Guidelines:
    \begin{itemize}
        \item The answer NA means that the abstract and introduction do not include the claims made in the paper.
        \item The abstract and/or introduction should clearly state the claims made, including the contributions made in the paper and important assumptions and limitations. A No or NA answer to this question will not be perceived well by the reviewers. 
        \item The claims made should match theoretical and experimental results, and reflect how much the results can be expected to generalize to other settings. 
        \item It is fine to include aspirational goals as motivation as long as it is clear that these goals are not attained by the paper. 
    \end{itemize}

\item {\bf Limitations}
    \item[] Question: Does the paper discuss the limitations of the work performed by the authors?
    \item[] Answer: \answerYes{}
    \item[] Justification: See Appendix~\ref{app:discussion}.    \item[] Guidelines:
    \begin{itemize}
        \item The answer NA means that the paper has no limitation while the answer No means that the paper has limitations, but those are not discussed in the paper. 
        \item The authors are encouraged to create a separate "Limitations" section in their paper.
        \item The paper should point out any strong assumptions and how robust the results are to violations of these assumptions (e.g., independence assumptions, noiseless settings, model well-specification, asymptotic approximations only holding locally). The authors should reflect on how these assumptions might be violated in practice and what the implications would be.
        \item The authors should reflect on the scope of the claims made, e.g., if the approach was only tested on a few datasets or with a few runs. In general, empirical results often depend on implicit assumptions, which should be articulated.
        \item The authors should reflect on the factors that influence the performance of the approach. For example, a facial recognition algorithm may perform poorly when image resolution is low or images are taken in low lighting. Or a speech-to-text system might not be used reliably to provide closed captions for online lectures because it fails to handle technical jargon.
        \item The authors should discuss the computational efficiency of the proposed algorithms and how they scale with dataset size.
        \item If applicable, the authors should discuss possible limitations of their approach to address problems of privacy and fairness.
        \item While the authors might fear that complete honesty about limitations might be used by reviewers as grounds for rejection, a worse outcome might be that reviewers discover limitations that aren't acknowledged in the paper. The authors should use their best judgment and recognize that individual actions in favor of transparency play an important role in developing norms that preserve the integrity of the community. Reviewers will be specifically instructed to not penalize honesty concerning limitations.
    \end{itemize}

\item {\bf Theory assumptions and proofs}
    \item[] Question: For each theoretical result, does the paper provide the full set of assumptions and a complete (and correct) proof?
    \item[] Answer: \answerYes{} 
    \item[] Justification: Please see Appendix~\ref{app:theo-linear} and Appendix~\ref{app:convergence-non-convex}.
    \item[] Guidelines:
    \begin{itemize}
        \item The answer NA means that the paper does not include theoretical results. 
        \item All the theorems, formulas, and proofs in the paper should be numbered and cross-referenced.
        \item All assumptions should be clearly stated or referenced in the statement of any theorems.
        \item The proofs can either appear in the main paper or the supplemental material, but if they appear in the supplemental material, the authors are encouraged to provide a short proof sketch to provide intuition. 
        \item Inversely, any informal proof provided in the core of the paper should be complemented by formal proofs provided in appendix or supplemental material.
        \item Theorems and Lemmas that the proof relies upon should be properly referenced. 
    \end{itemize}

    \item {\bf Experimental result reproducibility}
    \item[] Question: Does the paper fully disclose all the information needed to reproduce the main experimental results of the paper to the extent that it affects the main claims and/or conclusions of the paper (regardless of whether the code and data are provided or not)?
    \item[] Answer: \answerYes{} 
    \item[] Justification: See Appendix~\ref{app:additional-exp}.
    \item[] Guidelines:
    \begin{itemize}
        \item The answer NA means that the paper does not include experiments.
        \item If the paper includes experiments, a No answer to this question will not be perceived well by the reviewers: Making the paper reproducible is important, regardless of whether the code and data are provided or not.
        \item If the contribution is a dataset and/or model, the authors should describe the steps taken to make their results reproducible or verifiable. 
        \item Depending on the contribution, reproducibility can be accomplished in various ways. For example, if the contribution is a novel architecture, describing the architecture fully might suffice, or if the contribution is a specific model and empirical evaluation, it may be necessary to either make it possible for others to replicate the model with the same dataset, or provide access to the model. In general. releasing code and data is often one good way to accomplish this, but reproducibility can also be provided via detailed instructions for how to replicate the results, access to a hosted model (e.g., in the case of a large language model), releasing of a model checkpoint, or other means that are appropriate to the research performed.
        \item While NeurIPS does not require releasing code, the conference does require all submissions to provide some reasonable avenue for reproducibility, which may depend on the nature of the contribution. For example
        \begin{enumerate}
            \item If the contribution is primarily a new algorithm, the paper should make it clear how to reproduce that algorithm.
            \item If the contribution is primarily a new model architecture, the paper should describe the architecture clearly and fully.
            \item If the contribution is a new model (e.g., a large language model), then there should either be a way to access this model for reproducing the results or a way to reproduce the model (e.g., with an open-source dataset or instructions for how to construct the dataset).
            \item We recognize that reproducibility may be tricky in some cases, in which case authors are welcome to describe the particular way they provide for reproducibility. In the case of closed-source models, it may be that access to the model is limited in some way (e.g., to registered users), but it should be possible for other researchers to have some path to reproducing or verifying the results.
        \end{enumerate}
    \end{itemize}

\item {\bf Open access to data and code}
    \item[] Question: Does the paper provide open access to the data and code, with sufficient instructions to faithfully reproduce the main experimental results, as described in supplemental material?
    \item[] Answer: \answerYes{} 
    \item[] Justification: The datasets are all open-source. The code is uploaded.
    \item[] Guidelines:
    \begin{itemize}
        \item The answer NA means that paper does not include experiments requiring code.
        \item Please see the NeurIPS code and data submission guidelines (\url{https://nips.cc/public/guides/CodeSubmissionPolicy}) for more details.
        \item While we encourage the release of code and data, we understand that this might not be possible, so “No” is an acceptable answer. Papers cannot be rejected simply for not including code, unless this is central to the contribution (e.g., for a new open-source benchmark).
        \item The instructions should contain the exact command and environment needed to run to reproduce the results. See the NeurIPS code and data submission guidelines (\url{https://nips.cc/public/guides/CodeSubmissionPolicy}) for more details.
        \item The authors should provide instructions on data access and preparation, including how to access the raw data, preprocessed data, intermediate data, and generated data, etc.
        \item The authors should provide scripts to reproduce all experimental results for the new proposed method and baselines. If only a subset of experiments are reproducible, they should state which ones are omitted from the script and why.
        \item At submission time, to preserve anonymity, the authors should release anonymized versions (if applicable).
        \item Providing as much information as possible in supplemental material (appended to the paper) is recommended, but including URLs to data and code is permitted.
    \end{itemize}

\item {\bf Experimental setting/details}
    \item[] Question: Does the paper specify all the training and test details (e.g., data splits, hyperparameters, how they were chosen, type of optimizer, etc.) necessary to understand the results?
    \item[] Answer: \answerYes{} 
    \item[] Justification: See Section~\ref{exp} and Appendix~\ref{app:additional-exp}
    \item[] Guidelines:
    \begin{itemize}
        \item The answer NA means that the paper does not include experiments.
        \item The experimental setting should be presented in the core of the paper to a level of detail that is necessary to appreciate the results and make sense of them.
        \item The full details can be provided either with the code, in appendix, or as supplemental material.
    \end{itemize}

\item {\bf Experiment statistical significance}
    \item[] Question: Does the paper report error bars suitably and correctly defined or other appropriate information about the statistical significance of the experiments?
    \item[] Answer: \answerYes{}
    \item[] Justification: See standard deviation reported in Section~\ref{exp} and Appendix~\ref{app:additional-exp}
    \item[] Guidelines:
    \begin{itemize}
        \item The answer NA means that the paper does not include experiments.
        \item The authors should answer "Yes" if the results are accompanied by error bars, confidence intervals, or statistical significance tests, at least for the experiments that support the main claims of the paper.
        \item The factors of variability that the error bars are capturing should be clearly stated (for example, train/test split, initialization, random drawing of some parameter, or overall run with given experimental conditions).
        \item The method for calculating the error bars should be explained (closed form formula, call to a library function, bootstrap, etc.)
        \item The assumptions made should be given (e.g., Normally distributed errors).
        \item It should be clear whether the error bar is the standard deviation or the standard error of the mean.
        \item It is OK to report 1-sigma error bars, but one should state it. The authors should preferably report a 2-sigma error bar than state that they have a 96\% CI, if the hypothesis of Normality of errors is not verified.
        \item For asymmetric distributions, the authors should be careful not to show in tables or figures symmetric error bars that would yield results that are out of range (e.g. negative error rates).
        \item If error bars are reported in tables or plots, The authors should explain in the text how they were calculated and reference the corresponding figures or tables in the text.
    \end{itemize}

\item {\bf Experiments compute resources}
    \item[] Question: For each experiment, does the paper provide sufficient information on the computer resources (type of compute workers, memory, time of execution) needed to reproduce the experiments?
    \item[] Answer: \answerYes{} 
    \item[] Justification: Please see Section~\ref{exp}
    \item[] Guidelines:
    \begin{itemize}
        \item The answer NA means that the paper does not include experiments.
        \item The paper should indicate the type of compute workers CPU or GPU, internal cluster, or cloud provider, including relevant memory and storage.
        \item The paper should provide the amount of compute required for each of the individual experimental runs as well as estimate the total compute. 
        \item The paper should disclose whether the full research project required more compute than the experiments reported in the paper (e.g., preliminary or failed experiments that didn't make it into the paper). 
    \end{itemize}
    
\item {\bf Code of ethics}
    \item[] Question: Does the research conducted in the paper conform, in every respect, with the NeurIPS Code of Ethics \url{https://neurips.cc/public/EthicsGuidelines}?
    \item[] Answer: \answerYes{} 
    \item[] Justification: We followed the NeurIPS Code of Ethics.
    \item[] Guidelines:
    \begin{itemize}
        \item The answer NA means that the authors have not reviewed the NeurIPS Code of Ethics.
        \item If the authors answer No, they should explain the special circumstances that require a deviation from the Code of Ethics.
        \item The authors should make sure to preserve anonymity (e.g., if there is a special consideration due to laws or regulations in their jurisdiction).
    \end{itemize}

\item {\bf Broader impacts}
    \item[] Question: Does the paper discuss both potential positive societal impacts and negative societal impacts of the work performed?
    \item[] Answer: \answerYes{} 
    \item[] Justification: Our paper discusses the broader societal implications of deploying federated learning methods enhanced with LoRA adapters. On the positive side, we emphasize that our approach can significantly improve the efficiency and scalability of federated learning, thereby facilitating model training on decentralized data. We did not identify any direct negative societal impacts, as our contribution is primarily methodological and does not involve deployment or data collection. 

    \item[] Guidelines:
    \begin{itemize}
        \item The answer NA means that there is no societal impact of the work performed.
        \item If the authors answer NA or No, they should explain why their work has no societal impact or why the paper does not address societal impact.
        \item Examples of negative societal impacts include potential malicious or unintended uses (e.g., disinformation, generating fake profiles, surveillance), fairness considerations (e.g., deployment of technologies that could make decisions that unfairly impact specific groups), privacy considerations, and security considerations.
        \item The conference expects that many papers will be foundational research and not tied to particular applications, let alone deployments. However, if there is a direct path to any negative applications, the authors should point it out. For example, it is legitimate to point out that an improvement in the quality of generative models could be used to generate deepfakes for disinformation. On the other hand, it is not needed to point out that a generic algorithm for optimizing neural networks could enable people to train models that generate Deepfakes faster.
        \item The authors should consider possible harms that could arise when the technology is being used as intended and functioning correctly, harms that could arise when the technology is being used as intended but gives incorrect results, and harms following from (intentional or unintentional) misuse of the technology.
        \item If there are negative societal impacts, the authors could also discuss possible mitigation strategies (e.g., gated release of models, providing defenses in addition to attacks, mechanisms for monitoring misuse, mechanisms to monitor how a system learns from feedback over time, improving the efficiency and accessibility of ML).
    \end{itemize}
    
\item {\bf Safeguards}
    \item[] Question: Does the paper describe safeguards that have been put in place for responsible release of data or models that have a high risk for misuse (e.g., pretrained language models, image generators, or scraped datasets)?
    \item[] Answer: \answerNA{} 
    \item[] Justification: This paper proposes a framework for federated finetuning, and does not pose high risks for misuse.
    \item[] Guidelines:
    \begin{itemize}
        \item The answer NA means that the paper poses no such risks.
        \item Released models that have a high risk for misuse or dual-use should be released with necessary safeguards to allow for controlled use of the model, for example by requiring that users adhere to usage guidelines or restrictions to access the model or implementing safety filters. 
        \item Datasets that have been scraped from the Internet could pose safety risks. The authors should describe how they avoided releasing unsafe images.
        \item We recognize that providing effective safeguards is challenging, and many papers do not require this, but we encourage authors to take this into account and make a best faith effort.
    \end{itemize}

\item {\bf Licenses for existing assets}
    \item[] Question: Are the creators or original owners of assets (e.g., code, data, models), used in the paper, properly credited and are the license and terms of use explicitly mentioned and properly respected?
    \item[] Answer:\answerYes{}
    \item[] Justification: This paper is licensed under CC-BY-NC-SA 4.0. All other codes, datasets, and
references are properly cited.
    \item[] Guidelines:
    \begin{itemize}
        \item The answer NA means that the paper does not use existing assets.
        \item The authors should cite the original paper that produced the code package or dataset.
        \item The authors should state which version of the asset is used and, if possible, include a URL.
        \item The name of the license (e.g., CC-BY 4.0) should be included for each asset.
        \item For scraped data from a particular source (e.g., website), the copyright and terms of service of that source should be provided.
        \item If assets are released, the license, copyright information, and terms of use in the package should be provided. For popular datasets, \url{paperswithcode.com/datasets} has curated licenses for some datasets. Their licensing guide can help determine the license of a dataset.
        \item For existing datasets that are re-packaged, both the original license and the license of the derived asset (if it has changed) should be provided.
        \item If this information is not available online, the authors are encouraged to reach out to the asset's creators.
    \end{itemize}

\item {\bf New assets}
    \item[] Question: Are new assets introduced in the paper well documented and is the documentation provided alongside the assets?
    \item[] Answer: \answerNA{}
    \item[] Justification: We did not have any released new assets.
    \item[] Guidelines:
    \begin{itemize}
        \item The answer NA means that the paper does not release new assets.
        \item Researchers should communicate the details of the dataset/code/model as part of their submissions via structured templates. This includes details about training, license, limitations, etc. 
        \item The paper should discuss whether and how consent was obtained from people whose asset is used.
        \item At submission time, remember to anonymize your assets (if applicable). You can either create an anonymized URL or include an anonymized zip file.
    \end{itemize}

\item {\bf Crowdsourcing and research with human subjects}
    \item[] Question: For crowdsourcing experiments and research with human subjects, does the paper include the full text of instructions given to participants and screenshots, if applicable, as well as details about compensation (if any)? 
    \item[] Answer: \answerNA{} 
    \item[] Justification: We did not involve crowdsourcing nor research with human subjects.
    \item[] Guidelines:
    \begin{itemize}
        \item The answer NA means that the paper does not involve crowdsourcing nor research with human subjects.
        \item Including this information in the supplemental material is fine, but if the main contribution of the paper involves human subjects, then as much detail as possible should be included in the main paper. 
        \item According to the NeurIPS Code of Ethics, workers involved in data collection, curation, or other labor should be paid at least the minimum wage in the country of the data collector. 
    \end{itemize}

\item {\bf Institutional review board (IRB) approvals or equivalent for research with human subjects}
    \item[] Question: Does the paper describe potential risks incurred by study participants, whether such risks were disclosed to the subjects, and whether Institutional Review Board (IRB) approvals (or an equivalent approval/review based on the requirements of your country or institution) were obtained?
    \item[] Answer: \answerNA{} 
    \item[] Justification: We did not involve crowdsourcing nor research with human subjects.
    \item[] Guidelines:
    \begin{itemize}
        \item The answer NA means that the paper does not involve crowdsourcing nor research with human subjects.
        \item Depending on the country in which research is conducted, IRB approval (or equivalent) may be required for any human subjects research. If you obtained IRB approval, you should clearly state this in the paper. 
        \item We recognize that the procedures for this may vary significantly between institutions and locations, and we expect authors to adhere to the NeurIPS Code of Ethics and the guidelines for their institution. 
        \item For initial submissions, do not include any information that would break anonymity (if applicable), such as the institution conducting the review.
    \end{itemize}

\item {\bf Declaration of LLM usage}
    \item[] Question: Does the paper describe the usage of LLMs if it is an important, original, or non-standard component of the core methods in this research? Note that if the LLM is used only for writing, editing, or formatting purposes and does not impact the core methodology, scientific rigorousness, or originality of the research, declaration is not required.
    \item[] Answer: \answerNA{} 
    \item[] Justification: The LLM is used only for writing, editing, or formatting purposes and does not impact the core methodology, scientific rigorousness, or originality of the research, declaration is not required.
    \item[] Guidelines:
    \begin{itemize}
        \item The answer NA means that the core method development in this research does not involve LLMs as any important, original, or non-standard components.
        \item Please refer to our LLM policy (\url{https://neurips.cc/Conferences/2025/LLM}) for what should or should not be described.
    \end{itemize}

\end{enumerate}

\end{document}